\documentclass{article}
\usepackage{amsmath,amsfonts,bm}
\usepackage{bbm}
\usepackage[mathscr]{eucal}

\newcommand{\dt}{\textnormal{d}t}

\DeclareMathOperator{\TV}{TV}
\DeclareMathOperator{\BV}{BV}

\DeclareMathOperator{\ReLU}{\mathrm{ReLU}}
\DeclareMathOperator{\LeakyReLU}{\mathrm{LeakyReLU}}

\DeclareMathOperator{\GELU}{\mathrm{GELU}}

\DeclareMathOperator{\SELU}{\mathrm{SELU}}

\DeclareMathOperator{\sgn}{sgn}

\DeclareMathOperator{\Lip}{\mathrm{Lip}}

\DeclareMathOperator{\aconv}{\mathrm{aconv}}

\DeclareMathOperator{\poly}{\mathrm{poly}}
\DeclareMathOperator{\fat}{\mathrm{fat}}

\def\1{\bm{1}}

\def\<{\langle}
\def\>{\rangle}

\def\vzero{{\bm{0}}}

\def\vtheta{{\bm{\theta}}}

\def\vomega{{\bm{\omega}}}
\def\va{{\bm{a}}}
\def\vb{{\bm{b}}}
\def\vc{{\bm{c}}}

\def\ve{{\bm{e}}}

\def\vs{{\bm{s}}}
\def\vt{{\bm{t}}}
\def\vu{{\bm{u}}}
\def\vv{{\bm{v}}}
\def\vw{{\bm{w}}}
\def\vx{{\bm{x}}}

\def\mA{{\bm{A}}}
\def\mB{{\bm{B}}}

\def\mM{{\bm{M}}}

\def\mU{{\bm{U}}}
\def\mV{{\bm{V}}}
\def\mW{{\bm{W}}}

\DeclareMathAlphabet{\mathsfit}{\encodingdefault}{\sfdefault}{m}{sl}
\SetMathAlphabet{\mathsfit}{bold}{\encodingdefault}{\sfdefault}{bx}{n}

\def\cA{{\mathcal{A}}}
\def\cB{{\mathcal{B}}}
\def\cC{{\mathcal{C}}}
\def\cD{{\mathcal{D}}}
\def\cE{{\mathcal{E}}}
\def\cF{{\mathcal{F}}}
\def\cG{{\mathcal{G}}}
\def\cH{{\mathcal{H}}}

\def\cL{{\mathcal{L}}}
\def\cM{{\mathcal{M}}}
\def\cN{{\mathcal{N}}}
\def\cO{{\mathcal{O}}}
\def\cP{{\mathcal{P}}}

\def\cR{{\mathcal{R}}}
\def\cS{{\mathcal{S}}}

\def\cU{{\mathcal{U}}}
\def\cV{{\mathcal{V}}}
\def\cW{{\mathcal{W}}}

\def\bB{{\mathbb{B}}}

\def\bE{{\mathbb{E}}}

\def\bN{{\mathbb{N}}}

\def\bR{{\mathbb{R}}}
\def\bS{{\mathbb{S}}}

\newcommand{\R}{\mathbb{R}}

\usepackage{amsmath,amssymb,amsthm,amsfonts,bm,amsthm,color}
\usepackage{mathtools}
\usepackage{bbm}
\usepackage{fullpage}
\usepackage{enumitem}% http://ctan.org/pkg/enumitem
\usepackage[normalem]{ulem}
\usepackage{pifont}
\usepackage{wrapfig}
\usepackage{pgfplots}
\pgfplotsset{compat=1.18}

\usepackage{caption}
\usepackage{subcaption}
\usepackage{appendix}

\usepackage{natbib}
\setcitestyle{authoryear,round}

\usepackage{tcolorbox}
\usepackage[utf8]{inputenc} % allow utf-8 input
\usepackage[T1]{fontenc}    % use 8-bit T1 fonts
\usepackage{hyperref}       % hyperlinks
\usepackage{url}            % simple URL typesetting
\usepackage{booktabs}       % professional-quality tables
\usepackage{amsfonts}       % blackboard math symbols
\usepackage{nicefrac}       % compact symbols for 1/2, etc.
\usepackage{microtype}      % microtypography
\usepackage{xcolor}         % colors
\usepackage[nameinlink, capitalize]{cleveref}
\AtBeginEnvironment{appendices}{\crefalias{section}{appendix}}
\AtBeginEnvironment{appendices}{\crefalias{subsection}{appendix}}
\crefname{assumption}{Assumption}{Assumptions}
\Crefname{assumption}{Assumption}{Assumptions}
\newtheorem{theorem}{Theorem}[section]
\newtheorem{corollary}[theorem]{Corollary}
\newtheorem{proposition}[theorem]{Proposition}
\newtheorem{lemma}[theorem]{Lemma}

\newtheorem{remark}[theorem]{Remark}

\title{\bf Deep Neural Variation Spaces: A Unifying Perspective on Depth and Complexity}
\author{%
  Julia Nakhleh \\
  Department of Computer Science\\
  University of Wisconsin-Madison\\
  \texttt{jnakhleh@wisc.edu} \\
  \and
  Robert D. Nowak \\
  Department of Electrical \& Computer Engineering \\
  University of Wisconsin-Madison\\
  \texttt{rdnowak@wisc.edu}
}
\date{}
\begin{document}
\maketitle
\begin{abstract}
    We develop a unified function space theory of deep fully connected neural networks. Functions in our spaces are defined recursively as $\ell^1$-bounded linear combinations of activated functions from preceding layers, with a dictionary of affine functions at the first layer. Unlike existing theories that are largely specialized to homogeneous activations such as the ReLU, our framework provides a meaningful notion of functional complexity for deep networks with a broad range of homogeneous and non-homogeneous activation functions commonly used in practice. This simple construction unites several seemingly disparate ideas from the literature, including norm-based complexity bounds and variational characterizations of depth, and facilitates novel analyses of what kinds of functions deep norm-constrained networks can represent. To this end, we prove a novel representer theorem for our spaces and establish novel function-space complexity bounds showing that the associated function classes remain qualitatively small at arbitrary depth. In the univariate ReLU case, we prove a “depth saturation” result: depth in this setting yields only a small constant rescaling of the function class, with no added functional diversity. As a consequence, we show that deep norm-controlled ReLU functions in any dimension cannot exhibit high frequencies along any direction. This finding reveals that some commonly cited expressivity benefits of depth disappear once network complexity is controlled by an appropriate function space norm, rather than parameter count or other representational costs that permit compounded rescaling across layers. Overall, our results illustrate how a function space perspective yields new structural insights into the relationship between depth and complexity.
\end{abstract}

\section{Introduction}

Deep neural networks are the backbone of nearly all modern AI systems. \textit{Depth} is the defining architectural feature of these models, and increasing depth has enabled striking empirical advances across a wide range of applications. Nevertheless, the precise role of depth remains poorly understood from a theoretical perspective. It is widely postulated that the main advantage of depth is increased representational capacity: repeated composition with nonlinear feature transformations may allow deep networks to represent complex functions much more efficiently than shallow ones. Any such statement about representational efficiency, however, is necessarily relative to a chosen notion of complexity. Most of the existing literature on depth and expressivity in neural networks measures the cost of representing a function through the width, number of neurons, or total number of parameters in a network. These measures, however, provide only a partial description of the effective complexity of a learned predictor. Modern neural networks are frequently highly overparameterized, with many more parameters than training data points: in such a regime, the size of a particular network may say relatively little about which functions are actually favored by learning. Indeed, it is well-known that the complexities of neural network function classes can be controlled by the magnitudes of their weights, in some cases independently of the number of hidden units or parameters (\cite{bartlett1996valid,neyshabur2015norm,golowich2018size,barron2019complexity}).

These observations motivate the study of neural networks from a \textit{function space} perspective. This perspective associates a neural network primarily with the \textit{function} it represents, rather than any particular parametric representation of that function. In these neural networks of possibly infinite-width, but wherein the overall magnitude of the parameters is controlled, are mathematically modeled as elements of an abstract function space. By characterizing the complexities of function classes (i.e., norm-bounded sets) in these function spaces, one can address the question: give some bound on the parameter magnitude of my network (i.e., a bound on the norm of that network), but no bounds on network width, what kinds of functions are representable? Additionally, these theories shed light on the inductive biases associated with various types of explicit neural network regularization: by studying the geometric properties of the function space norm induced by a parametric regularizer, we gain insight into what kinds of functional structure are imposed by this type of regularization. A rich function space theory for shallow networks has been developed based on these ideas, incorporating diverse concepts from harmonic, convex, and functional analysis and approximation theory (\cite{bach2017breaking,ongie2019function,parhi2021banach,parhi2022near,ma2022barron,wojtowytsch2022representation,bartolucci2023understanding,siegel2023characterization,siegel2024sharp}). A number of works have extended different aspects of these shallow analyses to deep networks (\cite{wojtowytsch2020banach,parhi2022kinds,shenouda2024variation, bartolucci2024neural,heeringa2025deep,ongie2026representation}). 

Despite this progress, the existing function space theory of deep networks is highly fragmented, because the various deep function space norms studied in these papers often correspond to very different types of parametric representation costs. These differences may lead to important differences in the resulting function space geometry and complexity bounds. Consequently, it is often unclear how these notions of complexity relate to one another, or what they collectively imply about the functional effect of depth. Moreover, many fundamental questions about the representational advantages conferred by depth---such as structural differences between shallow and deep function spaces, and what types of functions may be in the deep spaces but not the shallow ones---remain entirely open. These are the main issues which we seek to address in the present paper. To this end, we introduce a novel family of deep neural variation spaces. Our spaces recover existing constructions in special cases, and extend many of the conceptual advantages of these constructions to broader families of commonly-used activation functions. Our framework allows us to unify a number of previously-distinct ideas for studying the functional complexity associated with depth. The central question we ask is: as depth increases, how do the norm-constrained function classes in our deep variation spaces change? By clarifying the relationships among existing notions of deep-network complexity and developing new tools for analyzing the corresponding function classes, we obtain several novel results concerning the structural advantages and limitations of depth. Our specific contributions are as follows.

\begin{enumerate}

    \item \textbf{Deep neural variation spaces for general activations.}
    We develop a unified function-space framework for deep fully connected networks, of potentially infinite width at each hidden layer, with general continuous activation functions. Our depth-$L$ norm is an atomic/gauge norm, which views the functions represented by depth-$L$ networks as absolutely convex combinations of depth-$L-1$ network functions composed with nonlinear activation functions. For homogeneous activations such as the ReLU, our function space norm coincides with the \textit{path norm} studied in \cite{neyshabur2015norm,barron2019complexity,wojtowytsch2020banach}. By introducing an activation-dependent family of normalized nonlinearities, we extend the conceptual advantages of this construction to a broad family of non-homogeneous activation functions. We show that the function spaces induced by many of the commonly-used ``ReLU-like'' activation functions (such as Leaky ReLU, GELU, and SiLU/Swish) are norm-equivalent to the ReLU function space, with explicit depth-dependent bounds on the equivalence constants.
    
    \item \textbf{Comparison with existing deep network representational costs.} We compare our norm with several existing parameter-space representation costs. For ReLU networks, suitably normalized classes controlled by these costs are contained in our unit balls, so our complexity bounds apply to them as well. Unlike these other representation costs, our norm is a true Banach norm on the space of functions represented by deep networks. As a result, our norm allows for a useful distinction between two functional effects of depth: those arising from compounded linear rescaling across hidden layers, and those arising from repeated composition with the nonlinear activation function itself. 
    
    \item \textbf{A novel representer theorem.} We prove that functions in our spaces admit equivalent representations as integrals with respect to finite measures. This shows that, for the ReLU activation, our spaces are equivalent to the generalized Barron/neural tree spaces of \cite{wojtowytsch2020banach}. Using these integral representations, we prove a novel representer theorem which shows solutions to norm-penalized data fitting problems over our spaces are realized by width-bounded neural networks.
    
    \item \textbf{Function-space complexity bounds.}
    We derive bounds on the worst-case empirical Rademacher complexities and on the $L^p(\mu)$ metric entropies of the unit balls $\cB_L$ of our depth-$L$ function spaces, for all $1 \leq p \leq \infty$ and arbitrary finite measures $\mu$. In many cases of interest, these upper bounds grow only mildly with depth. For the $\ReLU^m$ ($m \geq 1$) activation, the classes $\cB_L$ inherit known lower bounds on the metric entropy of the corresponding shallow class $\cB_2$. The gap in the entropy exponent between these upper and lower bounds is small when the input dimension $d$ is large.

    \item \textbf{Depth saturation and frequency control.}
    In the univariate ReLU case, we prove that depth has an extremely limited effect on functional complexity: it provides at most a fixed, depth-independent enlargement of the shallow unit ball. As a result, the deep univariate ReLU classes $\cB_L$ inherit the same complexity bounds as the shallow class $\cB_2$, with no depth-dependence whatsoever. This ``depth saturation'' result also implies that the deep spaces corresponding to many activation functions are norm-equivalent to the corresponding shallow spaces: however, for activations other than ReLU, the constants in this equivalence relation may grow with depth. Finally, this univariate depth saturation result carries an interesting implication for functions in the \textit{multivariate} deep classes $\cB_L$: they must exhibit strong frequency control along any individual direction of the input domain. This shows that many ``depth separation'' type-results, which concern the ability of deep networks to represent or approximate high-frequency functions more efficiently than their shallow counterparts, are fundamentally reliant on the effect of compounded layerwise rescaling rather than repeated application of the nonlinearity itself. These results provide insight into the structural effects of depth, and the nature of the different inductive biases corresponding to different measures of representational cost.
\end{enumerate}

\section{Deep neural variation spaces on compact domains}
\subsection{Normalized activation functions}
Let $\sigma: \R \to \R$ be a continuous activation function.\footnote{Our theory in this paper is applicable to a broad family of activation functions used in modern practice, including the ReLU, Leaky ReLU, GELU, SiLU/Swish, and many sigmoidal functions. The comprehensive list of activations that we consider is in \cref{tab:activation_summary} in \cref{appendix:activation_summary}.} We will construct our deep neural function spaces using the \textit{normalized activation functions}
\begin{align} \label{eq:normalized_activations}
    \sigma_s(t) := \frac{\sigma(st)}{s}, \qquad s >0, \ t \in \R.
\end{align}
The utility of these normalized activations $\sigma_s$ is that they allow for explicit separation of the neural feature \textit{shapes} from their \textit{sizes}. For \textit{homogeneous}\footnote{A function $\sigma: \R \to \R$ is \textit{(positively) homogeneous} if $\sigma(st) = s\sigma(t)$ for all $t \in \R$ and $s > 0$. More generally, $\sigma$ is homogeneous of degree $m$ if $\sigma(st) = s^m \sigma(t)$ for all $t \in \R$ and $s > 0$. \label{footnote:homogeneous}} activations such as ReLU and Leaky ReLU, this distinction is immaterial: in these cases, any rescaling of the input to $\sigma$ produces an identical scaling of the output of $\sigma$, so the normalized activations $\sigma_s$ are equivalent to the original activation $\sigma$. However, many of the commonly-used activations that we consider in this paper (such as GELU, SiLU/Swish, and various sigmoidal functions) are non-homogeneous. In these cases, rescaling the input to $\sigma$ does not merely rescale the function $\sigma$; it produces a new function with a distinct shape. In particular, almost all of the activations $\sigma$ that we consider are approximately 1-Lipschitz, so the scaled feature shapes $t \mapsto \sigma(st)$ are approximately $s$-Lipschitz (see \cref{tab:activation_summary}). To address this situation, we construct our deep neural spaces so that the normalized activations $\sigma_s$ (which have Lipschitz constant of approximately one) have unit norm in the corresponding neural function space. As a consequence, our spaces include all distinct feature shapes $t \mapsto \sigma(st)$ produced at all input scales $s$; each shape incurs a norm cost of $s$, approximately equal to its Lipschitz constant. (See \cref{fig:normalized-activations}.) Importantly, this does not impose any parametric or architectural limitation on the actual neural networks that our spaces describe. Indeed, our spaces include \textit{all} deep, fully-connected neural networks with activation $\sigma$, with the network's function-space norm controlled by a norm on all parameters in the network (\cref{sec:relationship_neural_networks}).

For the $\ReLU^m$ activation, we will instead define the normalized activation functions as
\begin{align} \label{eq:normalized_activations_relu_m}
    \sigma_s(t) := \frac{\sigma(st)}{s^m}, \qquad s >0, \ t \in \R.
\end{align}
This is the natural choice in this case because $\ReLU^m$ is homogeneous of degree $m$. (See footnote \footref{footnote:homogeneous}.) Therefore, with this choice, the normalized activations for $\ReLU^m$ are all equal to the $\ReLU^m$ itself.

Additionally, in several parts of our analysis, it will be of interest to consider the behavior of the normalized activations $\sigma_s$ as $s \downarrow 0$ and $s \uparrow \infty$. For all activations $\sigma$ in \cref{tab:activation_summary}, the normalized activations $\sigma_s$ converge uniformly on compact intervals as $s \uparrow \infty$ and $s \downarrow 0$. We will denote these uniform limiting functions as $\sigma_\infty$ and $\sigma_0$, respectively. The values of these limiting functions $\sigma_\infty$ and $\sigma_0$ are summarized in \cref{tab:activation_summary}. For many ``ReLU-like'' activations, $\sigma_\infty$ is equal to the ReLU itself and $\sigma_0$ is linear, while for many sigmoidal activations, $\sigma_\infty$ is zero and $\sigma_0$ is linear.

\begin{figure}
    \centering

    \begin{subfigure}{\textwidth}
        \centering
        \includegraphics[width=0.48\textwidth]{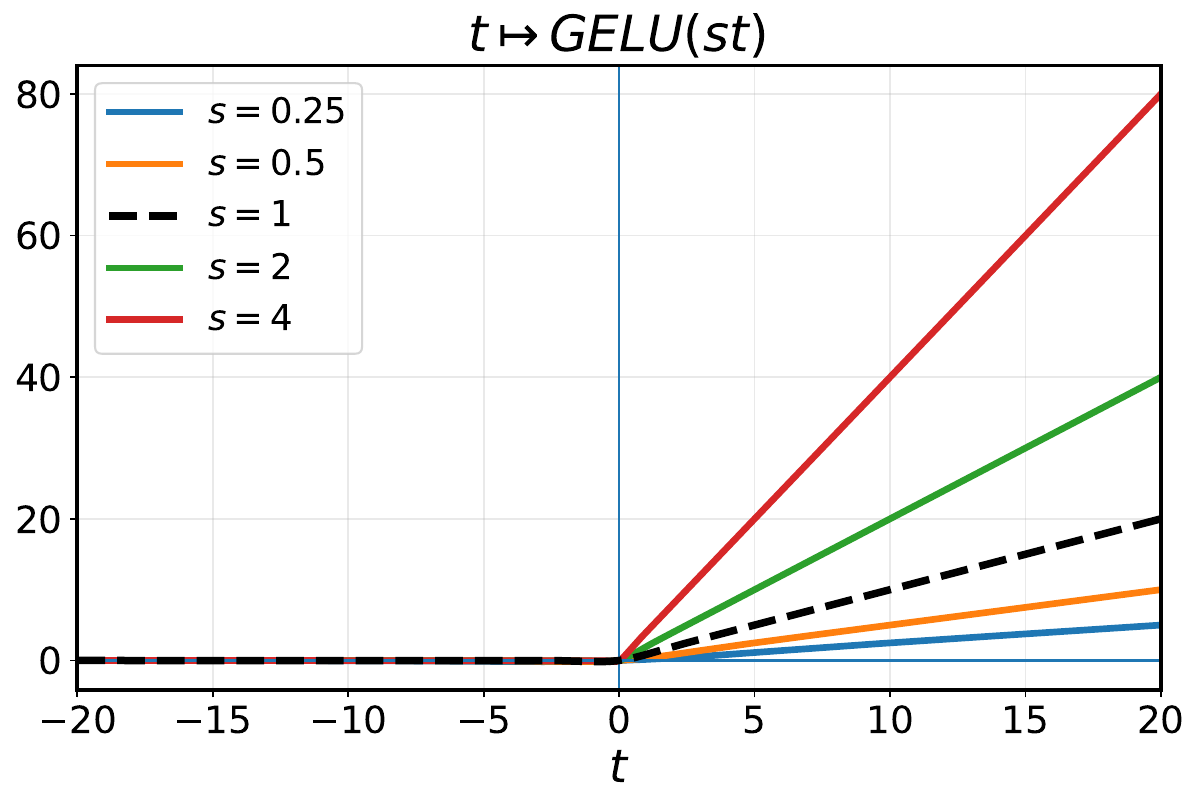}
        \hfill
        \includegraphics[width=0.48\textwidth]{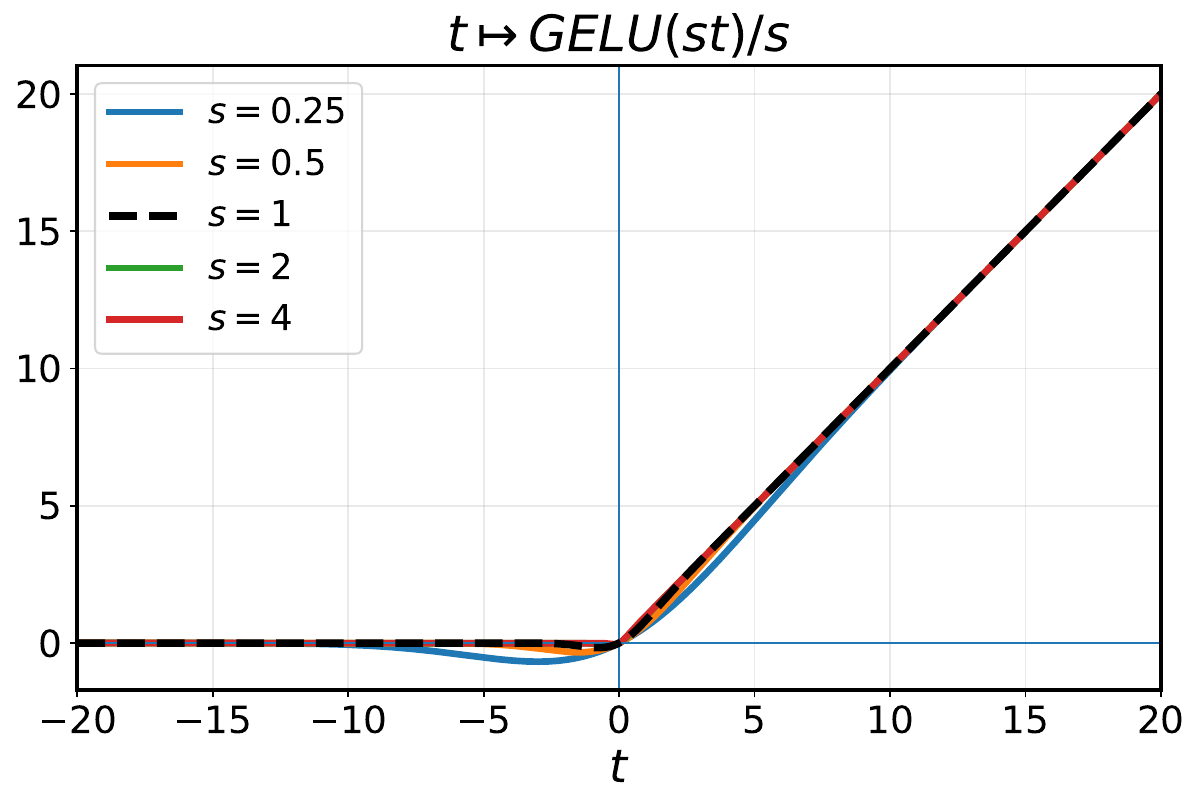}
        \caption{Left: the rescaled GELU feature shapes $t \mapsto \GELU(st)$ have slightly different shapes around the origin, and have limiting slopes of $\approx s$ as $t \uparrow \infty$. Right: the normalized features $t \mapsto \GELU(st)/s$ exhibit different shapes around the origin, but all have Lipschitz constant equal to that of the original GELU (which is approximately one).}
        \label{fig:gelu_normalized}
    \end{subfigure}

    \vspace{0.5cm}

    \begin{subfigure}{\textwidth}
        \centering
        \includegraphics[width=0.48\textwidth]{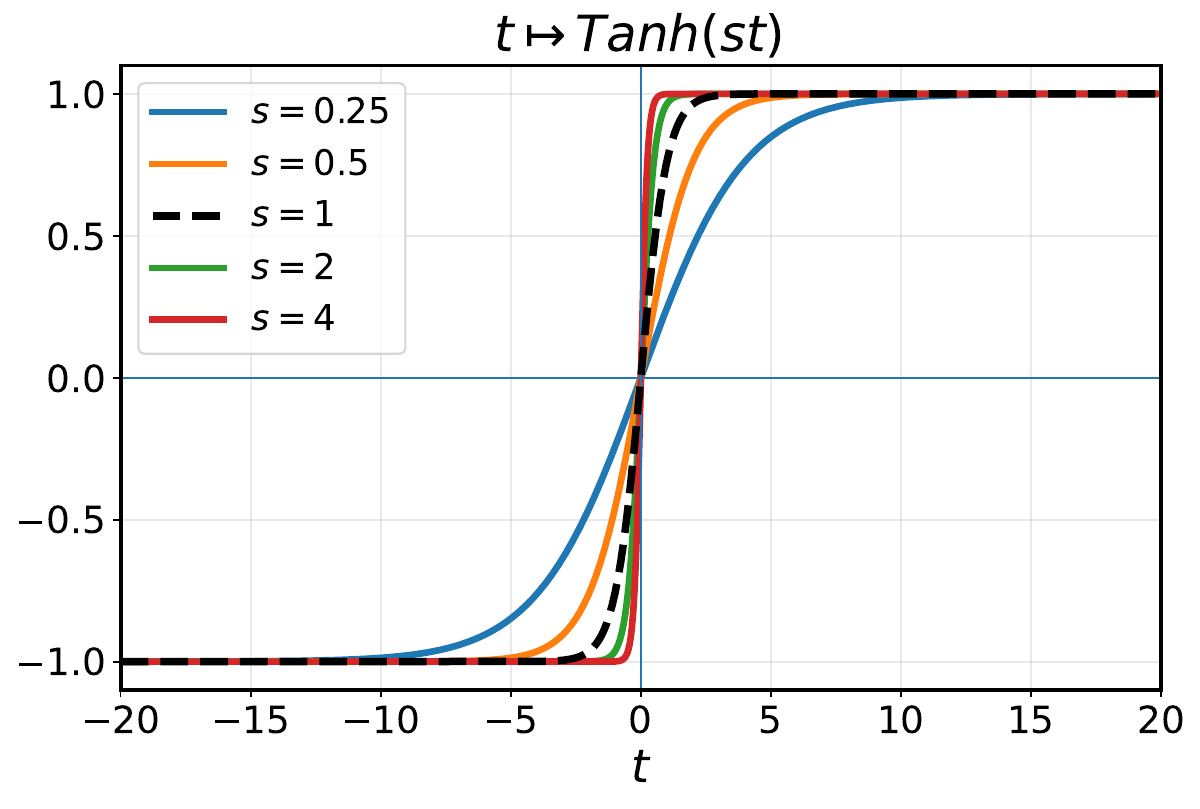}
        \hfill
        \includegraphics[width=0.48\textwidth]{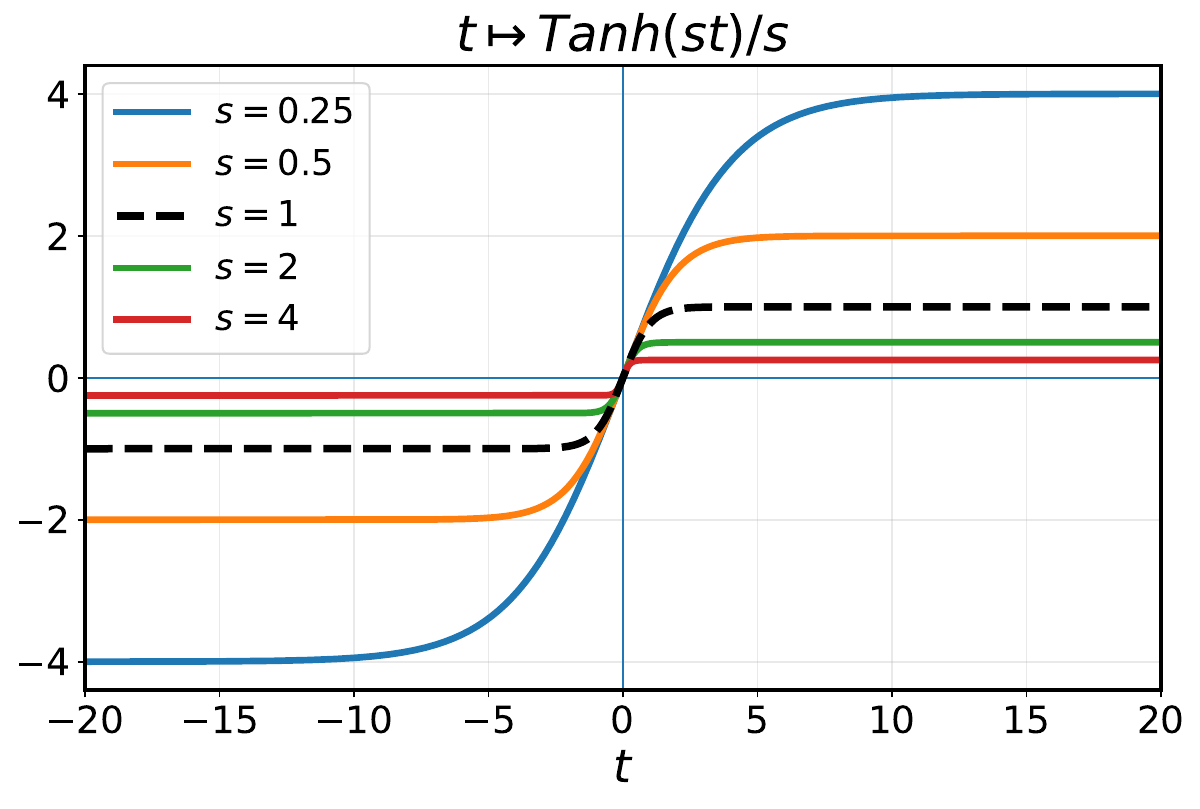}
        \caption{Left: the rescaled tanh feature shapes $t \mapsto \tanh(st)$ have slopes of $s$ at the origin, with the same limits as $t \uparrow \infty$. Right: the normalized features $t \mapsto \tanh(st)/s$ have different limiting behavior at $t = \pm \infty$, but all have Lipschitz constant equal to one.}
        \label{fig:tanh_normalized}
    \end{subfigure}

    \caption{Scaled feature shapes $t \mapsto \sigma(st)$ and normalized features $\sigma_s(t) := \sigma(st)/s$ for the GELU (\cref{fig:gelu_normalized}) and tanh (\cref{fig:tanh_normalized}) activation functions.}
    \label{fig:normalized-activations}
\end{figure}

\subsection{Definition of the deep variation spaces $\cV_L$}
We now proceed to construct our deep neural function spaces using the normalized activations $\sigma_s$ as defined in \eqref{eq:normalized_activations} (or \eqref{eq:normalized_activations_relu_m} for $\ReLU^m$). Let $\Omega \subset \mathbb{R}^d$ be a compact set and define the base dictionary of linear (affine) functions on $\Omega$ as
\begin{equation} \label{eq:B1}
    \cB_1 := \{ \vx \mapsto \vw^\top \vx + b: \vx \in \Omega, \vw \in \cW, b \in \cB \}
\end{equation}
for some compact \textit{weight} and \textit{bias sets} $\cW \subset \R^d$ and $\cB \subset \R$. Next, recursively define the closed \textit{absolutely convex hulls}
\begin{align} \label{eq:BL}
    \cB_L &:= \overline{\aconv}\left( \{ \sigma_s \circ f: s > 0, f \in \cB_{L-1} \} \right) \\ 
    &= \overline{\left\{ \sum_{k=1}^K v_k (\sigma_{s_k} \circ f_k): K \in \bN, f_k \in \cB_{L-1}, s_k > 0, v_k \in \R, \sum_{k=1}^K |v_k| \leq 1 \right\}} 
\end{align}
for $L \geq 2$. The closure in the definition of $\cB_L$ is taken with respect to any ambient Banach space $\cF$ containing the sets $\{ \sigma_s \circ f: s > 0, f \in \cB_{L-1} \}$.\footnote{If $\cF$ is a space of equivalence classes of functions rather than bona fide functions (such as an $L^p$ space), the set $\{ \sigma_s \circ f: s > 0, f \in \cB_{L-1} \}$ is  understood to refer to the set of equivalence classes of the functions $\sigma_s \circ f$, for all $s > 0$ and all functions in all equivalence classes in $\cB_{L-1}$.} If the set $\cB_L$ is \textit{bounded} in $\cF$ (meaning that $\sup_{f \in \cB_L} \| f \|_\cF < \infty$), the \textit{variation norm} (also called the \textit{atomic} or \textit{gauge norm})
\begin{equation}
    \| f \|_{\cV_L} := \inf \{ c > 0: f \in c \cB_L \}
\end{equation}
is a Banach norm on the \textit{variation space}
\begin{eqnarray*}
    \cV_L := \{ f \in \cF: \| f \|_{\cV_L} < \infty \}
\end{eqnarray*}
and $\cB_L$ is the unit ball of $\cV_L$ (\cite{siegel2023characterization}, Lemma 1; see also \cite{kurkova2002bounds,kuurkova1997dimension}). Intuitively, the $\| \cdot \|_{\cV_L}$ norm measures how much the ball $\cB_L$ must be dilated in order for a given function $f$ to be included in that dilation. $\cV_L$ is the space of all functions which are included in some finite (although possibly very large) dilation of $\cB_L$.

The $\cF$-norm closure in the definition of $\cB_L$ expands our spaces to include not only finite-width deep networks (which may be highly limited in terms of their functional expressivity) but also all functions which they approximate with appropriate parameter control. The choice of $\cF$ determines the precise sense in which these infinite-width limits are taken, with different choices imposing different levels of regularity on the constituent functions. Two useful choices of $\cF$ that we will consider in the remainder of the paper are $L^p(\mu)$ for some finite measure $\mu$ on $\Omega$, and $C(\Omega)$, the Banach space of continuous functions on $\Omega$, equipped with the uniform norm $\| f \|_\infty := \sup_{\vx \in \Omega} |f(\vx)|$. With infinite-width limits taken $\cF = C(\Omega)$, all functions in $\cV_L$ are continuous; on the other hand, in $\cF = L^p(\mu)$, there is no such guarantee a priori as pointwise behavior of the limits is not even well-defined. This distinction is important in several respects: while the $L^p(\mu)$ closures are most directly relevant to certain aspects of our complexity analysis in \cref{sec:statistical_complexities}, the additional regularity imposed by the $C(\Omega)$ norm is necessary to analyze their behavior on finite datasets (as in the representer theorem \cref{th:rep_theorem}) and to prove the univariate depth-saturation result in \cref{sec:uni_depth_saturation}. Nonetheless, as the following proposition establishes, these two choices are essentially equivalent up to sets of $\mu$-measure zero:
 \begin{proposition} \label{prop:Lp_C_representative}
    Let $\mu$ be a finite measure on $\Omega$. Let $\cB_L^{L^p(\mu)}$ and $\cB_L^{\infty}$ denote the sets \eqref{eq:BL} with closures taken in $\cF = L^p(\mu)$ and $C(\Omega)$, respectively. Then $\cB_L^{\infty} \subset \cB_L^{L^p(\mu)}$, and every function in $\cB_L^{L^p(\mu)}$ agrees $\mu$-almost everywhere (a.e.) with a function in $\cB_L^{\infty}$. 
\end{proposition}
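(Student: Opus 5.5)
The proof goes by induction on $L$, and rests on one structural fact: $\cB_L^\infty$ is \emph{compact} in $C(\Omega)$. Let $\iota: C(\Omega) \to L^p(\mu)$ be the canonical map sending a continuous function to its $\mu$-equivalence class. Since $\mu$ is finite, $\|\iota g\|_{L^p(\mu)} \le \mu(\Omega)^{1/p}\|g\|_\infty$ (and $\le \|g\|_\infty$ for $p=\infty$), so $\iota$ is a bounded linear map. The induction hypothesis at level $L$ is
\[
\iota(\cB_L^\infty) = \cB_L^{L^p(\mu)}, \qquad \cB_L^\infty \text{ compact in } C(\Omega),
\]
where the equality is between sets of equivalence classes. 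Both claims of the proposition follow from this equality. For $L=1$ there is no closure: $\cB_1$ is the continuous image of the compact set $\cW\times\cB$ under $(\vw,b)\mapsto(\vx\mapsto \vw^\top\vx+b)$, so it is compact in $C(\Omega)$. Its image under $\iota$ is $\cB_1^{L^p(\mu)}$ by definition.

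\textbf{Inductive step, dictionaries.} Let $D_L^\infty := \{\sigma_s\circ g : s>0,\ g\in\cB_{L-1}^\infty\}$, and let $D_L^p$ be the analogous set of classes built from all representatives $f$ of classes in $\cB_{L-1}^{L^p(\mu)}$. Take such an $f$. By the inductive hypothesis $f=g$ $\mu$-a.e.\ for some $g\in\cB_{L-1}^\infty$, hence $\sigma_s\circ f=\sigma_s\circ g$ $\mu$-a.e. Conversely, each $g\in\cB^\infty_{L-1}$ is itself such a representative. Therefore $D_L^p=\iota(D_L^\infty)$. Since $\iota$ is linear, $\aconv(D_L^p)=\iota(\aconv D_L^\infty)$.

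\textbf{Inductive step, closures.} Continuity of $\iota$ gives $\iota\big(\overline{\aconv D_L^\infty}^{\,C}\big)\subset \overline{\aconv D_L^p}^{\,L^p}$, which is the inclusion $\cB_L^\infty\subset\cB_L^{L^p(\mu)}$. For the reverse inclusion, assume $\cB_L^\infty$ is compact. Then $\iota(\cB_L^\infty)$ is compact, hence closed, in $L^p(\mu)$. It contains $\aconv D_L^p$, so it contains the $L^p$-closure $\cB_L^{L^p(\mu)}$. This means every element of $\cB_L^{L^p(\mu)}$ has a representative in $\cB_L^\infty$.

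\textbf{Main obstacle: compactness of $\cB_L^\infty$.} I would prove this with Arzelà–Ascoli. By the inductive hypothesis, $\cB_{L-1}^\infty$ is compact, hence uniformly bounded by some $R$ and equicontinuous. The normalized activations satisfy $\Lip(\sigma_s)=\Lip(\sigma)$ for every $s$, at least for globally Lipschitz $\sigma$. For $\ReLU^m$ we use instead the uniform Lipschitz bound on $[-R,R]$, which holds because $\sigma_s=\sigma$ there. It follows that $D_L^\infty$ is equicontinuous. Uniform boundedness also needs $|\sigma_s(0)|=|\sigma(0)|/s$ to stay bounded as $s\downarrow0$. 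This holds when $\sigma(0)=0$, and more generally it is exactly the standing assumption that $\cB_L$ is bounded. This is the delicate point, and I would check it against the activations in \cref{tab:activation_summary}. Absolutely convex combinations with $\ell^1$-weight at most $1$ preserve the uniform sup-bound and the common modulus of continuity. Uniform closure preserves both as well. Therefore $\cB_L^\infty$ is closed, bounded and equicontinuous, hence compact, which completes the induction.
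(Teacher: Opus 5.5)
Your proof is correct, but it proves compactness of $\cB_L^\infty$ differently from the paper.

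\textbf{The paper's route.} The paper compactifies the scale parameter. It shows that $(f,s)\mapsto\sigma_s\circ f$ is continuous from $\cB_{L-1}\times[0,\infty]$ into $C(\Omega)$, using the compact-uniform convergence $\sigma_s\to\sigma_0$ and $\sigma_s\to\sigma_\infty$. Hence the enlarged dictionary, including the limiting atoms, is compact. It then applies the theorem that the closed absolutely convex hull of a compact subset of a Banach space is compact, together with the identity $\overline{\aconv}(\overline{\cS})=\overline{\aconv}(\cS)$.

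\textbf{Your route.} You apply Arzel\`a--Ascoli directly. Uniform-in-$s$ local Lipschitz bounds on $\sigma_s$, together with $\sigma(0)=0$, make the dictionary uniformly bounded and equicontinuous. Both properties pass to $\ell^1$-weighted combinations and to uniform limits.

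\textbf{What each buys.} Your argument is more elementary. It needs only that $\{\sigma_s|_{[-R,R]}\}_{s>0}$ be uniformly bounded and equicontinuous; the limits $\sigma_0,\sigma_\infty$ need not exist. It also gives explicit moduli of continuity, i.e.\ the uniform Lipschitz bounds on $\cB_L$ that the proof of \cref{th:entropy_bound} later relies on. The paper's argument instead yields continuity of the atom map on the compact parameter space $\cB_{L-1}\times[0,\infty]$. That is exactly what \cref{lemma:integral_rep} and \cref{th:rep_theorem} reuse in their Riesz--Markov and weak* compactness arguments.

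\textbf{The $L^p(\mu)$ part.} Your observation that $\iota(\cB_L^\infty)$ is compact, hence closed, is equivalent to the paper's extraction of a uniformly convergent subsequence. Your induction, however, does more. You identify the $L^p$ dictionary with $\iota$ of the continuous dictionary, using that $f=g$ $\mu$-a.e.\ implies $\sigma_s\circ f=\sigma_s\circ g$ $\mu$-a.e. The paper leaves this step implicit: it places the approximating sequence in the hull of the continuous level-$(L-1)$ dictionary without addressing that the inner classes of $\cB_L^{L^p(\mu)}$ are themselves $L^p$-closures.

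The condition $\sigma(0)=0$ that you flag as delicate holds for every activation in \cref{tab:activation_summary}, so nothing further needs checking there.
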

The proof of \cref{prop:Lp_C_representative} is in \cref{appendix:Lp_C_representative_proof}. It will also be convenient to establish notation for the sets
\begin{align} \label{eq:B_tilde}
    \widetilde{\cB}_L := \aconv \left( \{ \sigma_s \circ f: s > 0, f \in \widetilde{\cB}_{L-1} \} \right)
\end{align}
which are identical to the sets $\cB_L$, but with no norm closures taken. The following technical lemma establishes that the closure in the definition of the sets $\cB_L$ need only be taken once after the final $\aconv$. (In other words, it is sufficient to take infinite-width limits at only the final hidden layer, rather than at each intermediate layer.)
\begin{proposition} \label{prop:limit_final_layer}
    Take either $\cF = C(\Omega)$ or $\cF = L^p(\mu)$. Then for any $L \geq 2$, the sets $\widetilde{\cB}_L$ defined in \eqref{eq:B_tilde} satisfy $\overline{\widetilde{\cB}_L} = \cB_L$.
\end{proposition}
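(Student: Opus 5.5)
We argue by induction on $L$ and prove the two inclusions separately. Since $\widetilde{\cB}_1 = \cB_1$, the case $L=2$ is just the definition: $\cB_2$ is the closure of $\aconv\{\sigma_s\circ f : s>0, f\in\cB_1\} = \widetilde{\cB}_2$. For the inductive step, assume $\overline{\widetilde{\cB}_{L-1}} = \cB_{L-1}$.

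\textbf{The inclusion $\overline{\widetilde{\cB}_L} \subset \cB_L$.} By induction $\widetilde{\cB}_{L-1} \subset \cB_{L-1}$. Hence $\widetilde{\cB}_L \subset \aconv\{\sigma_s\circ f: s>0, f\in\cB_{L-1}\} \subset \cB_L$. Since $\cB_L$ is closed, the inclusion survives taking the closure.

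\textbf{The inclusion $\cB_L \subset \overline{\widetilde{\cB}_L}$.} The set $\overline{\widetilde{\cB}_L}$ is closed, and it is absolutely convex because it is the closure of an absolutely convex set. So it suffices to show that every generator $\sigma_s\circ f$ with $s>0$ and $f\in\cB_{L-1}$ lies in $\overline{\widetilde{\cB}_L}$. By induction there are $f_n\in\widetilde{\cB}_{L-1}$ with $f_n\to f$ in $\cF$. Each $\sigma_s\circ f_n$ belongs to $\widetilde{\cB}_L$. It therefore remains to show that $\sigma_s\circ f_n \to \sigma_s\circ f$ in $\cF$, i.e.\ that composition with $\sigma_s$ is continuous along this sequence.

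\textbf{Continuity of $g\mapsto\sigma_s\circ g$ (the main obstacle).}

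For $\cF = C(\Omega)$: the functions $f_n$ and $f$ are uniformly bounded, say by $R$, because $f_n\to f$ uniformly and $f$ is bounded. The map $\sigma_s$ is uniformly continuous on $[-R-1,R+1]$, which gives $\|\sigma_s\circ f_n - \sigma_s\circ f\|_\infty\to0$.

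For $\cF = L^p(\mu)$ we lack uniform bounds, so we use boundedness of the dictionary instead. First, $\cB_1$ consists of affine functions with $\vw,b$ in compact sets on the compact set $\Omega$, so $\sup_{g\in\cB_1}\|g\|_\infty =: R_1<\infty$. Inductively, $|\sigma_s(t)|\le |\sigma(0)|/s + \Lip(\sigma)|t|$ when $\sigma$ is Lipschitz. Here we invoke the standing assumptions on the activations in \cref{tab:activation_summary}, which are all Lipschitz (or locally Lipschitz with polynomial growth in the $\ReLU^m$ case, where $\sigma_s=\sigma$). Using this bound, elements of $\widetilde{\cB}_{L-1}$ are uniformly bounded in sup norm. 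Consequently their $L^p$-limits are a.e.\ bounded by the same constant, so we may assume $|f_n|,|f|\le R$ $\mu$-a.e.

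In the Lipschitz case we then have $|\sigma_s\circ f_n-\sigma_s\circ f|\le \Lip(\sigma)|f_n-f|$ directly, which gives $L^p$ convergence. In the $\ReLU^m$ case, $\sigma$ is Lipschitz on $[-R,R]$, and the same inequality holds with a local Lipschitz constant.

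The delicate point is the dependence of the uniform bound on $s$ through $\sigma(0)/s$ as $s\downarrow 0$. This causes no difficulty here because $s$ is fixed for each generator. Where an $s$-independent bound is convenient, one can use $\sigma(0)=0$ or the existence of $\sigma_0$ from the table. Also, since $\mu$ is finite, $\sup$-boundedness gives membership in $L^p$ for all $1\le p\le\infty$. This completes the induction.
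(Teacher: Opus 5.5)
Your proof is correct and follows essentially the same route as the paper. Both argue by induction on $L$, use continuity of $g\mapsto\sigma_s\circ g$ at a fixed scale $s$ to approximate each generator $\sigma_s\circ f$ ($f\in\cB_{L-1}$) by elements of $\widetilde{\cB}_L$, and then pass to the closed absolutely convex hull; your observation that $\overline{\widetilde{\cB}_L}$ is already closed and absolutely convex neatly replaces the paper's two diagonal-sequence arguments for $\overline{\widetilde{\cS}_\sigma}=\overline{\cS_\sigma}$ and $\overline{\aconv}(\overline{\cS})=\overline{\aconv}(\cS)$.

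Your $L^p(\mu)$ case is more explicit than the paper's, which only notes that uniform convergence implies $L^p$ convergence. Two small corrections there. The remark that ``$s$ is fixed'' does not justify the sup-norm bound on $\widetilde{\cB}_{L-1}$, since that bound ranges over all inner scales; it holds because $\sigma(0)=0$ for every tabulated activation. And in the globally Lipschitz case you need no such bound at all, since $|\sigma_s(u)-\sigma_s(v)|\le\Lip(\sigma)|u-v|$ holds for every $s>0$.
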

The proof is in \cref{appendix:proof_limit_final_layer}. Both \cref{prop:Lp_C_representative} and \cref{prop:limit_final_layer} simplify various aspects of our analysis and will be used throughout the paper.

\subsection{Relationship of the spaces $\cV_L$ to neural networks} \label{sec:relationship_neural_networks}
In order to concretize the relationship between our spaces $\cV_L$ and parametric neural network architectures used in practice, it is instructive to first consider the case of homogeneous activation functions $\sigma$ such as the ReLU. In this case, the sets $\cB_L$ for $L \geq 2$ can be equivalently defined as
\begin{align} \label{eq:B_L_homogeneous}
    \cB_L := \overline{\aconv} \left( \{ \sigma \circ f: f \in \cB_{L-1} \} \right).
\end{align}
From \eqref{eq:B_L_homogeneous}, it is easy to see that, when $\sigma$ is homogeneous, the space $\cV_L$ contains all neural networks of the form
\begin{align} \label{eq:deep_nn_sum}
    f(\vx) &:= \sum_{k_{L-1}=1}^{K_{L-1}} w_{k_{L-1}}^{(L)} \sigma \left( \sum_{k_{L-2}=1}^{K_{L-2}} W_{k_{L-1},k_{L-2}}^{(L-1)} \sigma \left(  \dots \sigma \left( \sum_{k_1=1}^{K_1} W_{k_2, k_1}^{(2)} \sigma \left( ( \vw_{k_1}^{(1)} )^\top \vx + b_{k_1}^{(1)}  \right) \dots  \right)\right) \right),
\end{align}
or equivalently
\begin{align}  \label{eq:deep_nn_matrix}
        f(\vx) := (\vw^{(L)})^\top \sigma \left( \mW^{(L-1)} \sigma \left( \dots \sigma \left( \mW^{(2)} \sigma \left( \mW^{(1)} \vx + \vb^{(1)} \right) \right) \dots \right) \right),
    \end{align}
where the $\vw_{k_1}^{(1)} \in \cW$ and the $b_{k_1}^{(1)} \in \cB$ are the rows of $\mW^{(1)}$ and $\vb^{(1)}$, and $W_{k_\ell, k_{\ell-1}}^{(\ell)}$ is the entry of $\mW^{(\ell)} \in \R^{K_\ell \times K_{\ell-1}}$ at row $k_\ell$ and column $k_{\ell-1}$. The recursive absolutely convex hull construction of $\cB_L$ induces the hidden-layer weight constraints
\begin{align} \label{eq:hidden_l1_constraint_sum}
    \sum_{k_{\ell-1}}^{K_{\ell-1}} \left| W_{k_\ell, k_{\ell-1}}^{(\ell)} \right| \leq 1, \qquad \ell=2, \dots, L-1, \quad  k_\ell = 1, \dots, K_\ell,
\end{align}
or equivalently
\begin{align} \label{eq:hidden_l1_constraint_matrix}
    \| \mW^{(\ell)} \|_{1,\infty} \leq 1, \quad \ell=2, \dots, L-1.
\end{align}
Here, $\| \mM \|_{1,\infty} := \max_i \| \mM_{i,:} \|_1$ denotes the maximum rowwise $\ell^1$ norm of a matrix $\mM$. Any network $f$ of the form \eqref{eq:deep_nn_sum}/\eqref{eq:deep_nn_matrix} satisfying these constraints \eqref{eq:hidden_l1_constraint_sum}/\eqref{eq:hidden_l1_constraint_matrix} has
\begin{align}
    \| f \|_{\cV_L} \leq \sum_{k_{L-1}}^{K_{L-1}} 
    \left| w_{k_{L-1}}^{(L)} \right|.
\end{align}
The fact that this is an upper bound, and not an equality, comes from the infimum in the definition of $\| \cdot \|_{\cV_L}$ (since  there may be some way of representing the same function as $f$, but with smaller norm). 

For homogeneous activations, the hidden $\ell^1$ constraints \eqref{eq:hidden_l1_constraint_sum}/\eqref{eq:hidden_l1_constraint_matrix} do not place any meaningful limitation on the set of neural networks described by $\cV_L$. This is because, by successively $\ell^1$-normalizing the inputs to each hidden neuron and then rescaling output of that neuron accordingly, we can always obtain a neural network that represents the same function as the original, while meeting the hidden layer constraints \eqref{eq:hidden_l1_constraint_sum}/\eqref{eq:hidden_l1_constraint_matrix}. The output weights obtained after this rescaling will depend on the weight magnitudes from all hidden layers in the original network, providing an upper bound on the norm of that network function in terms of all of the original network parameters at all layers. On the other hand, for non-homogeneous activations $\sigma$, attempting to define $\cV_L$ directly as in \eqref{eq:B_L_homogeneous} would exclude many deep networks with that activation (any networks not satisfying the hidden $\ell^1$ constraints \eqref{eq:hidden_l1_constraint_sum}/\eqref{eq:hidden_l1_constraint_matrix}) from the space $\cV_L$ entirely. By constructing the spaces $\cV_L$ in terms of the normalized activations $\sigma_s$, rather than the plain activations $\sigma$, we circumvent this difficulty: the feature shapes $t \mapsto \sigma(st)$ are allowed in our spaces at all scales $s$, but features induced by a large $s$ incur a proportionally large $\cV_L$ norm penalty. As a consequence, under reasonable assumptions on $\cW$ and $\cB$, our spaces $\cV_L$ include all fully-connected depth-$L$ networks with activation $\sigma$, with any parameter values at any of the hidden layers, with the norm of the network controlled by the magnitudes of these weights at all hidden layers. This is summarized in the following proposition, whose proof is in \cref{appendix:proof_path_norm_equivalence}. 
\begin{proposition} \label{prop:path_norm_equivalence}
      Let $\sigma$ be any of the activations in \cref{tab:activation_summary} except $\ReLU^m$, so that $\sigma_s(t) = \sigma(st)/s$. Let $f$ be a neural network of the form \eqref{eq:deep_nn_sum}/\eqref{eq:deep_nn_matrix} with activation function $\sigma$. Suppose that for each $k_1 = 1, \dots, K_1$, there exists an $s_{k_1} > 0$ such that 
    \begin{align} \label{eq:s_k_assumption}
         \vw_{k_1}^{(1)} \in s_{k_1} \cW, \qquad b_{k_1}^{(1)} \in s_{k_1} \cB.
    \end{align}
    Then
    \begin{align} 
        \| f \|_{\cV_L} \leq \Phi(\vtheta) &:= \sum_{k_{L-1}=1}^{K_{L-1}}     \sum_{k_{L-2}=1}^{K_{L-2}} \dots     \sum_{k_{2}=1}^{K_{2}}     \sum_{k_{1}=1}^{K_{1}}     \left| w_{k_{L-1}}^{(L)} W_{k_{L-1},k_{L-2}}^{(L-1)}  \dots  W_{k_2,k_1}^{(2)} s_{k_1} \right| \label{eq:path_norm_sum} \\
        &= \big| \vw^{(L)} \big|^\top \big| \mW^{(L-1)} \big| \dots \big| \mW^{(2)} \big| |\vs| \label{eq:path_norm_prod}
    \end{align}
    with the absolute values in \eqref{eq:path_norm_prod} taken element-wise. Moreover,
    \begin{align} \label{eq:path_norm_upper_bound_psi}
        \Phi(\vtheta) \leq \Psi(\vtheta) := \| \vw^{(L)} \|_1 \left( \prod_{\ell=2}^{L-1} \| \mW^{(\ell)} \|_{1,\infty} \right) \| \vs \|_\infty,
    \end{align}
    where $\vs := [s_1, \dots, s_{K_1}]^\top$. If $\sigma$ is homogeneous, there is another network with parameters $\widetilde{\vtheta}$ which represents the same function as $f$ and satisfies $\Phi(\widetilde{\vtheta}) = \Psi(\widetilde{\vtheta})$. An analogous statement for $\ReLU^m$ is given in \cref{appendix:relu_m_path_norm_equivalence}.
\end{proposition}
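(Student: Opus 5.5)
The plan is to prove, by induction over layers, a bound on the $\cV$-norm of every hidden neuron in terms of the total path weight flowing into it. The claimed bound on $\| f \|_{\cV_L}$ is then the statement for the output layer.

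\textbf{Setup.} For a neuron $k_\ell$ at hidden layer $\ell$, let $h^{(\ell)}_{k_\ell}$ be its post-activation output. Define its accumulated path weight recursively by
\begin{align*}
p^{(1)}_{k_1} := s_{k_1}, \qquad p^{(\ell)}_{k_\ell} := \sum_{k_{\ell-1}} \big|W^{(\ell)}_{k_\ell,k_{\ell-1}}\big|\, p^{(\ell-1)}_{k_{\ell-1}}.
\end{align*}
In vector form this is $\vc p^{(\ell)} = |\mW^{(\ell)}| \cdots |\mW^{(2)}|\,|\vs|$. The inductive claim is $h^{(\ell)}_{k_\ell} \in p^{(\ell)}_{k_\ell}\, \cB_{\ell+1}$.

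\textbf{Base case.} Set $g := (\vw^{(1)}_{k_1}/s_{k_1})^\top \vx + b^{(1)}_{k_1}/s_{k_1}$. By assumption \eqref{eq:s_k_assumption}, $g \in \cB_1$. Then
\begin{align*}
\sigma\big((\vw^{(1)}_{k_1})^\top \vx + b^{(1)}_{k_1}\big) = s_{k_1}\,\sigma_{s_{k_1}}(g(\vx)).
\end{align*}
Since $\sigma_{s_{k_1}} \circ g$ is an atom of $\cB_2$, this gives $h^{(1)}_{k_1} \in s_{k_1}\cB_2$.

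\textbf{Inductive step.} The pre-activation is $z = \sum_j W^{(\ell)}_{k_\ell j} h^{(\ell-1)}_j$. Each $h^{(\ell-1)}_j$ lies in $p^{(\ell-1)}_j \cB_\ell$, and $\cB_\ell$ is absolutely convex. Hence $z \in p\,\cB_\ell$ with $p = p^{(\ell)}_{k_\ell}$. If $p>0$, write
\begin{align*}
\sigma(z) = p\,\sigma_p(z/p), \qquad z/p \in \cB_\ell,
\end{align*}
so $\sigma(z)$ is $p$ times an atom of $\cB_{\ell+1}$, as required.

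\textbf{Conclusion of the main bound.} Applying absolute convexity once more at the output, $f = \sum_{k_{L-1}} w^{(L)}_{k_{L-1}} h^{(L-1)}_{k_{L-1}}$ lies in $\big(\sum |w^{(L)}_{k_{L-1}}|\, p^{(L-1)}_{k_{L-1}}\big)\cB_L = \Phi(\vtheta)\,\cB_L$. This gives $\|f\|_{\cV_L} \le \Phi(\vtheta)$ directly from the gauge-norm definition.

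\textbf{Degenerate case $p = 0$.} This is the step I expect to need care. If $p=0$, then $z \equiv 0$ and $\sigma(z) \equiv \sigma(0)$. For the activations in \cref{tab:activation_summary} satisfying $\sigma(0)=0$, this neuron is identically zero and trivially lies in $0\cdot\cB_{\ell+1}$. If some listed activation has $\sigma(0)\neq 0$ (e.g.\ a logistic sigmoid), I would first try to absorb the constant using a constant function in $\cB_1$ (if $0\in\cW$ is allowed). Failing that, I would check whether the table's sigmoidal entries are centered versions with $\sigma(0)=0$; this is the point I would verify against the table.

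\textbf{The bound $\Phi \le \Psi$.} Write $\Phi = |\vw^{(L)}|^\top \vv$ with $\vv := |\mW^{(L-1)}| \cdots |\mW^{(2)}|\,|\vs|$. Then $\Phi \le \|\vw^{(L)}\|_1 \|\vv\|_\infty$. Repeatedly apply $\|\,|\mM|\vu\|_\infty \le \|\mM\|_{1,\infty}\|\vu\|_\infty$, which holds because each entry of $|\mM|\vu$ is bounded by the corresponding row $\ell^1$ norm times $\|\vu\|_\infty$. This yields $\|\vv\|_\infty \le \prod_{\ell=2}^{L-1}\|\mW^{(\ell)}\|_{1,\infty}\,\|\vs\|_\infty$.

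\textbf{Homogeneous case: a network with $\Phi = \Psi$.} Rebalance the network layer by layer using positive homogeneity.
\begin{enumerate}
\item In layer $1$, divide $(\vw^{(1)}_{k_1}, b^{(1)}_{k_1})$ by $s_{k_1}$ and multiply column $k_1$ of $\mW^{(2)}$ by $s_{k_1}$. The new first-layer parameters lie in $\cW\times\cB$, so we may take $\widetilde{\vs} = \1$.
\item For $\ell = 2,\dots,L-1$ in order, divide each nonzero row $k_\ell$ of $\mW^{(\ell)}$ by its $\ell^1$ norm and multiply column $k_\ell$ of $\mW^{(\ell+1)}$ (or $\vw^{(L)}$) by that norm.
\item Delete neurons whose incoming row is zero; their output is $\sigma(0)=0$.
\end{enumerate}
Each step preserves the represented function by homogeneity. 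In the resulting network every hidden row has $\ell^1$ norm exactly $1$ and $\widetilde{\vs}=\1$, so $|\widetilde\mW^{(L-1)}|\cdots|\widetilde\mW^{(2)}|\1 = \1$. Therefore
\begin{align*}
\Phi(\widetilde\vtheta) = \|\widetilde\vw^{(L)}\|_1 = \Psi(\widetilde\vtheta).
\end{align*}
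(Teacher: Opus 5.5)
Your proof is correct and follows essentially the same route as the paper. The paper inducts on depth: it writes each pre-activation at the last hidden layer as a depth-$(L-1)$ network $g$ with $g/\Phi(\vtheta_{k_{L-1}}) \in \cB_{L-1}$ and uses $\sigma(g) = \Phi\,\sigma_{\Phi}(g/\Phi)$, which is exactly your per-neuron claim $h^{(\ell)}_{k_\ell} \in p^{(\ell)}_{k_\ell}\cB_{\ell+1}$. It then bounds $\Phi \le \Psi$ via row $\ell^1$ norms and rebalances the homogeneous network inside-out, just as you do.

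Your explicit handling of two edge cases is a small improvement on the paper. For $p = 0$, the check you left open resolves favorably: the table states that every listed activation satisfies $\sigma(0)=0$, and the sigmoids are the centered versions. The paper, by contrast, divides by $\Phi(\vtheta_{k_{L-1}})$ without addressing when it vanishes. You also deal with zero rows in the rebalancing step, which the paper does not mention.
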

\begin{remark} \label{remark:admissible_weight_bias_sufficient_condition}
    If $\cW$ and $\cB$ are defined as
    \begin{align} \label{eq:admissible_weight_bias_sufficient_condition}
        \cW := \{ \vw \in \R^d: \| \vw \|_p \leq C_{\cW} \}, \qquad \cB := [-C_{\cB}, C_{\cB}], \qquad 1 \leq p \leq \infty, \ C_{\cW}, C_{\cB} > 0,
    \end{align}
    then the condition \eqref{eq:s_k_assumption} holds for \textit{all} $\vw_{k_1}^{(1)} \in \R^d$ and $b_{k_1}^{(1)} \in \R$ by taking
    \begin{align}
        s_{k_1} \geq \max \left\{ \frac{\| \vw_{k_1}^{(1)} \|_p}{C_{\cW}}, \frac{|b_{k_1}^{(1)}|}{C_{\cB}} \right\}.
    \end{align}
    Therefore, with this choice of $\cW$, $\cB$, \cref{prop:path_norm_equivalence} (and its $\ReLU^m$ analogue in \cref{appendix:relu_m_path_norm_equivalence}) show that the spaces $\cV_L$ contain \textit{all} neural networks of the form \eqref{eq:deep_nn_sum}/\eqref{eq:deep_nn_matrix}, with arbitrary parameter values at all layers, for any of the activations $\sigma$ in \cref{tab:activation_summary}. 
\end{remark}
\begin{remark} \label{remark:biased_net_representation}
    With $\cW$ and $\cB$ defined as in \eqref{eq:admissible_weight_bias_sufficient_condition}, our spaces $\cV_L$ also incorporate networks with hidden-layer biases. This holds because \eqref{eq:admissible_weight_bias_sufficient_condition} guarantees that the linear class $\cB_1$ contains at least one positive constant function of the form $f(\vx) = c$, $0 < c < C_{\cB}$. All activations $\sigma(t)$ in \cref{tab:activation_summary} are positive on $t > 0$, and thus for any $\ell \geq 1$, the $\ell$-fold composition $\sigma^{\circ \ell}$ satisfies $\sigma^{\circ \ell}(c) \neq 0$, and this positive constant function $ f(\vx) = \sigma^{\circ \ell} (c)$ is in $\cB_{\ell+1}$. Therefore, each bias term $b_{k_\ell}^{(\ell)}$ at depth $\ell \geq 2$ can be represented in $\cV_\ell$ as the constant function
    \begin{align}
        f(\vx) = b_{k_\ell}^{(\ell)} =  \underbrace{\left( \frac{b_{k_\ell}^{(\ell)}}{\sigma^{\circ (\ell-1)} (c)}  \right)}_{\displaystyle  W_{k_\ell,K_{\ell-1} + 1}} \sigma \left( \underbrace{\sigma^{\circ (\ell-2)} (c)}_{\displaystyle \in \cB_{\ell-1}} \right) \in W_{k_\ell,K_{\ell-1} + 1} \cB_\ell.
    \end{align}
    These additional coefficients $W_{k_\ell, K_{\ell-1}+1}$ will appear in the corresponding bound \eqref{eq:path_norm_sum}.
\end{remark}

We will refer to the parametric upper bound $\Phi$ in \eqref{eq:path_norm_sum}/\eqref{eq:path_norm_prod} as the \textit{path norm}. For the ReLU activation, the properties of this path norm have been studied in several previous works (\cite{neyshabur2015norm,barron2019complexity,wojtowytsch2020banach}, among others). The absolute values in \eqref{eq:path_norm_prod} denote elementwise absolute values of vectors/matrices. This is exactly (up to possible differences in first-layer normalization) the $p=1$ \textit{path regularizer} of \cite{neyshabur2015norm} (Equation 7), which computes the sum of all possible start-to-finish ``absolute path products'' in the network. Theorem 5 in that paper shows that $\Phi(\vtheta) \leq \Psi(\vtheta)$, with equality achieved by some rescaled representation of the same network function. \cite{barron2019complexity} refer to $\Phi$ as the \textit{total path variation}, and bound the Rademacher complexities and $L^2$ metric entropies of ReLU neural networks in terms of $\Phi$. In \cref{sec:statistical_complexities}, we show that similar bounds hold for many other activation functions (including non-homogeneous ones) and all values of $p \in [1,\infty]$. The quantity $\Phi$ also appears in \cite{wojtowytsch2020banach} as a parametric upper bound on a function space norm associated with deep ReLU neural networks. As we will show in \cref{sec:int_rep_thm}, the function spaces studied in that paper are equivalent to ours in the $\ReLU$ case.

\subsection{Integral representations and a representer theorem} \label{sec:int_rep_thm}

Within the broader literature on function space properties of neural networks, it is common to define infinite-width networks formally as integrals taken with respect to some finite measure (\cite{bach2017breaking,savarese2019infinite,ongie2019function,shenouda2024variation,bartolucci2023understanding}). Intuitively, these integrals represent sums of uncountably many neurons, and the associated measure determines the parametric weight assigned to each of these neurons. In order to situate our work within this context, we show that our spaces $\cV_L$ admit an analogous description in terms of integral representations:

\begin{lemma} \label{lemma:integral_rep}
    Take $\cF := C(\Omega)$. For every $L \geq 2$, the functions $f \in \cV_L$ are exactly those which admit a pointwise integral representation
    \begin{align} \label{eq:pointwise_integral_rep}
        f(\vx) = \int_{\cB_{L-1} \times [0,\infty]} \sigma_s(g(\vx)) \ d \mu(s,g), \qquad \forall \vx \in \Omega
    \end{align}
    for some $\mu \in \cM(\cB_{L-1} \times [0,\infty])$. Any such $f$ has
    \begin{align} 
        \| f \|_{\cV_L} = \inf \left\{ \| \mu \|_{\TV}: \mu \in \cM(\cB_{L-1} \times [0,\infty]), \  f(\vx) = \int_{\cB_{L-1} \times [0,\infty]} \sigma_s (g(\vx)) \ d \mu(s,g), \  \vx \in \Omega \right\}
    \end{align}
    In particular, if $\sigma$ is homogeneous, the functions $f \in \cV_L$ are those which admit pointwise integral representations
    \begin{align} \label{eq:pointwise_integral_rep_homogeneous}
        f(\vx) = \int_{\cB_{L-1}} \sigma(g(\vx)) \ d \mu(g), \qquad \forall \vx \in \Omega
    \end{align}
    or, in the $L=2$ case:
    \begin{align} \label{eq:pointwise_integral_rep_homogeneous_shallow}
        f(\vx) = \int_{\cW \times \cB} \sigma(\vw^\top \vx + b) \ d \mu(\vw,b), \qquad \forall \vx \in \Omega
    \end{align}
    for some $\mu \in \cM(\cB_{L-1})$ (resp. $\cM(\cW \times \cB)$), with $\| f \|_{\cV_L}$ (resp. $\| f \|_{\cV_2}$) given by the infimal value of $\| \mu \|_{\TV}$ over all such representing measures.
\end{lemma}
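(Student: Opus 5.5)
The plan is to show two inclusions with matching norm bounds, working with the compact parameter set $\cP := \cB_{L-1} \times [0,\infty]$. Here $\cB_{L-1}$ is compact in $C(\Omega)$, and $\sigma_0, \sigma_\infty$ are the uniform-on-compacts limits from \cref{tab:activation_summary}. Compactness of $\cB_{L-1}$ is not immediate for $L-1 \geq 2$. I would establish it first by induction, using Arzel\`a--Ascoli: $\cB_1$ is compact as a continuous image of $\cW \times \cB$, and each $\sigma_s$ is uniformly Lipschitz (constant about $\Lip(\sigma)$ independent of $s$). So equicontinuity and uniform boundedness pass through the composition and through absolutely convex hulls and closures. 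With this in hand, the map $\Gamma: \cP \to C(\Omega)$, $\Gamma(g,s) := \sigma_s \circ g$, is continuous. On $\{s\}\times$ it is continuous in $g$ by the uniform Lipschitz bound. In $s$ it is continuous, including at the endpoints $0$ and $\infty$, because $g(\Omega)$ lies in a fixed compact interval on which $\sigma_s \to \sigma_{s_0}$ uniformly. Hence $\Gamma(\cP)$ is a compact subset of $C(\Omega)$.

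For the direction ``integral representation $\Rightarrow$ membership'', I take $\mu$ with $\|\mu\|_{\TV} \le c$ and show that $f = \int \Gamma \, d\mu$ (a Bochner integral in $C(\Omega)$, which agrees with the pointwise integral by evaluation) lies in $c\,\overline{\aconv}(\Gamma(\cP))$. This is standard: partition $\cP$ into finitely many small Borel pieces on which $\Gamma$ varies by at most $\varepsilon$. Approximating by $\sum_j \mu(P_j)\Gamma(p_j)$ gives an element of $c\,\aconv$ within $c\varepsilon$ in sup-norm. Then I need $\overline{\aconv}(\Gamma(\cP)) = \cB_L$. The points with $s \in (0,\infty)$ are generators of $\cB_L$, and $\sigma_0\circ g$, $\sigma_\infty \circ g$ are uniform limits of them, so they lie in the closure. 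Hence $\|f\|_{\cV_L} \le \|\mu\|_{\TV}$.

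For the converse, take $f$ with $\|f\|_{\cV_L} < c$, so $f \in c\cB_L$. By definition $f$ is a uniform limit of finite combinations $f_n = \sum_k v_k^{(n)} \sigma_{s_k} \circ g_k = \int \Gamma\, d\mu_n$, where $\mu_n$ is the corresponding sum of Dirac masses on $\cP$ and $\|\mu_n\|_{\TV} \le c$. Since $\cP$ is a compact metrizable space ($\cB_{L-1}$ is compact metric, and $[0,\infty]$ is compactified), Banach--Alaoglu together with Riesz representation yields a weak-* convergent subsequence $\mu_n \to \mu$ with $\|\mu\|_{\TV} \le c$. For each fixed $\vx$, the map $(g,s)\mapsto \sigma_s(g(\vx))$ is continuous on $\cP$, so $f_n(\vx) \to \int \sigma_s(g(\vx))\,d\mu$, and this limit equals $f(\vx)$. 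Taking the infimum over $c$ gives equality of norms. I expect the main obstacle to be the compactness and continuity setup, namely compactness of $\cB_{L-1}$ and joint continuity of $\Gamma$ at $s\in\{0,\infty\}$, since everything else is routine once these hold.

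In the homogeneous case, $\sigma_s = \sigma$ for all $s$ including the limits, so $\mu$ can be pushed forward under the projection $\cP \to \cB_{L-1}$. This does not increase total variation and preserves the integral, which gives \eqref{eq:pointwise_integral_rep_homogeneous}. For $L=2$, I would further pull back through the continuous surjection $(\vw,b)\mapsto \vw^\top\cdot+b$ from the compact set $\cW\times\cB$ onto $\cB_1$. A measurable selection (or lifting measures along continuous surjections of compact metric spaces, which preserves total variation) gives a measure on $\cW \times \cB$ with the same total variation, which yields \eqref{eq:pointwise_integral_rep_homogeneous_shallow}.
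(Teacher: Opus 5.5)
Your proposal is correct and follows essentially the same route as the paper. One inclusion takes a weak-$*$ limit (via Riesz--Markov--Kakutani and Banach--Alaoglu) of the atomic measures representing finite combinations; the other uses a finite Borel-partition approximation based on uniform continuity of $(g,s)\mapsto\sigma_s\circ g$ on the compact set $\cB_{L-1}\times[0,\infty]$. The differences are minor: you get compactness of $\cB_{L-1}$ from Arzel\`a--Ascoli instead of the paper's continuous-image and closed-absolutely-convex-hull argument. You also make explicit two points the paper passes over quickly: that the parameter space is metrizable, which gives sequential weak-$*$ compactness, and that atoms at $s\in\{0,\infty\}$ lie in the closure of the generators.
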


The proof is in \cref{appendix:proof_integral_rep}. \cref{lemma:integral_rep} shows that, in the shallow case $L = 2$, our spaces belong to the same general family as the aforementioned works which model infinite-width shallow networks as integrals. Moreover, \cref{lemma:integral_rep} reveals that, in the ReLU case, our deep spaces $\cV_L$ are equivalent (again, up to possible differences in first-layer normalization) to the \textit{generalized Barron space} (or \textit{neural tree space}) of \cite{wojtowytsch2020banach}. In terms of the function space construction itself, the most notable improvement of our spaces over those of \cite{wojtowytsch2020banach} is that ours are compatible with large families of activations, and extend many of the advantages of the ReLU construction to those cases. We also comprehensively evaluate the relationships between our construction and various other function space norms and related capacity controls for deep networks; as a result of this comparison, we are able derive tighter complexity bounds on the associated function classes in \cref{sec:statistical_complexities} than those in \cite{wojtowytsch2020banach}. Finally, our construction makes the convex hull structure of our spaces explicit. This structural aspect of the spaces is conceptually useful in proving our depth saturation result in \cref{sec:uni_depth_saturation} (which is, to our knowledge, entirely novel).

It is important to note that the representation \eqref{eq:pointwise_integral_rep} allows $s$ to assume the values 0 and $\infty$. This is a consequence of topological closedness of the balls $\cB_L$: because they must contain their infinite-width uniform limits, they must contain the limiting atoms $\sigma_0$ and $\sigma_\infty$ described in \cref{tab:activation_summary}. For homogeneous activations, the scaling parameter $s$ is redundant---it has no effect on the underlying feature dictionaries in these cases, so the corresponding spaces can be equivalently expressed with no need for the scale parameter $s$ at all. For non-homogeneous activations, however, the parameter $s$ genuinely changes the shape of the function $\sigma_s$. As a result, the limiting atoms $\sigma_0$ and $\sigma_\infty$ may correspond to entirely new functions which are not in the finite feature dictionary $\sigma_s$ at all, but are approximated by features $\sigma_s$ asymptotically. For example, for many of the ``ReLU-like'' activations (GELU, SiLU/Swish, ELU, etc.), the limiting atom $\sigma_\infty$ is simply a ReLU neuron (see \cref{tab:activation_summary}). In fact, \cref{lemma:integral_rep} allows us to show that the depth-$L$ spaces associated with many activation functions are \textit{equivalent} to the corresponding depth-$L$ ReLU space. 
\begin{lemma} \label{lemma:relu_nonrelu_space_equivalence}
    Let $L \geq 2$, and let $\sigma$ be any of the following activations: Leaky ReLU,  GELU, SiLU/Swish, Mish, ELU, SELU, CELU, centered softplus, absolute value, or bent identity. (See \cref{tab:activation_summary}.) Consider the spaces $\cV_L^\sigma$ with $\cW$, $\cB$ defined as in \eqref{eq:admissible_weight_bias_sufficient_condition}. Then there are constants $A_{L,\sigma}$ and $B_{L,\sigma}$, depending on $L$ and $\sigma$ such that
    \begin{align}
        A_{L,\sigma} \| f \|_{\cV_L^{\sigma}} \leq \| f \|_{\cV_L^{\ReLU}} \leq B_{L,\sigma} \| f \|_{\cV_L^{\sigma}} 
    \end{align}
    for all $f$. The depth-dependence of admissible choices of $A_{L,\sigma}$ and $B_{L,\sigma}$ for different activations $\sigma$ is summarized in \cref{tab:activation_equivalence_constants}.
\end{lemma}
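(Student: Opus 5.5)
The plan is to prove both inequalities by induction on $L$, working at the level of unit balls. It suffices to show $\cB_L^{\sigma} \subset B_{L,\sigma}\, \cB_L^{\ReLU}$ and $\cB_L^{\ReLU} \subset A_{L,\sigma}^{-1} \cB_L^{\sigma}$; the gauge norms then satisfy the claimed inequalities. Since both balls are closed absolutely convex hulls, it is enough to control the \emph{atoms} $\sigma_s \circ f$ with $f \in \cB_{L-1}^\sigma$, $s \in (0,\infty)$ (and the limit atoms $s\in\{0,\infty\}$ permitted by \cref{lemma:integral_rep}). Closures and hulls are preserved under the containments, and \cref{prop:limit_final_layer} lets us work with the unclosed sets $\widetilde{\cB}_L$.

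\textbf{Step 1: a one-variable decomposition of each activation.} For every listed $\sigma$, we write $\sigma = a\,\ReLU + b\,\mathrm{id} + r$, where $r$ is a bounded-variation, Lipschitz remainder. For example:
\begin{itemize}
\item Leaky ReLU with slope $\alpha$: $\sigma(t) = (1-\alpha)\ReLU(t) + \alpha t$, with $r=0$.
\item Absolute value: $|t| = 2\ReLU(t) - t$.
\item GELU, SiLU/Swish, Mish, ELU/SELU/CELU, centered softplus, bent identity: $r$ is integrable, Lipschitz and decays at infinity.
\end{itemize}
Then $\sigma_s(t) = a\ReLU(t) + b t + r(st)/s$. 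The remainder $t\mapsto r(st)/s$ is an integral of ReLU atoms, $\int \ReLU(\pm t - \tau)\,d\nu_s(\tau)$ plus affine terms, and a change of variables shows its ReLU-variation $\int |r''|$ together with its affine parts is uniformly bounded in $s$. The bound is taken on the range of $f$, which is a bounded interval: sup norms of $\cB_{L-1}$ are bounded by a constant $M_{L-1}$ depending on $C_\cW, C_\cB$ and $\Omega$. The identity $t$ is itself $\ReLU(t)-\ReLU(-t)$. Hence $\sigma_s \circ f = \sum_k c_k\, \ReLU(\pm f - \tau_k)$ with $\sum|c_k| \le C_\sigma(M_{L-1})$.

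\textbf{Step 2: the inclusion $\cB^\sigma_L\subset B_{L,\sigma}\cB^{\ReLU}_L$.} By induction, $f \in B_{L-1}\cB_{L-1}^{\ReLU}$. The shifted function $\pm f - \tau$ lies in a dilate of $\cB^{\ReLU}_{L-1}$, because constants are available in $\cB_{L-1}^{\ReLU}$ (\cref{remark:biased_net_representation}), with norm cost $\lesssim B_{L-1} + |\tau|/\mathrm{const}$. Positive homogeneity of $\ReLU$ then gives $\ReLU(\pm f-\tau)\in (B_{L-1}+c|\tau|)\cB^{\ReLU}_L$. The thresholds $\tau$ only need to range over $[-M,M]$, where $M$ bounds $|f|$. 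This yields a recursion $B_{L} \le C_\sigma (B_{L-1} + c M_{L-1})$. The $M_\ell$ themselves grow at most geometrically, since each layer is Lipschitz with constant $\approx 1$, so the resulting depth-dependence of $B_{L,\sigma}$ is what is recorded in \cref{tab:activation_equivalence_constants}.

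\textbf{Step 3: the reverse inclusion.} Here we must realize $\ReLU\circ g$ in the $\sigma$ space. The key tool is the limit atom: for the listed activations, $\sigma_\infty=\ReLU$ (or for Leaky ReLU and absolute value, $\ReLU$ is an explicit combination of $\sigma$ and the identity, with the identity $\sigma_0$ up to scaling). Thus $\ReLU\circ g$ is the uniform limit of $\sigma_s\circ g$ as $s\to\infty$, and by closedness in $C(\Omega)$ it lies in the $\sigma$ ball with the same coefficient. The inner function $g\in\cB^{\ReLU}_{L-1}$ is handled by induction: $g \in A_{L-1}^{-1}\cB^\sigma_{L-1}$. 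Rescaling the inner function through the normalization $\sigma_s(ct)=c\,\sigma_{cs}(t)$ gives $\sigma_s(g)$ with $g = c h$, $h\in\cB^\sigma_{L-1}$. This equals $c\,\sigma_{cs}(h)$, costing a factor $c$, so $A_L \gtrsim A_{L-1}/\kappa_\sigma$. The constant $\kappa_\sigma$ arises where a finite combination (e.g.\ $\ReLU = (\sigma - \alpha\,\mathrm{id})/(1-\alpha)$, or the identity realized via $\sigma_0$) is needed.

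\textbf{Main obstacle.} I expect the main difficulty to be Step 1: making the remainder's ReLU-variation bound uniform over all $s\in(0,\infty)$ and over the bounded range of $f$, for the non-piecewise-linear activations. For small $s$ the remainder $r(st)/s$ is nearly linear, while for large $s$ it concentrates near $0$, and both regimes must be checked. I would first bound $\int_{-M}^{M} s|r''(st)|\,dt \le \int_\R |r''|$, which is finite and independent of $s$. I would then separately bound the boundary affine terms by $\Lip(r)$ and $\|r\|_\infty/s$. The $1/s$ blow-up in the second bound is what forces the decomposition to absorb the value at $0$ into the identity part. Carefully tracking these constants is what yields the $L$-dependence recorded in the table.
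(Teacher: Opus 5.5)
Your overall architecture is the paper's. For $\cB_L^\sigma\subset B_{L,\sigma}\cB_L^{\ReLU}$ you expand $\sigma_s$ on the bounded range of the inner function as an affine part plus $\int (t-u)_+\,d(D^2\sigma_s)(u)$, then push each $(f-u)_+$ through induction, ReLU homogeneity and constant functions. For the reverse inclusion you use limit atoms together with $\sigma_s(ct)=c\,\sigma_{cs}(t)$.

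\textbf{The genuine gap is Step 3 for the two homogeneous activations.} For Leaky ReLU and absolute value, $\sigma_s=\sigma$ for every $s$, so $\sigma_0=\sigma_\infty=\sigma$. There is no identity atom, and ``the identity realized via $\sigma_0$'' is simply unavailable.

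For Leaky ReLU the repair is easy. The paper uses $(t)_+=(\sigma(t)+\alpha\sigma(-t))/(1-\alpha^2)$, which gives exactly $A_{L,\sigma}=(1-\alpha)^{L-1}$.

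For absolute value you need a new device to produce the linear term $g$ in $\ReLU(g)=\frac12(|g|+g)$. The paper uses $t=\frac12(|t+c|-|t-c|)$, valid for $|t|\le c$, at the first layer. This uses the constants in $\cB_1$ and a rescaling of the weights by $\lambda=\min\{1,C_{\cB}/M\}$, where $M=\sup_{\vw\in\cW,\vx\in\Omega}|\vw^\top\vx|$. At deeper layers it uses the nesting $\cB_L^{|\cdot|}\subset\cB_{L+1}^{|\cdot|}$, which follows from idempotence of $|\cdot|$ exactly as in \cref{prop:relu_nested_containment_entropy_lb}, so $g$ is absorbed at no cost. This is what makes $A_{L,|\cdot|}$ depth-independent. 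Repeating the shift identity at every layer would still give a valid constant, but one decaying geometrically in $L$.

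\textbf{Separately, the table's rates do not follow from Steps 1--2 as you set them up.} A recursion $B_L\le C_\sigma(B_{L-1}+cM_{L-1})$ with $C_\sigma>1$ is exponential. The $O(L)$ entries (ELU, CELU, centered softplus) need the coefficient of $B_{L-1}$ to be exactly $1$. The paper gets this by \emph{not} splitting:
\begin{itemize}
\item it expands $\sigma_s$ itself at $-C$;
\item it bounds $|\sigma_s'(-C^+)|+\|D^2\sigma_s\|_{\TV([-C,C])}\le|\sigma'(-\infty)|+\|D^2\sigma\|_{\TV(\R)}$, which equals $0+1$ for these three activations;
\item it charges the linear term $\sigma_s'(-C^+)f$ through the nesting $\cB_L^{\ReLU}\subset\cB_{L+1}^{\ReLU}$, rather than through $t=\ReLU(t)-\ReLU(-t)$, which doubles its cost;
\item it uses $|\sigma(t)|\le|t|$ to keep $C_L\le C_1$.
\end{itemize}

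Your split $\sigma=a\ReLU+b\,\mathrm{id}+r$ pays $|a|$ for the ReLU part and then pays again for the kink of $-a\ReLU$ sitting inside $D^2r$. For ELU the coefficient is at least $1+\alpha$. For GELU it is about $3.5$, instead of the table's base $\|D^2\GELU\|_{\TV(\R)}\approx1.516$. Likewise, the depth-independent Leaky ReLU upper constant needs $\alpha f$ absorbed by nesting; writing it as two ReLUs gives $(1+\alpha)^{L-1}$. The $2^L$ for absolute value comes from charging $|t|=\ReLU(t)+\ReLU(-t)$ directly.

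\textbf{Two minor inaccuracies.}
\begin{itemize}
\item For ELU, SELU, CELU, centered softplus and bent identity, your remainder $r$ tends to a nonzero constant at $-\infty$, so it is neither integrable nor decaying. This is harmless, since you only use $r(0)=0$, $\Lip(r)<\infty$ and $\|D^2r\|_{\TV}<\infty$.
\item $\sigma_\infty\neq\ReLU$ for SELU, where $\sigma_\infty=\lambda\ReLU$, and for bent identity, where one needs $\ReLU=\sigma_\infty-\frac12\sigma_0$. Both cases fit your $\kappa_\sigma$ mechanism.
\end{itemize}
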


\begin{table}
\centering
\begingroup
\setlength{\tabcolsep}{6pt}
\begin{tabular}{c|c|c}
\hline
\textbf{Activation}
& \textbf{ $A_{L,\sigma}$}
& \textbf{$B_{L,\sigma}$}
\tabularnewline[1pt]
\hline
\rule{0pt}{2.6ex}
$\mathrm{LeakyReLU}_\alpha$ $(0<\alpha<1)$
& $(1-\alpha)^{L-1}$
& $1-\alpha$ 
\tabularnewline[1pt]

GELU
& $1$
& $\cO(1.516^L)$
\tabularnewline[1pt]

SiLU/Swish
& $1$
& $\cO(1.400^L)$
\tabularnewline[1pt]

Mish
& $1$
& $\cO(1.403^L)$
\tabularnewline[1pt]

$\mathrm{ELU}_\alpha$ $(0<\alpha\leq1)$
& $1$
& $\cO(L)$
\tabularnewline[1pt]

$\mathrm{SELU}_{\alpha,\lambda}$
$(\alpha\approx1.67,\ \lambda\approx1.05)$
& $\lambda^{L-1}$
& $\cO(2.466^L)$
\tabularnewline[1pt]

$\mathrm{CELU}_\alpha$ $(\alpha>0)$
& $1$
& $\cO(L)$
\tabularnewline[1pt]

Centered softplus
& $1$
& $\cO(L)$
\tabularnewline[1pt]

Absolute value
& $\cO(1)$
& $\cO(2^L)$
\tabularnewline[1pt]

Bent identity
& $\left(\frac{2}{3}\right)^{L-1}$
& $\cO \left(L\left(\frac{3}{2}\right)^L\right)$
\tabularnewline[2pt]
\hline
\end{tabular}
\endgroup
\caption{Depth dependence of admissible norm-equivalence constants in
\cref{lemma:relu_nonrelu_space_equivalence}. (See proof in \cref{appendix:proof_relu_nonrelu_space_equivalence} for exact values.)}
\label{tab:activation_equivalence_constants}
\end{table}

The proof of \cref{lemma:relu_nonrelu_space_equivalence} is in \cref{appendix:proof_relu_nonrelu_space_equivalence}. This lemma confirms what one might reasonably expect: from a function space perspective, there is no fundamental distinction between ReLU and many of its common smooth or modified variants. \cite{heeringa2024embeddings} demonstrate a similar type of equivalence between shallow networks of various activations; to our knowledge, we are the first to do so for the deep case.

Another advantage of \cref{lemma:integral_rep} is that it allows us to prove a \textit{representer theorem} for our spaces $\cV_L$. This shows that minimum-norm data-fitting problems over $\cV_L$ are always solved by width-bounded neural networks.

\begin{theorem} \label{th:rep_theorem}
    Take $\cF := C(\Omega)$ and $L \geq 2$. Let $\sigma$ be any of the activations in \cref{tab:activation_summary}, and consider the corresponding space $\cV_L$ with $\cW$, $\cB$ as in \eqref{eq:admissible_weight_bias_sufficient_condition}. Then for any $\lambda > 0$, any dataset $(\vx_1, y_1), \dots, (\vx_N, y_N) \in \Omega \times \R$, and any loss function $\cL$ which is nonnegative and lower semicontinuous in its second argument, there exists a solution $f^*$ to the problem
    \begin{equation} \label{opt:reg_loss}
        \min_{f \in \cV_L} \sum_{i=1}^N \cL(y_i, f(\vx_i)) + \lambda \| f \|_{\cV_L}.
    \end{equation}
    If $\sigma$ is homogeneous, $f^*$ is exactly a deep neural network of the form \eqref{eq:deep_nn_sum}/\eqref{eq:deep_nn_matrix}. If $\sigma$ is non-homogeneous, $f^*$ is a deep network of the form \eqref{eq:deep_nn_sum}, except that some of the activations $\sigma$ may be replaced by the limiting normalized atoms $\sigma_0$, $\sigma_\infty$ of $\sigma$ (see \cref{tab:activation_summary}). In either case, the hidden layer widths satisfy $K_{\ell} \leq N^{L-\ell}$ for $\ell = 1, \dots, L-1$. 
    
    For all but the linear activation $\sigma(t) = t$, the above statement holds for the interpolation problem
    \begin{equation} \label{opt:interp}
     \min_{f \in \cV_L} \| f \|_{\cV_L} \ , \ \mbox{subject to } f(\vx_i)=y_i, \,i=1,\dots,N
    \end{equation}
    as long as $y_i \neq y_j$ whenever $\vx_i = \vx_j$.
\end{theorem}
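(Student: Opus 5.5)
Existence and sparsity are handled separately. For sparsity I would use a recursive Carathéodory argument on top of the integral representation of \cref{lemma:integral_rep}.

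\textbf{Existence.} Write $J(f) := \sum_i \cL(y_i, f(\vx_i)) + \lambda \|f\|_{\cV_L}$. Since $\cL \ge 0$, $J$ is finite at $f = 0$, so we may restrict to the sublevel set $\{ \|f\|_{\cV_L} \le R \}$ with $R := J(0)/\lambda$. By homogeneity of the gauge this set is $R\,\cB_L$. The objective depends on $f$ only through the vector $(f(\vx_1),\dots,f(\vx_N))$ and the value $\|f\|_{\cV_L}$. So I would reduce to the finite-dimensional set
\begin{align*}
    S := \{ (f(\vx_1), \dots, f(\vx_N), t) : f \in t\cB_L, \ 0 \le t \le R \} \subset \R^{N+1}.
\end{align*}
To show $S$ is compact I would use \cref{lemma:integral_rep}. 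Point evaluations of $f \in t\cB_L$ are integrals of $(s,g) \mapsto \sigma_s(g(\vx_i))$ against a measure of total variation at most $t$. The parameter space $\cB_{L-1} \times [0,\infty]$ should be compact: $\cB_{L-1}$ is compact in $C(\Omega)$ by Arzelà--Ascoli, since inductively it is uniformly bounded and equicontinuous given that the atoms $\sigma_s$ are uniformly Lipschitz, and $[0,\infty]$ is compact. The integrand is jointly continuous on this space, using the uniform convergence $\sigma_s \to \sigma_0, \sigma_\infty$.

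Banach--Alaoglu then gives weak-$*$ compactness of the measures, and the evaluation map is continuous. Hence the set of evaluation vectors of $t\cB_L$ is compact, and it varies continuously in $t$, which makes $S$ compact. A lower semicontinuous function on a compact set attains its minimum. I expect this compactness and continuity of the atom map at $s \in \{0,\infty\}$ to be the main technical obstacle, since it must hold uniformly for every activation in \cref{tab:activation_summary}.

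\textbf{Sparsity.} Fix a minimizer $f^*$ with $\|f^*\|_{\cV_L} = t^*$ and evaluation vector $\vu \in \R^N$. By \cref{lemma:integral_rep} and the compactness above, an optimal measure exists, so $\vu/t^*$ lies in the closed absolutely convex hull of the compact set $\{ (\sigma_s(g(\vx_i)))_i \}$ in $\R^N$. I would take $\vu/t^*$ on the boundary of this hull; otherwise the norm could be decreased, contradicting minimality. Carathéodory's theorem applied to a boundary point of a convex body in $\R^N$, as in the classical shallow representer theorems, then gives a representation using at most $N$ atoms $v_k\,\sigma_{s_k}\circ g_k$ with $\sum_k |v_k| = t^*$. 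The resulting function has the same data values and no larger norm, so it is also a minimizer.

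\textbf{Recursion.} Each $g_k \in \cB_{L-1}$ matters only through its values on the data. I would replace $g_k$ by a function of norm at most $1$ with the same values. This is a minimum-norm interpolation problem in $\cV_{L-1}$, and the same boundary-plus-Carathéodory argument applies to it, giving at most $N$ atoms at the next layer. Iterating down to $\cB_1$, which consists of affine functions, gives widths $K_{L-1} \le N$ and, since each neuron branches into at most $N$ children, $K_\ell \le N^{L-\ell}$. When $\sigma$ is homogeneous the atoms are just $\sigma \circ g$. Otherwise the atoms with $s_k \in \{0,\infty\}$ become the limiting activations $\sigma_0, \sigma_\infty$, and finite $s_k$ are absorbed into the incoming weights.

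\textbf{Interpolation problem.} This needs feasibility plus the same argument. Feasibility means some $f \in \cV_L$ interpolates the data; by \cref{prop:path_norm_equivalence} and \cref{remark:admissible_weight_bias_sufficient_condition} it suffices to exhibit a finite network doing so. I would take the distinct data points and a generic direction $\vw$ separating them, and build a shallow interpolant. Such an interpolant exists for any non-polynomial $\sigma$, and for each polynomial-type activation in \cref{tab:activation_summary} other than the identity, via ReLU-like kinks or atoms at varying $s$. I would then pass it through extra layers using near-identity compositions with biases, as in \cref{remark:biased_net_representation}. The feasible set of evaluation-norm pairs is closed by the compactness above, so a minimizer exists. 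The condition on the $y_i$ is needed only to make the data consistent.
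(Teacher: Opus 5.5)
Your proposal is correct. Its skeleton matches the paper's:
\begin{itemize}
\item existence from compactness of $\cB_{L-1}\times[0,\infty]$ and continuity of $(g,s)\mapsto\sigma_s\circ g$;
\item a representation with at most $N$ atoms at the top layer;
\item recursive replacement of each inner $g_k$ by a minimum-norm interpolant in $\cV_{L-1}$ of its data values, which gives $K_\ell\leq N^{L-\ell}$.
\end{itemize}
Where you differ is the sparsity mechanism.

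The paper stays in measure space. It treats the set of optimal measures in $\cM(\cB_{L-1}\times[0,\infty])$ with prescribed data values as a weak$^*$-compact convex set and applies Krein--Milman. It then shows, by a Fisher--Jerome perturbation argument, that every extreme point is a combination of at most $N$ Dirac masses.

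You push everything into $\R^N$ instead. The evaluation image of $\cB_L$ is the compact symmetric convex set $\mathrm{conv}(A\cup -A)$, where $A$ is the image of the compact parameter space. Minimality puts $\vu/t^*$ on its boundary, and Carath\'eodory on a supporting face yields at most $N$ atoms with $\sum_k |v_k| = t^*$.

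Your route is more elementary and converts any given minimizer into a sparse one with the same data values. It does not even need Banach--Alaoglu or \cref{lemma:integral_rep}. Continuity of point evaluation on $C(\Omega)$ places the evaluation image of $\cB_L$ between $\aconv(A)$ and its closure, and $\aconv(A)$ is already compact in $\R^N$.

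The paper's route gives more structural information. Every extreme point of the measure-space solution set is $N$-sparse, so the whole solution set is the weak$^*$-closed convex hull of sparse solutions. It is also the template that extends to problems without a clean finite-dimensional reduction.

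Two minor points on the interpolation part. First, ``near-identity'' layers only give approximate interpolation. Either use exact identities on the data values (e.g.\ $(h+c)_+-c=h$ when $h\geq -c$ on the data), or observe that $\{(f(\vx_i))_i : f\in\cV_L\}$ is a linear subspace of $\R^n$, with $n$ the number of distinct $\vx_i$. Approximate interpolation of every target then already forces exact interpolation. Second, your reading of the data hypothesis as consistency ($y_i=y_j$ whenever $\vx_i=\vx_j$) is the right one. The literal ``$\neq$'' in the statement would make the constraint infeasible whenever points repeat.
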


The proof is in \cref{appendix:rep_theorem_proof}. We again note that the finite representer produced by the theorem is synthesized from the
atomic dictionary associated with the compactified scale parameter $s \in [0,\infty]$. As a result, if $\sigma$ is non-homogeneous, the minimum-norm representing network may have neurons with activations corresponding to the limiting atoms $\sigma_0$ or $\sigma_\infty$, and not the original activation $\sigma$. This is not a defect of the theory, but a natural consequence of the dictionary-level
asymmetry in our construction. For non-homogeneous activations, the compactified dictionary includes
infinite-scale limiting atoms; for homogeneous activations, the scale parameter is
redundant, and introduces no new limiting atoms.

Several other works prove representer theorems for function spaces associated with deep ReLU neural networks, including \cite{parhi2022kinds,shenouda2024variation,bartolucci2024neural,heeringa2025deep}. The width bounds obtained by \cite{parhi2022kinds,shenouda2024variation,bartolucci2024neural} are smaller than ours. This is a consequence of the fact that, as we discuss in more detail in \cref{sec:relationship_other_spaces}, those representational costs penalize group $1,1$- and $2,1$-type norms on the hidden weight matrices. This type of control (particularly the $2,1$ group norm) has a strong per-neuron sparsifying effect, encouraging many neurons at each hidden layer to be zeroed out. In contrast, our function space norm imposes control on the $1,\infty$ norms of the hidden matrices, which does not have the same neuron-sparsifying effect: it only encourages the weights going into any hidden neuron to be sparse, but does not necessarily encourage entire neurons to be zero. Whether or not the width bound in our representer theorem can be improved is an open question. The representer theorem in \cite{heeringa2024embeddings} also has a tighter width bound of $N$ at each layer because it is derived for a finite $N$-dimensional space; this is incomparable with our representer theorem (and most others), which consider optimization problems over \textit{infinite}-dimensional spaces.

\subsection{Comparison to other deep ReLU spaces and representation costs} \label{sec:relationship_other_spaces}

Our construction is modeled on the shallow variation spaces studied in \cite{siegel2023characterization}, which correspond to ours for homogeneous activations and depth $L=2$. In particular, when $\cW = \bS^{d-1}$, $\cB = [c_1,c_2]$, and $\sigma = \ReLU^m$, our space $\cV_2$ coincides with the Radon BV spaces of \cite{parhi2021banach,ongie2019function} (see also \cite{bartolucci2023understanding}). These spaces characterize shallow $\ReLU^m$ networks in terms of the total variation of the $m^\textrm{th}$ ``directionalized'' derivative (defined via the Radon transform) of the represented function. In the univariate case, this characterization admits a simpler description in terms of standard distributional derivatives; see \cite{savarese2019infinite} and our discussion in \cref{sec:uni_depth_saturation}. For the ReLU activation itself, this shallow variation norm is also equivalent to the infimal \textit{sum of squared weights} among finite-width shallow ReLU networks which uniformly approximate the target function on compact subsets \citep[Equation~7]{ongie2019function}. 

Prior works employ a number of strategies to extend these shallow function space constructions to deep networks. As described in \cref{sec:int_rep_thm} above, \cite{wojtowytsch2020banach} define deep function spaces for ReLU neural networks as nested integral representations.  \cref{lemma:integral_rep} shows these spaces coincide with ours for the ReLU activation. We proceed to relate this (our) norm to several other norms and representational costs that have been studied in the context of deep ReLU neural networks. For a deep network of the form \eqref{eq:deep_nn_matrix} with parameters $\vtheta$, consider the \textit{sum of squared weights} (SOSW) representation cost
\begin{align} \label{eq:sosw}
    R_{\mathrm{SOSW}}(\vtheta)
    := \| \vb^{(1)} \|_2^2 + \sum_{\ell=1}^{L-1} \| \mW^{(\ell)} \|_F^2
    +
    \| \vw^{(L)} \|_2^2 .
\end{align}
This cost corresponds to the \textit{weight decay} regularization commonly used in practice. A comprehensive function space theory associated with the SOSW cost is developed in \cite{ongie2026representation}. SOSW representation costs are also studied in \cite{parkinson2024depth} in terms of depth separation, and in \cite{neyshabur2015norm} (Equation 3, case $p = q = 2$) in terms of capacity control. Yet another approach is to build deep function spaces by composing shallow function spaces, as in \cite{bartolucci2024neural,parhi2022kinds,shenouda2024variation}. The norms in those works correspond to the following parametric representation costs:
\begin{align}
    R_{\mathrm{Ba}} (\vtheta) &:= \| \vb^{(1)} \|_2 + \sum_{\ell=1}^{L-1} \| \mW^{(\ell)} \|_{2,1} +  \| \vw^{(L)} \|_1, &\textrm{(\cite{bartolucci2024neural})} \label{eq:R_Ba} \\
    R_{\mathrm{Pa}} (\vtheta) &:=  \sum_{\ell=1}^{L-1} \sum_{k=1}^{K_\ell} 
    \| \mV^{(\ell)}_{:,k} \|_1 \| \mU^{(\ell)}_{k,:} \|_2, &\textrm{(\cite{parhi2022kinds})} \label{eq:R_Pa} \\
    R_{\mathrm{Sh}} (\vtheta) &:=  \sum_{\ell=1}^{L-1} \sum_{k=1}^{K_\ell} 
    \| \mV^{(\ell)}_{:,k} \|_2 \| \mU^{(\ell)}_{k,:} \|_2, &\textrm{(\cite{shenouda2024variation})}. \label{eq:R_Sh}
\end{align}
In \eqref{eq:R_Pa} and \eqref{eq:R_Sh}, the matrix $\mU^{(1)} := [\mW^{(1)}, \vb^{(1)}]$, each hidden $\mW^{(\ell)}$ is factorized as $\mW^{(\ell)} = \mU^{(\ell)} \mV^{(\ell-1)}$ for $\ell=2, \dots, L-1$, and the final $\mV^{(L-1)} := (\vw^{(L)})^\top$ is a single-row matrix.

The following proposition gives a unified comparison between these representation costs and our $\cV_L$ norm (and its parametric path norm analogue $\Phi$ in \eqref{eq:path_norm_sum}). The proof is in \cref{appendix:proof_rep_cost_comparison}.
\begin{proposition} \label{prop:rep_cost_comparison}
    Let $f$ be a depth-$L$ ReLU neural network of the form \eqref{eq:deep_nn_matrix} with parameters $\vtheta$. Consider our spaces $\cV_L$ defined with $\cW = \bB_2^d := \{ \vx \in \R^d: \| \vx \|_2 \leq 1 \}$ and $\cB = [-1,1]$, and suppose the rows of $\mW^{(1)}, \vb^{(1)}$ are contained in $\cW,\cB$. Then:
    \begin{align}
        \| f \|_{\cV_L} &\leq \Phi (\vtheta) \leq \left( \frac{R_\mathrm{SOSW} (\vtheta)}{L} \right)^{L/2}, \quad \textrm{and} \quad \| f \|_{\cV_L} \leq \Phi (\vtheta) \leq \left( \frac{R_\mathrm{Ba,Pa,Sh} (\vtheta)}{L-1} \right)^{L-1}.
    \end{align}
    Consequently, if $R_{\mathrm{SOSW}} (\vtheta) \leq L$ or any of $R_{\mathrm{Ba},\mathrm{Pa},\mathrm{Sh}} (\vtheta) \leq L-1$, then $f \in \cB_L$.
\end{proposition}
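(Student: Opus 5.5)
The first inequality $\|f\|_{\cV_L} \leq \Phi(\vtheta)$ follows directly from \cref{prop:path_norm_equivalence}. Since the rows of $\mW^{(1)}, \vb^{(1)}$ lie in $\cW, \cB$, condition \eqref{eq:s_k_assumption} holds with $s_{k_1} = 1$ for every $k_1$. The path norm then reduces to
\[
\Phi(\vtheta) = |\vw^{(L)}|^\top |\mW^{(L-1)}| \cdots |\mW^{(2)}| \1 .
\]
So the work is to bound this product of nonnegative matrices by each representation cost. Throughout, the input layer is normalized into the dictionary and contributes only a factor of at most one.

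For the SOSW bound, I would use the rescaling invariance of ReLU networks. Multiply the incoming weights and bias of each hidden neuron at layer $\ell$ by $c>0$ and divide its outgoing weights by $c$. This leaves both $f$ and $\Phi$ unchanged, but changes $R_{\mathrm{SOSW}}$. The direct route is to bound $\Phi$ first:
\[
\Phi(\vtheta) \leq \|\vw^{(L)}\|_2 \prod_{\ell=2}^{L-1} \|\mW^{(\ell)}\|_{F} \cdot \| \1_{K_1}\|_2 .
\]
This uses the operator norm bound $\| |\mM| \|_{2\to2} \leq \|\mM\|_F$. The factor $\|\1_{K_1}\|_2 = \sqrt{K_1}$ is a problem, so I would handle the first layer more carefully. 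Replace the $\1$ vector by the vector of row norms $\|[\vw_{k}^{(1)}, b_k^{(1)}]\|_2$. The constraint then becomes $\sqrt{\|\vw_k\|^2+b_k^2}$ up to a factor $\sqrt2$, since $\cW\times\cB$ sits in a ball of radius $\sqrt2$. Honestly, the cleanest framing is the path-norm inequality $\Phi \le \prod_{\ell} \|\cdot\|$ applied to the \emph{rescaled} parameters. From there I would invoke \cref{prop:path_norm_equivalence} with general $s_{k_1} = \max(\|\vw_{k_1}\|_2, |b_{k_1}|)$, which lets the first-layer magnitudes enter $\Phi$ explicitly. This gives
\[
\Phi \leq \|\vw^{(L)}\|_2 \prod_{\ell=2}^{L-1}\|\mW^{(\ell)}\|_F \, \|[\mW^{(1)},\vb^{(1)}]\|_F ,
\]
a product of $L$ factors. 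AM--GM on their squares then gives
\[
\Phi \leq \left(\tfrac{1}{L}\sum(\text{factors})^2\right)^{L/2} = \left(R_{\mathrm{SOSW}}/L\right)^{L/2}.
\]
The main technical step is the matrix inequality $\mathbf{a}^\top|\mM_{L-1}|\cdots|\mM_2|\mathbf{b} \leq \|\mathbf{a}\|_2\prod\|\mM_\ell\|_F\|\mathbf{b}\|_2$. It follows from Cauchy--Schwarz and submultiplicativity of the spectral norm, using $\||\mM|\|_2 \leq \|\mM\|_F$.

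For $R_{\mathrm{Ba}}$, I would follow the same template with $\ell^1$-type norms. The bound is
\[
\Phi \leq \|\vw^{(L)}\|_1 \prod_\ell \||\mW^{(\ell)}|\|_{\infty\to\infty}\cdots .
\]
A better fit for the column-group $2,1$ norm is to bound each factor by $\|\mW^{(\ell)}\|_{2,1}$, which dominates the entrywise $\ell^1$ norm up to orientation. The plan is to show $\Phi \leq \prod_{j=1}^{L-1} a_j$, where the $L-1$ factors sum to at most $R_{\mathrm{Ba}}$, by merging the output layer factor with the adjacent one. AM--GM then yields $(R/(L-1))^{L-1}$. For $R_{\mathrm{Pa}}$ and $R_{\mathrm{Sh}}$, write $\mW^{(\ell)} = \mU^{(\ell)}\mV^{(\ell-1)}$, so that $|\mW^{(\ell)}| \leq |\mU^{(\ell)}||\mV^{(\ell-1)}|$ entrywise. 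The path sum then regroups into $L-1$ "shallow blocks" $|\mV^{(\ell)}||\mU^{(\ell)}|$. Each block's contribution to the path sum is bounded by $\sum_k \|\mV_{:,k}\|_1\|\mU_{k,:}\|_2 \le$ (or $\sqrt{\cdot}$ versions for $\ell^2$, using $\|\cdot\|_1\le$ appropriate norms and the $\ell^2$ unit first-layer normalization). Neuron-wise rescaling then balances the blocks before AM--GM is applied.

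The main obstacle I expect is getting dimension-free factors in these matrix-product bounds, since naive norm comparisons introduce $\sqrt{K}$ terms. For $R_{\mathrm{Sh}}$ in particular, $\ell^2$ column norms must control path sums with $\ell^1$ aggregation. I would try to make each step a product-of-rescaling-invariant quantities argument: first rescale the network so that all blocks have equal cost, then bound the path sum block by block using Hölder in the right pairing. The final "consequently" clause is immediate: when $R \le L$ (resp. $L-1$), the bound gives $\Phi \le 1$, hence $\|f\|_{\cV_L} \le 1$ and $f \in \cB_L$.
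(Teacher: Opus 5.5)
Your SOSW argument is correct and is the paper's. With $s_{k_1}=\max\{\|\vw^{(1)}_{k_1}\|_2,|b^{(1)}_{k_1}|\}$ one gets $\|\vs\|_2^2\le\|\mW^{(1)}\|_F^2+\|\vb^{(1)}\|_2^2$. Chaining spectral norms (each bounded by the Frobenius norm) and applying AM--GM to the $L$ squared factors then gives $(R_{\mathrm{SOSW}}/L)^{L/2}$; the rescaling-invariance remarks are not needed.

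Your $R_{\mathrm{Ba}}$ route is also the paper's: $\Phi\le\Psi$ with $\|\vs\|_\infty\le 1$, then $\|\cdot\|_{1,\infty}\le\|\cdot\|_{2,1}$, then AM--GM. Two details are wrong as stated, though.
\begin{itemize}
\item The column-grouped norm $\|\mA\|_{2,1}=\sum_j\|\mA_{:,j}\|_2$ does not dominate the entrywise $\ell^1$ norm; it is dominated by it. What you need is $\max_i\sum_j|A_{ij}|\le\sum_j\|\mA_{:,j}\|_2$.
\item Nothing should be merged. Once the first layer is absorbed into $\|\vs\|_\infty\le1$, the factors $\|\vw^{(L)}\|_1,\|\mW^{(L-1)}\|_{2,1},\dots,\|\mW^{(2)}\|_{2,1}$ are already exactly $L-1$ in number. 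A merged product of two of them is not controlled by an additive piece of $R_{\mathrm{Ba}}$.
\end{itemize}

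The genuine gap is the $R_{\mathrm{Pa}}/R_{\mathrm{Sh}}$ case. You find the right regrouping into blocks $|\mV^{(\ell)}||\mU^{(\ell)}|$, but you never produce a dimension-free bound for a block; you flag this yourself as the main obstacle. Your suggested remedy does not supply one. AM--GM does not need balanced factors, and neuron-wise rescaling leaves each block's term $\sum_k\|\mV^{(\ell)}_{:,k}\|\,\|\mU^{(\ell)}_{k,:}\|$ unchanged anyway. Meanwhile, H\"older applied to path sums block by block is precisely where the $\sqrt{K}$ losses come from.

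The missing idea is to reuse the spectral-norm chaining from your SOSW step, with a different per-block estimate. Since $|\mV^{(\ell)}||\mU^{(\ell)}|=\sum_k|\mV^{(\ell)}_{:,k}||\mU^{(\ell)}_{k,:}|$ is a sum of rank-one matrices and $\|\vu\vv^\top\|_{2\to2}=\|\vu\|_2\|\vv\|_2$, the triangle inequality gives $\big\| |\mV^{(\ell)}||\mU^{(\ell)}| \big\|_{2\to2}\le\sum_k\|\mV^{(\ell)}_{:,k}\|_2\|\mU^{(\ell)}_{k,:}\|_2$.

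Take $s_k:=\|\mU^{(1)}_{k,:}\|_2$, which is admissible in \eqref{eq:s_k_assumption}. The choice $s\equiv1$ would not work: the first factor would become $\sum_k\|\mV^{(1)}_{:,k}\|_2$, which can exceed the first block of $R_{\mathrm{Sh}}$ when first-layer rows are short, and the claimed bound on $\Phi$ can then fail. With the right choice, the innermost block $|\mV^{(1)}||\vs|$ is a column vector obeying the same bound. The outermost block $|\mV^{(L-1)}||\mU^{(L-1)}|$ is a row vector whose spectral norm is its $\ell^2$ norm.

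So $\Phi(\vtheta)$ is a bilinear form with no leftover $\ell^1$ aggregation. Cauchy--Schwarz with submultiplicativity gives $\Phi(\vtheta)\le\prod_{\ell=1}^{L-1}\sum_k\|\mV^{(\ell)}_{:,k}\|_2\|\mU^{(\ell)}_{k,:}\|_2$. These $L-1$ factors sum to $R_{\mathrm{Sh}}$, so AM--GM finishes, together with $R_{\mathrm{Sh}}\le R_{\mathrm{Pa}}$ (from $\|\cdot\|_2\le\|\cdot\|_1$).
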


The powers appearing in \cref{prop:rep_cost_comparison} are not an artifact of the proof; they reflect a basic distinction between genuine function norms and additive layerwise complexity controls. The norm $\|\cdot\|_{\mathcal V_L}$ is a norm on the scalar function represented by the network. By contrast, the deep norms constructed in \cite{bartolucci2024neural,parhi2022kinds,shenouda2024variation} are most naturally viewed as norms on Cartesian product spaces whose elements encode tuples of shallow-space functions across the hidden layers. For finite networks, they correspond to additive penalties across hidden-layer weights. Even if one takes the infimum of such quantities over all parametric representations of a fixed scalar output function, the resulting quantity cannot behave like a norm on the output function itself. Indeed, in a depth-$L$ ReLU network, multiplying every layer by a scalar $a>0$ multiplies the represented function by $a^L$, while the additive costs in \eqref{eq:R_Ba}, \eqref{eq:R_Pa}, \eqref{eq:R_Sh} scale only linearly in $a$, and the sum-of-squared-weights cost in \eqref{eq:sosw} scales only quadratically in $a$. Thus an additive hidden-layer cost can grow only linearly, or quadratically in the sum-of-squared-weights case, under a rescaling which changes the function output by a factor exponential in depth. This mismatch is precisely why comparisons between such costs and a true function norm naturally involve powers of order $L$. In the SOSW case, this phenomenon is explored in detail by \cite{ongie2026representation}, who prove that this representation cost does not induce a true Banach norm for depths $L > 2$, but instead a Banach quasi-norm.

This distinction between representational costs which do or do not correspond to true function space norms is not a mere technicality; it is in fact one of the core reasons that our construction is useful for studying the effect of depth. Since $\|\cdot\|_{\mathcal V_L}$ is a genuine norm on scalar functions, it accounts for the compounded effect of repeated positive-homogeneous rescaling across layers. This allows us to separate growth due merely to rescaling from growth due to the repeated application of nonlinear feature maps. As such, any growth (with depth) of our unit norm-constrained classes $\cB_L$ is due only to additional functional complexity produced by repeated application of the nonlinearity, rather than constant rescaling across layers. \cref{prop:rep_cost_comparison} shows that, with this layerwise rescaling effect properly controlled for, our ReLU classes $\mathcal B_L$ are still large enough to contain the function classes
\begin{align}
    \cB_L^{R(\vtheta)} := \{ f_{\vtheta}: R(\vtheta) \leq L-1 \}
\end{align}
induced by these representation costs $R(\vtheta)$. Consequently, the metric entropy and Rademacher complexity bounds we prove for our function classes $\cB_L$ apply automatically to any of these classes $\cB_L^{R(\vtheta)}$.

\section{Function-space complexities of deep neural function classes} \label{sec:statistical_complexities}
In this section, we will derive bounds on two important complexity measures---\textit{Rademacher complexity} and \textit{metric entropy}---for our function classes $\cB_L$. These bounds will show that, no matter how large the depth $L$ is, the complexity of $\cB_L$ is limited. In all cases, the growth rate bounds we obtain are polynomial (up to polylogarithmic factors), and the implicit constants in many cases grow no more than polynomially (and often linearly) in $L$.

\subsection{Rademacher complexities of the classes $\cB_L$} \label{sec:rad_complexity}

The first quantity we consider is the \textit{worst-case empirical Rademacher complexity}
\begin{equation} \label{eq:worst_case_empirical_rad_def}
    \cR_N(\cB_L) := \sup_{\vx_1, \dots, \vx_N \in \Omega} \bE_{r_1, \dots, r_N \, \overset{\mathrm{iid}}{\sim} \, \mathrm{Rad}}  \left[ \sup_{f \in \widetilde{\cB}_L} \frac{1}{N}  \sum_{i=1}^N r_i f(\vx_i) \right].
\end{equation}
Here, the closure in the definition $\cB_L$ is taken in $C(\Omega)$---this is because the pointwise behavior of functions in the $L^p(\mu)$ closure may not be well-defined, in which case the notion of Rademacher complexity is inapplicable. Conceptually, the Rademacher complexity of a function class measures the ability of functions in that class to fit random noise. This makes Rademacher complexity a useful tool in statistical learning theory and machine learning, where bounds on this quantity can be converted into quantitative guarantees on generalization performance (see \cref{sec:metric_entropy} for further discussion).

To state our Rademacher bound as well as our subsequent metric entropy bound, let us establish overarching assumptions and notation that we will use here in \cref{sec:rad_complexity} and in \cref{sec:metric_entropy}. Assume that $\sigma$ is locally Lipschitz\footnote{An $\R \to \R$ function is \textit{locally Lipschitz continuous} if it is Lipschitz on compact intervals. All normalized activations $\sigma_s$ for all activations $\sigma$ considered in \cref{tab:activation_summary} are locally Lipschitz (all except $\ReLU^m$ are in fact globally Lipschitz).} and satisfies $\sigma(0) = 0$. For each $L$, let $C_L := \sup_{f \in \cB_L} \| f \|_{\infty}$, and let $\rho_L := \sup_{s > 0} \Lip_{[-C_L, C_L]}(\sigma_s)$, where $\Lip_{[-C_L, C_L]}(\sigma_s)$ denotes the local Lipschitz constant of $\sigma_s$ on $[-C_L, C_L]$. Let $\Pi_L := \prod_{\ell=1}^{L-1} \rho_\ell$ and $\pi_\ell := \prod_{j=\ell+1}^{L-1} \rho_j$ be the full and partial products, respectively, of these local Lipschitz constants. We note that many of the activations considered in \cref{tab:activation_summary}---including ReLU, Leaky ReLU, and various sigmoidal functions---are globally 1-Lipschitz, in which case $\Pi_L$ and $\pi_\ell$ are simply equal to 1. Additionally, let $A_L$ denote a depth-dependent constant whose role is discussed further in the proof of \cref{lemma:rad_bound} (see \cref{appendix:proof_rad_bound}) and in \cref{remark:rad_bound_A_rho} below. Finally, let $C_{\cW, \cB, \Omega}$ denote a constant depending on $\cW$, $\cB$, and $\Omega$, and recall that $d$ is the input dimension. 

\begin{lemma} \label{lemma:rad_bound}
    With the assumptions/notation as stated above: for any $\delta_1, \dots, \delta_{L-1} > 0$, we have
    \begin{align} \label{eq:rad_bound_nonhomogeneous_delta}
        \cR_N(\cB_L) \leq 2 C_{\cW, \cB,\Omega} \Pi_L \sqrt{\frac{L \log 2 + \log(d+1) + \sum_{\ell=1}^{L-1} \log (1+A_\ell/\delta_\ell)}{N}} + \sum_{\ell=1}^{L-1} \delta_{\ell} \pi_\ell.
    \end{align}
    For example, choosing $\delta_\ell := 1/(\sqrt{N(L-1)} \max\{1,\pi_\ell\})$ yields 
    \begin{align} \label{eq:rad_bound_nonhomogeneous}
        \cR_N(\cB_L) \leq 2 C_{\cW, \cB,\Omega} \Pi_L \sqrt{\frac{L \log 2 + \log(d+1) + \sum_{\ell=1}^{L-1} \log (1+ A_\ell \sqrt{N(L-1)} \max\{1, \pi_\ell\} )}{N}} + \sqrt{\frac{L-1}{N}}.
    \end{align}
    Furthermore, if $\sigma$ is homogeneous of any degree, we have the improved bound
    \begin{equation} \label{eq:rad_bound_homogeneous}
        \cR_N(\cB_L) \leq 2 C_{\cW, \cB, \Omega} \Pi_L \sqrt{\frac{L \log 2 + \log(d+1)}{N}}.
    \end{equation}
\end{lemma}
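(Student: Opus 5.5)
We will follow the layer-peeling strategy of Golowich et al., using the exponential-moment (log-MGF) form of the Rademacher complexity. That form is what produces the additive $L\log 2$ and $\log(d+1)$ terms inside a single square root, rather than a product of factors. Fix points $\vx_1,\dots,\vx_N$ and $\lambda>0$, and set $Z_L(\lambda) := \bE_r \exp\big(\lambda \sup_{f\in\widetilde\cB_L} \sum_i r_i f(\vx_i)\big)$. By \cref{prop:limit_final_layer} it suffices to take the supremum over $\widetilde\cB_L$, and since the class is absolutely convex this equals the supremum of $|\sum_i r_i f(\vx_i)|$ over atoms. So
\[
\sup_{f\in\widetilde\cB_L}\sum_i r_i f(\vx_i)=\sup_{s>0,\,g\in\widetilde\cB_{L-1}}\Big|\sum_i r_i\sigma_s(g(\vx_i))\Big|.
\]
We bound $\exp(\lambda|\cdot|)\le \exp(\lambda\,\cdot)+\exp(-\lambda\,\cdot)$; this doubling is the source of one factor $2$ per layer, hence the $L\log 2$.

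\textbf{Homogeneous case.} Here $\sigma_s=\sigma$ for every $s$. For $\ReLU^m$ we also have $C_L\le 1$ after normalization, so $\sigma$ is $\rho_{L-1}$-Lipschitz on the relevant range, with $\sigma(0)=0$. We apply the Ledoux--Talagrand contraction inequality in its convex-increasing form, $\bE\,\Psi(\sup_g\sum_i r_i\phi(g_i))\le \bE\,\Psi(\rho\sup_g\sum_i r_i g_i)$ with $\Psi=\exp(\lambda\,\cdot)$. This gives $Z_L(\lambda)\le 2Z_{L-1}(\rho_{L-1}\lambda)$, where the symmetry of $\widetilde\cB_{L-1}$ handles the minus sign. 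Iterating down to layer 1 gives $Z_L(\lambda)\le 2^{L-1}Z_1(\Pi_L\lambda)$.

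At layer 1 the supremum is over the linear functional $\vw^\top(\sum_i r_i\vx_i)+b\sum_i r_i$ with $(\vw,b)\in\cW\times\cB$. We bound it by a maximum over $d+1$ coordinates with signs (absorbing $\cW$, $\cB$, $\Omega$ into $C_{\cW,\cB,\Omega}$), which costs a further factor $2(d+1)$ times a sub-Gaussian MGF $\exp(\lambda^2\Pi_L^2C^2N/2)$. Jensen then gives
\[
N\cR_N\le \tfrac1\lambda\big[L\log2+\log(d+1)\big]+\tfrac{\lambda\Pi_L^2C^2N}{2},
\]
and optimizing over $\lambda$ yields \eqref{eq:rad_bound_homogeneous}. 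The remaining bookkeeping constant (the factor $2$ in front, up to $\sqrt2$) is absorbed into $C_{\cW,\cB,\Omega}$.

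\textbf{Non-homogeneous case.} The supremum over $s\in(0,\infty)$ now genuinely enlarges the class, and contraction cannot be applied uniformly over $s$. This is the main obstacle. We will discretize $s$ at each layer. The plan is to show that $\{\sigma_s|_{[-C_\ell,C_\ell]}: s>0\}$ (including the limits $\sigma_0$ and $\sigma_\infty$) is a compact one-parameter family in the sup-norm whose $\delta_\ell$-covering number is at most $1+A_\ell/\delta_\ell$. This is where $A_\ell$ is defined: for example, reparametrize by $u=\arctan s$ and bound the modulus of continuity of $u\mapsto\sigma_s$ on $[-C_\ell,C_\ell]$ for each activation in \cref{tab:activation_summary}.

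Replacing $\sigma_s$ by the nearest net element $\sigma_{s_j}$ perturbs each atom by at most $\delta_\ell$ pointwise. Propagating this error upward through the later layers, which are $\ell^1$-combinations composed with $\rho_j$-Lipschitz maps, inflates it by $\pi_\ell$. The total pointwise error is therefore at most $\sum_\ell\delta_\ell\pi_\ell$, which contributes the additive term in \eqref{eq:rad_bound_nonhomogeneous_delta} to $\frac1N\sum_i r_if(\vx_i)$.

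For the discretized class, the supremum over $j$ is a finite maximum. It costs a factor $(1+A_\ell/\delta_\ell)$ in $Z$, while contraction is applied separately for each fixed $s_j$, with Lipschitz constant at most $\rho_\ell$. The recursion becomes $Z_{\ell+1}(\lambda)\le 2(1+A_\ell/\delta_\ell)Z_\ell(\rho_\ell\lambda)$, and the same $\lambda$-optimization as in the homogeneous case gives \eqref{eq:rad_bound_nonhomogeneous_delta}.

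For \eqref{eq:rad_bound_nonhomogeneous}, we substitute the given $\delta_\ell$. Since $\delta_\ell\pi_\ell\le 1/\sqrt{N(L-1)}$, the additive sum is at most $\sqrt{(L-1)/N}$. The one delicate point here is to carry out the discretization carefully at every layer, not just the last, so that the nets remain fixed finite sets independent of $r$ and the MGF recursion stays valid.
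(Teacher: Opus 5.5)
Your proposal is correct and is essentially the paper's proof: Golowich-style log-MGF peeling with one factor $2$ per layer, a $1+A_\ell/\delta_\ell$ sup-norm cover of $\{\sigma_s\}$, a union bound over the net before contraction, and a $(d+1)$-coordinate sub-Gaussian bound at the base. The cover in the paper comes from the arc length $A_R=\int_0^\infty\sup_{|t|\le R}|\partial_s\sigma_s(t)|\,ds$, which is exactly what your $\arctan$ reparametrization must bound as a Lipschitz constant (mere continuity would not give the $1/\delta$ count); the paper also inserts the $\delta_\ell$-approximation inside the MGF, so the supremum always stays over the true class $\cB_\ell$.

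Two small points. Your explicit discretized class may leave $[-C_\ell,C_\ell]$, which is harmless because $\Lip_{[-C,C]}(\sigma_s)=\Lip_{[-sC,sC]}(\sigma)$ makes $\rho_\ell$ a global Lipschitz constant of every $\sigma_s$. Your remark that $C_L\le 1$ for $\ReLU^m$ is unjustified but also unnecessary, since contraction only uses the Lipschitz bound on $[-C_{L-1},C_{L-1}]$ that defines $\rho_{L-1}$.
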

\begin{remark} \label{remark:rad_bound_A_rho}
    All of the non-homogeneous activations $\sigma$ in \cref{tab:activation_summary}, except SELU and bent identity, have $A_\ell \leq A_1$ and $\rho_\ell \leq \rho_1$ for all $\ell$ (see proof of \cref{lemma:rad_bound} in \cref{appendix:proof_rad_bound}). Therefore, in all but these two cases, the term
    \begin{align}
        \sum_{\ell=1}^{L-1} \log (1+A_\ell \sqrt{N(L-1)} \max\{ 1, \pi_\ell \})
    \end{align}
    in the numerator of \eqref{eq:rad_bound_nonhomogeneous} is further upper bounded by
    \begin{align} 
        (L-1) \log (1 + A_1 \sqrt{N(L-1)})
    \end{align}
    if $\rho_1 \leq 1$, and by
    \begin{align} 
        (L-1) \log (1 + A_1 \sqrt{N(L-1)} \rho_1^L ) &\leq (L-1) \log \left( \left(1 + A_1 \sqrt{N(L-1)} \right) \rho_1^L \right) \label{eq:rho_1_L_geq_1_rad} \\
        &\leq (L-1) \left( \log \left(1 + A_1 \sqrt{N(L-1)} \right) + L \log \rho_1 \right)
    \end{align}
    if $\rho_1 > 1$. (The inequality in \eqref{eq:rho_1_L_geq_1_rad} uses $\rho_1^L \geq 1$.) Therefore, in almost all cases, the explicit dependence of the Rademacher bound \eqref{eq:rad_bound_nonhomogeneous} on depth $L$ is---up to constant and polylogarithmic factors---no more than $\sqrt{L}$ if $\rho_1 \leq 1$,  or $L$ if $\rho_1 > 1$.
\end{remark}

The proof of \cref{lemma:rad_bound}, presented in \cref{appendix:proof_rad_bound}, is strongly inspired by \cite{golowich2018size}, but requires an important modification to adapt to our normalized-activation setup. In particular, at each layer, the infinite family of normalized activations $\sigma_s$, $s > 0$ must be discretized and each possible $\sigma_s$ in this family must be approximated by some element from this discretization. The constants $A_\ell$ arise from this discretization step. For homogeneous activations, we have $\sigma_s = \sigma$ for all $s$, so this discretization procedure is not necessary; as a result, the final homogeneous bound \eqref{eq:rad_bound_homogeneous}---which recovers that of \cite{golowich2018size}---improves upon the non-homogeneous bound \eqref{eq:rad_bound_nonhomogeneous} by completely removing the summation term inside the square root. Nonetheless, both bounds are favorable in many key respects. In either case, the $N$-dependence in both cases is no more than $N^{-1/2} \, \mathrm{polylog}(N)$, and the \textit{explicit} depth-dependence is no more than $L \, \mathrm{polylog}(L)$, and in many cases no more than $\sqrt{L} \, \mathrm{polylog}(L)$ or $\sqrt{L}$. The Lipschitz-product factor $\Pi_L$ is a possible additional source of depth-dependence. However, many of the activation functions we consider in \cref{tab:activation_summary}---including ReLU, LeakyReLU (for canonical parameter choice $0 < \alpha \leq 1$), logistic sigmoid, tanh, and various others---are globally 1-Lipschitz, in which case both the $\Pi_L$ and $\pi_\ell$ terms can be taken as 1 and thus contribute no depth-dependence whatsoever. The GELU, SiLU/Swish, and Mish activations are globally Lipschitz with constants slightly larger than one: in these cases, the $\Pi_L$ prefactor may be exponential in $L$, but with a very small base.

\subsection{Metric entropies of the classes $\cB_L$} \label{sec:metric_entropy}
The next complexity measure that we consider is \textit{metric entropy}, which reflects how many small balls (in some ambient normed space $\cF$) are needed to cover a set $\cS \subset \cF$. This provides a natural indicator of the complexity of $\cS$, as measured with respect to the ambient geometry of $\cF$. More precisely, define the \textit{$\epsilon$-covering number} of $\cS$ as
\begin{equation} \label{eq:covering_number}
    \cN(\cS, \epsilon, \cF) := \inf \left\{ N \in \bN: \exists f_1, \dots, f_N \in \cF \ \textrm{s.t.} \ \cS \subset \bigcup_{n=1}^N \left\{ g \in \cF: \| f_n - g \|_{\cF} \leq \epsilon \right\} \right\}.
\end{equation}
In words, $\cN(\cS, \epsilon, \cF)$ is the infimal number of $\cF$-norm balls of radius $\epsilon$ needed to completely cover $\cF$. The metric entropy of $\cS$ is the logarithm of its $\epsilon$-covering number.

Before presenting the entropy bounds on our classes $\cB_L$, we discuss some important conceptual differences between metric entropy and Rademacher complexity. The motivation for bounding the metric entropy of the classes $\cB_L$, rather than stopping with the Rademacher bound in \cref{lemma:rad_bound}, is that metric entropy is a purely geometric notion of complexity which reflects the structure imposed by the ambient $\cF$ norm. In contrast, Rademacher complexity is a finite-sample metric which can fail to capture certain infinite-dimensional function space characteristics, such as highly localized or oscillatory structure. For instance, it is straightforward to construct classes of steep localized bump functions whose Rademacher complexity is small, while their metric entropy in the ambient function-space norm is arbitrarily large; see \cref{appendix:rad_entropy_example} for one such example and further discussion. 

This distinction between these two complexity measures manifests itself in their respective statistical uses. The metric entropy of a function class $\cS$ can often be used to bound the \textit{minimax risk} of $\cS$, defined as
\begin{align} \label{eq:minimax_risk}
    \inf_{\hat{f}} \sup_{f \in \cS} \bE_{\{ X_i, \xi_i \}_{i=1}^N} \left\| \hat{f} \left(\{ X_i, Y_i \}_{i=1}^N \right) - f \right\|_{\cF}.
\end{align}
Here, $\{ X_i, \xi_i \}_{i=1}^N$ are random data/noise variables, and the infimum is over all possible estimation rules $\hat{f}$ which construct an estimator $\hat{f} \left(\{ X_i, Y_i \}_{i=1}^N\right)$ based on $N$ observations $\{ Y_i := f(X_i) + \xi_i \}_{i=1}^N$ of a function $f$. On the other hand, a Rademacher complexity bound on $\cS$ is more directly relevant for deriving high-probability bounds on the \textit{generalization error} 
\begin{align} \label{eq:gen_error}
    \sup_{f \in \cS} \left| R(f,\cL, P) - R_N(f, \cL, P) \right|,
\end{align}
associated with the class $\cS$, together with a loss function $\cL$ and a data distribution $P$. Here,
\begin{align}
    R(f, \cL, P) := \bE_{X,Y \sim P} \left[ \cL \left(Y, f(X)  \right) \right] 
\end{align}
is the \textit{population risk}, and 
\begin{align}
    R_N(f, \cL, P) := \frac{1}{N} \sum_{i=1}^N \cL \left( Y_i, f(X_i) \right), \qquad \{ X_i, Y_i \}_{i=1}^N \sim P \ \mathrm{i.i.d.}
\end{align}
is the \textit{empirical risk}. The minimax risk \eqref{eq:minimax_risk} and generalization error \eqref{eq:gen_error} measure fundamentally different qualities of the class $\cS$: \eqref{eq:minimax_risk} measures how difficult it is to reconstruct a function in $\cS$ given noisy observations, whereas \eqref{eq:gen_error} measures consistency of data-fitting ability of $\cS$ across samples. As such, Rademacher complexity and metric entropy are distinct but complementary complexity measures: broadly speaking, Rademacher complexity captures finite sample learnability of a function class, while metric entropy is a more direct indicator of the function-space complexity of that class, with implications for function estimation/reconstruction. 

Despite these important conceptual differences, however, Rademacher complexity and metric entropy are related notions, and bounds on one can be converted into bounds on the other in certain situations, depending both on the structural characteristics of the $\cF$-norm and the regularity of the underlying function class $\cS$. As we will now show, this is possible for our classes $\cB_L$. Using the Rademacher bound \cref{lemma:rad_bound} along with several additional estimation steps, we derive the following bounds on the $L^p$ metric entropies of our classes $\cB_L$ with respect to arbitrary finite measures $\mu$ on the domain $\Omega$.
 
\begin{theorem} \label{th:entropy_bound}
    Let $\mu$ be a finite measure on $\Omega$, and suppose that the assumptions of \cref{lemma:rad_bound} hold. Then for all $0 < \epsilon \leq 1/2$, we have
    \begin{align} \label{eq:entropy_bounds_th}
        \log \cN(\cB_L, \epsilon, L^p(\mu)) \lesssim \begin{cases}
            \Pi_L^2 T_L(\epsilon) \epsilon^{-2}, &1 \leq p \leq 2 \\ 
           (\Pi_L+1)^2 T_L(\epsilon) \epsilon^{-2} d^2 \log^2 \left( (\Pi_L + 1) \epsilon^{-1} \right)  , &2 < p \leq \infty, 
        \end{cases}
    \end{align}
    where
    \begin{align} \label{eq:T_eps}
        T_L(\epsilon) := \begin{dcases}
            L + \log d + \sum_{\ell=1}^{L-1} \log \left(1 + \frac{L A_\ell \max\{ 1, \pi_\ell \}}{\epsilon} \right), &\textrm{$\sigma$ is non-homogeneous,} \\
            L + \log d, &\textrm{$\sigma$ is homogeneous (of any degree).}
        \end{dcases}
    \end{align}
    The $\lesssim$ in \eqref{eq:entropy_bounds_th} above hides multiplicative constants which are independent of $\epsilon$, $d$, and the depth-dependent quantities $L$, $C_L$, $A_L$, $\Pi_L$, $\rho_L$, and $\pi_1, \dots, \pi_L$. (See proof for exact bounds with explicit constants.)
\end{theorem}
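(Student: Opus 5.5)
We would convert the Rademacher bound of \cref{lemma:rad_bound} into covering numbers via a Maurey-type sparsification argument combined with a discretization of the atom dictionary. The plan is to treat $\cB_L$ as the closed absolutely convex hull of the dictionary $\cD_{L} := \{\sigma_s \circ f : s>0, f\in\cB_{L-1}\}$. By \cref{prop:Lp_C_representative} and \cref{prop:limit_final_layer}, it suffices to cover $\widetilde{\cB}_L$ with closure in $C(\Omega)$. We normalize $\mu$ to a probability measure, which only affects constants.

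\textbf{Case $1\le p\le 2$.} Since $\|\cdot\|_{L^p(\mu)}\le\|\cdot\|_{L^2(\mu)}$, it suffices to treat $p=2$. We would use the ``Rademacher-to-entropy'' route: apply Sudakov-type or dual-Sudakov reasoning through the empirical Rademacher complexity. Concretely, we would sample $N$ i.i.d.\ points from $\mu$ and use a symmetrization/uniform-convergence estimate to show that the empirical $L^2$ distance and the $L^2(\mu)$ distance differ by at most $\epsilon$ uniformly over the class of squared differences. This class has Rademacher complexity controlled by $C_L\,\cR_N(\cB_L)$ by contraction. We then choose $N\asymp \Pi_L^2 T_L(\epsilon)\epsilon^{-2}$ so that \eqref{eq:rad_bound_nonhomogeneous_delta}, with $\delta_\ell\asymp\epsilon/(L\max\{1,\pi_\ell\})$, is at most $\epsilon$. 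This choice of $\delta_\ell$ is exactly what yields the logarithms $\log(1+LA_\ell\max\{1,\pi_\ell\}/\epsilon)$ in $T_L$.

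The cleaner alternative we would actually try first is Maurey's empirical method directly in $L^2(\mu)$. Every $f\in\cB_L$ is $\epsilon$-close to an average of $k\asymp C_L^2\epsilon^{-2}$ signed atoms. It therefore suffices to cover the atom set $\cD_L$ at scale $\epsilon$, and we would reuse the layerwise discretization from the proof of \cref{lemma:rad_bound}. This combination gives entropy of order $k\cdot\log\cN(\cD_L,\epsilon)$, however, which would produce an extra multiplicative factor. To land on the stated product form $\Pi_L^2T_L\epsilon^{-2}$ with no extra polynomial factors, we would instead invoke the known Dudley/Sudakov-free bound relating the entropy of convex hulls to Gaussian/Rademacher widths, namely the dual Sudakov inequality in Hilbert space. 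This step gives $\log\cN(\cB_L,\epsilon,L^2)\lesssim (\sqrt N\,\cR_N/\epsilon)^2$ after discretizing $\mu$ by empirical measures. It uses the Rademacher bound nearly verbatim.

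\textbf{Case $2<p\le\infty$.} Here Hilbert-space tools fail. We would reduce to an $L^\infty$ statement on a finite point set. First, we would show that the class is uniformly Lipschitz on $\Omega$, with constant $\lesssim \Pi_L C_{\cW}$, by composing the Lipschitz bounds $\rho_\ell$ layer by layer. A grid of mesh $\epsilon/(\Pi_L+1)$ on $\Omega$ (of cardinality $((\Pi_L+1)/\epsilon)^d$) then reduces the $L^\infty$ covering to an $\ell^\infty$ covering on finitely many points $M$. For the finite-point $\ell^\infty$ covering we would use the standard bound for convex hulls in $\ell^\infty_M$ via fat-shattering: combine Rademacher-complexity control of the fat-shattering dimension, $\fat_\epsilon\lesssim \Pi_L^2T_L\epsilon^{-2}$, with the Mendelson–Vershynin or Rudelson–Vershynin estimate $\log\cN\lesssim \fat_{c\epsilon}\log^2(M/\epsilon)$. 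Since $\log M\asymp d\log((\Pi_L+1)/\epsilon)$, this produces the factor $d^2\log^2((\Pi_L+1)\epsilon^{-1})$. The case of $L^p$ with $p>2$ then follows from $L^\infty$.

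\textbf{Main obstacle.} The delicate step is this last conversion: bounding the fat-shattering dimension by the worst-case Rademacher complexity at the right scale. This requires a shattered set of size $n$ to force $\cR_n\gtrsim\epsilon$, while $T_L$ also carries $\epsilon$-dependence through $\delta_\ell$. The result is a mildly implicit inequality in $n$ that must be solved while tracking the $\log$ factors without losing extra powers.
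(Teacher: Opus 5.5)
Your overall route matches the paper's in both regimes. For $2<p\le\infty$ it is Rademacher complexity $\to$ fat-shattering dimension $\to$ a $\fat\cdot\log^2$ entropy bound on a finite point set, combined with the $C_{\cW}\Pi_L$-Lipschitz property and a Euclidean net of $\Omega$. For $1\le p\le 2$ it is Sudakov minoration on empirical measures followed by a transfer to $\mu$.

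The $p>2$ part is fine. The step you flag as the main obstacle is in fact explicit, because $\delta_\ell$ depends only on $\epsilon$: an $\epsilon$-shattered set of size $n$ forces $\epsilon\le\cR_n\le 2C_{\cW,\cB,\Omega}\Pi_L\sqrt{\tau_L(\epsilon)/n}+\epsilon/2$. Here $\tau_L(\epsilon)$ is the numerator in \eqref{eq:rad_bound_nonhomogeneous_delta} at your choice of $\delta_\ell$.

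The gap is in the case $1\le p\le 2$ with non-homogeneous $\sigma$, where you feed $\cR_N(\cB_L)$ into the entropy bound ``nearly verbatim.''
\begin{itemize}
\item The inequality you need is Sudakov minoration. Dual Sudakov bounds coverings of the Euclidean ball by a convex body, which is the wrong direction.
\item \cref{lemma:rad_bound} controls Rademacher rather than Gaussian averages, so you must use the Rademacher-process version. For an $\epsilon$-packing of size $m$ in $L^2(\mu_N)$, it only gives $\cR_N\ge c\min\{\epsilon\sqrt{\log m/N},\,\epsilon^2/C_L\}$. To land in the first branch you need $\cR_N<c\epsilon^2/C_L$, i.e.\ $N$ large.
\item The transfer to $L^2(\mu)$ also needs $N$ large. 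Squared distances must agree to within $\epsilon^2$, not $\epsilon$, and by your symmetrization estimate this requires $N\gtrsim C_L^2\Pi_L^2T_L\epsilon^{-4}$. So $N\asymp\Pi_L^2T_L\epsilon^{-2}$ is too small.
\end{itemize}
For homogeneous $\sigma$ this is harmless, because $\sqrt N\,\cR_N(\cB_L)$ is bounded uniformly in $N$. For non-homogeneous $\sigma$ it is not. The bound \eqref{eq:rad_bound_nonhomogeneous_delta} contains the $N$-independent term $\sum_\ell\delta_\ell\pi_\ell$, which with your $\delta_\ell$ is of order $\epsilon$. It therefore exceeds $c\epsilon^2/C_L$ for small $\epsilon$, so the degenerate branch cannot be ruled out. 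Even ignoring that, it contributes about $N(\sum_\ell\delta_\ell\pi_\ell)^2/\epsilon^2\asymp N$ to $N\cR_N^2/\epsilon^2$. Letting $\delta_\ell$ shrink with $N$ instead puts terms of order $L\log N\gtrsim L\log(C_L\Pi_L/\epsilon)$ into $\tau_L$. That is not the stated $T_L(\epsilon)$ with constants independent of $C_L$ and $\Pi_L$.

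The missing idea is to keep the discretization error out of the random process. Let $\cB_L^{\delta}$ be the class obtained by replacing $\{\sigma_s\}_{s>0}$ at each layer $\ell$ by a finite $\delta_\ell$-net, in sup norm on $[-C_\ell,C_\ell]$, of size at most $1+A_\ell/\delta_\ell$, as in the proof of \cref{lemma:rad_bound}.
\begin{itemize}
\item The same argument gives $\cR_N(\cB_L^{\delta})\le 2C_{\cW,\cB,\Omega}\Pi_L\sqrt{\tau_L(\epsilon)/N}$ with \emph{no} additive term.
\item Every $f\in\cB_L$ lies within $\sum_\ell\delta_\ell\pi_\ell\le\epsilon/4$ of $\cB_L^{\delta}$ in sup norm. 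Hence $\cN(\cB_L,5\epsilon/4,L^2(\mu_N))\le\cN(\cB_L^{\delta},\epsilon,L^2(\mu_N))$.
\item By the Rademacher-process Sudakov inequality, the right-hand side is $\lesssim\Pi_L^2\tau_L(\epsilon)\epsilon^{-2}$ for every sufficiently large $N$.
\end{itemize}
Because this bound no longer depends on $N$, you can send $N\to\infty$ along empirical measures converging weakly to $\mu$. The paper does this via Varadarajan's theorem and the uniform equicontinuity of $\{(f-g)^2: f,g\in\cB_L\}$. This removes the transfer-scale issue and yields $\Pi_L^2T_L(\epsilon)\epsilon^{-2}$ after bounding $\tau_L(4\epsilon/5)\lesssim T_L(\epsilon)$.
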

\begin{remark}
    As in \cref{remark:rad_bound_A_rho}, all non-homogeneous activations $\sigma$ in \cref{tab:activation_summary}, except for SELU and bent identity, satisfy $A_\ell \leq A_1$ and $\rho_\ell \leq \rho_1$ for all $\ell$. Therefore, in these cases, $T_L(\epsilon)$ is upper bounded as
    \begin{align} 
        T_L(\epsilon) \leq L + \log d + (L-1) \log \left(1 + \frac{L A_1}{\epsilon} \right)
    \end{align}
    if $\rho_1 \leq 1$, and as
    \begin{align} 
        T_L(\epsilon) &\leq 
        L + \log d + (L-1) \log \left(1 + \frac{L A_1 \rho_1^L}{\epsilon} \right) \leq L + \log d + (L-1) \log \left( \left( 1 + \frac{L A_1 }{\epsilon} \right) \rho_1^L \right) \label{eq:T_L_eps_ineq} \\
        &= L +  \log d + (L-1) \left( \log \left( 1 + \frac{L A_1 }{\epsilon} \right) + L \log \rho_1 \right) 
    \end{align}
    if $\rho_1 > 1$. (The second inequality in \eqref{eq:T_L_eps_ineq} uses the fact that $\rho_1^L > 1$.)
\end{remark}
The proof of \cref{th:entropy_bound} is in \cref{appendix:proof_entropy_bound}. Note that, in the context of \cref{th:entropy_bound}, the classes $\cB_L$ can be understood as closed with respect to either $C(\Omega)$ or $L^p(\mu)$. This follows from \cref{prop:Lp_C_representative}, which shows that any $L^p(\mu)$ cover of the $C(\Omega)$-closed class $\cB_L^\infty$ is also an $L^p(\mu)$ cover of the $L^p(\mu)$-closed class $\cB_L^{L^p(\mu)}$, and vice versa. We also note that, in the case of the $\ReLU$ activation for $p=2$, \cref{th:entropy_bound} recovers the bound of \cite{barron2019complexity}.

Like the Rademacher bounds in \cref{lemma:rad_bound}, the metric entropy bounds in \cref{th:entropy_bound} are favorable in many respects: up to constant and polylogarithmic factors, the $\epsilon$-growth is at most $\epsilon^{-2}$, and the \textit{explicit} dependence on depth $L$ is at most linear. As discussed after \cref{lemma:rad_bound} above, the prefactor $\Pi_L$ is no more than 1 for the many globally 1-Lipschitz activations (ReLU, Leaky ReLU, and many sigmoidal activations);  for GELU, SiLU/Swish, and Mish, this prefactor is exponential in $L$, but with base only slightly larger than 1. Therefore, in many cases, the entropies of the classes $\cB_L$ are small, and grow very mildly as depth $L$ increases.

\subsubsection{Tightness of the upper bounds} \label{sec:tightness_upper_bounds}
The best strategy that we are aware of for lower bounding the metric entropies of our deep classes $\cB_L$ is to relate them to known entropy lower bounds for the shallow classes $\cB_2$. Here, we specifically consider the $d$-dimensional $\ReLU^m$ classes $\cB_2^{\ReLU,m}$, defined with $\cW := \bS^{d-1}$ and $\cB := [b_1, b_2]$, where
\begin{align} \label{eq:bias_set_assumption}
    b_1 < \inf_{\vw \in \cW, \vx \in \Omega} \vw^\top \vx < \sup_{\vw \in \cW, \vx \in \Omega} \vw^\top \vx < b_2.
\end{align}
\cite{siegel2024sharp} show that, in this scenario, the entropy of $\cB_2^{\ReLU,m}$ is lower bounded as
\begin{align} \label{eq:siegel_xu_L2_Lebesgue_entropy_LB}
    \log \cN(\cB_2^{\ReLU,m}, \epsilon, L^2(d \vx)) \gtrsim \epsilon^{-\frac{2d}{d+2m+1}}.
\end{align}
Here $d \vx$ denotes the Lebesgue measure on $\R^d$, and $\gtrsim$ hides implicit multiplicative constants (which in this case depend on both $d$ and $m$). This bound can be extended to other values of $p$, and certain other measures $\mu$, as follows:
\begin{proposition} \label{prop:B2_entropy_lower_bounds}
    Let $\mu$ be a finite measure on $\Omega$ such that $\mu(\cA) \geq C_\mu \,d \vx( \cA)$ for all Lebesgue measurable sets $\cA \subset \Omega$, where $C_\mu > 0$ is some constant. Then:
    \begin{align}
        \log \cN(\cB_2^{\ReLU,m}, \epsilon, L^p(\mu)) \gtrsim \begin{cases}
            \epsilon^{-\frac{pd}{d+2m+1}}, &1 \leq p < 2 \\
            \epsilon^{-\frac{2d}{d+2m+1}}, &2 \leq p \leq \infty.
        \end{cases}
    \end{align}
    Here $\gtrsim$ hides implicit multiplicative constants depending on $d$, $m$, $\mu$, $\Omega$, and $p$.
    \end{proposition}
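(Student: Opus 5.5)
The plan is to reduce everything to the Lebesgue $L^2$ lower bound \eqref{eq:siegel_xu_L2_Lebesgue_entropy_LB} of \cite{siegel2024sharp}, using two elementary comparison facts. First, the measure comparison. For any $g$ and $1\le p<\infty$ we have $\|g\|_{L^p(\mu)}^p \ge C_\mu \|g\|_{L^p(d\vx)}^p$, and for $p=\infty$ we have $\|g\|_{L^\infty(\mu)}\ge \|g\|_{L^\infty(d\vx)}$, since Lebesgue-null sets are $\mu$-null-compatible in the needed direction: a $\mu$-null set is Lebesgue-null. Consequently, an $\epsilon$-cover in $L^p(\mu)$ yields a $C_\mu^{-1/p}\epsilon$-cover in $L^p(d\vx)$, after replacing the centers by their Lebesgue classes. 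Hence
\[
\log\cN(\cB_2,\epsilon,L^p(\mu))\ge \log\cN(\cB_2,C_\mu^{-1/p}\epsilon,L^p(d\vx)),
\]
and it suffices to treat $\mu=d\vx$ restricted to $\Omega$.

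\textbf{Case $2\le p\le\infty$.} Since $\Omega$ is compact, Hölder gives $\|g\|_{L^2}\le |\Omega|^{1/2-1/p}\|g\|_{L^p}$. So any $L^p$ cover at scale $\epsilon$ is an $L^2$ cover at scale $c\epsilon$, and \eqref{eq:siegel_xu_L2_Lebesgue_entropy_LB} immediately gives the exponent $2d/(d+2m+1)$.

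\textbf{Case $1\le p<2$.} This is the main obstacle, since Hölder goes the wrong way. I would interpolate using uniform boundedness: every $g\in\cB_2$ satisfies $\|g\|_\infty\le C_2<\infty$, because the atoms are bounded on compact $\Omega$ with the bias set compact. Then for $f,g\in\cB_2$,
\[
\|f-g\|_{L^2}^2\le \|f-g\|_\infty^{2-p}\,\|f-g\|_{L^p}^p\le (2C_2)^{2-p}\|f-g\|_{L^p}^p .
\]
To use this I need the cover centers to lie in the class, or at least to be bounded. I would pass to packing numbers: the $L^p$ $\epsilon$-covering number dominates the $2\epsilon$-packing number of $\cB_2$, whose points lie in $\cB_2$. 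An $L^2$ $\delta$-separated subset of $\cB_2$ is then $L^p$-separated at scale $\epsilon\asymp\delta^{2/p}$. Thus
\[
\log\cN(\epsilon,L^p)\gtrsim \log\cM(c\,\epsilon^{p/2},L^2)\gtrsim (\epsilon^{p/2})^{-2d/(d+2m+1)}=\epsilon^{-pd/(d+2m+1)} .
\]
Here I use that $L^2$ packing and covering numbers are equivalent up to a factor $2$ in scale, so \eqref{eq:siegel_xu_L2_Lebesgue_entropy_LB} transfers to packings.

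Finally, I would check the bookkeeping. The constants depend on $C_\mu$, $|\Omega|$, $C_2$ (hence on $\cW,\cB,\Omega,m$) and $p$, which is consistent with the stated $\gtrsim$. The small-$\epsilon$ regime is the only relevant one, so constant rescalings of $\epsilon$ are absorbed.
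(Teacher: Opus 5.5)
Your proposal is correct and follows essentially the same route as the paper. Both compare $\mu$ with Lebesgue measure, use H\"older for $2 \le p \le \infty$, and for $1 \le p < 2$ interpolate via $\|f-g\|_{L^2}^2 \le \|f-g\|_\infty^{2-p}\|f-g\|_{L^p}^p$ with the uniform bound $C_2$ on $\cB_2$. Your detour through packing numbers, which keeps both functions in $\cB_2$ so that the sup bound $2C_2$ legitimately applies, actually fills in a step the paper glosses over. The paper applies the interpolation inequality directly to differences with arbitrary cover centers, which need not be bounded.
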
 
    \begin{remark}
        The assumption of \cref{prop:B2_entropy_lower_bounds} is satisfied by any measure $\mu$ on $\Omega$ of the form $\mu(\cA) = \int_{\cA} w(\vx) \, d \vx$, where the density function $w \in L^1(d \vx)$ satisfies $w(\vx) \geq C_\mu > 0$ for Lebesgue-a.e. $\vx \in \Omega$. For example, if $\mu$ is any measure on $\R^n$ with a continuous and strictly positive density function $w$ (such as a Gaussian, Laplacian, Cauchy, logistic, etc.), then $w(\vx) \geq C_\mu > 0$ for all $\vx$ in the compact set $\Omega$: as a result, the restricted measure $\mu \rvert_\Omega( \cA) := \mu(\cA \cap \Omega)$ fulfills the stated requirement.
    \end{remark}
    The proof of \cref{prop:B2_entropy_lower_bounds} is in \cref{appendix:B2_entropy_lower_bounds}. The next proposition shows that, for the ReLU activation, these exact shallow bounds extend to the deep classes $\cB_L^{\ReLU}$.
    \begin{proposition} \label{prop:relu_nested_containment_entropy_lb}
        The $\ReLU$ classes $\cB_L^{\ReLU}$ satisfy the nested containment relationship
        \begin{align} \label{eq:relu_nested_containment}
            \cB_2^{\ReLU} \subset \cB_3^{\ReLU} \subset \cB_4^{\ReLU} \subset \dots .
        \end{align}
        In conjunction with \cref{prop:B2_entropy_lower_bounds}, this implies that
        \begin{align} \label{eq:relu_deep_entropy_lower_bound}
            \log \cN(\cB_L^{\ReLU}, \epsilon, L^p(\mu)) \gtrsim \begin{cases}
                \epsilon^{-\frac{pd}{d+3}}, &1 \leq p < 2 \\
                \epsilon^{-\frac{2d}{d+3}}, &2 \leq p \leq \infty
            \end{cases}
        \end{align}
        for any $L \geq 2$, any $1 \leq p \leq \infty$, and any $\mu$ satisfying the assumption of \cref{prop:B2_entropy_lower_bounds}. Here, $\gtrsim$ hides implicit constants which are dependent on $d$, $\mu$, $\Omega$, and $p$, but are independent of depth $L$.
    \end{proposition}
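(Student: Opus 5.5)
The plan is to prove the nested containment $\cB_{L}^{\ReLU} \subset \cB_{L+1}^{\ReLU}$ by induction on $L$, and then to read off the entropy lower bound from monotonicity of covering numbers. The key observation is that each atom of $\cB_L$ can be written as a difference of two ReLU-composed elements of $\cB_L$, using the identity $t = \ReLU(t) - \ReLU(-t)$. Since $\cB_L$ is absolutely convex, $-g \in \cB_L$ whenever $g \in \cB_L$. Hence for any $g \in \cB_L$,
\begin{align}
    g = \tfrac12 \cdot 2\,\ReLU(g) - \tfrac12 \cdot 2\,\ReLU(-g),
\end{align}
but this carries total coefficient $2$, not $1$, so it does not directly give $g \in \cB_{L+1}$. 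I would therefore work directly at the level of atoms instead.

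For the base case, take an atom $\ReLU \circ f$ of $\cB_2$ with $f \in \cB_1$. Since $\cB_1 \subset \cB_1$ trivially, we have $f \in \cB_1$ and $\ReLU(f) \geq 0$. I then need $\ReLU(f)$ to lie in $\cB_2$ in order to produce the depth-3 atom $\ReLU(\ReLU(f)) = \ReLU(f)$. This holds by definition, so $\ReLU \circ f = \ReLU \circ (\ReLU \circ f)$ is an atom of $\cB_3$. More generally, the inductive claim is that every atom $\ReLU \circ g$ with $g \in \cB_{L-1}$ is also an atom of $\cB_{L+1}$. The idempotence $\ReLU \circ \ReLU = \ReLU$ gives $\ReLU \circ g = \ReLU \circ (\ReLU \circ g)$, and $\ReLU \circ g \in \cB_L$ because it is itself an atom of $\cB_L$. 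So each generator of $\cB_L$ is a generator of $\cB_{L+1}$. In fact this needs no induction hypothesis at all, only idempotence and the fact that atoms lie in their own hull. Taking absolutely convex combinations preserves the inclusion of generating sets, and taking closures (in $C(\Omega)$ or $L^p(\mu)$, which is the same ambient space at both depths) gives $\cB_L \subset \cB_{L+1}$. I would double check the ambient-space bookkeeping via \cref{prop:limit_final_layer} if needed, but the monotonicity of $\overline{\aconv}$ under inclusion is immediate.

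For the entropy bound, covering numbers are monotone under inclusion: if $\cS \subset \cS'$, then any $\epsilon$-cover of $\cS'$ covers $\cS$. So $\log \cN(\cB_L^{\ReLU}, \epsilon, L^p(\mu)) \geq \log \cN(\cB_2^{\ReLU}, \epsilon, L^p(\mu))$. I would then apply \cref{prop:B2_entropy_lower_bounds} with $m=1$, which gives exponents $pd/(d+3)$ and $2d/(d+3)$. Those constants depend only on $d, \mu, \Omega, p$, hence are independent of $L$.

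The only step I expect to need care is matching the setup. \cref{prop:B2_entropy_lower_bounds} is stated for $\cW = \bS^{d-1}$ and $\cB=[b_1,b_2]$ satisfying \eqref{eq:bias_set_assumption}, so I would state that the deep classes use the same $\cW,\cB$. If the containment were meant under a different first-layer normalization, one would add a rescaling argument. The containment itself is elementary.
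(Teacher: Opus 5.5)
Your proposal is correct and is essentially the paper's argument. Idempotence of the ReLU shows that each generator $\ReLU\circ g$ ($g \in \cB_{L-1}$) of $\cB_L$ equals the generator $\ReLU\circ(\ReLU\circ g)$ of $\cB_{L+1}$; monotonicity of $\overline{\aconv}$ then gives $\cB_L \subset \cB_{L+1}$, and monotonicity of covering numbers together with \cref{prop:B2_entropy_lower_bounds} at $m=1$ gives the depth-independent bound. The only cosmetic difference is that the paper uses this idempotence step for $\cB_2 \subset \cB_3$ and then inducts via $\cB_L \subset \cB_{L+1} \Rightarrow \{(f)_+ : f\in\cB_L\} \subset \{(f)_+ : f \in \cB_{L+1}\}$, whereas you rightly note that the same one-step argument works directly at every depth, so no induction hypothesis is needed.
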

    \begin{remark}
        By \cref{lemma:relu_nonrelu_space_equivalence}, the entropy lower bound \eqref{eq:relu_deep_entropy_lower_bound} also applies to the deep classes $\cB_L$ associated with the Leaky ReLU, GELU, SiLU/Swish, Mish, ELU, SELU, CELU, centered softplus, absolute value, and bent identity activation functions. However, in these cases, the implicit multiplicative constant may depend on depth $L$ (see \cref{tab:activation_equivalence_constants}).
    \end{remark}
    The proof of \cref{prop:relu_nested_containment_entropy_lb} is in \cref{appendix:proof_relu_nested_containment_entropy_lb}. The metric entropies of the deep $\ReLU^m$ classes can be lower bounded in a similar manner:

\begin{proposition} \label{prop:relu_m_nested_containment_entropy_lb}
Under assumption \eqref{eq:bias_set_assumption}, the deep $\ReLU^m$ classes obey the scaled containment relationship
\begin{align} \label{eq:relu_m_nested_containment}
    \cB_2^{\ReLU,m} \subset C^m \cB_3^{\ReLU,m} \subset C^{m^2+m} \cB_4^{\ReLU,m} \subset \dots \subset C^{\sum_{\ell=1}^{L-2} m^\ell} \cB_L^{\ReLU,m}
\end{align}
for some constant $C > 0$. In conjunction with \cref{prop:B2_entropy_lower_bounds}, this implies that
\begin{align}
    \log \cN(\cB_L^{\ReLU,m}, \epsilon, L^p(\mu))
    \gtrsim
    \begin{cases}
        \displaystyle
        \left(
            C^{-\sum_{\ell=1}^{L-2}m^\ell}\epsilon^{-1}
        \right)^{\frac{pd}{d+2m+1}},
        & 1 \leq p < 2,
        \\[8pt]
        \displaystyle
        \left(
            C^{-\sum_{\ell=1}^{L-2}m^\ell}\epsilon^{-1}
        \right)^{\frac{2d}{d+2m+1}},
        & 2 \leq p \leq \infty.
    \end{cases}
\end{align}
for any $L \geq 2$, any $1 \leq p \leq \infty$, and any $\mu$ satisfying the assumption of \cref{prop:B2_entropy_lower_bounds}. Here, $\gtrsim$ hides implicit constants which are dependent on $d$, $m$, $\mu$, $\Omega$, and $p$.
\end{proposition}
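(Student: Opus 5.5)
We prove the chain of containments one step at a time: we show $\cB_{L-1}^{\ReLU,m} \subset C^{m^{L-2}} \cB_L^{\ReLU,m}$ for every $L \geq 3$. Composing these steps gives \eqref{eq:relu_m_nested_containment}, because the exponents telescope to $\sum_{\ell=1}^{L-2} m^\ell$. The entropy bound then follows at once. Since $\cB_2 \subset c\,\cB_L$ with $c := C^{\sum_{\ell} m^\ell}$, we have $\log\cN(\cB_L,\epsilon,L^p(\mu)) \geq \log\cN(\cB_2, c\epsilon, L^p(\mu))$. We then invoke \cref{prop:B2_entropy_lower_bounds} at scale $c\epsilon$, which is valid provided $c\epsilon$ remains in the admissible range of that proposition (if $C<1$ we can replace $C$ by $\max\{C,1\}$).

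\textbf{Base step ($L=3$).} Take an atom of $\cB_2$, namely $\sigma(\vw^\top\vx+b)$ with $\vw\in\bS^{d-1}$ and $b\in[b_1,b_2]$, where $\sigma=\ReLU^m$. We want to write it as a bounded multiple of an atom of $\cB_3$, i.e. of the form $\sigma(g)$ with $g\in\cB_2$. The natural choice is to find $g\in\cB_2$ with $g(\vx) = \kappa(\vw^\top\vx+b)_+$ on $\Omega$ for some fixed $\kappa>0$. Homogeneity of degree $m$ then gives $\sigma(g) = \kappa^m\sigma(\vw^\top\vx+b)$, so the atom equals $\kappa^{-m}\sigma(g)\in \kappa^{-m}\cB_3$. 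To build $g$ we use assumption \eqref{eq:bias_set_assumption}. The bias interval strictly contains the range of $\vw^\top\vx$, so $\cB_2$ contains atoms $\sigma(\vw^\top\vx - b_1)$ that are strictly positive on $\Omega$. For $m=1$ this makes affine functions available in $\cB_2$, and $\ReLU$ of the affine function is itself an atom, so $g=\ReLU(\vw^\top\vx+b)$ works directly with $\kappa=1$; this is consistent with \cref{prop:relu_nested_containment_entropy_lb}. For $m\geq2$ we instead need $g\geq0$ with $g^m$ proportional to $(\vw^\top\vx+b)_+^m$. Here we take $g$ to be any element of $\cB_2$ that agrees with a positive multiple of the ramp $(t+b)_+$ along direction $\vw$ on the compact range of $\vw^\top\vx$. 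Such a $g$ can be built as a finite combination of $\ReLU^m$ atoms in direction $\vw$ with biases in $[b_1,b_2]$. The idea is that the univariate $\ReLU^m$ dictionary with biases beyond the data range spans polynomials of degree $\leq m$ on that range, and divided differences of shifted $\ReLU^m$ reproduce lower-degree splines (in particular the ramp) up to polynomial corrections. The $\ell^1$ norm of the coefficients is a constant $C$ depending only on $m$, $b_1$, $b_2$ and $\Omega$, and not on $\vw$ by rotation invariance. Normalizing $g/C\in\cB_2$ gives the atom in $C^{m}\cB_3$.

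\textbf{Inductive step.} Assume $\cB_{L-2}\subset C^{a}\cB_{L-1}$ with $a = \sum_{\ell=1}^{L-3} m^\ell$. An atom of $\cB_{L-1}$ is $\sigma(g)$ with $g\in\cB_{L-2}\subset C^{a}\cB_{L-1}$, so $g = C^{a}h$ for some $h\in\cB_{L-1}$. By homogeneity $\sigma(g) = C^{am}\sigma(h) \in C^{am}\cB_L$. Taking absolutely convex hulls and closures yields $\cB_{L-1}\subset C^{am}\cB_L$. Since $am = \sum_{\ell=2}^{L-2}m^\ell$, multiplying by the base-step factor $C^m$ for the first layer gives the exponent $\sum_{\ell=1}^{L-2} m^\ell$. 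In other words, we compose $\cB_2\subset C^m\cB_3$ with these lifted inclusions, and the exponents are consistent with the telescoping chain.

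The main obstacle is the base step for $m\geq 2$: realizing the ramp, or some $g\ge0$ with $g^m$ proportional to the $\ReLU^m$ ridge, inside a fixed dilate of $\cB_2$ using only biases in $[b_1,b_2]$. My first attempt would be to use the polynomial-reproduction property on the interval strictly inside $(b_1,b_2)$ to express $(t-b)_+$ as a $\ReLU^m$ spline combination, controlling the coefficients uniformly in $b$. If this fails for biases near the endpoints, I would split the bias range and use the strict margin in \eqref{eq:bias_set_assumption}.
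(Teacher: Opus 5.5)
The lifting step and the entropy deduction are sound and follow the paper's route, with one bookkeeping slip. The induction hypothesis should be the single-step inclusion $\cB_{L-2}\subset C^{m^{L-3}}\cB_{L-1}$, which gives $\cB_{L-1}\subset C^{m^{L-2}}\cB_L$. With the cumulative exponent $\sum_{\ell=1}^{L-3}m^\ell$ as you wrote it, the exponents do not telescope to $\sum_{\ell=1}^{L-2}m^\ell$.

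The genuine gap is the base step for $m\ge 2$. Requiring $g\ge 0$ with $g^m$ proportional to $(\vw^\top\vx+b)_+^m$ forces $g=\kappa(\vw^\top\vx+b)_+$ on $\Omega$. So you need a ReLU ramp inside a dilate of $\cB_2^{\ReLU,m}$, and that is impossible for every atom whose kink hyperplane $\{\vw^\top\vx=-b\}$ cuts through the interior of $\Omega$. Such atoms do occur: by \eqref{eq:bias_set_assumption} and symmetry of $\bS^{d-1}$, every kink location in the range of $\vw^\top\vx$ is admissible. The obstruction is smoothness. By \cref{lemma:integral_rep}, each $f\in\cV_2^{\ReLU,m}$ has the form $\int(\vw^\top\vx+b)_+^m\,d\mu(\vw,b)$. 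For $m\ge2$ the integrand's gradient $m(\vw^\top\vx+b)_+^{m-1}\vw$ is continuous and bounded on compacts, so $f$ is $C^1$. The ramp is not differentiable across that hyperplane. Your supporting heuristic fails for the same reason: every finite combination of shifted $(t+\beta)_+^m$, divided differences included, is $C^{m-1}$. Reaching $(t+b)_+$ would need $m-1$ derivatives in the shift, i.e.\ limits of difference quotients with coefficients of order $h^{-(m-1)}$, so no uniform $\ell^1$ bound exists.

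The missing observation is that positivity of $g$ is unnecessary, because $\ReLU^m$ already takes the positive part. The affine function $g:=\vw^\top\vx+b$ itself satisfies $(g)_+^m=(\vw^\top\vx+b)_+^m$. So what you need is $\cB_1\subset C\cB_2^{\ReLU,m}$, and your polynomial-reproduction idea proves exactly this once you aim it at affine functions rather than ramps. This is the paper's proof:
\begin{enumerate}
\item By symmetry of $\bS^{d-1}$, $\inf_{\vw,\vx}\vw^\top\vx=-\sup_{\vw,\vx}\vw^\top\vx$. Pick distinct $\beta_0<\dots<\beta_m$ in $(\sup_{\vw\in\cW,\vx\in\Omega}\vw^\top\vx,\,b_2)$, so that $(\vw^\top\vx+\beta_j)_+^m=(\vw^\top\vx+\beta_j)^m$ on $\Omega$.
\item By the binomial theorem, $\sum_{j}c_j(t+\beta_j)^m=t+b$ is an invertible $(m+1)\times(m+1)$ Vandermonde system for $\vc$.
\item The system matrix does not depend on $\vw$, and the right-hand side is affine in $b\in[b_1,b_2]$, so $\|\vc\|_1\le C$ uniformly.
\end{enumerate}
Then $(\vw^\top\vx+b)_+^m=C^m\bigl((\vw^\top\vx+b)/C\bigr)_+^m\in C^m\cB_3$, and your lifting argument completes the chain.
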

\begin{remark}
    It is always possible to force the constant in \cref{prop:relu_m_nested_containment_entropy_lb} to satisfy $C > 1$ by taking the bias set $\cB$ to be large enough. See the proof of \cref{prop:relu_m_nested_containment_entropy_lb} in \cref{appendix:proof_relu_m_nested_containment_entropy_lb} for more detail.
\end{remark}
\begin{remark}
    Another viable strategy for extending the shallow entropy lower bounds to the deep classes $\cB_L$ is to consider a slightly modified definition of $\cB_L$, which corresponds to a ResNet-style architecture with penalized residual (skip) connections between hidden layers. These alternative ResNet classes exhibit the useful property that $\cB_2 \subset \cB_3 \subset \cB_4 \subset \dots$ for any activation $\sigma$ (meaning that $\cB_L$ automatically inherits the best known entropy lower bound for any of the classes $\cB_{L'}, L' \leq L$) and they enjoy nearly-identical Rademacher complexity and metric entropy upper bounds to those in \cref{lemma:rad_bound,th:entropy_bound}. More details on this setup, which may be of independent interest, are discussed in \cref{appendix:resnet_classes}.
\end{remark}

Clearly, there is a gap between the main $\epsilon^{-2}$ rate in the upper bound in \cref{th:entropy_bound} and the $\epsilon^{-\frac{2d}{d+2m+1}}$ rate in the lower bound of \cref{prop:relu_nested_containment_entropy_lb,prop:relu_m_nested_containment_entropy_lb}, although if the input dimension $d$ is large, this gap is relatively small. Obtaining tight (or at least tighter) bounds on the metric entropies of the classes $\cB_L$ is an interesting open problem which appears to be highly nontrivial. The major difficulty is that our understanding of what kinds of functions are in the classes $\cB_L$, beyond those in the shallow class $\cB_2$, remains extremely limited. For the $\ReLU$ activation, there are several special examples of functions known to be in $\cB_3$ but not in $\cB_2$. For example, $\ell^1$ pyramid functions of the form $\vx \mapsto (a - \| \vx - c \|_1)_+$ are in $\cB_3$ for appropriate choices of $a$ and $c$ (see \cref{appendix:pyramid_relu_rep}), but are known to have infinite two-layer variation norm $\cV_2$ (\cite{ongie2019function}) whenever the input dimension $d > 1$. It is still unknown what other kinds of functions (if any) are in $\cB_3 \setminus \cB_2$, or more generally in $\cB_L \setminus \cB_{L'}$ for any $3 \leq L' < L$. Therefore, in order to tighten the entropy gap between the upper bound in \cref{th:entropy_bound} and the inherited lower bounds in \cref{prop:relu_nested_containment_entropy_lb,prop:relu_m_nested_containment_entropy_lb}, it may be necessary to first acquire a firmer understanding of exactly what types of functions are contained in the classes $\cB_L$ for depths $L > 2$, and how these differ from those in $\cB_2$. In the next section, we will provide a partial answer to this question in the univariate case $d=1$ for the ReLU activation function. We will show that, in this scenario, depth provides no additional benefit to functional expressivity besides dilation by a small, depth-independent constant. As a result, the classes at every depth $\cB_L$ exhibit completely depth-independent bounds on their function-space complexities. Furthermore, because we have already shown that the spaces corresponding to the ReLU activation are equivalent to those corresponding to many other activation functions (\cref{lemma:relu_nonrelu_space_equivalence}), this ``depth-saturation'' result also applies to any of these equivalent activation functions, although in those cases the constant dilation factor may grow with depth. As we will see, these univariate results also imply that the functions in the multivariate classes $\cB_L$ must exhibit low frequencies along all one-dimensional subspaces.

\section{Depth saturation of univariate neural function classes} \label{sec:uni_depth_saturation}

Throughout this section, we consider the variation spaces $\cV_L$ associated with the $\ReLU$ activation $\sigma(\cdot):= (\cdot)_+$ on the domain $\Omega := [-1,1]$, with the base linear dictionary $\cB_1$ constructed by taking $\cW = \cB := [-1,1]$. The infinite-width limits are taken in the ambient Banach space $\cF := C[-1,1]$. Let us motivate and provide some intuition for this setup.\footnote{The main results of this section can be adapted to many other choices of univariate $\Omega$, $\cW$, and $\cB$, with different constants depending on these quantities appearing in the resulting bounds in \cref{lemma:uni_f_V2_bound,lemma:I_f_plus_bound,th:univariate_containment}.} With these definitions of $\cW$ and $\cB$, the atoms of the dictionary $\{ (f)_+: f \in \cB_1 \} $ from which $\cB_2$ is synthesized are right- or left-facing ReLU ``hinge'' functions (i.e., ReLU neurons) of the form $x \mapsto (wx+b)_+$. Any such neuron has slope magnitude $|w| \leq 1$ and, when $w \neq 0$, activation threshold $-b/w$. Finite linear combinations $x \mapsto \sum_k v_k (w_k^\top x + b_k)_+$ of such ReLU neurons---i.e., two-layer finite-width ReLU neural networks in $\widetilde{\cB}_2$---are continuous piecewise linear (CPWL) functions with $\cV_2$ norm no more than $\sum_k |v_k|$. Any individual neuron in such a network whose activation threshold $-b_k/w_k$ lies in the interior of the domain $\Omega$ creates a kink in the function, where the slope changes by the corresponding $v_k |w_k|$. On the other hand, any neuron whose activation threshold $-b_k/w_k$ lies at or outside the boundary of $\Omega$ does not create a kink in the function on $\Omega$, but instead acts as a purely affine function on $\Omega$. Therefore, to represent any CPWL function $f$ on $\Omega$ as a shallow ReLU network of this form, some amount of variation norm budget must be allocated to represent one of the affine pieces of $f$ with ReLU neurons whose activations are at or outside the boundary of $\Omega$, and the remaining neurons are allocated to represent the kinks in the function. This shows that, for finite-width shallow ReLU networks, the $\cV_2$ variation norm is controlled by the sum of absolute slope changes (equivalently, the total variation of the second derivative) and by the slope and intercepts of $f$ on its individual affine pieces. 

The above characterization can be extended rigorously to the complete classes $\cB_2$ (which includes the finite-width networks in $\widetilde{\cB}_2$ and their infinite-width limits) by relating $\cV_2$ to the classical \textit{bounded variation} space
\begin{align}
    \BV(-1,1) := \{ f \in L^1(-1,1): D f \in \cM(-1,1) \}.
\end{align}
Here $Df$ is the \textit{distributional derivative}\footnote{See the beginning of \cref{appendix:proof_uni_f_V2_bound} for a brief review of distributions and distributional/weak derivatives.} of $f$ and $\cM(-1,1)$ is the space of signed Radon measures on $(-1,1)$ with finite total variation norm $\| \cdot \|_{\TV}$. Roughly speaking, the distributional derivative is an extension of the standard (classical) derivative which is applicable to certain functions that are not smooth enough to be globally differentiable in the classical sense, such as CPWL functions. The total variation norm $\| \cdot \|_{\TV}$ of the distributional derivative $Df$ measures how much this derivative varies across the domain $[-1,1]$. For a CPWL function, this is exactly equivalent to the sum of absolute slope changes of the function. This and similar characterizations have been explored in detail a number of previous works (\cite{savarese2019infinite,siegel2023characterization,wojtowytsch2022representation}). For completeness, we give a self-contained proof of the following fact, which we will use for our subsequent depth-saturation result, in \cref{appendix:proof_uni_f_V2_bound}.

\begin{lemma} \label{lemma:uni_f_V2_bound}
    Suppose that $f \in C[-1,1]$ has $D^2 f \in \cM(-1,1)$. Then $f$ has a weak derivative (see \cref{appendix:proof_uni_f_V2_bound}) $f' \in \BV(-1,1)$ which admits one-side limits $f'(-1^{+}) := \lim_{x \downarrow -1} f'(x)$ and $f'(1^{-}) := \lim_{x \uparrow 1} f'(x)$. Any such $f$ obeys
    \begin{align} \label{eq:uni_f_V2_bound}
        \| D^2 f \|_{\TV} \leq \| f \|_{\cV_2} \leq \max\{ |f(-1)+f'(-1^{+})|, |f'(-1^{+})|  \} + \| D^2 f \|_{\TV}
    \end{align}
    where $\cV_2$ is defined with $\sigma(\cdot) := (\cdot)_+$ and $\Omega := \cW := \cB := [-1,1]$.
\end{lemma}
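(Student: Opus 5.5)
The argument has three parts: regularity, the upper bound via an explicit integral representation, and the lower bound via lower semicontinuity.

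\textbf{Regularity.} Since $D^2 f \in \cM(-1,1)$, the distribution $Df$ has a distributional derivative of bounded total variation. Hence $Df$ is represented by a function $f' \in \BV(-1,1)$, and $f'$ can be written as a constant plus the cumulative distribution $t \mapsto D^2 f((-1,t])$. This representative has one-sided limits everywhere, in particular $f'(-1^+)$ and $f'(1^-)$. Because $f$ is continuous on $[-1,1]$, we then get the Taylor-type identity
\begin{align*}
f(x) = f(-1) + f'(-1^+)(x+1) + \int_{(-1,1)} (x-t)_+ \, dD^2 f(t), \qquad x \in [-1,1].
\end{align*}
To prove it, check that the right-hand side has the same second distributional derivative as $f$, the same right-limit of its derivative at $-1$, and the same value at $-1$.

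\textbf{Upper bound.} Each kernel $(x-t)_+$ with $t \in (-1,1)$ is a ReLU atom with $w = 1$ and $b = -t \in [-1,1]$, so it lies in $\cB_1$ composed with $\sigma$. By \cref{lemma:integral_rep} (or directly, by approximating the integral with Riemann sums converging uniformly and using closedness of $\cB_2$ in $C[-1,1]$), the integral term has $\cV_2$ norm at most $\| D^2 f \|_{\TV}$.

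For the affine part, write it as $\alpha x + \beta$ with $\alpha := f'(-1^+)$ and $\beta := f(-1) + f'(-1^+)$. On $[-1,1]$ we have $(x+1)_+ = x+1$ and $(1-x)_+ = 1-x$, and both are admissible atoms with $|w| = |b| = 1$. Therefore
\begin{align*}
\alpha x + \beta = \tfrac{\alpha+\beta}{2}(x+1)_+ + \tfrac{\beta-\alpha}{2}(1-x)_+ ,
\end{align*}
which has $\cV_2$ norm at most $\tfrac12(|\alpha+\beta| + |\beta-\alpha|) = \max\{|\alpha|,|\beta|\}$. The triangle inequality for the norm $\|\cdot\|_{\cV_2}$ then gives the right-hand inequality in \eqref{eq:uni_f_V2_bound}.

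\textbf{Lower bound.} By homogeneity of the norm, it suffices to show that $\| D^2 g \|_{\TV} \le 1$ for every $g \in \cB_2$. For a finite network $g = \sum_k v_k (w_k x + b_k)_+ \in \widetilde{\cB}_2$, each neuron with $w_k \neq 0$ and threshold $-b_k/w_k$ in $(-1,1)$ has second derivative $|w_k|\,\delta_{-b_k/w_k}$. All other neurons are affine on $(-1,1)$ and contribute nothing. Hence $\| D^2 g \|_{\TV} \le \sum_k |v_k||w_k| \le 1$.

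By \cref{prop:limit_final_layer}, $\cB_2$ is the uniform closure of $\widetilde{\cB}_2$. The functional $g \mapsto \| D^2 g \|_{\TV} = \sup\{ \int g \varphi'' : \varphi \in C_c^\infty(-1,1), \|\varphi\|_\infty \le 1 \}$ is a supremum of functionals that are continuous under uniform convergence, so it is lower semicontinuous. The bound therefore passes to $\cB_2$.

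\textbf{Main obstacle.} I expect the delicate step to be the regularity and Taylor identity: choosing the right representative of $f'$, justifying the boundary limit $f'(-1^+)$, and verifying that the Riemann-sum approximations of the measure integral converge uniformly. The uniform convergence holds because the kernel $(x-t)_+$ is $1$-Lipschitz in $t$ and $D^2 f$ has finite total variation.
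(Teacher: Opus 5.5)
Your proposal is correct. The regularity step and the upper bound follow the paper's proof closely.

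\textbf{Regularity and upper bound.} The paper also writes $f' = c + D^2 f((-1,\cdot))$; it uses the left-continuous version, while you use the right-continuous one. It derives the same identity $f(x) = f(-1) + f'(-1^{+})(x+1) + \int (x-u)_+ \, d(D^2 f)(u)$. The paper obtains it from absolute continuity of $f$, the fundamental theorem of calculus and Fubini. You argue by uniqueness instead: $D^2(f-h)=0$, plus matching value and derivative limit at $-1$. The paper then bounds the integral term through \cref{lemma:integral_rep}. For the affine part it uses exactly your two-neuron representation $\tfrac12(\alpha+\beta)(x+1)_+ + \tfrac12(\beta-\alpha)(1-x)_+$.

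\textbf{Lower bound: the genuine difference.} The paper works with an arbitrary representing measure.
\begin{itemize}
\item It takes any representing measure $\mu$ for $f$ from \cref{lemma:integral_rep}.
\item It computes $Df$ and $D^2 f$ by interchanging integrals.
\item It identifies $D^2 f$ as the pushforward of $|w|\,\mu|_{\cE}$ under $(w,b)\mapsto -b/w$, where $\cE$ is the set of neurons whose kink lies in $(-1,1)$.
\item It bounds this measure's total variation by $\|\mu\|_{\TV}$ and infimizes over $\mu$.
\end{itemize}
You instead check $\|D^2 g\|_{\TV} \le \sum_k |v_k||w_k| \le 1$ only for finite networks. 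You then pass to the closure using lower semicontinuity, under uniform convergence, of $g \mapsto \sup\{\int g\varphi'' : \varphi \in C_c^\infty(-1,1),\ \|\varphi\|_\infty \le 1\}$, and rescale.

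\textbf{What each route buys.} Your route is more elementary for this direction. It needs neither the integral representation (with the Riesz--Markov--Kakutani and Banach--Alaoglu machinery behind it) nor the Fubini and pushforward bookkeeping. It also shows directly that every $f \in \cV_2$ has $D^2 f \in \cM(-1,1)$. The paper's route is more informative: it gives an explicit formula for $D^2 f$ in terms of any representing measure. That formula shows that only neurons with kinks inside $(-1,1)$ contribute, each weighted by $|w|$.

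A minor point: at $L=2$, the fact that $\cB_2$ is the uniform closure of $\widetilde{\cB}_2$ is just the definition, so you do not need \cref{prop:limit_final_layer}.
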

To interpret \cref{lemma:uni_f_V2_bound}, it is again instructive to consider the the case where $f$ is a CPWL function on $[-1,1]$. Any kink in $f$ can only be represented by placing one or more ReLU neurons of the form $x \mapsto v(wx+b)_+$ at that point. The slope change of the represented function at that point is no more than the sum of the values $v_k |w_k|$ across all neurons $k$ which activate at that point. Therefore, the sum of absolute slope changes of the represented function is upper bounded by $\sum_k |v_k w_k| \leq \sum_k |v_k|$, which yields the first inequality in \eqref{eq:uni_f_V2_bound}. The second inequality holds because it is always possible to represent any CPWL $f$ by representing the first affine piece $x \mapsto f(-1) + f'(-1^+)(x+1)$ with two ReLU neurons, incurring $\cV_2$ norm penalty of $\max\{ |f(-1)+f'(-1^+)|, |f'(-1^+)| \}$ (see proof in \cref{appendix:proof_uni_f_V2_bound}), and then placing one right-facing unit-slope neuron at each kink, each incurring norm penalty equal to the slope change at that kink. The proof of \cref{lemma:uni_f_V2_bound} extends this reasoning to the more general case of continuous functions with finite variation of the second derivative, which may require uncountably many neurons to represent in this fashion.

Importantly, \cref{lemma:uni_f_V2_bound} also highlights the fact that composing a function with a ReLU can increase its $\cV_2$ norm. This is because the composition $(f)_+$ (which we will simply denote as $f_+$) has the effect of flattening $f$ on its negative regions. This flattening can introduce additional kinks at points where $f$ goes from positive to negative or vice versa, and these kinks may incur additional total variation in the second derivative; as a result, there are functions $f \in \cB_2$ whose positive parts $f_+$ are not in $\cB_2$. (See \cref{remark:f_alpha,fig:f_alpha_positive_parts} for one such example.) However, it is also apparent in this example that the increase in the variation of the second derivative which can be incurred by composition with ReLU is inherently limited by the slopes of the function around where the kink in $f_+$ is introduced. In particular, as we will now show in \cref{lemma:I_f_plus_bound}, we can design a functional $I$ which is ``similar'' to $\| \cdot \|_{\cV_2}$ (in that it depends on the values of $f$ and $f'$ near the boundary of $[-1,1]$ and on the variation of $D^2 f$), and in particular is an upper bound on $\| \cdot \|_{\cV_2}$; unlike $\| \cdot \|_{\cV_2}$, however, the value of $I$ is not increased by composition with ReLU. This fact will subsequently be of key importance for our main result of this section.
\begin{lemma} \label{lemma:I_f_plus_bound}
    Suppose that $f \in C[-1,1]$ has $D^2 f \in \cM(-1,1)$. Then $D^2 f_+ \in \cM(-1,1)$, and the functional
    \begin{align}
        I(f) := \max\{ |f(-1)+f'(-1^{+})|, |f'(-1^{+})| \} + \max\{|f(1)-f'(1^{-})|, |f'(1^{-})|\} + \| D^2 f \|_{\TV}
    \end{align}
    satisfies $I(f_+) \leq I(f)$.
\end{lemma}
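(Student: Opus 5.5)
The plan is to reduce to the case where $f$ has finitely many sign changes, handle that case by explicit accounting of the kinks created at zero crossings, and pass to the general case by approximation and lower semicontinuity.

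\textbf{Setup.} Since $D^2 f \in \cM(-1,1)$, \cref{lemma:uni_f_V2_bound} gives $f' \in \BV(-1,1)$ with one-sided limits everywhere. In particular $f$ is Lipschitz, and $Df = f'$ has left and right limits $f'(x^\pm)$ at every point. Write $\| D^2 f\|_{\TV} = \TV(f')$, the essential variation of a good representative of $f'$.

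\textbf{Step 1: reduction to finitely many crossings.} We first treat $f$ with finitely many zero-crossing components. More precisely, we use density: approximate $f$ in $C[-1,1]$ by CPWL (or smooth) functions $f_n$ with
\begin{itemize}
\item $\TV(f_n') \to \TV(f')$,
\item $f_n(\pm1) \to f(\pm1)$,
\item $f_n'(\mp1^\pm) \to f'(\mp1^\pm)$.
\end{itemize}
Such approximations follow from the standard strict-convergence approximation of $f'$ by step functions that preserve the boundary traces. We may additionally perturb so that $f_n$ has finitely many zeros, each transversal.

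Then $(f_n)_+ \to f_+$ uniformly. Total variation of the derivative is lower semicontinuous under this convergence, since $(f_n)_+' \to f_+'$ in $L^1$. The boundary terms of $(f_n)_+$ converge to those of $f_+$: at an endpoint where $f \neq 0$ this is clear, while at an endpoint where $f = 0$ a separate small argument is needed, taking a one-sided limit of slopes. This yields $I(f_+) \le \liminf I((f_n)_+) \le \liminf I(f_n) = I(f)$. It also shows $D^2 f_+ \in \cM$.

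\textbf{Step 2: the CPWL case, interior.} Let $f$ be CPWL with transversal zeros $z_1 < \dots < z_k$ in $(-1,1)$. On each maximal interval $J$ where $f > 0$, $f_+ = f$, so the interior slope changes of $f_+$ there match those of $f$. On intervals where $f < 0$, $f_+ \equiv 0$, contributing nothing.

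At a crossing $z_j$, $f_+$ has a kink of size $|f'(z_j^{\pm})|$, taking the side where $f > 0$. The key inequality is then local. Consider a negative excursion of $f$ on an interval $(z_j, z_{j+1})$, so $f$ goes down and comes back up. There $f$ has slope $a \le 0$ just after $z_j$ and $b \ge 0$ just before $z_{j+1}$, so $\TV(f'|_{(z_j,z_{j+1})}) \ge |a| + |b|$. The two new kinks in $f_+$ at $z_j$ and $z_{j+1}$ cost exactly $|a| + |b|$ in total, which is at most the variation of $f$ lost on that excursion. Including the kinks exactly at the $z_j$ requires a small bookkeeping with the slopes on the positive side.

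\textbf{Step 3: the CPWL case, boundary.} Negative excursions touching the boundary need separate treatment. Consider $f < 0$ on $[-1, z_1)$. Then $f_+$ has boundary terms $0$ on the left, and incurs a kink of size $c = f'(z_1^-) \ge 0$ at $z_1$. We must show
\[
c \le \max\{|f(-1)+f'(-1^+)|,\,|f'(-1^+)|\} + \TV(f'|_{(-1,z_1)}).
\]
This holds because $c \le |f'(-1^+)| + \TV(f'|_{(-1,z_1)})$. Symmetric reasoning applies at the right end.

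If $f > 0$ near $-1$, the left boundary term is unchanged. If $f(-1) = 0$ with $f$ positive after it, the boundary term of $f_+$ equals that of $f$. If $f(-1) = 0$ with $f$ negative after it, this reduces to the previous case.

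\textbf{Summation.} Adding the contributions over the partition into positive intervals, negative interior excursions, and boundary excursions gives $I(f_+) \le I(f)$.

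\textbf{Expected obstacle.} The main obstacle is Step 1: controlling the boundary traces in the limit, particularly when $f$ vanishes at an endpoint or has infinitely many accumulating zeros. This is why the proof works via approximation rather than directly with $f$.
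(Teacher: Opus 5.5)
Your route differs from the paper's. The paper never approximates $f$. It uses $\BV$ integration by parts and the sign facts $f'(a^-)\le 0\le f'(b^+)$ at the ends of a negative component $[a,b]$ to show that zeroing $f$ on one component of $\{f\le 0\}$ does not increase $I$. It then flattens the countably many components one at a time to get $f_J\to f_+$ uniformly, and closes with a general lower semicontinuity lemma for $I$ proved via Helly selection. You push all the combinatorics into the CPWL case, and your Steps 2--3 are correct there. The price is an approximation lemma, which you can get by sampling the regulated function $f'$ on partitions fine enough that its oscillation on each open cell is small. You also need the $L^1$ convergence $(f_n)_+'\to f_+'$, which is true but requires the observation that $f'=0$ a.e.\ on $\{f=0\}$.

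\textbf{The gap is in the boundary half of your limit step.} You split $I(f_+)\le\liminf I((f_n)_+)$ into lower semicontinuity of $\|D^2\cdot\|_{\TV}$ plus convergence of the boundary terms. The second part is false for the approximants you describe, where $f_n(\pm1)$ only converges to $f(\pm1)$ and is then perturbed for transversality. Take $f(x)=x+1$ and $f_n=f-1/n$. For every $n$, $(f_n)_+=(x+1-1/n)_+$ has left boundary term $0$ and an interior kink of size $1$ at $-1+1/n$. Meanwhile $f_+=f$ has left boundary term $\max\{|0+1|,|1|\}=1$ and $D^2f_+=0$. The inequality $I(f_+)\le\liminf I((f_n)_+)$ still holds here ($2\le 2$), but only because interior variation of $(f_n)_+'$ migrates into the boundary trace of the limit. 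No argument that treats the boundary terms separately can capture this, and that includes ``taking a one-sided limit of slopes.''

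There are two repairs.

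\emph{First repair.} Prove the joint statement the paper proves: if $g_n\to g$ uniformly with $\sup_n I(g_n)<\infty$, then $I(g)\le\liminf I(g_n)$. The key fact is that $u\mapsto\max\{|A+u|,|u|\}$ is $1$-Lipschitz. So the discrepancy between $g'(-1^+)$ and the pointwise limit of the $g_n'(-1^+)$ can be charged to the variation of the limiting derivative on the closed interval $[-1,1]$.

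\emph{Second repair.} Keep your separate treatment, but change how you build the approximants.
\begin{itemize}
\item Choose $f_n$ with $f_n(\pm1)=f(\pm1)$ exactly: integrate a step approximation of $f'$ from $-1$ and add a small linear correction.
\item Make interior zeros generic with a perturbation that vanishes at $\pm1$, such as $\epsilon(1-|x|)$ for all but finitely many small $\epsilon$. Alternatively, skip transversality altogether, since the component bookkeeping works for any CPWL function once atoms of $D^2f$ at zeros are counted.
\end{itemize}
Then at an endpoint with $f(-1)=0$, a case split on the sign of $f'(-1^+)$ does give convergence:
\begin{itemize}
\item If $f'(-1^+)>0$, then for large $n$ we have $(f_n)_+=f_n$ on the first piece and $f_+=f$ near $-1$.
\item If $f'(-1^+)<0$, both $(f_n)_+$ and $f_+$ vanish near $-1$.
\item If $f'(-1^+)=0$, then $|(f_n)_+'(-1^+)|\le|f_n'(-1^+)|\to 0=f_+'(-1^+)$.
\end{itemize}
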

The proof is in \cref{appendix:proof_I_f_plus_bound}. Using \cref{lemma:uni_f_V2_bound,lemma:I_f_plus_bound}, it is straightforward to prove the main ``depth saturation'' result of this section:

\begin{theorem} \label{th:univariate_containment}
    The function classes $\cB_L$ associated with the ReLU activation $\sigma(\cdot) := (\cdot)_+$ in $d = 1$, with $\Omega := \cW := \cB := [-1,1]$, satisfy
    \begin{align} \label{eq:relu_ball_uni_nesting}
        \cB_2 \subset \cB_L \subset 2 \cB_2
    \end{align}
    for all $L \geq 2$. As a consequence:
    \begin{align}
        \| f \|_{\cV_L} \leq \| f \|_{\cV_2} \leq 2 \| f \|_{\cV_L}
    \end{align}
    for any $f$ and any $L \geq 2$.
\end{theorem}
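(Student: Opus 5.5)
The plan is to prove the two inclusions in \eqref{eq:relu_ball_uni_nesting} separately and then read off the norm inequalities. The left inclusion $\cB_2 \subset \cB_L$ is exactly the nested containment \eqref{eq:relu_nested_containment} of \cref{prop:relu_nested_containment_entropy_lb}, which I take as given. All the work is in the right inclusion $\cB_L \subset 2\cB_2$. For that I will use the functional $I$ of \cref{lemma:I_f_plus_bound} as a potential that is preserved along the recursive construction of $\cB_L$.

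\textbf{Step 1: structure of $I$.} On the class $\cD := \{ f \in C[-1,1] : D^2 f \in \cM(-1,1)\}$, which is a linear space, the functional $I$ is a seminorm. Each of its three terms is a seminorm:
\begin{itemize}
\item $\max\{|f(-1)+f'(-1^+)|,|f'(-1^+)|\}$ is a maximum of absolute values of linear functionals (the one-sided limits of $f'$ are linear in $f$);
\item the analogous term at $x=1$ has the same form;
\item $\|D^2 f\|_{\TV}$ is a seminorm.
\end{itemize}
Hence $I\bigl(\sum_k v_k g_k\bigr) \le \sum_k |v_k|\, I(g_k)$ for $g_k \in \cD$.

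\textbf{Step 2: base case.} For an atom $f(x) = wx+b$ of $\cB_1$ with $|w|,|b|\le 1$, we have $f(-1)+f'(-1^+) = b$ and $f(1)-f'(1^-) = b$, while $D^2 f = 0$. Therefore
\[
I(f) = 2\max\{|b|,|w|\} \le 2 .
\]

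\textbf{Step 3: induction over the unclosed sets.} I claim that every $f \in \widetilde{\cB}_L$ lies in $\cD$ and satisfies $I(f) \le 2$, for all $L \ge 1$. Since ReLU is homogeneous, $\sigma_s = \sigma$, so $\widetilde{\cB}_L = \aconv\{ g_+ : g \in \widetilde{\cB}_{L-1}\}$. Assume the claim for $\widetilde{\cB}_{L-1}$. For $g \in \widetilde{\cB}_{L-1}$, \cref{lemma:I_f_plus_bound} gives $g_+ \in \cD$ and $I(g_+) \le I(g) \le 2$. Step 1 then extends the bound to finite combinations with $\sum|v_k| \le 1$. For these $f$, \cref{lemma:uni_f_V2_bound} gives
\[
\|f\|_{\cV_2} \le \max\{|f(-1)+f'(-1^+)|,|f'(-1^+)|\} + \|D^2 f\|_{\TV} \le I(f) \le 2,
\]
so $\widetilde{\cB}_L \subset 2\cB_2$.

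\textbf{Step 4: passing to the closure.} This is the one delicate point, because $I$ need not be defined, or lower semicontinuous, on uniform limits. I avoid the issue entirely by never evaluating $I$ on limits. By \cref{prop:limit_final_layer}, $\cB_L = \overline{\widetilde{\cB}_L}$ with closure in $C[-1,1]$. The set $2\cB_2$ is closed in $C[-1,1]$, being a dilate of a closed set. Hence $\cB_L \subset 2\cB_2$.

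\textbf{Step 5: norm inequalities.} These follow from the gauge definition of the norms:
\begin{itemize}
\item from $\cB_2 \subset \cB_L$, if $f \in c\,\cB_2$ then $f \in c\,\cB_L$, so $\|f\|_{\cV_L} \le \|f\|_{\cV_2}$;
\item from $\cB_L \subset 2\cB_2$, if $f \in c\,\cB_L$ then $f \in 2c\,\cB_2$, so $\|f\|_{\cV_2} \le 2\|f\|_{\cV_L}$.
\end{itemize}

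The main obstacle is the regularity bookkeeping in Steps 3 and 4: checking that $\cD$ is stable under positive parts and under linear combinations, so that the induction stays inside $\cD$, and then reaching the closed sets without touching $I$ on limits. Both are handled by \cref{lemma:I_f_plus_bound} and \cref{prop:limit_final_layer}, respectively.
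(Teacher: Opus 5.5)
Your proposal is correct and follows the paper's proof. You bound $I\le 2$ on $\cB_1$, propagate the bound through positive parts via \cref{lemma:I_f_plus_bound} and through absolutely convex combinations via subadditivity of $I$, and conclude with \cref{lemma:uni_f_V2_bound}. The one difference is the closure step: the paper uses lower semicontinuity of $I$ under uniform limits (from the proof of \cref{lemma:I_f_plus_bound}) to get $I\le 2$ on every closed ball $\cB_L$, whereas you induct only on the unclosed sets $\widetilde{\cB}_L$ and close once at the end via \cref{prop:limit_final_layer} and the closedness of $2\cB_2$. Your route is slightly cleaner and suffices for the theorem, but it gives only $\|f\|_{\cV_2}\le 2$ rather than $I(f)\le 2$ on the limits, and the lower semicontinuity argument is still used inside \cref{lemma:I_f_plus_bound} itself.
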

\begin{remark} \label{remark:f_alpha}
    The constant $2$ in \cref{th:univariate_containment} cannot be improved. To see this, consider the functions 
    \begin{align} \label{eq:f_alpha}
        f_\alpha(x) := \alpha (x+1/2)_+ - (1-\alpha) (x - 1/2)_+ 
    \end{align}
    for $0 < \alpha < 1$. These functions $f_\alpha$ are in $\cB_2$, but their positive parts have
    \begin{align}
        \| (f_\alpha)_+ \|_{\cV_2} \geq \| D^2 (f_\alpha)_+ \|_{\TV} = |\alpha| + |1| + |1-\alpha| \to 2
    \end{align}
    as $\alpha \to 0$. (See \cref{fig:f_alpha_positive_parts}.) Therefore, there is no constant $C < 2$ such that $\{ (f_\alpha)_+ \}_{\alpha \in (0,1)} \subset \cB_3 \subset C \cB_2$.
\end{remark}
\begin{figure}
    \centering
    \includegraphics[width=0.32\textwidth]{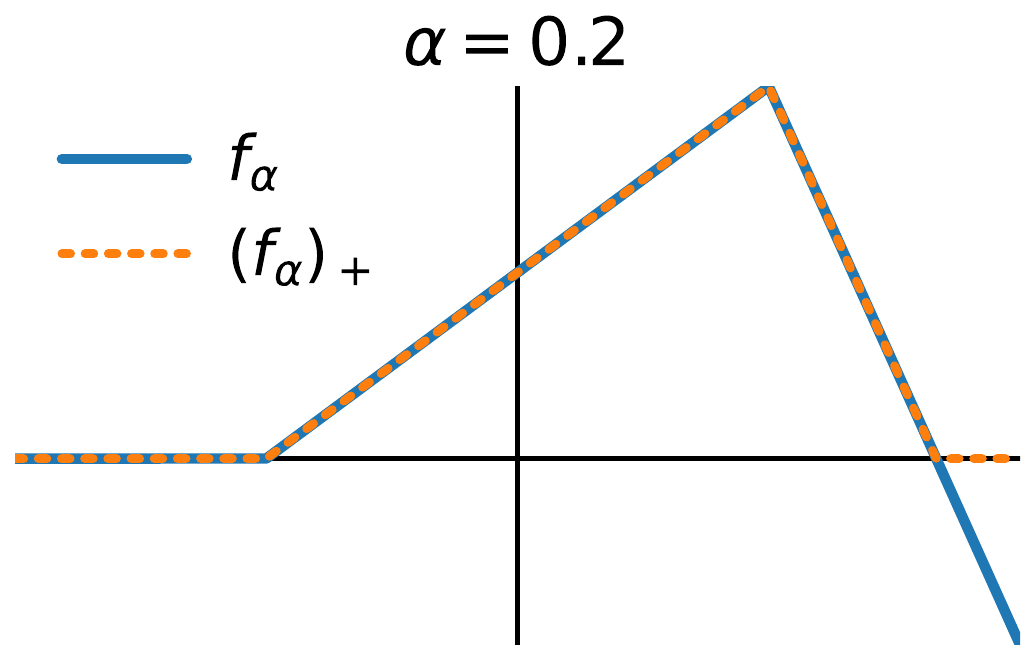}
    \hfill
    \includegraphics[width=0.32\textwidth]{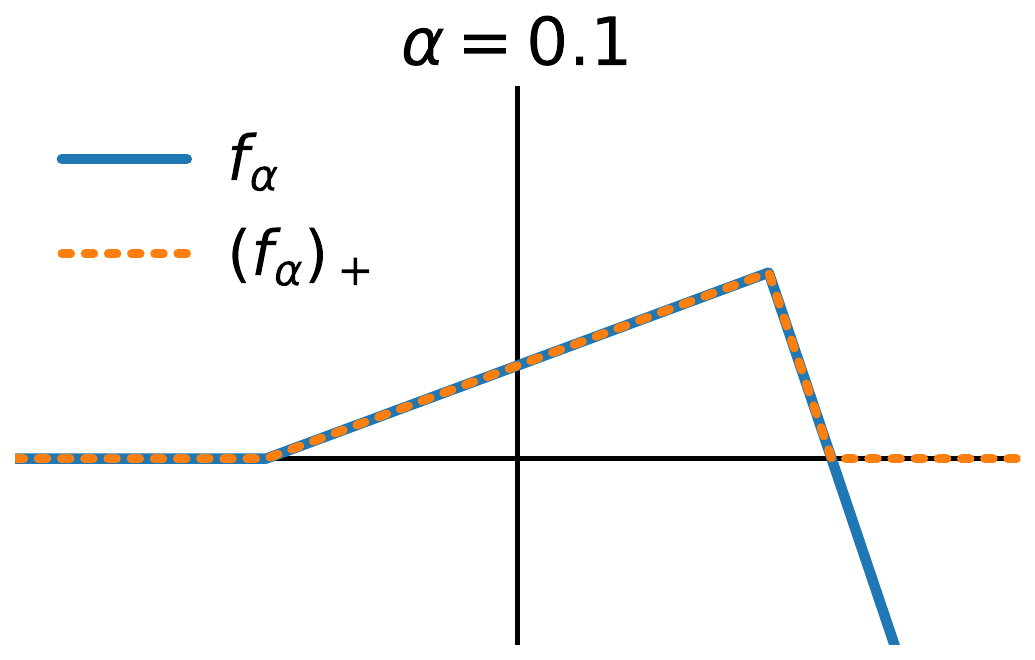}
    \hfill
    \includegraphics[width=0.32\textwidth]{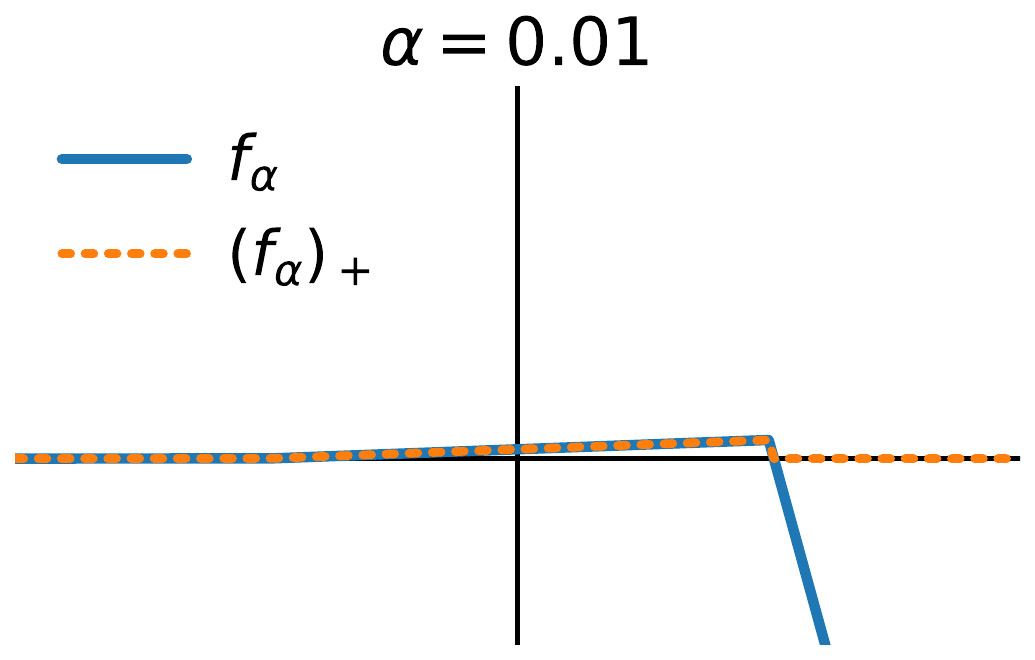}

    \caption{The functions $f_\alpha$ (as defined in \eqref{eq:f_alpha}) and $(f_\alpha)_+$ for
    $\alpha=0.2$, $\alpha=0.1$, and $\alpha=0.01$, respectively. Each $f_\alpha$ is in $\cB_2$, but as $\alpha \downarrow 0$, the total variation of $D^2 (f_\alpha)_+$ approaches 2. Therefore, by \cref{lemma:uni_f_V2_bound},  $(f_\alpha)_+ \notin \cB_2$ for all sufficiently small $\alpha$. }
    \label{fig:f_alpha_positive_parts}
\end{figure}
\begin{remark}
    By \cref{lemma:relu_nonrelu_space_equivalence}, a weaker version of \cref{th:univariate_containment} also applies to the Leaky ReLU, GELU, SiLU/Swish, Mish, ELU, SELU, CELU, centered softplus, absolute value, and bent identity activation functions. In particular, with $\cB_L^\sigma$ and $\cV_L^\sigma$ defined on $\Omega = \cW = \cB = [-1,1]$ for any of the aforementioned activations $\sigma$, \cref{lemma:relu_nonrelu_space_equivalence} and \cref{th:univariate_containment} imply that there are constants $\widetilde{A}_{L, \sigma}$, $\widetilde{B}_{L, \sigma}$ such that 
    \begin{align}
        \widetilde{A}_{L, \sigma} \cB_2^\sigma \subset \cB_L^\sigma \subset \widetilde{B}_{L, \sigma} \cB_2^\sigma
    \end{align}
    and therefore
    \begin{align}
        \widetilde{A}_{L, \sigma} \| f \|_{\cV_L^\sigma} \leq \| f \|_{\cV_2^\sigma} \leq \widetilde{B}_{L, \sigma} \| f \|_{\cV_L^\sigma}
    \end{align}
    for any $L \geq 2$. In other words, the deep univariate spaces associated with all of these activations are equivalent to the corresponding shallow spaces. However, in these cases, the constants $\widetilde{A}_{L, \sigma}$, $\widetilde{B}_{L, \sigma}$ in this equivalence relation may depend on depth $L$ (see \cref{tab:activation_equivalence_constants}).
\end{remark}

The proof, presented in \cref{appendix:proof_univariate_containment}, proceeds by showing that the base linear class $\cB_1$ has $\sup_{f \in \cB_1} I(f) \leq 2$, and then using \cref{lemma:I_f_plus_bound} to argue that this bound is inherited by the subsequent classes $\cB_L, L \geq 1$. In combination with \cref{lemma:uni_f_V2_bound}, this implies that $\| f \|_{\cV_2} \leq I(f) \leq 2$ for all $f \in \cB_L, L \geq 1$. \cref{th:univariate_containment} shows that, in the univariate ReLU case, the deep ReLU classes $\cB_L$ contain no ``fundamentally'' different functions outside of those in $\cB_2$: any function in the deep class $\cB_L$ is either in the shallow class $\cB_2$, or can be obtained by taking a function in $\cB_2$ and scaling it by at most a factor of 2. Therefore, in this case, the effect of depth on functional expressivity is extremely limited. 

As an interesting point of comparison, consider the sawtooth functions
\begin{align} \label{eq:telgarsky_sawtooth}
    T(x) := 2(x)_+ - 4(x-1/2)_+ + 2(x-1)_+, \qquad
    T^{\circ L} := \underbrace{T \circ T \circ \dots \circ T}_{L \textrm{ times}}
\end{align}
studied in \cite{telgarsky2016benefits}. The function $T^{\circ L}$ is a sawtooth function with $2^{L/2}$ teeth on $x \in [0,1]$, each of equal width and height 1 (see \cref{fig:telgarsky_sawtooth}). Although these sawtooth functions are often cited as an example of the representational benefits of depth, our \cref{th:univariate_containment} shows that this benefit disappears if representational efficiency is measured in terms of \textit{norm} rather than neuron count. The function $T^{\circ L}$ has $2^L-1$ total knots on the interior of $(0,1)$: therefore, it would require a width of at least $2^L-1$ to represent with a shallow ReLU network, but it can instead be represented compositionally with a depth $L$ ReLU network of width 3 at each hidden layer. However, \cref{lemma:uni_f_V2_bound,th:univariate_containment} imply that, for any depth $L' \geq 2$, the norm required to represent $T^{\circ L}$ must grow exponentially in $L$. In particular:
\begin{align} \label{eq:sawtooth_norm_exponential}
    \| T^{\circ L} \|_{\cV_{L'}} \geq \frac{1}{2} \| D^2 T^{\circ L} \|_{\cV_{2}} \geq \frac{1}{2}  \| D^2 T^{\circ L} \|_{\TV} = (2^L - 1)2^{L}.
\end{align}
Therefore, although depth permits representations of $T^{\circ L}$ with small numbers of neurons, the weights of these neurons (as measured by our function-space norm) must be large. From this perspective, there is little or no benefit from depth.

The reason for this apparent discrepancy is that the highly oscillatory behavior of $T^{\circ L}$ depends crucially on the exact scaling of the base triangle function $T$. For example, if the base triangle $T$ is rescaled to $aT$ for some $0 \leq a \leq 1/2$, then the $L$-fold composition is $(aT)^{\circ L} = a(2a)^{L-1} T$, so the composition in this case does nothing but rescale the original function $T$ by the constant $a(2a)^{L-1} \leq a \leq 1/2$. Indeed, \eqref{eq:sawtooth_norm_exponential} shows that none of the functions $T^{\circ L}$, $L \geq 1$ are in our classes $\cB_{L'}$ for any depth $L' \geq 2$. The scaled function $aT$ for any $a \leq 1/6$ is in our class $\cB_2$, and the $L$-fold composition $(aT)^{\circ L}$ is in $\cB_{L+1}$, but if the base function $T$ is rescaled in this way, the composition $(aT)^{\circ L}$ no longer exhibits any oscillatory behavior: it is merely a scaled-down version of the original function $T$. (See \cref{fig:telgarsky_sawtooth}.) As discussed in \cref{sec:relationship_other_spaces}, our norm inevitably enforces this scaling behavior because it is a true norm on the function output: constant rescaling greater than 1 at each hidden layer accumulates an exponential-in-depth norm cost of the final function. In this sense, our norm separates the effect of the nonlinearity itself from the compounded effects of constant rescaling across layers: functions in the norm-ball $\cB_L$ may depend on the iterative compositional effects of the nonlinear activation $\sigma$, but cannot in any fundamental sense depend on the effect of compounded constant dilation across layers.

\begin{figure}
    \centering
    \includegraphics[width=0.24\textwidth]{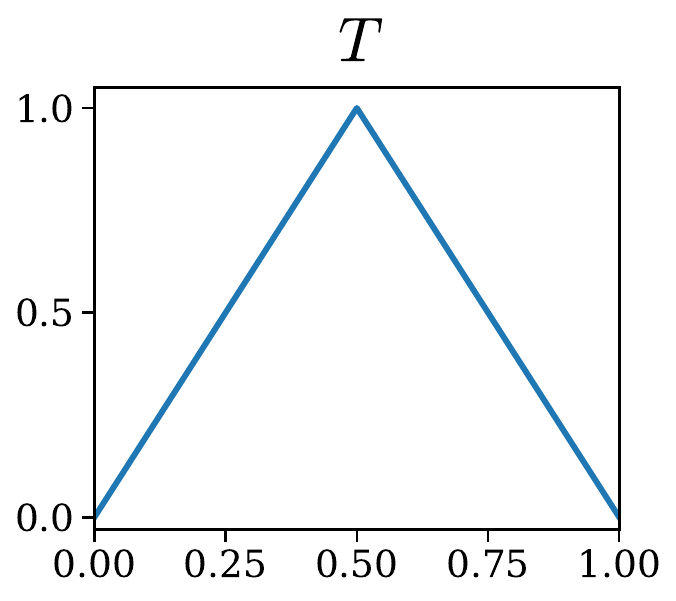}
    \hfill
    \includegraphics[width=0.24\textwidth]{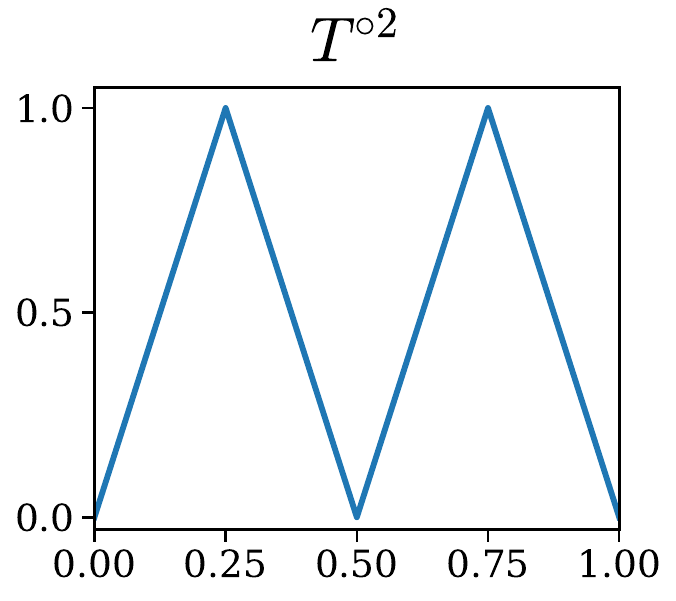}
    \hfill
    \includegraphics[width=0.24\textwidth]{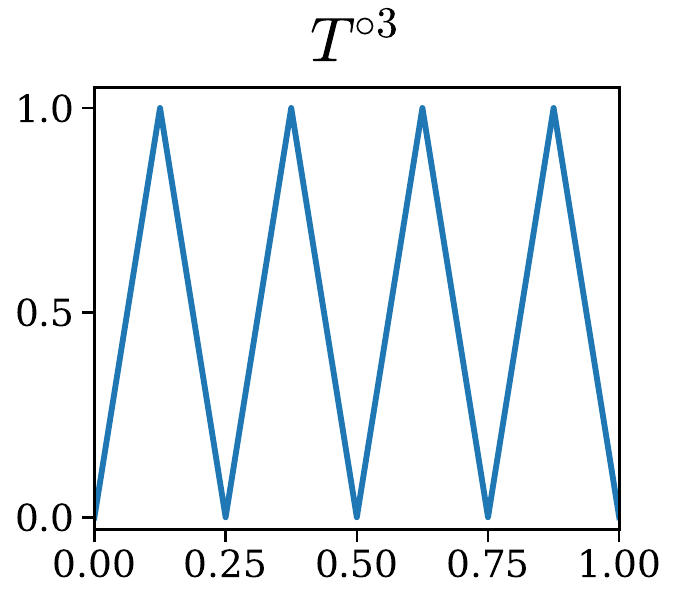}
    \hfill
    \includegraphics[width=0.24\textwidth]{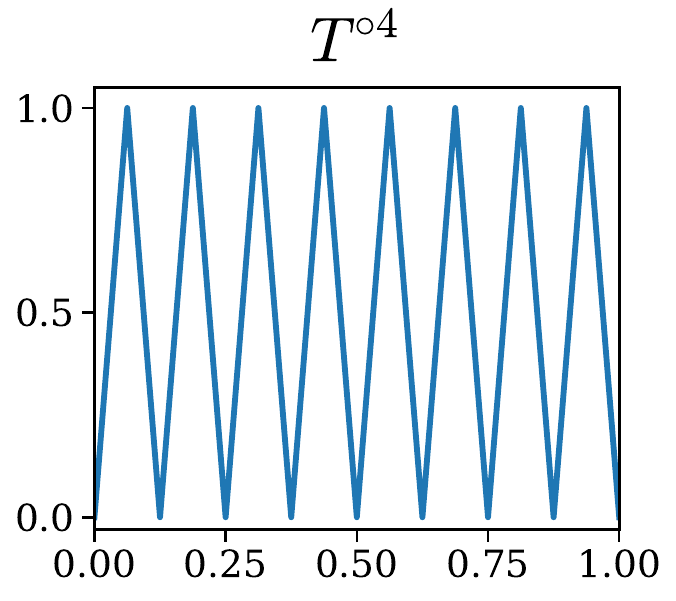}

    \par\medskip  \includegraphics[width=0.24\textwidth]{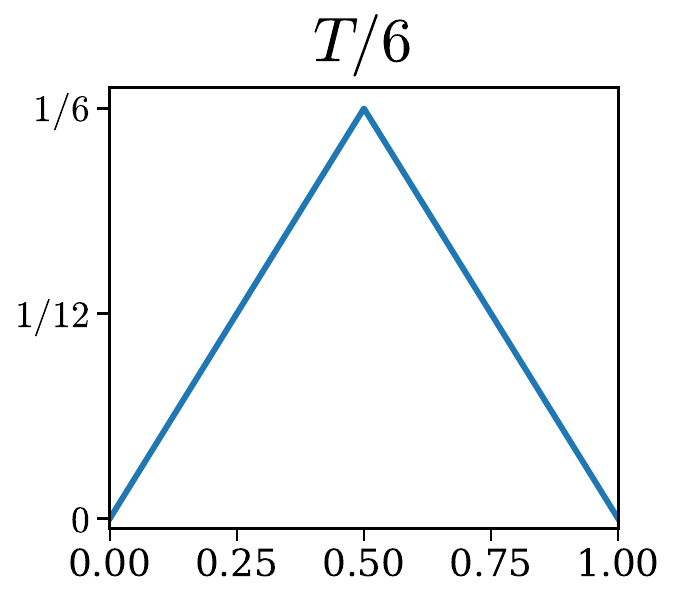} \hfill \includegraphics[width=0.24\textwidth]{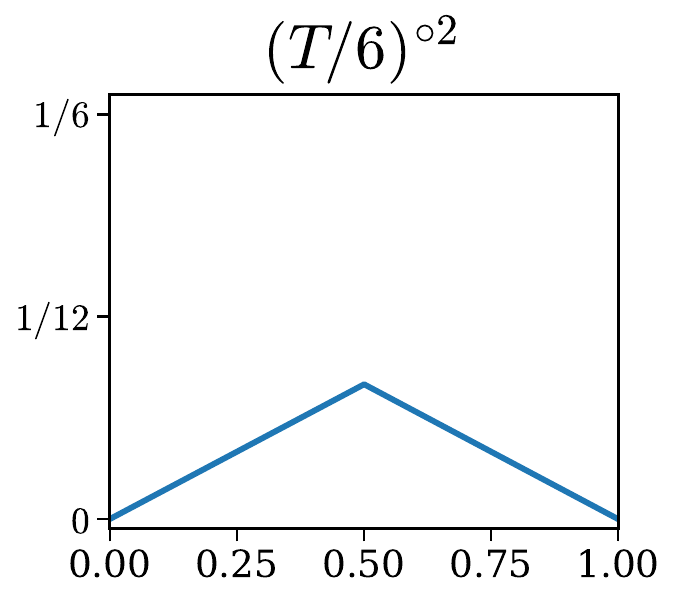} \hfill \includegraphics[width=0.24\textwidth]{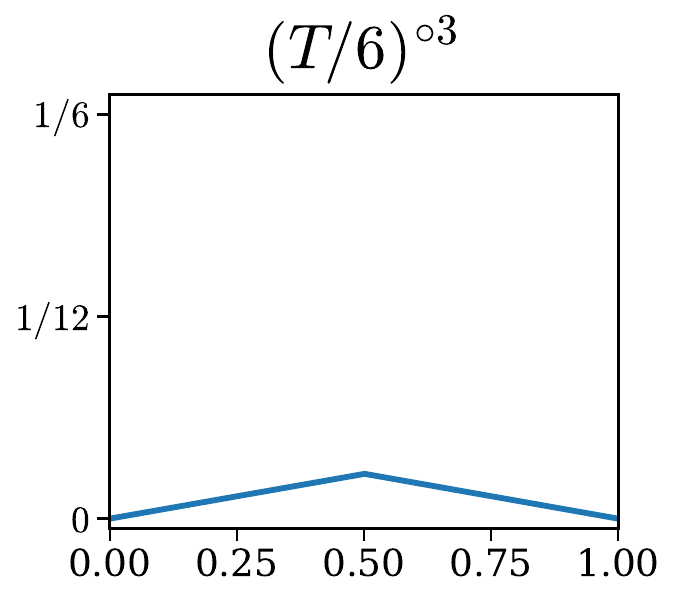} \hfill \includegraphics[width=0.24\textwidth]{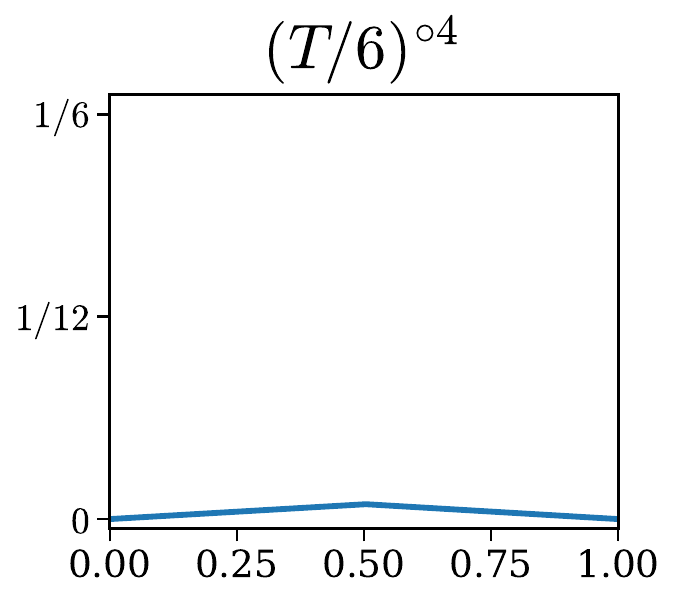}
    \caption{Top row: iterated compositions $T^{\circ L}$ of the Telgarsky sawtooth function \eqref{eq:telgarsky_sawtooth}. The $L$-fold composition $T^{\circ L}$ has $2^{L/2}$ ``teeth'' and $2^L-1$ total kinks. However, this compositional oscillatory behavior is highly dependent on the scaling of $T$. Bottom row: if $T$ is rescaled to $T/6$ (or to $aT$ for any $0 \leq a < 1/2$), iterated composition produces no additional oscillations, but instead progressively shrinks the original function.}
    \label{fig:telgarsky_sawtooth}
\end{figure}

These univariate results also have interesting implications for the more general \textit{multivariate} ($d \geq 1$) case, wherein they imply that no functions in the multivariate classes $\cB_L$ can have excessively high frequencies along any individual direction. This is a consequence of the following simple observation: if $f$ is a function in the multivariate class $\cB_L$, its restriction to any given line must lie in a corresponding univariate class $\cB_L^{\textrm{uni}}$.

\begin{proposition} \label{prop:line_restriction}
    For any activation function and any input dimension ($d \geq 1$), consider the class $\cB_L$ defined on any compact $\Omega \subset \R^d$, $\cW \subset \R^d$, and $\cB \subset \R$, with infinite-width limits taken in $C(\Omega)$. Let $\gamma(t) = \boldsymbol{\xi}_0 + \boldsymbol{\xi} t$ be a line in $\R^d$ which intersects $\Omega$ in a connected line segment. Then for any $f \in \cB_L$, the restriction $(f \circ \gamma)(t) := f(\gamma(t))$ must be in the univariate class $\cB_L^{\mathrm{uni}}$ corresponding to the same activation function, defined with
    \begin{align}
        \Omega^{\mathrm{uni}} &:= [a,b], \  \mathrm{where} \ a := \inf\{ t: \gamma(t) \in \Omega \}, \ b := \sup \{ t: \gamma(t) \in \Omega \}  \\
        \cW^{\mathrm{uni}} &:= \{ \vw^\top \boldsymbol{\xi}: \vw \in \cW \} \\
        \cB^{\mathrm{uni}} &:= \{ \vw^\top \boldsymbol{\xi}_0+ b: \vw \in \cW, b \in \cB \}
    \end{align}
\end{proposition}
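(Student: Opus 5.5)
The plan is to argue by induction on $L$, using the restriction map $R: C(\Omega) \to C[a,b]$, $R f := f \circ \gamma$. The key properties of $R$ are that it is linear and $1$-Lipschitz in the uniform norm, since $\sup_{t \in [a,b]} |f(\gamma(t))| \leq \sup_{\vx \in \Omega} |f(\vx)|$. Both properties need $\gamma([a,b]) \subset \Omega$. This follows from the assumption that the line meets $\Omega$ in a connected segment, together with compactness of $\Omega$: the set $\{t : \gamma(t) \in \Omega\}$ is closed and bounded, so it equals $[a,b]$. If $\boldsymbol{\xi} = \vzero$, the segment degenerates to a point, which is handled the same way.

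For the base case $L = 1$, take $f(\vx) = \vw^\top \vx + b$ with $\vw \in \cW$ and $b \in \cB$. Then
\begin{align}
f(\gamma(t)) = (\vw^\top \boldsymbol{\xi})\, t + (\vw^\top \boldsymbol{\xi}_0 + b).
\end{align}
This is an affine function with slope in $\cW^{\mathrm{uni}}$ and intercept in $\cB^{\mathrm{uni}}$. Strictly speaking, the slope and intercept are coupled through the same $\vw$, so $R(\cB_1)$ is a subset of the univariate dictionary $\cB_1^{\mathrm{uni}}$ rather than equal to it. A subset is all that is needed, so I will use $R(\cB_1) \subset \cB_1^{\mathrm{uni}}$. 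If $\cB^{\mathrm{uni}}$ or $\cW^{\mathrm{uni}}$ must be compact, that holds because each is a continuous image of a compact set.

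For the inductive step, assume $R(\cB_{L-1}) \subset \cB_{L-1}^{\mathrm{uni}}$. For $s > 0$ and $g \in \cB_{L-1}$ we have
\begin{align}
R(\sigma_s \circ g) = \sigma_s \circ (R g),
\end{align}
and $Rg \in \cB_{L-1}^{\mathrm{uni}}$ by hypothesis. So $R$ maps the generating set of $\cB_L$ into the generating set of $\cB_L^{\mathrm{uni}}$. Since $R$ is linear, it maps absolutely convex combinations to absolutely convex combinations with the same coefficients. Therefore $R(\aconv(\cdot)) \subset \aconv(R(\cdot)) \subset \cB_L^{\mathrm{uni}}$.

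The remaining issue, and the only step that needs care, is the closure. Take $f \in \cB_L$ and a sequence $f_n$ in the absolutely convex hull with $f_n \to f$ uniformly on $\Omega$. Then $R f_n \to R f$ uniformly on $[a,b]$ because $R$ is $1$-Lipschitz. Each $R f_n$ lies in $\cB_L^{\mathrm{uni}}$, and $\cB_L^{\mathrm{uni}}$ is closed in $C[a,b]$, so $R f \in \cB_L^{\mathrm{uni}}$. This is exactly why the proposition requires the closure to be taken in $C(\Omega)$: convergence in $L^p$ would not give convergence of the restrictions, which live on a measure-zero set. Alternatively, \cref{prop:limit_final_layer} lets me handle the closure only at the final layer, working with $\widetilde{\cB}_L$ throughout and taking one closure at the end. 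The induction then closes.
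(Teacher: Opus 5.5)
Your proof is correct and follows essentially the same route as the paper: restriction to $\gamma$ commutes with composition by $\sigma_s$ and with absolutely convex combinations, and uniform convergence on $\Omega$ gives uniform convergence of the restrictions on $[a,b]$, so closedness of $\cB_L^{\mathrm{uni}}$ in $C[a,b]$ finishes the argument. Your layerwise induction makes explicit the closure step that the paper handles in one line via $\overline{\widetilde{\cB}_L} = \cB_L$. The only slip is the aside on $\boldsymbol{\xi} = \vzero$: that case is excluded because $\gamma$ is a line, and otherwise $\{t : \gamma(t) \in \Omega\}$ would be all of $\R$ rather than a compact interval.
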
 
\begin{proof}
    For any functions $f_1, \dots, f_K$ and any coefficients $v_1, \dots, v_K \in \R$, we have
    \begin{align} \label{eq:aconv_restrict}
        \left( \sum_{k=1}^K v_k (\sigma \circ f_k) \right) \circ \gamma = \sum_{k=1}^K v_k (\sigma \circ (f_k \circ \gamma)).
    \end{align}
    Therefore, for any $f \in \widetilde{\cB}_L$ of the form \eqref{eq:deep_nn_sum}, the restriction $f \circ \gamma$ is equivalent to
    \begin{align} \label{eq:deep_nn_sum_restricted}
        (f \circ \gamma)(t) &= \sum_{k_{L-1}=1}^{K_{L-1}} w_{k_{L-1}}^{(L)} \sigma \left(  \dots \sum_{k_1=1}^{K_1} W_{k_2, k_1}^{(2)} \sigma \left( ( \vw_{k_1}^{(1)} )^\top \boldsymbol{\xi} t + ( \vw_{k_1}^{(1)} )^\top \boldsymbol{\xi_0} + b_{k_1}^{(1)}  \right) \dots  \right),
    \end{align}
    which is in the class $\cB_L^{\mathrm{uni}}$ as defined in the statement. Restriction to a line also commutes with uniform limits, so the statement also holds for functions $f$ in the uniform closure $\cB_L$ of $\widetilde{\cB}_L$.
\end{proof}
\begin{remark}
    \eqref{eq:aconv_restrict} is a generic fact which follows trivially from the definition of function composition: it is not specific to any particular functions $f_k$, $\sigma$, or $\gamma$. Linearity of $\gamma$ is only required for \cref{prop:line_restriction} because the base dictionary $\cB_1$ is itself linear, so restrictions $f \circ \gamma$ of functions $f$ in the multivariate linear dictionary $\cB_1$ are themselves in the univariate linear dictionary $\cB_1^{\textrm{uni}}$ for appropriate choices of $\Omega^{\textrm{uni}}$, $\cW^{\textrm{uni}}$, and $\cB^{\textrm{uni}}$. 
\end{remark}

In light of \cref{lemma:uni_f_V2_bound,th:univariate_containment}, \cref{prop:line_restriction} implies that functions in the multivariate classes $\cB_L$ must exhibit controlled variational behavior along lines in all directions.

\begin{corollary}[of \cref{prop:line_restriction,lemma:uni_f_V2_bound,th:univariate_containment}] \label{corr:line_restriction_TV2}
    Consider the multivariate ReLU classes $\cB_L$ defined on the domain $\Omega = \bB_2^d := \{ \vx \in \R^d: \| \vx \|_2 \leq 1 \}$, the Euclidean unit ball in $\R^d$, with $\cW = \bB_2^d$ (or $\bS^{d-1}$, the Euclidean unit sphere) and $\cB = [-1,1]$. Let $\gamma(t) = \vx t, t \in [-1,1]$ be a line passing through the origin, parameterized with unit direction vector $\| \vx \|_2 = 1$. Suppose that the restriction $f \circ \gamma$ of some function $f: \bB_2^d \to \R$ to the line $\gamma$ has
    \begin{align} \label{eq:D2f_gamma_TV2_bound}
        \| D^2 (f \circ \gamma) \|_{\TV} > 2.
    \end{align}
    Then $f \notin \cB_L$, for any $L \geq 1$.
\end{corollary}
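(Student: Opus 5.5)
We argue by contraposition: we assume $f \in \cB_L$ for some $L \geq 2$ and show that $\| D^2 (f\circ\gamma) \|_{\TV} \leq 2$. The case $L=1$ is immediate, since functions in $\cB_1$ are affine, so their restrictions to lines are affine and have $D^2(f\circ\gamma) = 0$. We proceed in three steps.

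\textbf{Step 1.} Apply \cref{prop:line_restriction} to $\gamma(t) = \vx t$ with $\boldsymbol{\xi}_0 = \vzero$ and $\boldsymbol{\xi} = \vx$. Since $\|\vx\|_2 = 1$ and $\Omega = \bB_2^d$, the set $\{t : \gamma(t) \in \Omega\}$ is exactly $[-1,1]$, so $\Omega^{\mathrm{uni}} = [-1,1]$. Next, $\cW^{\mathrm{uni}} = \{\vw^\top \vx : \vw \in \cW\}$. For $\cW = \bB_2^d$ or $\bS^{d-1}$, Cauchy--Schwarz gives containment in $[-1,1]$, and taking $\vw = \pm\vx$ and scaling (in the ball case) gives equality; for the sphere, the image is still all of $[-1,1]$ when $d \geq 2$. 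Finally, $\cB^{\mathrm{uni}} = \{\vw^\top \vzero + b\} = [-1,1]$. Hence $f\circ\gamma \in \cB_L^{\mathrm{uni}}$, where this is exactly the univariate ReLU class of \cref{th:univariate_containment} with $\Omega = \cW = \cB = [-1,1]$.

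A small subtlety arises when $d = 1$ and $\cW = \bS^0 = \{\pm 1\}$. In that case $\cW^{\mathrm{uni}} = \{\pm1\}$ is only a subset of $[-1,1]$. This causes no problem, because the univariate classes are monotone in the dictionary: a smaller first-layer dictionary produces smaller sets at every stage of the recursive absolutely convex hull construction \eqref{eq:BL}, and closure preserves this. So $f\circ\gamma$ still lies in the class with $\cW = [-1,1]$.

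\textbf{Step 2.} By \cref{th:univariate_containment}, $f\circ\gamma \in 2\cB_2^{\mathrm{uni}}$, which means $\|f\circ\gamma\|_{\cV_2} \leq 2$. We would then like to invoke the lower bound $\|D^2 g\|_{\TV} \leq \|g\|_{\cV_2}$ from \cref{lemma:uni_f_V2_bound}. However, that lemma assumes a priori that $D^2 g \in \cM(-1,1)$.

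\textbf{Step 3.} This hypothesis is the main obstacle, and we handle it directly. We claim every $g \in \cV_2$ on $[-1,1]$ has $D^2 g \in \cM(-1,1)$ with $\|D^2 g\|_{\TV} \leq \|g\|_{\cV_2}$. The argument is as follows.

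For a finite network $g = \sum_k v_k (w_k t + b_k)_+$ with $\sum_k |v_k| \leq c$, the distributional second derivative on $(-1,1)$ is the sum of point masses $\sum_k v_k |w_k| \delta_{-b_k/w_k}$, restricted to kinks in the interior. Its total variation is therefore at most $\sum_k |v_k| \leq c$.

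Now take $g \in c\,\cB_2$, which is a uniform limit of such networks $g_n$ by \cref{prop:limit_final_layer}. Uniform convergence implies $D^2 g_n \to D^2 g$ in distributions. By lower semicontinuity of the total variation (equivalently, Banach--Alaoglu applied to measures bounded in $\TV$ by $c$), $D^2 g$ is a measure with $\|D^2 g\|_{\TV} \leq c$. Letting $c \downarrow \|g\|_{\cV_2}$ proves the claim. (Alternatively, this hypothesis may already be covered by the proof of \cref{lemma:uni_f_V2_bound}, in which case the lemma applies directly.)

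\textbf{Conclusion.} Combining the steps gives $\|D^2(f\circ\gamma)\|_{\TV} \leq \|f\circ\gamma\|_{\cV_2} \leq 2$. This contradicts \eqref{eq:D2f_gamma_TV2_bound}, so $f \notin \cB_L$.
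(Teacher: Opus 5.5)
Your proof is correct and follows the paper's own route: restrict to the line via \cref{prop:line_restriction}, use \cref{th:univariate_containment} to get $f\circ\gamma \in 2\cB_2^{\mathrm{uni}}$, and conclude with the lower bound $\|D^2(f\circ\gamma)\|_{\TV} \leq \|f\circ\gamma\|_{\cV_2} \leq 2$ from \cref{lemma:uni_f_V2_bound}. Your extra handling of the $d=1$, $\cW=\bS^0$ case and of the a priori hypothesis $D^2 g\in\cM(-1,1)$ is sound, and the paper's proof simply skips these points; as you suspected, the lower-bound part of the proof of \cref{lemma:uni_f_V2_bound} already shows that $D^2 g$ is a finite measure for every $g\in\cV_2$.
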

\begin{proof}
    If such an $f$ were in some $\cB_L$ for some $L \geq 1$, then by \cref{prop:line_restriction}, its restriction $f \circ \gamma$ to $\gamma$ would be in the univariate ReLU class $\cB_L^{\mathrm{uni}}$ defined on $\Omega^\mathrm{uni} = [-1,1]$ with $\cW^\mathrm{uni} = \cB^\mathrm{uni} = [-1,1]$. By \cref{th:univariate_containment}, this would imply that $f \circ \gamma \in 2 \cB_2^{\mathrm{uni}}$, which in turn implies by \cref{lemma:uni_f_V2_bound} that
    \begin{align}
        \| D^2 (f \circ \gamma) \|_{\TV} \leq \| f \circ \gamma \|_{\cV_2^{\mathrm{uni}}} \leq 2.
    \end{align}
    But this contradicts the assumption that $\| D^2 (f \circ \gamma) \|_{\TV} > 2$.
\end{proof}

\begin{figure}
    \centering
    \includegraphics[width=0.5\textwidth]{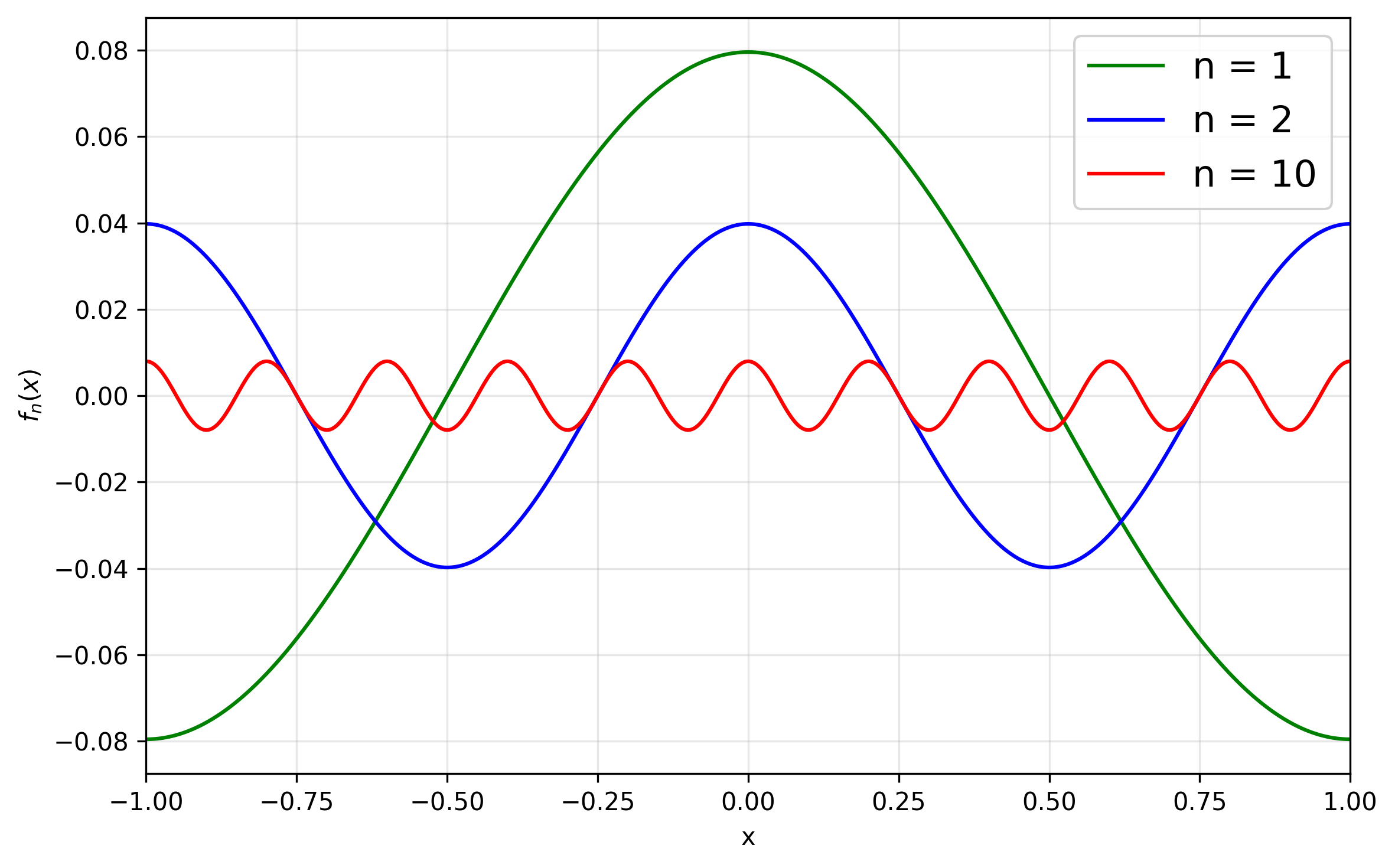} 
    \caption{The functions $f_n(x) = \frac{1}{4 \pi n} \cos(\pi n x)$ for $n = 1, 2, 10$. These functions have $\| D^2 f_n \|_{\TV} = -\int_{-1}^1 |f_n''(x)| \, dx = n$, so by \cref{lemma:uni_f_V2_bound} and \cref{th:univariate_containment}, they are excluded (for $n > 1$) from the univariate ReLU classes $\cB_L^{\mathrm{uni}}$ at all depths $L$.}
    \label{fig:high_frequency_uni_example}
\end{figure}

\begin{figure}
    \centering
    \includegraphics[width=0.32\textwidth]{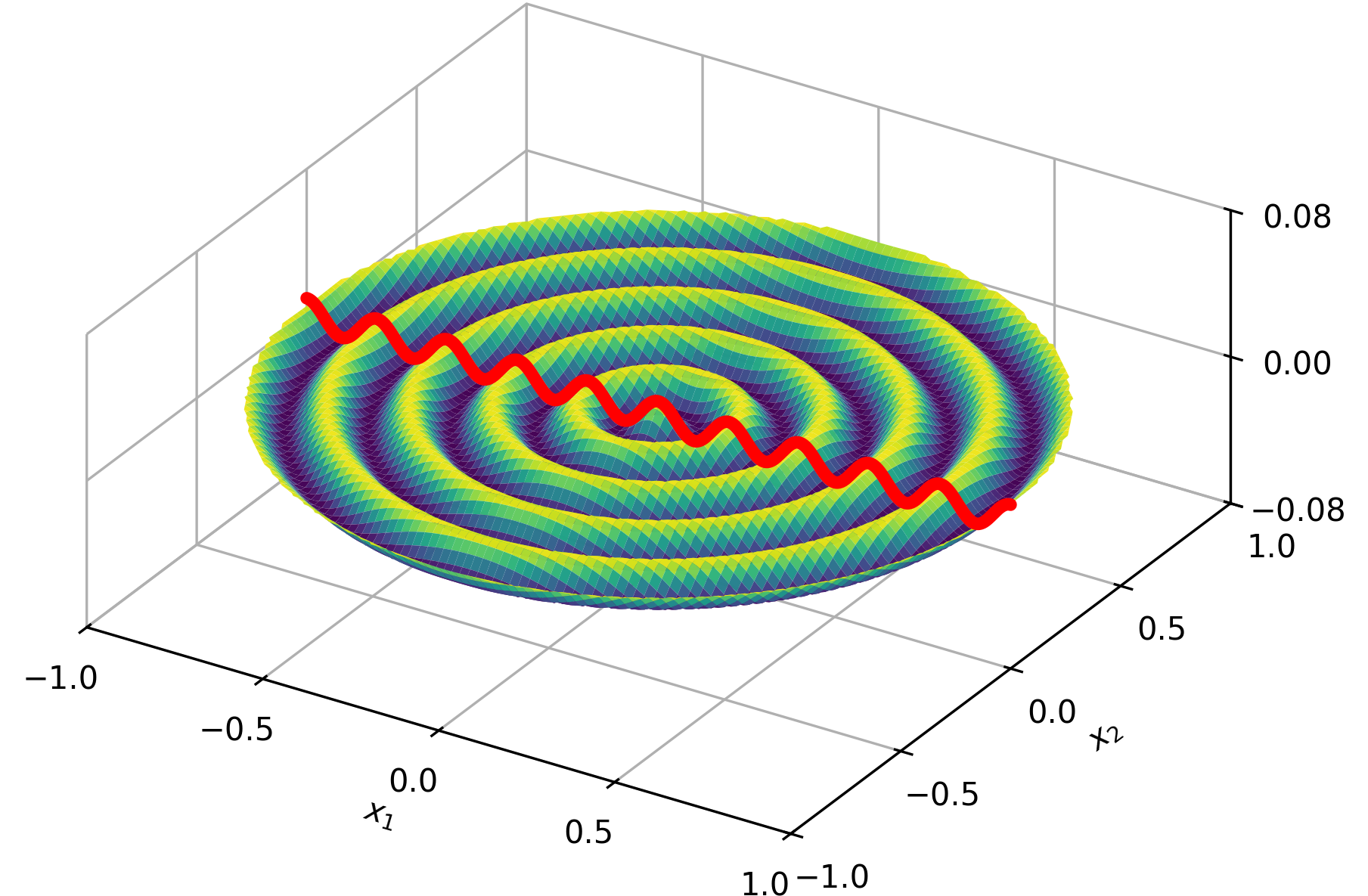} \includegraphics[width=0.32\textwidth]{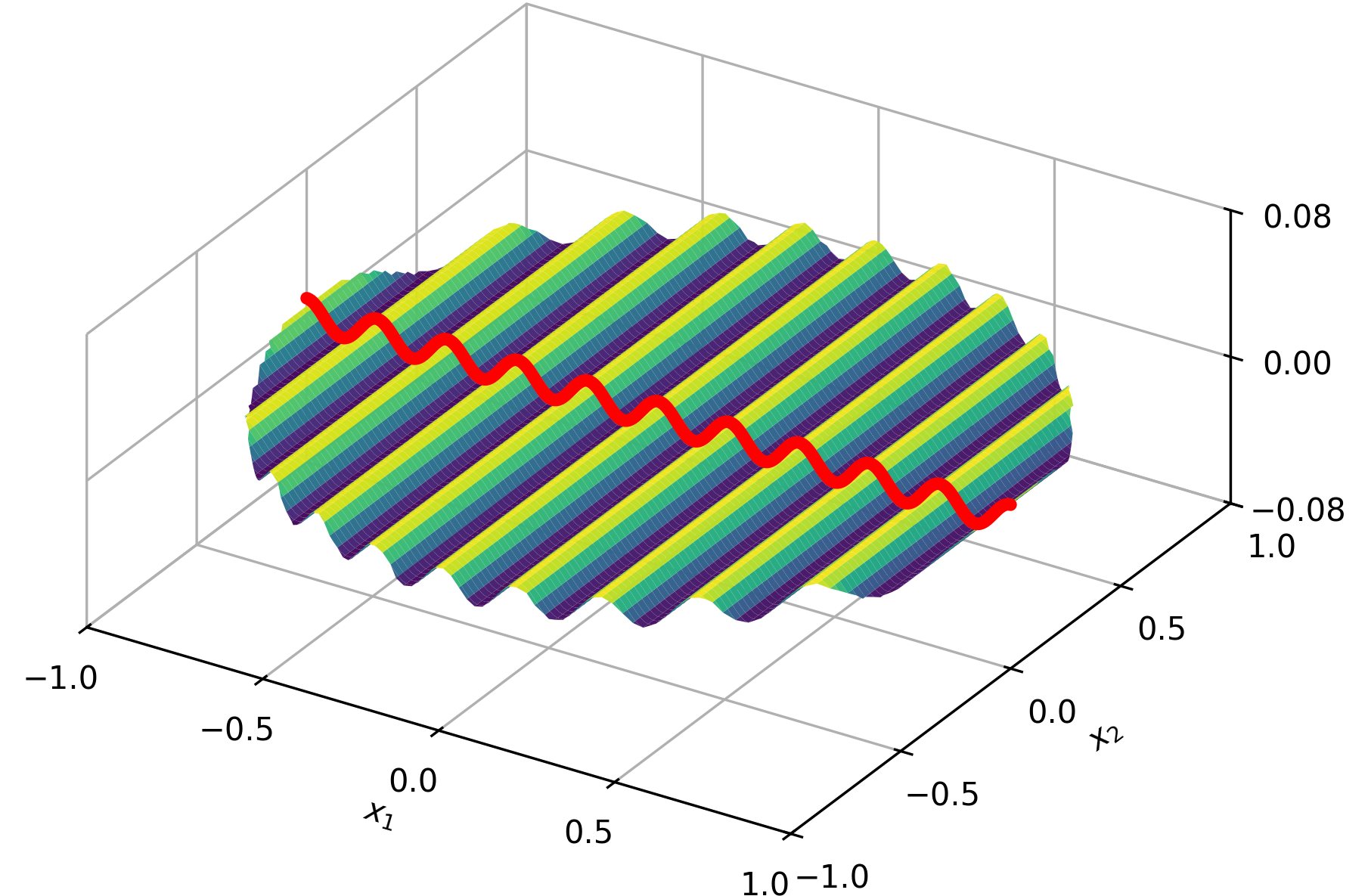} \includegraphics[width=0.32\textwidth]{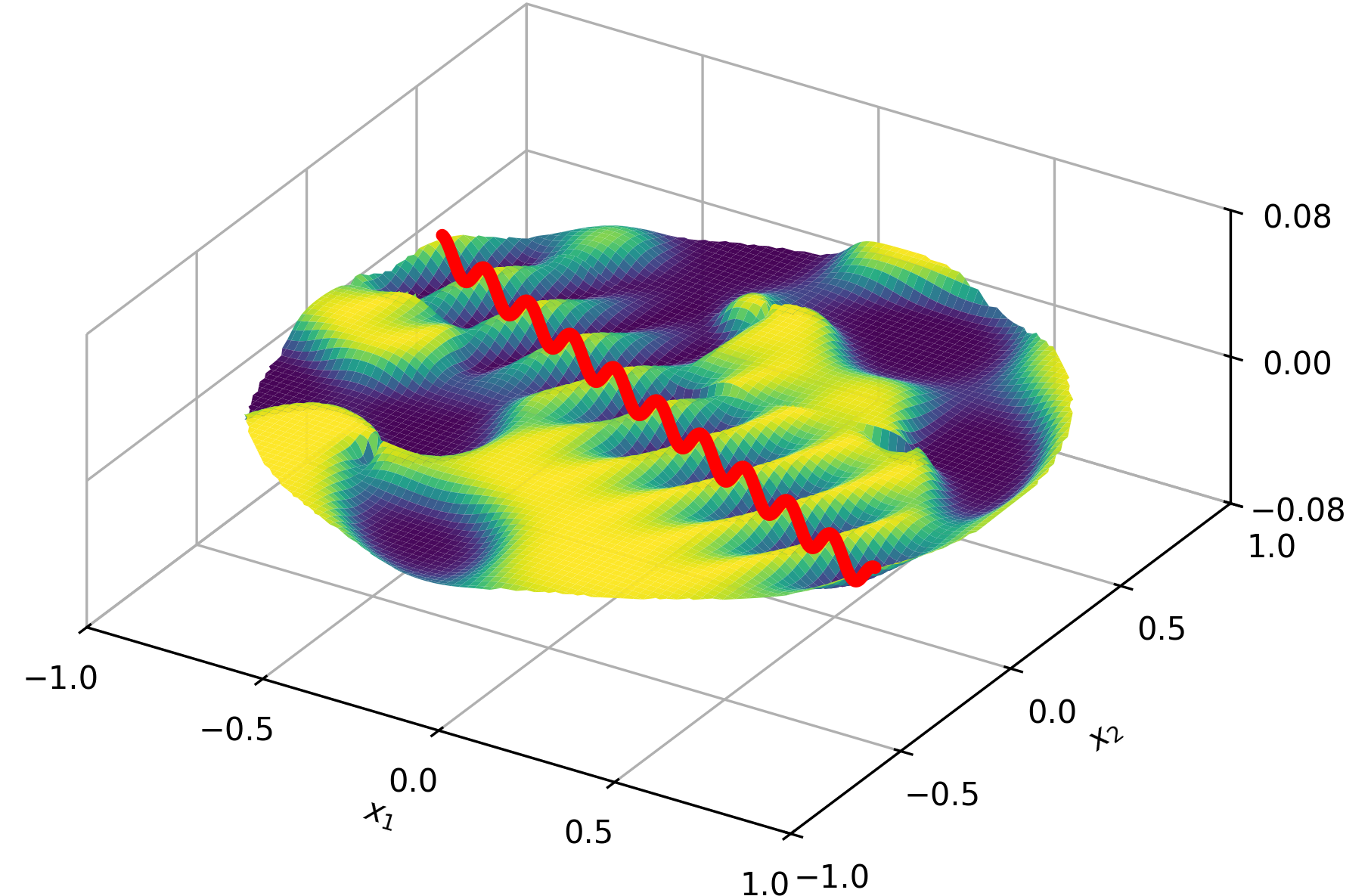}
    \caption{Example functions in $d=2$ whose restriction to some line in the input domain $\bB_2^2$ agrees with the highly oscillatory function $f_n$ ($n = 10$) of \cref{fig:high_frequency_uni_example}. By \cref{corr:line_restriction_TV2}, all such functions are excluded from the corresponding multivariate ReLU classes $\cB_L$.}
    \label{fig:multivar_high_frequency_lines_example}
\end{figure}

\cref{corr:line_restriction_TV2} shows that multivariate functions which exhibit excessively high frequencies along any direction in the input domain are excluded from our deep ReLU classes $\cB_L$, at any depth $L$. (See examples in \cref{fig:high_frequency_uni_example,fig:multivar_high_frequency_lines_example}.) This result is perhaps surprising, because many existing results concerning the benefits of depth specifically focus on high-frequency functions (\cite{telgarsky2016benefits,eldan2016power,daniely2017depth,perekrestenko2018universal, chatziafratisdepth, bresler2020sharp,venturi2022depth}, among others). The general flavor of all of these results is that deep networks can represent or approximate functions which exhibit high-frequencies (such as high Fourier modes or similar oscillatory behavior) more efficiently---i.e., using fewer neurons or smaller widths---than shallow networks. \cite{parkinson2024depth} show that several of these results are also valid if representational cost is measured by the network's sum-of-squared-weights (SOSW) representational cost (see \eqref{eq:sosw}), rather than widths or parameter counts. In particular, if a function requires $\exp(d)$ width to approximate with two layers but can be approximated by a three-layer network of $\poly(d)$ width, \cite{parkinson2024depth} (Corollary 3.3) show that the same function requires $\mathrm{SOSW} = \exp(d)$ to approximate with two layers, but can be approximated with three layers with $\mathrm{SOSW} = \poly(d)$. However, as discussed above with regard to the Telgarsky sawtooth function \eqref{eq:telgarsky_sawtooth}, these results depend crucially on the hidden effect of compounded constant rescaling across layers. The SOSW representation cost ``disguises'' the effect of compounded layerwise rescaling, in the sense that dilation of all parameters by a constant factor incurs only a $\poly(L)$ increase in SOSW, even though it inflates the represented function by a factor of $\exp(L)$ (see discussion in \cref{sec:relationship_other_spaces}). Width and parameter count, of course, do not account for these compounded rescaling effects at all. Therefore, \cref{corr:line_restriction_TV2} highlights the fact that many of these existing results rely on representational cost/complexity measures that do not fully account for the function-space effects of layerwise amplification. When representational cost is measured by a true function space norm such as our $\cV_L$ norm, depth alone does not allow for cheap representation of highly oscillatory functions.

\section{Conclusion, discussion, and open problems}

In this work, we developed a unified function-space framework for studying the relationship between depth and functional complexity in fully connected neural networks. Our construction applies to general continuous activation functions, including many non-homogeneous activations commonly used in practice, through an activation-dependent family of normalized nonlinearities. For homogeneous activations, and in particular for ReLU, the resulting norm coincides with the path norm studied in previous works. We derived novel complexity bounds for the unit balls $\cB_L$ of our function spaces, which grow only mildly with depth in most cases. We also related our norm to several other existing notions of deep-network norms and representational costs; as a consequence, we find that the complexity bounds for our classes $\cB_L$ also apply to the function classes associated with many of these other costs. Additionally, we prove a novel representer theorem for our spaces $\cB_L$, demonstrating that solutions to infinite-dimensional data-fitting problems over our spaces are realized by width-bounded neural networks.

Our function space perspective also allowed us to derive several novel results concerning the structural effects of depth. In particular, we proved that the unit balls $\cB_L$ of the univariate ReLU function spaces satisfy the containment relationship $\cB_2 \subset \cB_L \subset 2 \cB_2$ at all depths $L$. This shows that the effects of depth are extremely limited in this case: depth introduces no new functional shapes or structures beyond those achievable by simple rescaling of the shallow class $\cB_2$. As a corollary of this result, we prove that functions in the \textit{multivariate} deep ReLU classes $\cB_L$ must exhibit low frequencies along every line in the input domain. These conclusions highlight the fact that many existing results on the benefits of depth for functional expressivity---which typically focus on representation of functions with high oscillatory frequency---depend strongly on the chosen notion of representational cost with respect to which representational efficiency is measured. When this efficiency is measured by a true function space norm like ours, compounded rescaling across hidden layers is reflected in the norm cost of the represented function. Therefore, our results indicate that, when this rescaling effect is accounted for by true function-space norm control, depth itself (i.e., the repeated application of the nonlinear activation function) has a much more limited structural effect.

Our analyses in this paper leave many interesting open questions, which we review below.

\subsection{Characterizing the deep balls and spaces}

The most important open problem raised by our results is to characterize the functions contained in the classes $\cB_L$ and spaces $\cV_L$ for depths $L\geq 3$. This problem is also the main obstruction to obtaining sharper metric-entropy lower bounds. Our present lower bounds in \cref{sec:tightness_upper_bounds} are inherited from the shallow class $\cB_2$, but it is not clear how much additional complexity is contributed by functions lying outside the shallow class. Understanding this additional structure appears to require substantially more information about the geometry of the deep balls than we have currently.

In the univariate ReLU setting, our depth-saturation result in \cref{th:univariate_containment} makes an important stride in this direction; however, even in this comparatively simple case, difficult open questions remain concerning characterization of the balls $\cB_L$. \cref{remark:f_alpha} demonstrates example functions $f \in \cB_2$ whose positive parts satisfy $f_+ \in \cB_3 \setminus \cB_2$. Thus $\cB_2\subsetneq \cB_3$, despite the $\cV_2=\cV_3$ of the underlying spaces. It remains unknown, however, whether there is any function in $\cB_4 \setminus \cB_3$, or more generally in $\cB_L \setminus \cB_{L'}$ for some $3 \leq L' < L$. Our present conjecture is that, in this univariate ReLU setting, the unit balls stabilize after depth $L=3$; i.e., that $\cB_3 = \cB_4 = \cB_5 = \dots$. Proving or disproving this conjecture appears nontrivial, despite the comparatively simple one-dimensional structure of the functions involved. The multivariate ReLU case is qualitatively different. The pyramid functions discussed in \cref{sec:tightness_upper_bounds,appendix:pyramid_relu_rep} constitute explicit functions in $\cB_3$ whose $\cV_2$-norm is infinite whenever $d>1$. Therefore, in the multivariate case, not only are the balls $\cB_2$ and $\cB_3$ genuinely distinct, but so are the entire spaces $\cV_2 \subsetneq \cV_3$. Therefore, in this setting, depth introduces genuinely new functions to the associated space, rather than merely changing the geometry of the unit ball within a fixed space. Beyond these special examples, however, very little is known about the structure of $\cV_3\setminus \cV_2$. It would be valuable to identify broader geometric or analytic families of functions that belong to $\cV_3$ but not $\cV_2$, and ultimately to find an intrinsic characterization of the three-layer space.

For depths beyond three, the situation is even less understood. We do not know whether there exist functions in $\cB_L\setminus \cB_{L'}$, or in $\cV_L\setminus \cV_{L'}$, for any $3\leq L'<L$. Several qualitatively different possibilities remain open. The balls and spaces might stabilize after some finite depth; they might form a strict hierarchy at every depth; or they might continue to increase while approaching a nontrivial limiting infinite-depth class or space. Even distinguishing among these possibilities would constitute meaningful progress toward understanding the functional role of depth. It would also likely lead to sharper statistical-complexity bounds, since any genuinely new structural family appearing at a deeper layer could potentially be used to construct depth-dependent entropy lower bounds. The following general criterion gives one possible point of entry into this question.

\begin{proposition}
\label{prop:depth-stabilization}
For any activation function, any input dimension $d \geq 1$, any choice of $\Omega, \cW, \cB$, and any $L\geq 2$, we have
\begin{align}
    \cB_{L+1} \subset \cB_L \quad \textrm{if and only if} \quad \sigma_s \circ f \in \cB_L, \quad \forall f \in \cB_L, s > 0.
\end{align}
In particular, for the ReLU activation:
\begin{align}
    \cB_{L+1} = \cB_L \quad \textrm{if and only if} \quad f_+ \in \cB_L, \quad \forall f \in \cB_L.
\end{align}
\end{proposition}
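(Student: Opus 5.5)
The plan is to prove both directions directly from the recursive definition \eqref{eq:BL}, using that $\cB_L$ is closed and absolutely convex.

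For the forward direction, assume $\cB_{L+1} \subset \cB_L$. Fix $f \in \cB_L$ and $s > 0$. By \eqref{eq:BL} at level $L+1$, the atom $\sigma_s \circ f$ lies in $\cB_{L+1}$, since it is a one-term absolutely convex combination with coefficient $1$. Hence $\sigma_s \circ f \in \cB_L$.

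For the reverse direction, assume $\sigma_s \circ f \in \cB_L$ for every $f \in \cB_L$ and $s>0$. Then the generating set $\{\sigma_s \circ f : s>0,\ f \in \cB_L\}$ of $\cB_{L+1}$ is contained in $\cB_L$. The set $\cB_L$ is an $\cF$-closed absolutely convex hull, so it is closed and absolutely convex. The closed absolutely convex hull of a subset of such a set stays inside it. Therefore $\cB_{L+1} = \overline{\aconv}(\{\sigma_s\circ f\}) \subset \cB_L$.

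One point to check is that the ambient space $\cF$ and its closure are the same at both levels. When $\cF = L^p(\mu)$, elements of $\cB_L$ are equivalence classes, so the hypothesis ``$\sigma_s\circ f\in\cB_L$'' must be read as in the footnote to \eqref{eq:BL}, for all representatives. This is just bookkeeping and needs no estimates. There is no real obstacle; the only care needed is this closure and representative issue.

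For the ReLU case, homogeneity gives $\sigma_s = \sigma$ for all $s$, so the criterion reduces to: $\cB_{L+1}\subset\cB_L$ if and only if $f_+\in\cB_L$ for all $f\in\cB_L$.

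To upgrade this to equality, I need the reverse inclusion $\cB_L \subset \cB_{L+1}$. This holds for every $L\ge 2$ by the nested containment \eqref{eq:relu_nested_containment} in \cref{prop:relu_nested_containment_entropy_lb}, which the paper states for arbitrary dimension and ReLU. If a direct argument is wanted, the identity $f = f_+ - (-f)_+$ gives it. Each $g \in \cB_L$ is a norm-limit of combinations $\sum_k v_k (g_k)_+$ with $\sum_k|v_k|\le1$ and $g_k \in \cB_{L-1}$. Using $(g_k)_+ = ((g_k)_+)_+$ and $(g_k)_+\in\cB_L$, each term $v_k (g_k)_+$ can be rewritten as an atom $v_k\,\sigma\circ h_k$ with $h_k = (g_k)_+ \in \cB_L$ and the same coefficient. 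Hence $\widetilde{\cB}_L$-type sums lie in $\cB_{L+1}$, and taking closures gives $\cB_L\subset\cB_{L+1}$. Combining this inclusion with the criterion yields the stated equivalence for equality.
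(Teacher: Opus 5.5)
Your proposal is correct and follows the paper's proof essentially verbatim. The forward direction uses that each atom $\sigma_s\circ f$ lies in $\cB_{L+1}$. The converse uses that $\cB_L$ is closed and absolutely convex. The ReLU equality combines $\sigma_s=\sigma$ with the nested containment of \cref{prop:relu_nested_containment_entropy_lb}.

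Your optional direct proof of $\cB_L\subset\cB_{L+1}$ is the paper's idempotence argument. The identity $f=f_+-(-f)_+$ that you say "gives it" plays no role in that argument.
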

\begin{proof}
    We necessarily have $\sigma_s \circ f \in \cB_{L+1}$ for every $f \in \cB_L$ and every $s > 0$. Therefore, if $\cB_{L+1} \subset \cB_L$, then $\sigma_s \circ f \in \cB_L$ for every $f \in \cB_L$ and $s > 0$. Conversely, suppose that $\sigma_s \circ f \in \cB_L$ for every $f \in \cB_L$ and $s > 0$. Because $\cB_L$ is convex, it must contain all absolutely convex combinations of such functions $\sigma_s \circ f$, and because it is closed, it must contain all limits of all such absolutely convex combinations. Therefore, $\cB_{L+1}$---which is exactly the set of all such limits of absolutely convex combinations of $\sigma_s \circ f$---must be contained in $\cB_L$. The second statement for the ReLU activation follows from \cref{prop:relu_nested_containment_entropy_lb} and the fact each $\sigma_s$ in this case is equal to the ReLU activation function itself.
\end{proof}

An immediate consequence of this criterion is that, once the ReLU balls stabilize at any depth, they remain stabilized at every subsequent depth. Indeed, if $\cB_{L+1} = \cB_L$, then the recursive definitions of $\cB_{L+2}$ and $\cB_{L+1}$ use the same input ball, and hence $\cB_{L+2}=\cB_{L+1}$. Therefore, the univariate conjecture above reduces to determining whether $f_+ \in \cB_3$ for all $f \in \cB_3$. One possible analytic approach would be to seek a sharper variational characterization of $\cB_3$, analogous to the characterization of $\cV_2$ through second-order variation, that is stable under the positive-part operation. Numerical methods may also be useful for exploring the problem. For example, finite-dimensional discretizations of the atomic norm could be used to search for functions $f\in \cB_3$ for which the estimated $\cV_3$ norm of $f_+$ exceeds one. Such computations would not themselves prove strict containment, but they could reveal candidate counterexamples or suggest additional inequalities needed for a proof of stabilization.

\subsection{Connections with depth separation and exact representation of CPWL functions}
The open question discussed above bears a conceptual resemblance to longstanding difficulties in the classical depth-separation literature. That literature typically seeks functions representable or approximable by polynomial-size networks at one depth but requiring superpolynomial, and in particular cases exponential, size at a smaller depth. Strong constant-depth separations are known primarily between depths two and three (\cite{daniely2017depth,eldan2016power,venturi2022depth,parkinson2024depth}). \cite{vardi2020neural} showed that, under mild regularity assumptions, a separation between a depth-$k$ ReLU network and a deeper constant-depth network for $k\geq 4$ would imply major lower bounds for constant-depth threshold circuits. Their formal complexity-theoretic barrier does not cover separation between depth three and a larger constant depth, although they show that the radial and effectively univariate constructions underlying many existing separation results cannot yield such a separation. Our questions differ in an important respect: rather than bounding finite-network width, we compare norm-bounded, potentially infinite-width function classes and the Banach spaces generated by them. Nevertheless, both settings exhibit the same broad phenomenon that the first nontrivial increase in depth can be demonstrated, while establishing further strict increases is considerably more difficult. It would be interesting to understand whether this similarity reflects a deeper connection or merely the different technical obstacles arising in the two models.

Another potentially useful source of ideas is the literature on exact representation of continuous piecewise-linear functions by ReLU networks. \cite{hertrich2021towards} study the hierarchy of CPWL functions exactly representable by networks of a prescribed depth, without imposing a restriction on network size, using tools from mixed-integer optimization, polyhedral theory, tropical geometry, and combinatorics. This is closely aligned with the qualitative form of our open questions: in both cases, one asks whether adding a layer strictly enlarges a recursively defined class of functions. \cite{hertrich2021towards} conjectured that the depth required to represent all CPWL functions on $\mathbb{R}^d$ is $\lceil\log_2(d+1)\rceil$, and equivalently that appropriate maximum functions require logarithmic depth. More recent work of \cite{bakaev2026better} disproved this precise conjecture and obtained the improved upper bound $\left\lceil \log_3(d-1)\right\rceil+1$. It remains unknown whether a fixed constant depth---possibly even two hidden layers---suffices for all CPWL functions; for example, exact two-hidden-layer representation of the maximum of six inputs remains open. Although these works differ from ours in that they consider exact finite-width representation rather than infinite-width approximation of functions, these techniques may prove valuable in our setting.

Ultimately, a satisfactory characterization of the deep function classes $\cB_L$ and spaces $\cV_L$ would substantially sharpen our understanding of the role of depth in functional expressivity. Such a characterization would clarify whether expressivity saturates after a given depth, or produces an indefinitely growing hierarchy of functions, and how that hierarchy might change between adjacent depths. Answers to these questions would likely also yield tighter bounds on the associated function-space complexities of the classes $\cB_L$ and allow for a more precise description of the inductive biases imposed by depth.

\bibliography{refs}
\bibliographystyle{abbrvnat}

\begin{appendices}
\section{Appendix}

\subsection{Summary of common activation functions} \label{appendix:activation_summary}

\cref{tab:activation_summary} summarizes the relevant properties of a variety of neural network activation functions used in practice. For softplus, logistic sigmoid, and hard sigmoid, we note that the more standard versions used in practice are
\begin{align}
    \textrm{Softplus:} \qquad &\sigma(t) = \log \left( 1 + e^t \right) \\
    \textrm{Logistic sigmoid:} \qquad &\sigma(t) = \frac{1}{1+e^{-t}} \\
    \textrm{Hard sigmoid:} \qquad &\sigma(t) = \max \left\{ 0, \min \left\{ 1, \frac{t+3}{6} \right\} \right\}.
\end{align}
The ``centered'' versions of these which are described in the table are shifted to satisfy $\sigma(0) = 0$. (Indeed, all activations in the table satisfy $\sigma(0) = 0$.) Moreover, all activations listed in \cref{tab:activation_summary} satisfy the following property: for any $R > 0$, the normalized activations $\sigma_s$ converge uniformly on $[-R,R]$ to limiting functions $\sigma_\infty$ (as $s \uparrow \infty$) and $\sigma_0$ (as $s \downarrow 0$). The values of these limiting functions are summarized in the table. Additionally, although $\ReLU^m$ is not globally Lipschitz, it is locally Lipschitz and thus compatible with the complexity bounds in \cref{sec:statistical_complexities}.

\begin{table}[ht]
\centering
\begingroup
\setlength{\tabcolsep}{2pt}
\resizebox{\textwidth}{!}{%
\begin{tabular}{c|c|c|c|c|c}
\hline
\textbf{Activation} 
& \textbf{Homogeneous?}
& \textbf{Formula} $\sigma(t)$
& $\displaystyle \sigma_0(t)$ 
& $\displaystyle \sigma_\infty(t)$
& $\displaystyle \rho_\sigma := \sup_{s > 0} \Lip(\sigma_s)$ \\[1pt]
\hline
ReLU 
& Yes (degree $1$)
& $(t)_+ := \max\{0,t\}$ 
& $(t)_+$ 
& $(t)_+$ 
& $1$ \\[1pt]

$\mathrm{ReLU}^m$ $(m > 1)$
& Yes (degree $m$)
& $(t)_+^m$
& $(t)_+^m$
& $(t)_+^m$
& $\infty$ \\[1pt]

$\mathrm{LeakyReLU}_\alpha$ $(0 < \alpha < 1)$
& Yes (degree $1$)
& $t\mathbbm{1}_{\{t \geq 0\}} + \alpha t\mathbbm{1}_{\{t < 0\}}$
& $t\mathbbm{1}_{\{t \geq 0\}} + \alpha t\mathbbm{1}_{\{t < 0\}}$
& $t\mathbbm{1}_{\{t \geq 0\}} + \alpha t\mathbbm{1}_{\{t < 0\}}$
& $1$ \\[1pt]

Identity 
& Yes (degree $1$)
& $t$ 
& $t$ 
& $t$ 
& $1$ \\[1pt]

Absolute value 
& Yes (degree $1$)
& $|t|$ 
& $|t|$ 
& $|t|$ 
& $1$ \\[1pt]

GELU 
& No
& $t\Phi(t)$
& $\frac{1}{2}t$
& $(t)_+$
& $\approx 1.129$ \\[1pt]

SiLU/Swish 
& No
& $\frac{t}{1+e^{-t}}$
& $\frac{1}{2}t$
& $(t)_+$
& $\approx 1.100$ \\[1pt]

Mish 
& No
& $t\tanh(\log(1+e^t))$
& $\frac{3}{5}t$
& $(t)_+$
& $\approx 1.089$ \\[1pt]

$\mathrm{ELU}_\alpha$ $(0 < \alpha \leq 1)$
& No
& $t\mathbbm{1}_{\{t \geq 0\}} + \alpha(e^t-1)\mathbbm{1}_{\{t < 0\}}$
& $t\mathbbm{1}_{\{t \geq 0\}} + \alpha t\mathbbm{1}_{\{t < 0\}}$
& $(t)_+$
& $1$ \\[1pt]

$\mathrm{SELU}_{\alpha,\lambda}$ $(\lambda \approx 1.67, \alpha \approx 1.05)$
& No
& $\lambda t\mathbbm{1}_{\{t \geq 0\}} + \lambda\alpha(e^t-1)\mathbbm{1}_{\{t < 0\}}$
& $\lambda t\mathbbm{1}_{\{t \geq 0\}} + \lambda\alpha t\mathbbm{1}_{\{t < 0\}}$
& $\lambda (t)_+$
& $\lambda \alpha$ \\[1pt]

$\mathrm{CELU}_\alpha$ $(\alpha > 0)$
& No
& $t\mathbbm{1}_{\{t \geq 0\}} + \alpha(e^{t/\alpha}-1)\mathbbm{1}_{\{t < 0\}}$
& $t$
& $(t)_+$
& $1$ \\[1pt]

Centered softplus 
& No
& $\log(1+e^t) - \log 2$
& $\frac{1}{2}t$
& $(t)_+$
& $1$ \\[1pt]

Bent identity 
& No
& $\frac{\sqrt{t^2+1}-1}{2} + t$
& $t$
& $\frac{3}{2}t\mathbbm{1}_{\{t \geq 0\}} + \frac{1}{2}t\mathbbm{1}_{\{t < 0\}}$
& $\frac{3}{2}$ \\[1pt]

Softsign 
& No
& $\frac{t}{1+|t|}$
& $t$
& $0$ 
& $1$ \\[1pt]

Tanh 
& No
& $\tanh(t)$
& $t$
& $0$ 
& $1$ \\[1pt]

Centered logistic sigmoid 
& No
& $\frac{1}{1+e^{-t}} - \frac{1}{2}$
& $\frac{1}{4}t$
& $0$ 
& $\frac{1}{4}$ \\[1pt]

Arctan 
& No
& $\arctan(t)$
& $t$
& $0$ 
& $1$ \\[1pt]

Hard tanh 
& No
& $\max\{-1,\min\{1,t\}\}$
& $t$
& $0$ 
& $1$ \\[1pt]

Centered hard sigmoid 
& No
& $\max\left\{-\frac{1}{2}, \min\left\{\frac{1}{2}, \frac{t}{6}\right\}\right\}$
& $\frac{1}{6}t$
& $0$ 
& $\frac{1}{6}$ \\[2pt]
\hline
\end{tabular}%
}
\endgroup
\caption{Summary of the properties of various neural network activation functions. All satisfy $\sigma(0) = 0$. The function $\Phi$ in the definition of GELU is the cumulative distribution function (CDF) of the standard normal distribution. The choice of $\lambda \approx 1.67$, $\alpha \approx 1.05$ for SELU reflects the canonical choice for this activation (see equation 14 in \cite{klambauer2017self} for full analytical expressions of these canonical values).}
\label{tab:activation_summary}
\end{table}

\subsection{Proof of \cref{prop:Lp_C_representative}}

\label{appendix:Lp_C_representative_proof}
\begin{proof}
    The bulk of the proof is devoted to showing that, under the given assumptions, the uniformly-closed sets $\cB_L^\infty$, $L \geq 1$ are compact in $C(\Omega)$. Once this conclusion is established, the claim is easy to prove using standard relationships between uniform limits and $L^p$ limits.
    \paragraph{Compactness of the sets $\cB_L^\infty$, $L \geq 1$.} We will first argue that the sets $\cB_{L}^\infty$ for $L \geq 1$ are all \textit{compact}\footnote{A subset $\cS$ of a Banach space $\cF$ is \textit{compact} if and only if it is \textit{closed} (any convergent sequence in $\cS$ has a limit in $\cS$) and \textit{totally bounded} ($\cS$ can be covered by finitely many $\cF$-norm balls of radius $\epsilon$, for any $\epsilon > 0$). An equivalent condition is that $\cS$ is \textit{sequentially compact} (any sequence in $\cS$ has a subsequence which converges to a limit in $\cS$). \label{footnote:compact_def}} in the ambient $C(\Omega)$ topology. Throughout this section of the proof, we will denote $\cB_L^\infty$ as simply $\cB_L$ to reduce notational clutter. 
    
    First consider the $\cW \times \cB \to C(\Omega)$ map $(\vw,b) \mapsto f_{\vw, b}$, where $f_{\vw,b}(\vx) := \vw^\top \vx + b$. Let $(\vw_n, b_n)$ be a sequence in $\cW \times \cB$ which converges to some $(\vw, b) \in \cW \times \cB$. Then:
\begin{align}
    \| f_{\vw_n, b_n} - f_{\vw,b} \|_\infty \leq \| \vw_n - \vw \|_2 \left( \sup_{\vx \in \Omega} \| \vx \|_2 \right) + |b_n - b| \to 0
\end{align}
as $n \to \infty$. This shows that $(\vw,b) \mapsto f_{\vw,b}$ is continuous, so $\cB_1$---as the image of the compact set $\cW \times \cB$ under this continuous map---is compact (\cite{aliprantis2006infinite}, Theorem 2.32).

Next, consider the $\cB_1 \times [0,\infty] \mapsto C(\Omega)$ map $(f,s) \mapsto \sigma_s(f)$. Here, $[0,\infty] := [0,\infty) \cup \{ \infty \}$ is the one-point compactification of $[0,\infty)$. Let $(f_n, s_n)$ be a sequence in $\cB_1 \times [0,\infty]$ which converges to some limit $(f,s) \in \cB_1 \times [0,\infty]$. Because $\cB_1$ is compact, it is bounded, i.e. $C_1 := \sup_{f \in \cB_1} \| f \|_\infty < \infty$. If the limit $s \in (0,\infty)$, then the convergent sequence $s_n$ is necessarily bounded below by some $S_1 > 0$ and above by some $S_2 > 0$. By the definition of the normalized activations, the $(0,\infty) \times \R \to \R$ map $(s,t) \mapsto \sigma_s(t)$ is continuous, hence uniformly continuous on $[S_1,S_2] \times [-C_1,C_1]$. Fix $\epsilon > 0$ and pick $\delta$ such that
\begin{align}
    \left| \sigma_s(t) - \sigma_{s'}(t') \right| \leq \epsilon
\end{align}
for all $(s,t), (s',t') \in [S_1, S_2] \times [-C_1, C_1]$ such that $\max\{ |s-s'|, |t-t'| \} \leq \delta$. Choose $N_0$ large enough that $\| f_n - f \|_\infty \leq \delta$ and $|s_n - s| \leq \delta$ for $n \geq N_0$. Then
\begin{align}
   \| \sigma_{s_n} \circ f_n - \sigma_s \circ f \|_\infty :=  \sup_{\vx \in \Omega} \left| \sigma_{s_n}(f_n(\vx)) - \sigma_s(f(\vx)) \right| \leq \epsilon
\end{align}
for all $n \geq N_0$. This proves that $\sigma_{s_n} \circ f_n \to \sigma_s \circ f$ uniformly in the case $s \in (0,\infty)$. In the case $s = 0$, we have
\begin{align}
   \| \sigma_{s_n} \circ f_n - \sigma_0 \circ f \|_\infty &\leq \sup_{u \in [-C_1, C_1]} \left| \sigma_{s_n}(u) - \sigma_0(u) \right| + \| \sigma_0 \circ f_n - \sigma_0 \circ f \|_\infty \to 0
\end{align}
by the compact-uniform convergence of $\sigma_s$ to $\sigma_0$ summarized in \cref{tab:activation_summary} and the uniform continuity of $\sigma_0$ on $[-C_1,C_1]$. Similarly $\| \sigma_{s_n} \circ f_n - \sigma_\infty \circ f \|_\infty \to 0$ when $s=\infty$. We have shown that the $\cB_1 \times [0,\infty] \mapsto C(\Omega)$ map $(f,s) \mapsto \sigma_s(f)$ is continuous. Therefore, the image $\widehat{\cS}_\sigma := \{ \sigma_s \circ f: s \in [0,\infty], f \in \cB_1 \}$ is compact by Theorem 2.32 in \cite{aliprantis2006infinite}. Therefore, by Theorem 3.20 (c) in \cite{rudin1991functional}, the closed absolutely convex hull $\overline{\aconv}(\widehat{\cS}_\sigma)$ is also compact.

To prove compactness of $\cB_2$, denote $\cS_\sigma := \{ \sigma_s \circ f: s > 0, f \in \cB_1 \}$. It remains only to prove that $\cB_2 := \overline{\aconv}(\cS_\sigma) = \overline{\aconv}(\widehat{\cS}_\sigma)$. The functions in $\widehat{\cS}_\sigma \setminus \cS_\sigma$ are exactly those of the form $\sigma_0 \circ f$ or $\sigma_\infty \circ f$ for some $f \in \cB_1$. Because the values of all $f \in \cB_1$ lie in the compact interval $[-C_1, C_1]$, the compact-uniform convergence summarized in \cref{tab:activation_summary} guarantees that any such function $\sigma_0 \circ f$ or $\sigma_\infty \circ f$ is the uniform limit of a sequence $\sigma_s \circ f$ for some $s \downarrow 0$ or $s \uparrow \infty$. Therefore, $\widehat{\cS}_\sigma$ is exactly the uniform closure of $\cS_\sigma$. The desired equality $\cB_2 = \overline{\aconv}(\widehat{\cS}_\sigma)$ now follows from the more general fact (whose proof is in \cref{appendix:proof_limit_final_layer}) that, for any subset $\cS$ of a normed space, we have $\overline{\aconv}(\overline{\cS}) = \overline{\aconv}(\cS)$. Assuming inductively that $\cB_L$ is compact, the same argument as above (applied to $\cB_L$ instead of $\cB_1$) shows that $\cB_{L+1}$ is compact.

\paragraph{Relationship between $\cB_L^\infty$ and $\cB_L^{L^p(\mu)}$.} The inclusion $\cB_L^\infty \subset \cB_L^{L^p(\mu)}$ follows from
\begin{align} \label{eq:unif_implies_Lp_conv}
    \| f - f_n \|_{L^p(\mu)} \leq \mu(\Omega)^{1/p} \| f_n - f \|_\infty \to 0
\end{align}
whenever $f_n \to f$ uniformly. For the second claim, fix any $f \in \cB_L^{L^p(\mu)}$ which is the $L^p(\mu)$ limit of a sequence $f_n \in \aconv(\cB_{L-1}^\sigma) \subset \cB_L$. By compactness of $\cB_L$, this sequence $f_n$ has a subsequence $f_{n_k}$ which converges uniformly to some $\tilde{f} \in \cB_L$. Therefore, by \eqref{eq:unif_implies_Lp_conv}, it must be that $\| f_{n_k} - \tilde{f} \|_{L^p(\mu)} \to 0$. Because we also have $\| f_{n_k} - f \|_{L^p(\mu)} \to 0$, the functions $f \in \cB_L^{L^p(\mu)}$ and $\tilde{f} \in B_L^\infty$ must agree with each other outside of a set of $\mu$-measure zero.
\end{proof}

\subsection{Proof of \cref{prop:limit_final_layer}} \label{appendix:proof_limit_final_layer}
\begin{proof} Define
\begin{align}
    \widetilde{\cS}_\sigma := \{ \sigma_s \circ f: s > 0, f \in \widetilde{\cB}_2 \}, \qquad \cS_\sigma := \{ \sigma_s \circ f: s > 0, f \in \cB_2 \}.
\end{align}
First we will show that $\overline{\widetilde{\cS}_\sigma} = \overline{\cS_\sigma}$. The inclusion $\overline{\widetilde{\cS}_\sigma} \subset \overline{\cS_\sigma}$ is immediate from $\widetilde{\cB}_2 \subset \cB_2$. For the reverse inclusion, let $g \in \overline{\cS_\sigma}$ be the $\cF$-norm limit of a sequence $\sigma_{s_n} \circ g_n \in \cS_\sigma$. Each $g_n$ is in turn the $\cF$-norm limit of a sequence $h_{n,m} \in \widetilde{\cB}_2$. Consider the doubly-indexed sequence $\sigma_{s_n} \circ h_{n,m} \in \widetilde{\cS}_\sigma$. In the case $\cF = C(\Omega)$, fix $\epsilon > 0$ and choose $N$ large enough that $\| g - \sigma_{s_N} \circ g_N \|_\cF \leq \epsilon/2$. For this $N$, choose $\delta > 0$ such that $|\sigma_{s_N} (u) - \sigma_{s_N} (v)| \leq \epsilon/2$ whenever $u,v \in [-C_2, C_2]$ have $|u-v| \leq \delta$. (Here $C_2 := \sup_{f \in \cB_2} \| f \|_\infty < \infty$ due to compactness of $\cB_2$ in $C(\Omega)$, as established in the proof of \cref{prop:Lp_C_representative}.) For this $N$ and $\delta$, choose $M$ large enough that $\| g_N - h_{N,m} \|_\infty\leq \delta$ for $m \geq M$. Then
\begin{align}
    \| g - \sigma_{s_N} \circ h_{N,m} \|_\infty \leq \| g - \sigma_{s_N} \circ g_N \|_\infty + \| \sigma_{s_N} \circ g_N - \sigma_{s_N} \circ h_{N,m} \|_\infty \leq \epsilon
\end{align}
for all $m \geq M$. By repeating this selection for a sequence of $\epsilon \downarrow 0$, we find that a subsequence of $\sigma_{s_n} \circ h_{n,m}$ converges uniformly to $g$. By \eqref{eq:unif_implies_Lp_conv}, this same subsequence also converges in $L^p(\mu)$ to $g$. This proves the reverse inclusion $\overline{\cS_\sigma} \subset \overline{\widetilde{\cS}_\sigma}$.

Next, we will show the generic identity $\overline{\aconv}(\overline{\cS}) = \overline{\aconv}(\cS)$ for any subset $\cS$ of $\cF$. Combined with the above result, this will show that
\begin{align}
    \overline{\widetilde{\cB}}_3 = \overline{\aconv}(\widetilde{\cS}_\sigma) = \overline{\aconv}(\overline{\widetilde{\cS}_\sigma}) = \overline{\aconv}(\overline{\cS_\sigma}) = \overline{\aconv}(\cS_\sigma) = \cB_3.
\end{align}
To prove the identity, again note that $\overline{\aconv}(\cS) \subset \overline{\aconv}(\overline{\cS})$ is immediate from $\cS \subset \overline{\cS}$. For the reverse inclusion, let $f \in \overline{\aconv}(\overline{\cS})$ be the $\cF$-norm limit of a sequence $f_n = \sum_{k=1}^{K_n} v_{n,k} s_{n,k} \in \aconv(\overline{\cS})$. Then each $s_{n,k}$ is the $\cF$-norm limit of a sequence $s_{n,k,m} \in \cS$. As above, form the doubly-indexed sequence $f_{n,m} = \sum_{k=1}^K v_{n,k} s_{n,k,m}$ and, for some $\epsilon > 0$, choose $N$ such that $\| f - f_N \|_{\cF} \leq \epsilon/2$ and $M$ such that $\| s_{N,k} - s_{N,k,m} \|_\cF \leq \epsilon/2$ for all $k = 1, \dots, K_N$ and all $m \geq M$. Then
\begin{align}
    \| f - f_{N,m} \|_{\cF} \leq \| f - f_N \|_\cF + \| f_N - f_{N,m} \|_\cF \leq \epsilon/2 + \sum_{k=1}^{K_n} |v_k| (\epsilon/2) \leq \epsilon.
\end{align}
As above, repeating this selection for a sequence of $\epsilon \downarrow 0$ shows that a subsequence of $f_{n,m}$ converges in $\cF$ to $f$, which proves the reverse inclusion $\overline{\aconv}(\overline{\cS}) \subset \overline{\aconv}(\cS)$ as desired. 

Assuming inductively that $\widetilde{\cB}_L = \cB_L$, the above argument also shows that $\widetilde{\cB}_{L+1} = \cB_{L+1}$.
\end{proof}

\subsection{Proof of \cref{prop:path_norm_equivalence}} \label{appendix:proof_path_norm_equivalence}
\begin{proof}
    By assumption \eqref{eq:s_k_assumption}, there are $\widetilde{\vw}_{k_1}^{(1)} \in \cW$ and $\widetilde{b}_{k_1}^{(1)} \in \cB$ such that $\vw_{k_1}^{(1)} = s_{k_1} \widetilde{\vw}_{k_1}^{(1)}$ and $b_{k_1}^{(1)} = s_{k_1} \widetilde{b}_{k_1}^{(1)}$. Therefore, in the case $L = 2$, the network is of the form
    \begin{align} \label{eq:f_L_sigma_scaled}
        f(\vx) = \sum_{k_1=1}^{K_1} w_{k_1}^{(2)} \sigma \left( (\vw_{k_1}^{(1)})^\top \vx + b_{k_1}^{(1)} \right) = \sum_{k_1=1}^{K_1} w_{k_1}^{(2)} s_{k_1} \sigma_{s_{k_1}} \left( ( \widetilde{\vw}_{k_1}^{(1)})^\top \vx + \widetilde{b}_{k_1}^{(1)} \right).
    \end{align}
    Therefore $f \in \cV_2$, with
    \begin{align}
        \| f \|_{\cV_2} \leq \Phi(\vtheta) := \sum_{k_1=1}^{K_1} \left| w_{k_1}^{(2)} \right| s_{k_1}.
    \end{align}
    The right-hand side above is further upper bounded by $\Psi(\vtheta) := \| \vw^{(2)} \|_1 \| \vs \|_\infty$. This proves \eqref{eq:path_norm_sum}--\eqref{eq:path_norm_upper_bound_psi} in the base case $L=2$.
    
    Next, assume inductively the theorem statement holds for $L-1$. Then any depth-$L$ network of the form \eqref{eq:deep_nn_sum} which satisfies \eqref{eq:s_k_assumption} is given by
    \begin{align} \label{eq:deep_net_g}
        f(\vx) = \sum_{k_{L-1}=1}^{K_{L-1}} w_{k_{L-1}}^{(L)} \sigma( g_{k_{L-1}} (\vx))
    \end{align}
    where each $g_{k_{L-1}}$ is a depth-$L-1$ network of the form \eqref{eq:deep_nn_sum} satisfying \eqref{eq:s_k_assumption} and the bound \eqref{eq:path_norm_sum}. Denote $h_{k_{L-1}} := g_{k_{L-1}}/ \Phi(\vtheta_{k_{L-1}})$. Then:
    \begin{align} \label{eq:f_L_2_sigma_scaled}
        f(\vx) &= \sum_{k_{L-1}=1}^{K_{L-1}} w_{k_{L-1}}^{(L)} \Phi(\vtheta_{k_{L-1}}) \left( \frac{\sigma \left(  \Phi(\vtheta_{k_{L-1}}) h_{k_{L-1}} (\vx) \right)}{\Phi(\vtheta_{k_{L-1}})} \right) \\
        &= \sum_{k_{L-1}=1}^{K_{L-1}} w_{k_{L-1}}^{(L)} S_{k_{L-1}} \sigma_{S_{k_{L-1}}} (h_{k_{L-1}}(\vx)),
    \end{align}
    where $S_{k_{L-1}} := \Phi(\vtheta_{k_{L-1}})$. By the inductive hypothesis, $ h_{k_{L-1}} \in \cB_{L-1}$ for each $k_{L-1}$, and therefore $f \in \cV_L$ with
    \begin{align}
        \| f \|_{\cV_L} \leq \Phi(\vtheta) :=  \sum_{k_{L-1}=1}^{K_{L-1}} \left| w_{k_{L-1}}^{(L)} \right| \Phi(\vtheta_{k_{L-1}}).
    \end{align}
    Expanding this recursion gives exactly \eqref{eq:path_norm_sum}. The right-hand side $\Phi(\vtheta)$ is in turn upper bounded by
    \begin{align}
        \sum_{k_{L-1}=1}^{K_{L-1}} \left| w_{k_{L-1}}^{(L)} \right| \Phi(\vtheta_{k_{L-1}}) &\leq \| \vw^{(L)} \|_1 \max_{k_{L-1}=1, \dots, K_{L-1}} \Phi(\vtheta_{k_{L-1}}) \\
        &\leq \| \vw^{(L)} \|_1 \max_{k_{L-1}=1, \dots, K_{L-1}} \Psi(\vtheta_{k_{L-1}}) \\
        &= \| \vw^{(L)} \|_1 \| \mW^{(L-1)} \|_{1,\infty} \left( \prod_{\ell=2}^{L-2} \| \mW^{(\ell)} \|_{1,\infty} \right) \| \vs \|_\infty \\
        &= \| \vw^{(L)} \|_1 \left( \prod_{\ell=2}^{L-1} \| \mW^{(\ell)} \|_{1,\infty} \right) \| \vs \|_\infty,
    \end{align}
    which is \eqref{eq:path_norm_upper_bound_psi}.
    
    Finally, if $\sigma$ is homogeneous, the layers of $f$ can be successively renormalized from the inside out. The renormalized parameters $\widetilde{\vtheta}$ can be taken to satisfy \eqref{eq:s_k_assumption} with $\widetilde{s}_{k_1} = 1$ for each $k_1 = 1, \dots, K_1$. The renormalized $\widetilde{\mW}^{(2)}, \dots, \widetilde{\mW}^{(L-1)}$ each have rowwise $\ell^1$ norms equal to 1, and the renormalized output weights $\widetilde{\vw}^{(L)}$ have $\ell^1$ norm equal to $\Phi(\vtheta) = \Phi (\widetilde{\vtheta}) = \Psi(\widetilde{\vtheta})$.  This proves the theorem's final claim. 
\end{proof}

\subsection{The $\ReLU^m$ analogue of \cref{prop:path_norm_equivalence}} \label{appendix:relu_m_path_norm_equivalence}
\begin{proposition} \label{prop:relu_m_path_norm_equivalence}
    Let $\sigma(t) = \ReLU^m(t)$ and $\sigma_s(t) = \sigma(st)/s^m$. Let $f$ be a neural network of the form \eqref{eq:deep_nn_sum}/\eqref{eq:deep_nn_matrix} with activation function $\sigma$. Suppose that for each $k_1 = 1, \dots, K_1$, there exists an $s_{k_1} > 0$ such that \eqref{eq:s_k_assumption} holds. Then
    \begin{align} \label{eq:relu_m_path_norm_sum}
        \| f \|_{\cV_L} \leq \Phi_m(\vtheta) := \sum_{k_{L-1}=1}^{K_{L-1}} \left| w_{k_{L-1}}^{(L)} \right| \left( \sum_{k_{L-2}=1}^{K_{L-2}} \left| W_{k_{L-1},k_{L-2}}^{(L-1)} \right| \left( \dots \left( \sum_{k_1=1}^{K_1}  \left| W_{k_2,k_1}^{(2)} \right| s_{k_1}^m \right)^m \dots \right)^m \right)^m.
    \end{align}
    Moreover,
    \begin{align} \label{eq:relu_m_path_norm_upper_bound}
        \Phi_m(\vtheta) \leq \Psi_m(\vtheta) := \| \vw^{(L)} \|_1 \left( \prod_{\ell=2}^{L-1} \| \mW^{(\ell)} \|_{1,\infty}^{m^{L-\ell}} \right) \| \vs \|_\infty^{m^{L-1}},
    \end{align}
    where $\vs := [s_1, \dots, s_{K_1}]^\top$. There is another network with parameters $\widetilde{\vtheta}$ which represents the same function as $f$ and satisfies $\Phi_m(\widetilde{\vtheta}) = \Psi_m(\widetilde{\vtheta})$.
\end{proposition}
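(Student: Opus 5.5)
We mirror the induction in the proof of \cref{prop:path_norm_equivalence}, replacing linear homogeneity by degree-$m$ homogeneity. Since $\sigma = \ReLU^m$ satisfies $\sigma(st) = s^m \sigma(t)$ for $s>0$, every normalized activation $\sigma_s$ equals $\sigma$. Consequently $\cB_L = \overline{\aconv}\{\sigma \circ f : f \in \cB_{L-1}\}$. The recursion to track is
\begin{align*}
\Phi_m(\vtheta) = \sum_k |w_k^{(L)}| \, \Phi_m(\vtheta_k)^m ,
\end{align*}
where $\vtheta_k$ are the parameters of the depth-$(L-1)$ subnetwork $g_k$ feeding the $k$-th last-layer neuron. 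Here $\Phi_m(\vtheta_k)$ is the nested expression inside the outermost $m$-th power; at depth $2$ it is simply $\sum_{k_1} |W^{(2)}_{k_2,k_1}| s_{k_1}^m$. It is cleaner to induct on the statement ``$g \in \Phi_m(\vtheta_g) \, \cB_{\ell}$'' for the pre-activation subnetworks.

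\textbf{Base case.} Write $\vw_{k_1}^{(1)} = s_{k_1}\widetilde{\vw}_{k_1}$ and $b_{k_1}^{(1)} = s_{k_1}\widetilde b_{k_1}$ with $\widetilde{\vw}_{k_1} \in \cW$ and $\widetilde b_{k_1} \in \cB$. Then $\sigma((\vw_{k_1}^{(1)})^\top\vx + b_{k_1}^{(1)}) = s_{k_1}^m \sigma(\widetilde{\vw}_{k_1}^\top \vx + \widetilde b_{k_1})$, so a depth-2 network lies in $\big(\sum_{k_1} |w^{(2)}_{k_1}| s_{k_1}^m\big)\cB_2$. This proves the claim for $L=2$.

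\textbf{Inductive step.} Suppose each hidden pre-activation $g_k$ at depth $L-1$ lies in $S_k \cB_{L-1}$ with $S_k := \Phi_m(\vtheta_k)$. If $S_k = 0$, the term vanishes. Otherwise $h_k := g_k/S_k \in \cB_{L-1}$, and homogeneity gives $\sigma(g_k) = S_k^m \sigma(h_k) \in S_k^m \cB_L$. By absolute convexity of $\cB_L$, we get $f = \sum_k w_k^{(L)} S_k^m \sigma(h_k) \in \big(\sum_k |w_k^{(L)}| S_k^m\big)\cB_L$. Unwinding the recursion yields exactly \eqref{eq:relu_m_path_norm_sum}.

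\textbf{Bound $\Phi_m \le \Psi_m$.} We use induction in parallel, with the inner quantity bounded via
\begin{align*}
\sum_{k_{\ell-1}} |W^{(\ell)}_{k_\ell, k_{\ell-1}}| \, X_{k_{\ell-1}} \le \|\mW^{(\ell)}\|_{1,\infty} \max_{k_{\ell-1}} X_{k_{\ell-1}} .
\end{align*}
Raising to the $m$-th power at each subsequent layer, monotonicity of $t\mapsto t^m$ on $[0,\infty)$ lets the bounds propagate. The innermost term is at most $\|\mW^{(2)}\|_{1,\infty}\|\vs\|_\infty^m$. After the $L-2$ further $m$-th powers, this produces the exponent $m^{L-2}$ on $\|\mW^{(2)}\|_{1,\infty}$, generally $m^{L-\ell}$ on $\|\mW^{(\ell)}\|_{1,\infty}$, and $m^{L-1}$ on $\|\vs\|_\infty$. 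The final $\|\vw^{(L)}\|_1$ factor comes from the outer sum.

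\textbf{Equality via rescaling.} For the last claim, renormalize from the inside out. First replace each first-layer row by its $\widetilde{\vw}, \widetilde b$ (so that $\widetilde s_{k_1}=1$), and multiply column $k_1$ of $\mW^{(2)}$ by $s_{k_1}^m$. Then, for each hidden neuron at layer $\ell \ge 2$ with incoming row $r$ of $\ell^1$ norm $a>0$, divide $r$ by $a$. To preserve the function, multiply the outgoing column in the next layer by $a^m$, using $\sigma(a t) = a^m\sigma(t)$. Neurons with $a=0$ are dead and can be removed. After this, every hidden matrix has unit row $\ell^1$ norms and $\widetilde{\vs} = \mathbf 1$, so the nested sums all equal $1$ and $\Phi_m(\widetilde\vtheta) = \|\widetilde\vw^{(L)}\|_1 = \Psi_m(\widetilde\vtheta)$.

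The main obstacle is exponent bookkeeping. We must check that the nested $m$-th powers produce $m^{L-\ell}$ rather than off-by-one exponents, and that the rescaling in the last step indeed leaves all inner nested quantities equal to one. This requires ordering the renormalization strictly from layer $2$ upward, so that later rescalings do not disturb earlier unit norms.
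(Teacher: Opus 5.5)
Your proof is correct and follows essentially the same argument as the paper. It uses induction on depth via degree-$m$ homogeneity, giving the recursion $\Phi_m(\vtheta)=\sum_k |w_k^{(L)}|\,\Phi_m(\vtheta_k)^m$; it then propagates the row-wise $\|\cdot\|_{1,\infty}$ bound through the nested $m$-th powers, and uses inside-out renormalization (setting $\widetilde{s}_{k_1}=1$ and giving unit row $\ell^1$ norms) for the equality case. Your explicit handling of $S_k=0$ and of dead neurons during renormalization is a small tidy-up that the paper's proof glosses over.
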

\begin{proof}
    By assumption \eqref{eq:s_k_assumption}, there are $\widetilde{\vw}_{k_1}^{(1)} \in \cW$ and $\widetilde{b}_{k_1}^{(1)} \in \cB$ such that $\vw_{k_1}^{(1)} = s_{k_1} \widetilde{\vw}_{k_1}^{(1)}$ and $b_{k_1}^{(1)} = s_{k_1} \widetilde{b}_{k_1}^{(1)}$. Therefore, in the case $L = 2$, the network is of the form
    \begin{align}
        f(\vx) = \sum_{k_1=1}^{K_1} w_{k_1}^{(2)} \sigma \left( (\vw_{k_1}^{(1)})^\top \vx + b_{k_1}^{(1)} \right) = \sum_{k_1=1}^{K_1} w_{k_1}^{(2)} s_{k_1}^m \sigma_{s_{k_1}} \left( ( \widetilde{\vw}_{k_1}^{(1)})^\top \vx + \widetilde{b}_{k_1}^{(1)} \right).
    \end{align}
    Therefore $f \in \cV_2$, with
    \begin{align}
        \| f \|_{\cV_2} \leq \Phi_m(\vtheta) := \sum_{k_1=1}^{K_1} \left| w_{k_1}^{(2)} \right| s_{k_1}^m.
    \end{align}
    The right-hand side above is further upper bounded by $\Psi_m(\vtheta) := \| \vw^{(2)} \|_1 \| \vs \|_\infty^m$. This proves \eqref{eq:relu_m_path_norm_sum}--\eqref{eq:relu_m_path_norm_upper_bound} in the base case $L=2$.
    
    Next, assume inductively the theorem statement holds for $L-1$. Then any depth-$L$ network of the form \eqref{eq:deep_nn_sum} which satisfies \eqref{eq:s_k_assumption} is given by
    \begin{align}
        f(\vx) = \sum_{k_{L-1}=1}^{K_{L-1}} w_{k_{L-1}}^{(L)} \sigma( g_{k_{L-1}} (\vx))
    \end{align}
    where each $g_{k_{L-1}}$ is a depth-$L-1$ network of the form \eqref{eq:deep_nn_sum} satisfying \eqref{eq:s_k_assumption} and the bound \eqref{eq:relu_m_path_norm_sum}. Denote $h_{k_{L-1}} := g_{k_{L-1}}/ \Phi_m(\vtheta_{k_{L-1}})$. Then:
    \begin{align}
        f(\vx) &= \sum_{k_{L-1}=1}^{K_{L-1}} w_{k_{L-1}}^{(L)} \Phi_m(\vtheta_{k_{L-1}})^m \left( \frac{\sigma \left(  \Phi_m(\vtheta_{k_{L-1}}) h_{k_{L-1}} (\vx) \right)}{\Phi_m(\vtheta_{k_{L-1}})^m} \right) \\
        &= \sum_{k_{L-1}=1}^{K_{L-1}} w_{k_{L-1}}^{(L)} S_{k_{L-1}}^m \sigma_{S_{k_{L-1}}} (h_{k_{L-1}}(\vx)),
    \end{align}
    where $S_{k_{L-1}} := \Phi_m(\vtheta_{k_{L-1}})$. By the inductive hypothesis, $ h_{k_{L-1}} \in \cB_{L-1}$ for each $k_{L-1}$, and therefore $f \in \cV_L$ with
    \begin{align}
        \| f \|_{\cV_L} \leq \Phi_m(\vtheta) :=  \sum_{k_{L-1}=1}^{K_{L-1}} \left| w_{k_{L-1}}^{(L)} \right| \Phi_m(\vtheta_{k_{L-1}})^m.
    \end{align}
    Expanding this recursion gives exactly \eqref{eq:relu_m_path_norm_sum}. The right-hand side $\Phi_m(\vtheta)$ is in turn upper bounded by
    \begin{align}
        \sum_{k_{L-1}=1}^{K_{L-1}} \left| w_{k_{L-1}}^{(L)} \right| \Phi_m(\vtheta_{k_{L-1}})^m &\leq \| \vw^{(L)} \|_1 \max_{k_{L-1}=1, \dots, K_{L-1}} \Phi_m(\vtheta_{k_{L-1}})^m \\
        &\leq \| \vw^{(L)} \|_1 \max_{k_{L-1}=1, \dots, K_{L-1}} \Psi_m(\vtheta_{k_{L-1}})^m \\
        &= \| \vw^{(L)} \|_1 \| \mW^{(L-1)} \|_{1,\infty}^{m} \left( \prod_{\ell=2}^{L-2} \| \mW^{(\ell)} \|_{1,\infty}^{m^{L-\ell}} \right) \| \vs \|_\infty^{m^{L-1}} \\
        &= \| \vw^{(L)} \|_1 \left( \prod_{\ell=2}^{L-1} \| \mW^{(\ell)} \|_{1,\infty}^{m^{L-\ell}} \right) \| \vs \|_\infty^{m^{L-1}},
    \end{align}
    which is \eqref{eq:relu_m_path_norm_upper_bound}.
    
    Finally, since $\ReLU^m$ is homogeneous of degree $m$, the layers of $f$ can be successively renormalized from the inside out exactly as in the proof of \cref{prop:path_norm_equivalence}. The renormalized parameters $\widetilde{\vtheta}$ can be taken to satisfy \eqref{eq:s_k_assumption} with $\widetilde{s}_{k_1} = 1$ for each $k_1 = 1, \dots, K_1$. The renormalized $\widetilde{\mW}^{(2)}, \dots, \widetilde{\mW}^{(L-1)}$ each have rowwise $\ell^1$ norms equal to 1, and the renormalized output weights $\widetilde{\vw}^{(L)}$ have $\ell^1$ norm equal to $\Phi_m(\vtheta) = \Phi_m (\widetilde{\vtheta}) = \Psi_m(\widetilde{\vtheta})$. This proves the claim.
\end{proof}

\subsection{Proof of \cref{lemma:integral_rep}} \label{appendix:proof_integral_rep}

\begin{proof}
    For $L \geq 2$, let $\cU_L$ denote the set of all functions which admit a pointwise integral representation of the form \eqref{eq:pointwise_integral_rep} for some $\mu \in \cM(\cB_{L-1} \times [0,\infty])$ with $\| \mu \|_{\TV} \leq 1$. We will prove that $\cB_L = \cU_L$, which implies the stated formula for $\| \cdot \|_{\cV_L}$. The particular forms of the integral representations for homogeneous activations follow from the fact that $\sigma_s = \sigma$ in all such cases, so the integrands depending on $\sigma_s$ can be replaced with $\sigma$, and the integrals need only be taken over $\cB_{L-1}$ (or $\cW \times \cB$ for $L = 2$) rather than $\cB_{L-1} \times [0,\infty]$. The proof in those cases uses identical functional analytic arguments to the ones below.
    
    \paragraph{Integral representations of the form \eqref{eq:pointwise_integral_rep} are finite.} First, to establish that the integral in any such representation is finite, recall from the proof of \cref{prop:Lp_C_representative} that the set $\cB_{L-1} \times [0,\infty]$ is compact, and the $\cB_{L-1} \times [0,\infty] \to C(\Omega)$ map $(f,s) \mapsto \sigma_s(f)$ is continuous. This means that, whenever $(f_n, s_n) \to (f,s) \in \cB_{L-1} \times [0,\infty]$, we have $\sigma_{s_n}(f_n) \to \sigma_s(f)$ uniformly (thus pointwise). Therefore, for any $\vx \in \Omega$, the $\cB_{L-1} \times [0,\infty] \to \R$ map $(f,s) \mapsto \sigma_s(f(\vx))$ is continuous, and in particular is bounded by some constant $C_\vx < \infty$ on the compact set $\cB_{L-1} \times [0,\infty]$. The integral in the representation \eqref{eq:pointwise_integral_rep} is correspondingly bounded as
    \begin{align}
        \left| \int_{\cB_{L-1} \times [0,\infty]} \sigma_s(g(\vx)) \ d \mu(s,g) \right| \leq \int_{\cB_{L-1} \times [0,\infty]} C_\vx \ d |\mu|(s,g) \leq C_\vx \| \mu \|_{\TV} < \infty
    \end{align}
    for any finite Radon measure $\mu$.

    \paragraph{Inclusion $\cB_L \subset \cU_L$.} 

Here we will use the Riesz-Markov-Kakutani representation theorem (\cite{folland1999real}, Theorem 7.17), which relies on compactness of $\cB_{L-1} \times [0,\infty]$. In our case, this theorem says that the space $C(\cB_{L-1} \times [0,\infty])^*$ of bounded linear functionals on $C(\cB_{L-1} \times [0,\infty])$ is exactly the space of linear functionals of the form
\begin{align} 
    I_\mu(\phi) := \int_{\cB_{L-1} \times [0,\infty]} \phi(s,g) \ d \mu(s,g), \qquad \phi \in C( \cB_{L-1} \times [0,\infty])
\end{align}
for some $\mu \in \cM([0,\infty] \times\cB_{L-1})$, with the dual norm $\| I_\mu \|$ exactly equal to $\| \mu \|_{\TV}$. Additionally, we will use the Banach-Alaoglu theorem (\cite{folland1999real}, Theorem 5.18), which states that for every sequence $I_{\mu_n}$ in the unit ball $\cB^* := \{ I_\mu \in C(\cB_{L-1} \times [0,\infty])^*: \| I_\mu \| \leq 1 \}$ of $C(\cB_{L-1} \times [0,\infty])^*$, there is a subsequence $I_{\mu_{n_k}}$ satisfying
\begin{align} \label{eq:weak_star_conv_def}
    I_{\mu_{n_k}}(\phi) := \int_{\cB_{L-1} \times [0,\infty] } \phi(s,g) \ d \mu_{n_k}(s,g) \to I_{\mu}(\phi) := \int_{\cB_{L-1} \times [0,\infty]} \phi(s,g) \ d \mu(s,g) 
\end{align}
for every $\phi \in C(\cB_{L-1} \times [0,\infty])$. 

To apply these theorems to our setup, first note that the inclusion $\widetilde{\cB}_L \subset \cU_L$ holds because any function $f = \sum_{k=1}^K v_k (\sigma_{s_k} \circ g_k)$ in $\widetilde{\cB}_L$ is represented as an integral of the form \eqref{eq:pointwise_integral_rep} with respect to the atomic measure $\mu = \sum_{k=1}^K v_k \delta_{s_k,g_k}$, which has $\| \mu \|_{\TV} = \sum_{k=1}^K |v_k| \leq 1$. To extend the conclusion to the uniform closure $\cB_L$, let $f \in \cB_L$ be the uniform limit of some sequence $f_n \in \widetilde{\cB}_L$. As argued above, the maps $(g,s) \mapsto \sigma_s(g(\vx))$ are continuous on $\cB_{L-1} \times [0,\infty]$ for all $\vx \in \Omega$. Therefore, by Riesz-Markov-Kakutani and Banach-Alaoglu, there is a subsequence $\mu_{n_k}$ and a measure $\mu \in \cM(\cB_{L-1} \times [0,\infty])$ with $\| \mu \|_{\TV} \leq 1$ which satisfy
\begin{align}
    f_{n_k}(\vx) = \int_{\cB_{L-1} \times [0,\infty]} \sigma_s(g(\vx)) \ d \mu_{n_k}(s,g) \to \int_{\cB_{L-1} \times [0,\infty]} \sigma_s(g(\vx)) \ d \mu(s,g).
\end{align}
for all $\vx \in \Omega$. Moreover, uniform convergence $f_n \to f$ implies that $f_n(\vx) \to f(\vx)$ pointwise for each $\vx \in \Omega$. Since pointwise limits of the subsequence $f_{n_k} (\vx)$ must coincide with the limit of the full convergent sequence $f_n(\vx)$, the aforementioned measure $\mu$ must satisfy
\begin{align}
    f(\vx) = \int_{\cB_{L-1} \times [0,\infty]} \sigma_s (g(\vx)) \ d \mu(s,g)
\end{align}
for all $\vx \in \Omega$. This shows that $f \in \cU_L$.

\paragraph{Inclusion $\cU_L \subset \cB_L$.} The idea here is to uniformly approximate  the integral in \eqref{eq:pointwise_integral_rep} using finite absolutely convex sums, which are themselves integrals of simple functions. To do so, recall again from the proof of \cref{prop:Lp_C_representative} that the $\cB_{L-1} \times [0,\infty] \to C(\Omega)$ map $(f,s) \mapsto \sigma_s \circ f$ is continuous, hence uniformly continuous on the compact set $\cB_{L-1} \times [0,\infty]$. Therefore, fixing some $\epsilon > 0$, there is a $\delta > 0$ such that $\| \sigma_s \circ f - \sigma_t \circ g \|_\infty \leq \epsilon$ for any $(f,s), (g,t) \in \cB_{L-1}  \times [0,\infty]$ with $\max\{ \| f - g \|_\infty, |s-t| \} \leq \delta$. Additionally, because $\cB_{L-1} \times [0,\infty]$ is compact, it is totally bounded (see \footref{footnote:compact_def}). Let $(g_1, s_1), \dots, (g_n, s_n)$ be a $\delta$-cover of $\cB_{L-1} \times [0,\infty]$. By definition, the sets
\begin{align}
    \cE_i := \left\{ (g,s) \in  \cB_{L-1} \times [0,\infty]: \max\{ \| g - g_i \|_\infty, |s-s_i| \} \leq \delta\right\}
\end{align}
satisfy $\cB_{L-1} \times [0,\infty] \subset \bigcup_{i=1}^n \cE_i$. Each $\cE_i$ is Borel, as the preimage of the closed interval $[0, \delta]$ under the continuous $\cB_{L-1} \times [0,\infty] \to \R$ map $(g,s) \mapsto \max\{ \| g - g_i \|_\infty, |s-s_i| \}$. We can turn the sets $\cE_1, \dots, \cE_n$ into a \textit{disjoint} Borel cover $\cA_1, \dots, \cA_n$ of $\cB_{L-1} \times [0,\infty]$ by taking $\cA_1 = \cE_1$ and $\cA_i = \cE_i \setminus \left( \bigcup_{j=1}^{i-1} \cE_j \right)$ for $i=2, \dots, n$.

With this in mind, fix some $f \in \cU_L$ and let $\mu$ be its representing measure. Define the function
\begin{align}
    f_\epsilon(\vx) := \sum_{i=1}^n \mu(\cA_i) \sigma_{s_i}(g_i(\vx)), \qquad \forall \vx \in \Omega
\end{align}
where $s_i$, $g_i$ and $\cA_i$ are as above. Recall that by definition, $\| \mu \|_{\TV} := |\mu|(\cB_{L-1} \times [0,\infty])$, where the total variation measure $|\mu|$ is defined as
\begin{align}
    |\mu|(\cS) := \sup_{\pi} \sum_{\cP_i \in \pi} |\mu(\cP_i)|.
\end{align}
The supremum in the above definition is taken over all partitions $\pi$ of $\cS$ into finitely many disjoint Borel sets. From this definition, it is clear that 
\begin{align}
    \sum_{i=1}^n |\mu(\cA_i)| \leq \| \mu \|_{\TV} \leq 1
\end{align}
and therefore $f \in \widetilde{\cB}_L$. Furthermore, for each $\vx \in \Omega$, we have
\begin{align}
    |f(\vx) - f_\epsilon(\vx)| &= \left| \int_{\cB_{L-1} \times [0,\infty]} \sigma_s(g(  \vx))  \ d \mu(s,g) - \int_{\cB_{L-1} \times [0,\infty]} \sum_{i=1}^n \mathbbm{1}_{\cA_i}(g) \sigma_{s_i}(g_i(\vx)) \ d \mu(s,g)   \right| \\
    &= \left| \sum_{i=1}^n \int_{\cA_i} \sigma_s(g(\vx)) - \sigma_{s_i}(g_i(\vx)) \ d \mu(s,g) \right| \\
    &\leq \sum_{i=1}^n \int_{\cA_i} \left| \sigma_s(g(\vx)) - \sigma_{s_i}(g_i(\vx)) \right| \ d |\mu|(g) \\
    &\leq \sum_{i=1}^n \epsilon |\mu|(\cA_i) \leq \epsilon \| \mu \|_{\TV} \leq \epsilon.
\end{align}
Here we have used the fact that $\max\{ \| g - g_i \|_\infty, |s - s_i| \} \leq \delta$, and therefore $|\sigma_s(g(\vx)) - \sigma_{s_i}(g_i(\vx))| \leq \epsilon$. The above holds for any $\vx \in \Omega$, so it also holds in the supremum over all $\vx \in \Omega$, which shows that
\begin{align}
    \| f - f_\epsilon \|_\infty \leq \epsilon.
\end{align}
Since $\epsilon$ was arbitrary, we can repeat the above process for a sequence $\epsilon_i \downarrow 0$, generating a sequence of functions $f_{\epsilon_i} \in \widetilde{\cB}_L$ which converge uniformly to $f$. This shows that $f \in \cB_L$.
\end{proof}

\subsection{Proof of \cref{lemma:relu_nonrelu_space_equivalence}} \label{appendix:proof_relu_nonrelu_space_equivalence}
\begin{proof}
    Let $C_1 := \sup_{f \in \cB_1} \| f \|_\infty$. The stated assumptions on $\cW$ and $\cB$ imply that, if $A_1$ is any constant satisfying
    \begin{align} \label{eq:A1_bounds}
        A_1 \geq \max \{ 1+ C_1/C_{\cB}, 1+C_1/C_{\cB} \},
    \end{align}
    then the affine functions $f_{\vw,b}(\vx) := \vw^\top \vx + b$ satisfy
    \begin{align} \label{eq:rescaled_linear_containment}
            f_{\vw,b} - u = f_{\vw,b-u} \in A_1 \cB_1
    \end{align}
    for all $\vw \in \cW$, all $b \in \cB$, and all $|u| \leq C_1$. This holds because the above bound on $A_1$ ensures that $\cW \subset A_1 \cW$ and $\{ b-u: b \in \cB, |u| \leq C_1 \} \subset [-C_{\cB} - C_1, C_{\cB} + C_1] \subset A_1 \cB$. The assumptions on $\cW$ and $\cB$ also imply that both $\cB_2^{\ReLU}$ and $\cB_2^{|\cdot|}$ contain the constant functions $f(\vx) = c = (\bm{0}^\top \vx + c)_+ = | \bm{0}^\top \vx + c | $ for all $0 \leq c \leq C_{\cB}$. 
    
    Additionally, the proof utilizes several facts relating to functions of finite total variation and associated integral representations using distributional/weak derivatives. These same properties are utilized in the proof of \cref{lemma:uni_f_V2_bound}, wherein they are presented in more detail: see \cref{appendix:proof_uni_f_V2_bound}.
    
    \paragraph{Integral representation of $\sigma_s$ in terms of its derivatives.} First note that all of the activations $\sigma$ in the theorem statement have second distributional derivative $D^2 \sigma$ which is a finite Radon measures on $\R$. For the piecewise linear activations Leaky ReLU, absolute value, hard tanh, and centered hard sigmoid, this holds because $\sigma'$ is piecewise constant with finitely many jumps. For GELU, SiLU/Swish, Mish, centered softplus, and bent identity, it holds because $\sigma$ is twice continuously differentiable with $\int_\R |\sigma''| < \infty$. The piecewise smooth activations ELU, SELU, and CELU are twice continuously differentiable on the respective smooth pieces, with finitely many jumps between the smooth pieces; as a result, the corresponding values of $V_\R(\sigma')$ are given by the integrals of $|\sigma''|$ on the smooth pieces plus the magnitudes of any jumps of $\sigma'$ between these pieces. Therefore the normalized activations $\sigma_s(t) = \sigma(st)/s$ have an a.e. first derivative $\sigma_s'(t) = \sigma'(st)$, and
    \begin{align} \label{eq:sigma_s_var_bound}
        \| D^2 \sigma_s  \|_{\TV([-C,C])} = \| D^2 \sigma\|_{\TV([-sC, sC])} \leq \| D^2 \sigma \|_{\TV(\bR)} < \infty 
    \end{align}
    for any $C > 0$. As in the proof of \cref{lemma:uni_f_V2_bound}, this implies that $\sigma_s$ admits an integral representation
    \begin{align} \label{eq:sigma_s_int_rep}
        \sigma_s(t) = \sigma_s(-C) + \sigma_s'(-C^+) (t+C) + \int_{-C}^C (t-u)_+ \ d (D^2 \sigma_s)(u)
    \end{align}
    for every $t \in [-C,C]$. Furthermore, by \cref{tab:activation_summary}, all activations satisfy
    \begin{align} \label{eq:sigma_s_deriv_bound}
        |\sigma_s'(-C^+)| \leq \rho_\sigma := \sup_{s > 0} \Lip(\sigma_s) < \infty
    \end{align}
    and
    \begin{align} \label{eq:sigma_s_endpoint_bound}
        |\sigma_s(-C)| \leq \rho_\sigma C + |\sigma_s(0)| = \rho_\sigma C.
    \end{align}

    \paragraph{Upper bound on $\| f \|_{\cV_2^{\ReLU}}$ (base case).} Let $\sigma$ be any of the activations in the theorem statement. Fix an affine function $f_{\vw,b}(\vx) := \vw^\top \vx + b$ in $\cB_1$ and some $s > 0$. By \eqref{eq:sigma_s_int_rep} and the fact that $C_1 := \sup_{f \in \cB_1} \| f \|_\infty < \infty$ (see proof of \cref{prop:Lp_C_representative}), the composition $\sigma_s \circ f_{\vw,b}$ is represented pointwise as
    \begin{align}
        \sigma_s(f_{\vw,b}(\vx)) &= \sigma_s(-C_1) + \sigma_s'(-C_1^+) ( f_{\vw,b}(\vx) +C_1) + \int_{-C_1}^{C_1} (f_{\vw,b}(\vx)-u)_+ \ d (D^2 \sigma_s)(u) \\
        &= \underbrace{\sigma_s(-C_1) + \sigma_s'(-C_1^+) A_1 \left( \frac{f_{\vw,b}(\vx) +C_1}{A_1} \right)_+}_{\sigma_\mathrm{aff}} + A_1 \underbrace{ \int_{-C_1}^{C_1} \left( \frac{f_{\vw,b}(\vx)-u}{A_1} \right)_+ \ d (D^2 \sigma_s)(u)}_{\sigma_\mathrm{int}} 
    \end{align}
    for all $\vx \in \Omega$. The second line above uses nonnegativity of the function $f_{\vw,b} + C_1$ on $\Omega$, which implies that $f_{\vw,b} + C_1 = (f_{\vw,b} + C_1)_+$, as well as homogeneity of the ReLU. By the discussion at the beginning of the proof, the integral term $\sigma_\mathrm{int}$ is exactly a $\cV_2^{\ReLU}$ integral representation of the form \eqref{eq:pointwise_integral_rep_homogeneous_shallow} with respect to the finite Radon measure $D^2 \sigma_s$. Furthermore, because $\cB_2^{\ReLU}$ contains the nonzero constant function $C_{\cB}$, any constant function $c = (c/C_{\cB}) C_{\cB}$ has $\cV_2^{\ReLU}$ norm upper bounded by $|c/C_{\cB}|$. Therefore, the affine term $\sigma_{\mathrm{aff}}$ has
    \begin{align}
        \| \sigma_{\mathrm{aff}} \|_{\cV_2^{\ReLU}} \leq \left| \frac{\sigma_s(-C_1)}{u_0} \right| + |\sigma_s'(-C_1^+) A_1|.
    \end{align}
    Combining the above with \eqref{eq:sigma_s_var_bound}, \eqref{eq:sigma_s_deriv_bound}, and \eqref{eq:sigma_s_endpoint_bound}, we have
    \begin{align} \label{eq:sigma_s_f_w_b_bound}
        \| \sigma_s \circ f_{\vw,b} \|_{\cV_2^{\ReLU}} &\leq \left| \frac{\sigma_s(-C_1)}{u_0} \right| + |\sigma_s'(-C_1^+) A_1| + A_1 \| D^2 \sigma_s  \|_{\TV([-C_1, C_1])} \\
        &\leq \frac{\rho_\sigma C_1}{|u_0|} + A_1 \left( \rho_\sigma + \| D^2 \sigma \|_{\TV(\R)} \right) =: \beta_{2,\sigma}.
    \end{align}
    This implies that
    \begin{align}
        \{ \sigma_s \circ f: s > 0, f \in \cB_1 \} \subset \beta_{2,\sigma} \cB_2^{\ReLU}.
    \end{align}
    By the triangle inequality and the fact that $\cB_2^{\ReLU}$ is itself closed, we have
    \begin{align}
        \cB_2^\sigma \subset \beta_{2,\sigma} \cB_2^{\ReLU}. 
    \end{align}
    This containment relationship says that, for any function $f$, we have $f \in c \beta_{2,\sigma} \cB_2^{\ReLU}$ (or equivalently, $f/\beta_{2,\sigma} \in c \cB_2^{\ReLU}$) whenever $f \in c  \cB_2^\sigma$. By definition, we have $f \in \| f \|_{\cV_2^\sigma} \cB_2^\sigma$, and therefore $f / \beta_{2,\sigma} \in \| f \|_{\cV_2^\sigma} \cB_2^{\ReLU}$. Therefore:
    \begin{align}
        \| f / \beta_{2,\sigma} \|_{\cV_2^{\ReLU}} \leq \| f \|_{\cV_2^\sigma} \implies \| f \|_{\cV_2^{\ReLU}} \leq \beta_{2,\sigma} \| f \|_{\cV_2^\sigma}.
    \end{align}

    \paragraph{Upper bound on $\| f \|_{\cV_L^{\ReLU}}$ for $L > 2$ (inductive step).} Again let $\sigma$ be any of the activations in the theorem statement, and fix some $f \in \cB_L^\sigma$ and some $s > 0$. Let $C_L := \sup_{f \in \cB_L^\sigma} \| f \|_\infty$. \eqref{eq:sigma_s_int_rep} then implies that
    \begin{align} \label{eq:sigma_s_int_rep_f}
        \sigma_s(f(\vx)) &= \sigma_s(-C_L) + \sigma_s'(-C_L^+)(f(\vx) + C_L) + \int_{-C_L}^{C_L} (f(\vx) - u)_+ \ d(D^2 \sigma_s)(u) \\
        &= \underbrace{\sigma_s(-C_L) + \sigma_s'(-C_L^+) C_L}_{\sigma_\mathrm{const}} + \underbrace{\sigma_s'(-C_L^+) f(\vx)}_{\sigma_\mathrm{aff}} + \underbrace{\int_{-C_L}^{C_L} (f(\vx) - u)_+ \, d(D^2 \sigma_s)(u)}_{\sigma_\mathrm{int}}.
    \end{align} 
    As in the base case, \eqref{eq:sigma_s_endpoint_bound} and \eqref{eq:sigma_s_deriv_bound} imply that
    \begin{align} \label{eq:sigma_const_upper_bound_inductive}
        \| \sigma_\mathrm{const} \|_{\cV_{L+1}^{\ReLU} } \leq \| \sigma_\mathrm{const} \|_{\cV_2^{\ReLU}} \leq \left| \frac{\sigma_s(-C_L) + \sigma_s'(-C_L^+) C_L}{C_{\cB}} \right| \leq  \frac{2 \rho_\sigma C_L }{|C_{\cB}|} 
    \end{align}
    for any constant function $u_0 \in \cB_2$. Here we have used the fact that the ReLU balls are nested as $\cB_2^{\ReLU} \subset \cB_3^{\ReLU} \subset \dots$ (see \cref{prop:relu_nested_containment_entropy_lb}), which implies that $\| \cdot \|_{\cV_L^{\ReLU}} \leq \| \cdot \|_{\cV_{L'}^{\ReLU}}$ for any $2 \leq L' \leq L$. By this same inequality and the inductive hypothesis, the $\cV_{L+1}^{\ReLU}$ norm of $\sigma_{\mathrm{aff}}$ can be bounded as
    \begin{align} \label{eq:sigma_aff_upper_bound_inductive}
        \| \sigma_{\mathrm{aff}} \|_{\cV_{L+1}^{\ReLU}} \leq \| \sigma_{\mathrm{aff}} \|_{\cV_L^{\ReLU}} = |\sigma_s'(-C_L^+)| \| f \|_{\cV_L^{\ReLU}} \leq |\sigma_s'(-C_L^+)| B_{L,\sigma} \| f \|_{\cV_L^\sigma} = |\sigma_s'(-C_L^+)|  B_{L,\sigma}.
    \end{align}
    For $\sigma_{\mathrm{int}}$, observe that 
    \begin{align} 
        \| ( f - u )_+ \|_{\cV_{L+1}^{\ReLU}} \leq \| f - u \|_{\cV_L^{\ReLU}} \leq \| f \|_{\cV_L^{\ReLU}} + \| u \|_{\cV_2^{\ReLU}} \leq  B_{L,\sigma} + |u/u_0|.
    \end{align}
    Here, in addition to the previously stated norm inequality, we have used positive homogeneity of the ReLU: for any $f \in \cV_L^{\ReLU}$, its positive part $(f)_+ = \| f \|_{\cV_L^{\ReLU}} (f/\| f \|_{\cV_L^{\ReLU}})_+$ is in $\| f \|_{\cV_L^{\ReLU}} \cB_{L+1}$, and therefore $\| (f)_+ \|_{\cV_{L+1}^{\ReLU}} \leq \| f \|_{\cV_{L}^{\ReLU}}$. Along with \eqref{eq:sigma_s_var_bound}, this implies that
    \begin{align} \label{eq:sigma_int_upper_bound_inductive}
        \| \sigma_{\mathrm{int}} \|_{\cV_{L+1}^{\ReLU}} \leq \int_{-C_L}^{C_L} \| (f - u)_+ \|_{\cV_{L+1}^{\ReLU}} \, d|D^2 \sigma_s|(u) \leq \left( B_{L,\sigma} + \frac{C_L}{|u_0|} \right) \| D^2 \sigma_s \|_{\TV([-C_L, C_L])}. 
    \end{align}
    Combining \eqref{eq:sigma_const_upper_bound_inductive}, \eqref{eq:sigma_aff_upper_bound_inductive}, and \eqref{eq:sigma_int_upper_bound_inductive}, along with \eqref{eq:sigma_s_var_bound} and \eqref{eq:sigma_s_endpoint_bound}, we find that
    \begin{align}
        \| \sigma_s \circ f \|_{\cV_{L+1}^{\ReLU}} &\leq \frac{2 \rho_\sigma C_L}{ |u_0|} + B_{L,\sigma} \left( |\sigma_s'(-C_L^+)| + \| D^2 \sigma_s \|_{\TV([-C_L, C_L])} \right) + \frac{C_L}{|u_0|} \| D^2 \sigma_s \|_{\TV([-C_L, C_L])} \\
        &\leq \frac{2 \rho_\sigma C_L}{ |u_0|} + B_{L,\sigma} \left( |\sigma_s'(-\infty)| + \| D^2 \sigma \|_{\TV(\R)} \right) + \frac{C_L}{|u_0|} \| D^2 \sigma \|_{\TV(\R)}.
    \end{align}
    GELU, SiLU/Swish, Mish, ELU, SELU, CELU, and centered softplus all satisfy $\sigma_s'(-\infty) = 0$ for $s > 0$. Therefore, in these cases, $\| \sigma_s \circ f \|_{\cV_{L+1}^{\ReLU}}$ is upper bounded by
    \begin{align} \label{eq:beta_sigma_inf_0}
        B_{L+1,\sigma} := \frac{2 \rho_\sigma C_L}{ |u_0|} + B_{L,\sigma} \| D^2 \sigma \|_{\TV(\R)}  + \frac{C_L}{|u_0|} \| D^2 \sigma \|_{\TV(\R)}.
    \end{align}
    For Leaky ReLU, absolute value, and bent identity, it is possible to use $|\sigma_s(-\infty)| \leq \rho_\sigma$ at this stage to obtain an $s$-independent bound on $\| \sigma_s \circ f \|_{\cV_{L+1}^{\ReLU}}$. However, as we will discuss subsequently, better bounds $B_{L+1,\sigma}$ on $\| \sigma_s \circ f \|_{\cV_{L+1}^{\ReLU}}$ are available in each of these cases. Given any such $B_{L+1,\sigma}$, we have
    \begin{align}
        \cB_{L+1}^\sigma \subset B_{L+1,\sigma} \cB_{L+1}^{\ReLU}.
    \end{align}
    As in the base case, this implies that
    \begin{align}
        \| f \|_{\cV_{L+1}^{\ReLU}} \leq B_{L+1,\sigma} \| f \|_{\cV_{L+1}^{\sigma}}.
    \end{align}
    for all functions $f$.
    \paragraph{Depth-dependence of the constants $B_{L,\sigma}$.}
    
    GELU, SiLU/Swish, Mish, ELU, CELU, and centered softplus satisfy $|\sigma(t)| \leq |t|$, and therefore $|\sigma_s(t)| \leq |st|/|s| = |t|$. For these activations, $C_L \leq C_1$ for all $L \geq 1$. Therefore, in these cases, \eqref{eq:beta_sigma_inf_0} is upper bounded as 
    \begin{align}
        B_{L+1,\sigma} \leq  B_{L,\sigma} \| D^2 \sigma \|_{\TV(\R)} + C_\sigma
    \end{align}
    where $C_\sigma > 0$ is a constant dependent on the individual activation $\sigma$, but independent of depth $L$. Unwinding this recursion yields
    \begin{align}
        B_{L+1,\sigma} \leq \beta_{2,\sigma} \| D^2 \sigma \|_{\TV(\R)}^{L-1} + C_\sigma \sum_{j=0}^{L-2} \| D^2 \sigma \|_{\TV(\R)}^j.
    \end{align}
    For GELU, SiLU/Swish, and Mish, $\| D^2 \sigma \|_{\TV(\R)}$ is approximately 1.516, 1.400, and 1.403, respectively. In these cases, $B_{L+1,\sigma} = \cO(\| D^2 \sigma \|_{\TV(\R)}^L  )$. For ELU, CELU, and centered softplus, $\| D^2 \sigma \|_{\TV(\R)} = 1$ and therefore $B_{L+1,\sigma} = \cO(L)$.

    The bent identity activation does not satisfy $C_L \leq C_1$, but it does satisfy the generic bound $C_L \leq \rho_\sigma^{L-1}$. (This is obtained by applying \eqref{eq:sigma_s_endpoint_bound} inductively with depth.) Furthermore, bent identity has $\| D^2 \sigma \|_{\TV(\R)} = \rho_\sigma = 3/2$, so \eqref{eq:beta_sigma_inf_0} is upper bounded in this case as 
    \begin{align}
        B_{L+1,\sigma} \leq  \frac{3}{2} B_{L,\sigma}  + C_\sigma \left( \frac{3}{2} \right)^{L-1}
    \end{align}
    for some constant $C_\sigma > 0$. In this case, the unwound recursion is
    \begin{align}
        B_{L+1,\sigma} \leq  \frac{3}{2}^{L-2} \beta_{2,\sigma}  + (L-1) C_\sigma \left( \frac{3}{2} \right)^{L-1} = \cO \left( L \left( \frac{3}{2} \right)^L \right).
    \end{align}

    For the homogeneous Leaky ReLU and absolute value activation functions, we employ separate arguments to obtain better bounds. For Leaky ReLU with $0 < \alpha < 1$, we will use the identities
    \begin{align} \label{eq:leaky_relu_identities}
        \LeakyReLU_\alpha(t) = \alpha t + (1-\alpha)(t)_+ = (t)_+ - \alpha(-t)_+.
    \end{align}
    The first identity in \eqref{eq:leaky_relu_identities} shows that
    \begin{align}
        \LeakyReLU_\alpha \circ f = (f)_+ - \alpha(-f)_+ \in (1+\alpha) \cB_{2}^{\ReLU}.
    \end{align}
    for every $f \in \cB_1$. Here we have used the fact that $\cW$ and $\cB$ (and hence $\cB_1$) are symmetric. Because $\cB_2^{\ReLU}$ is absolutely convex and closed, this implies that
    \begin{align}
        \cB_2^{\LeakyReLU_\alpha} \subset (1+\alpha) \cB_2^{\ReLU}.
    \end{align}
    Now assume inductively that
    \begin{align}
        \cB_L^{\LeakyReLU_\alpha} \subset (1+\alpha) \cB_L^{\ReLU}
    \end{align}
    for all $L \geq 2$. Using the second identity in \eqref{eq:leaky_relu_identities}, we have
    \begin{align}
        \LeakyReLU_\alpha \circ f &= \alpha f + (1-\alpha) (f)_+ \\
        & = (1+\alpha) \LeakyReLU_\alpha \circ \left( \frac{f}{1+\alpha} \right) = (1+\alpha) \left( \alpha \left( \frac{f}{1+\alpha} \right) + (1-\alpha) \left( \frac{f}{1+\alpha}  \right)_+ \right).
    \end{align}
    for every $f \in \cB_L^{\LeakyReLU_\alpha}$. The second line above uses 1-homogeneity of both the Leaky ReLU and ReLU. By the inductive hypothesis and \cref{prop:relu_nested_containment_entropy_lb}, the function $f/(1+\alpha)$ is in $\cB_L^{\ReLU} \subset \cB_{L+1}^{\ReLU}$. Therefore:
    \begin{align}
         \LeakyReLU_\alpha \circ f &= (1+\alpha) \left( \alpha \left( \frac{f}{1+\alpha} \right) + (1-\alpha) \left( \frac{f}{1+\alpha}  \right)_+ \right) \in (1+\alpha) \cB_{L+1}^{\ReLU}. 
    \end{align}
    As above, this implies that
    \begin{align}
        \cB_{L+1}^{\LeakyReLU_\alpha} \subset (1+\alpha) \cB_L^{\ReLU}.
    \end{align}
    Therefore, the constant $\cB_{L, \LeakyReLU_\alpha} = 1+\alpha$ in the Leaky ReLU case is completely independent of depth.

    Finally, for the absolute value activation $|\cdot|$ (which is also homogeneous), we use the identity
    \begin{align}
        |f| = (f)_+ + (-f)_+,
    \end{align}
    which holds for any function $f$. This identity shows that
    \begin{align}
        \cB_2^{|\cdot|} \subset 2 \cB_2^{\ReLU}.
    \end{align}
    Assuming inductively that $\cB_L^{|\cdot|} \subset 2^{L-1} \cB_L^{\ReLU}$ for $L \geq 2$, we have
    \begin{align}
        |f| = (f)_+ + (-f)_+ \in 2^L \cB_{L+1}^{\ReLU}
    \end{align}
    for any $f \in \cB_L^{|\cdot|}$. Therefore
    \begin{align}
        \cB_{L+1}^{|\cdot|} \subset 2^L \cB_{L+1}^{\ReLU},
    \end{align}
    so we may take $B_{L,|\cdot|} = 2^{L-1} = \cO(2^L)$.

    \paragraph{Lower bound on $\| f \|_{\cV_L^{\ReLU}}$.} GELU, SiLU/Swish, Mish, ELU, CELU, and centered softplus all have $\sigma_\infty = \ReLU$, so in these cases $\cB_L^{\ReLU} \subset \cB_L^\sigma$. As shown in the proof of \cref{prop:Lp_C_representative}, the functions $\sigma_\infty \circ f$ and $\sigma_0 \circ f$ are in $\cB_{L+1}^{\sigma}$ for all $f \in \cB_L^\sigma$. Therefore, in all of these cases, we may take $A_{L,\sigma} = 1$. SELU has $\sigma_\infty = \lambda \ReLU$, so $\lambda^{L-1} \cB_L^{\ReLU} \subset \cB_L^{\SELU}$. As above, this implies that $\lambda^{L-1} \| \cdot \|_{\cV_L^{\SELU}} \leq \| \cdot \|_{\cV_L^{\ReLU}}$. For the bent identity, we have 
    \begin{align}
        (t)_+ = \frac{3}{2}t \mathbbm{1}_{t \geq 0} + \frac{1}{2} t \mathbbm{1}_{t < 0} - \frac{1}{2} t = \sigma_\infty - \sigma_0/2.
    \end{align}
    Applying this inductively, we find that
    \begin{align}
        \cB_L^{\ReLU} \subset \left( \frac{3}{2} \right)^{L-1} \cB_L^{\mathrm{BentIdentity}}.
    \end{align}
    Similarly, for Leaky ReLU, writing
    \begin{align}
        (t)_+ = \frac{\LeakyReLU_\alpha(t) + \alpha \LeakyReLU_\alpha(-t)}{1-\alpha^2}
    \end{align}
    shows that
    \begin{align}
        \cB_L^{\ReLU} \subset \left( \frac{1+\alpha}{1 - \alpha^2} \right)^{L-1} \cB_L^{\LeakyReLU_\alpha} = \left( \frac{1}{1 - \alpha} \right)^{L-1} \cB_L^{\LeakyReLU_a}
    \end{align}
    so we can take $A_{L,\LeakyReLU_a} = (1-\alpha)^{L-1}$.
    
    Finally, for absolute value, we will use the identities
    \begin{align} \label{eq:abs_val_identities}
        (t)_+ = \frac{|t| + t}{2}, \ \forall t \in \R, \qquad \textrm{and} \qquad t = \frac{|t+c|-|t-c|}{2}, \ \forall |t| \leq c.
    \end{align}
    Denote
    \begin{align}
        M := \sup_{\vw \in \cW, \vx \in \Omega} | \vw^\top \vx|, \qquad \lambda := \min \left\{ 1, \frac{C_{\cB}}{M} \right\}.
    \end{align}
    Then, for any $f_{\vw,b}(\vx) := \vw^\top \vx + b \in \cB_1$, the linear part $\vw^\top \vx$ can be represented using the second identity in \eqref{eq:abs_val_identities} as
    \begin{align} \label{eq:linear_part_abs_rep}
        \vw^\top \vx = \frac{1}{\lambda} (\lambda \vw^\top \vx) = \frac{1}{2 \lambda} \left( | \lambda \vw^\top \vx + \lambda M| -  | \lambda \vw^\top \vx - \lambda M|\right) \in \frac{1}{\lambda} \cB_2^{|\cdot|}.
    \end{align}
    This holds because  $\lambda \vw^\top \vx \leq \lambda M$, $\lambda M = C_{\cB}$, and $\lambda \vw \in \cW$ (since that $\lambda \leq 1$). Moreover, the constant function $b$ is in $\cB_1$, and thus it is also representable in $\cB_2^{|\cdot|}$ as $b = \sgn(b) |b|$. Therefore:
    \begin{align}
        f_{\vw,b} \in \left( \frac{1}{\lambda} + 1 \right)\cB_2^{|\cdot|}.
    \end{align}
    Using the second identity in \eqref{eq:abs_val_identities}, this implies that
    \begin{align}
        (f_{\vw,b})_+ \in \frac{1}{2} \left(1 + \frac{1}{\lambda} + 1 \right) \cB_2^{|\cdot|} = \left( 1 + \frac{1}{2 \lambda} \right) \cB_2^{|\cdot|}.
    \end{align}
    This holds for all $f_{\vw,b} \in \cB_1$, so
    \begin{align}
        \cB_2^{\ReLU} \subset \left( 1 + \frac{1}{2 \lambda} \right) \cB_2^{|\cdot|}. 
    \end{align}
    Now, assume inductively that $\cB_L^{\ReLU} \subset \left(1 + \frac{1}{2 \lambda} \right) \cB_L^{|\cdot|}$ for some $L > 2$. Because the absolute value is also idempotent, the proof of \cref{prop:relu_nested_containment_entropy_lb} implies that the absolute value balls are also nested as
    \begin{align}
        \cB_2^{|\cdot|} \subset \cB_3^{|\cdot|} \subset \dots.
    \end{align}
    Therefore, the first identity in \eqref{eq:abs_val_identities} implies that
    \begin{align}
        (f)_+ = \frac{|f| + f}{2} \in \cB_{L+1}^{|\cdot|} \implies \cB_{L+1}^{\ReLU} \subset \left(1 + \frac{1}{2 \lambda} \right) \cB_{L+1}^{|\cdot|}.
    \end{align}
    Therefore, we can take the constant
    \begin{align}
        A_{L, |\cdot|} = \left( 1 + \frac{1}{2 \lambda} \right)^{-1}
    \end{align}
    to be independent of depth $L$.
\end{proof}

\subsection{Proof of \cref{th:rep_theorem}} \label{appendix:rep_theorem_proof}
\begin{proof}
    We break the proof into the following steps. 
    \paragraph{Existence of solutions to \eqref{opt:reg_loss}.} By \cref{lemma:integral_rep}, problem \eqref{opt:reg_loss} can be expressed as
    \begin{equation} \label{opt:reg_loss_mu}
        \min_{\mu \in \cM(\cB_{L-1} \times [0,\infty])} \sum_{i=1}^N \cL \left(y_i, \int_{\cB_{L-1} \times [0,\infty]} \sigma_s(g(\vx_i)) \ d \mu(s,g) \right) + \lambda \| \mu \|_{\TV}
    \end{equation}
    Any solution to \eqref{opt:reg_loss} must also solve
    \begin{equation} \label{opt:reg_loss_mu_constrained}
     \min_{\mu \in \cM(\cB_{L-1} \times [0,\infty])}  G(\mu) + \lambda \| \mu \|_{\TV} \ , \ \mbox{subject to } \| \mu \|_{\TV} \leq C_0/\lambda
    \end{equation}
    where $G$ denotes the data-fitting term in the objective functional of \eqref{opt:reg_loss_mu} and $C_0 := G(\mu_0) + \lambda \| \mu_0 \|_{\TV}$ for an arbitrary $\mu_0 \in \cM(\cB_{L-1} \times [0,\infty])$. This holds because any $\mu$ with $\| \mu \|_{\TV } > C_0/\lambda$ has
    \begin{align}
        G(\mu) + \lambda \| \mu \|_{\TV} \geq \lambda \| \mu \|_{\TV} > C_0
    \end{align}
    and thus cannot solve \eqref{opt:reg_loss_mu}.

    As in the proof of \cref{lemma:integral_rep} we have the identification 
    \begin{align}
        \cM(\cB_{L-1} \times [0,\infty]) \cong C(\cB_{L-1} \times [0,\infty])^*
    \end{align}
    via the Riesz-Markov-Kakutani representation theorem (\cite{folland1999real}, Theorem 7.17). Therefore, by the Banach-Alaoglu theorem (\cite{folland1999real}, Theorem 5.18), the feasible set of \eqref{opt:reg_loss_mu_constrained} is weak* compact. Also recall from the proof of \cref{lemma:integral_rep} that the functionals $(g,s) \mapsto \sigma_s(g(\vx))$ are continuous for all $\vx \in \Omega$. By Riesz-Markov-Kakutani and Banach-Alaoglu, this implies that the functionals 
    \begin{align} \label{eq:G_i}
        G_i (\mu) := \int_{\cB_{L-1} \times [0,\infty]} \sigma_s(g(\vx_i)) \ d \mu(s,g), \qquad i = 1, \dots, N
    \end{align}
    are weak* continuous on $\cM(\cB_{L-1} \times [0,\infty])$. Combined with the assumption that $\cL$ is lower semicontinuous in its second argument and the fact that any norm on a dual Banach space is weak* lower semicontinuous, we see that the objective functional $G$ of  \eqref{opt:reg_loss_mu_constrained} is a weak* lower semicontinuous functional, and the constraint set of the optimization is weak* compact. The Weierstrass extreme value theorem for general topological spaces (\cite{aliprantis2006infinite}, Theorem 2.41) thus implies existence of a solution to \eqref{opt:reg_loss_mu_constrained}, hence to \eqref{opt:reg_loss_mu} and \eqref{opt:reg_loss}.

    \paragraph{Existence of solutions to \eqref{opt:interp}.} 
    First, we argue that the interpolation constraint in problem \eqref{opt:interp} is feasible. All activations in \cref{tab:activation_summary} except the linear unit $\sigma(t) = t$ are continuous and nonpolynomial, so by Theorem 5.1 in \cite{pinkus1999approximation}, there is always some shallow network in $\cV_2$ which fits the data. For $\ReLU$ and $\ReLU^m$, \cref{prop:relu_nested_containment_entropy_lb,prop:relu_m_nested_containment_entropy_lb} imply that this shallow interpolating network is also in $\cV_L$. By \cref{lemma:relu_nonrelu_space_equivalence}, this shallow interpolating ReLU network must also be contained in the deep spaces $\cV_L$ associated with the Leaky ReLU, GELU, SiLU/Swish, Mish, ELU, SELU, CELU, centered softplus, absolute value, and bent identity. For all remaining activation functions in \cref{tab:activation_summary}, the limiting normalized activations $\sigma_0$ are linear, so any shallow interpolating network for any of these activations is also included in the corresponding deep space for that activation. Therefore, in all cases, interpolation of the given data with some $\cV_L$ function is possible.

    Having shown that the interpolation constraint in \eqref{opt:interp} is feasible, we may recast \eqref{opt:interp} as
    \begin{equation} \label{opt:interp_mu}
     \min_{\mu \in \cM(\cB_{L-1} \times [0,\infty])}  \| \mu \|_{\TV} \ , \ \mbox{subject to } \| \mu \|_{\TV} \leq C_0, \  G_i(\mu) = y_i, \  i = 1, \dots, N
    \end{equation}
    where $C_0 = \| \mu_0 \|_{\TV}$ for some arbitrary $\mu_0 \in \cM(\cB_{L-1} \times [0,\infty])$ satisfying $G_i(\mu_0) = y_i$ for $i = 1, \dots, N$. The preimage of the closed singleton set $\{ y_i \}$ under the weak* continuous map $G_i$ is weak* closed. The intersection of these weak* closed sets for $i = 1, \dots, N$ is weak* closed. The feasible set of \eqref{opt:interp_mu} is the intersection of this weak* closed set with the weak* compact (by Banach-Alaoglu) set $\{ \mu: \| \mu \|_{\TV} \leq C_0 \}$, and thus is weak* compact. Because norms on dual spaces are weak* lower semicontinuous, the Weierstrass extreme value theorem again implies existence of solutions to \eqref{opt:interp_mu} and thus to \eqref{opt:interp}.
    
    \paragraph{Solutions to \eqref{opt:reg_loss} and \eqref{opt:interp} are finite linear combinations.} Let $\tilde{\mu}$ be a measure which solves either \eqref{opt:reg_loss_mu} or \eqref{opt:interp_mu}. This $\tilde{\mu}$ must also solve
    \begin{equation} \label{opt:interp_mu_y_tilde}
     \min_{\mu \in \cM(\cB_{L-1} \times [0,\infty])}  \| \mu \|_{\TV} \ , \ \mbox{subject to } \| \mu \|_{\TV} \leq \| \tilde{\mu} \|_{\TV}, \   G_i(\mu) = G_i(\tilde{\mu}), \  i = 1, \dots, N.
    \end{equation}
    If this were not true, there would be some other measure achieving the same data-fitting loss/satisfying the interpolation constraint with smaller total variation norm than $\tilde{\mu}$, contradicting optimality of $\tilde{\mu}$. For the same reason, there are no measures which satisfy the equality constraints of \eqref{opt:interp_mu_y_tilde} with total variation norm strictly less than $\| \mu \|_{\TV}$. Therefore, the solution set $\cS^*$ of \eqref{opt:interp_mu_y_tilde} is exactly its feasible set, and any solution to \eqref{opt:interp_mu_y_tilde} also solves the original problem \eqref{opt:reg_loss_mu} or \eqref{opt:interp_mu}. As argued above for \eqref{opt:interp_mu}, $\cS^*$ is weak* compact, and it is also convex by convexity of the total variation norm $\| \cdot \|_{\TV}$. Because dual Banach spaces are locally convex with respect to the weak* topology (\cite{rudin1991functional}, Theorem 3.1 and discussion on p. 68), the Krein-Milman theorem (\cite{aliprantis2006infinite}, Theorem 4.103) implies that $\cS^*$ is the the weak* closed convex hull of its extreme points.

    Let $\mu$ be any extreme point of $\cS^*$. We claim that $\mu = \sum_{i=1}^N c_i \delta_{s_i,g_i}$ for some $s_1, \dots, s_N \in [0,\infty]$ and $g_1, \dots, g_N \in \cB_{L-1}$. By \cref{lemma:integral_rep}, this claim implies that the function 
    \begin{align}
        f(\vx) = \sum_{i=1}^N c_i \sigma_{s_i} (g_i(\vx))
    \end{align}
    solves the original problem \eqref{opt:reg_loss} or \eqref{opt:interp}. 
    
    To prove the claim, it suffices to prove that there are at most $N$ disjoint Borel subsets of $\cB_{L-1} \times [0,\infty]$ with nonzero $\mu$-measure,\footnote{In more detail: if $\cG_1, \dots, \cG_n$ (with $n \leq N$) is a maximal collection of disjoint Borel sets with nonzero $\mu$-measure, then each $\cG_i$ is an \textit{atom}, i.e., $\mu(\cG_i) \neq 0$ and every Borel subset $\cA$ of each $\cG_i$ has either $\mu(\cA) = 0$ or $\mu(\cG_i \setminus \cA) = 0$. (Otherwise, there would be some $\cA \subset \cG_i$ with $\mu(\cA) \neq 0$ and $\mu(\cG_i \setminus \cA) \neq 0$, yielding $n+1$ disjoint Borel sets with nonzero $\mu$-measure.) Because the atoms of $\mu$ coincide with those of $|\mu|$ (\cite{kadets2018course}, p. 184, Exercise 3) and $\cB_{L-1} \times [0,\infty]$ is a compact, each atom $\cG_i$ has $|\mu|(\cG_i \Delta \{ g_i \}) = |\mu|(\cG_i  \setminus \{ g_i \}) + |\mu|(\{g_i\} \setminus \cG_i)  = 0$ for some $g_i \in \cB_{L-1}$ (\cite{kadets2018course}, p. 45, Theorem 2). This fact, along with the identity $\cA = (\cA \cap \cB) \cup (\cA \setminus \cB)$, implies that $|\mu|(\cG_i) = |\mu|(\cG_i \cap \{ g_i \}) = |\mu|(\{ g_i \} ) > 0$. For any Borel set $\cA$, we thus have
    \begin{equation}
        |\mu|(\cA) = \sum_{i=1}^n |\mu|(\cA \cap \cG_i) = \sum_{i=1}^n |\mu|(\cA \cap \{ g_i \}) = \sum_{i=1}^n |\mu|(\{g_i\}) \mathbbm{1}_{g_i \in \cA} = \sum_{i=1}^n |\mu|(\{g_i\}) \delta_{g_i}(\cA)
    \end{equation}
    Finally, because $\mu \ll |\mu|$, any Borel set $\cA$ with $\cA \cap \{g_1, \dots, g_n \} = \emptyset$ has $\mu(\cA) = 0$. We conclude that
    \begin{equation}
        \mu(\cA) = \sum_{i=1}^n \mu(\cA \cap \{ g_i \}) = \sum_{i=1}^n \mu(\{g_i\}) \mathbbm{1}_{g_i \in \cA} = \sum_{i=1}^n \mu(\{g_i\}) \delta_{g_i}(\cA)
    \end{equation}
    as desired.
    } which we proceed to do following the argument of \cite{fisher1975spline}. Assume by contradiction that there are $N+1$ disjoint Borel subsets $(\cG_1, \cS_1), \dots, (\cG_N, \cS_N)$ of $\cB_{L-1} \times [0,\infty]$ with nonzero $\mu$-measure, and let $(\cG, \cS)$ be the union of these. For each $j = 1, \dots, N+1$, let $\mu_j$ be the restriction of $\mu$ to $(\cG_j, \cS_j)$, defined as $\mu_j(\cA, \cB) = \mu(\cA \cap \cS_j, \cB \cap \cG_j)$ for all Borel subsets $(\cA, \cB)$ of $\cB_{L-1} \times [0,\infty]$. Let $\vu_j \in \R^{N}$ be the vector with $i$th coordinate given by
    \begin{equation}
        u_{j,i} = G_i(\mu_j).
    \end{equation}
    The vectors $\vu_1, \dots \vu_{N+1} \in \R^N$ are linearly dependent, meaning that there are constants $a_1, \dots, a_{N+1} \in \R$ (not all zero) with $\sum_{j=1}^{N+1} a_j \vu_j = \vzero$. Let $\nu = \sum_{j=1}^{N+1} a_j \mu_j$. Observe that $\nu$ is \textit{not} the zero measure, because at least one of the $a_j$ is nonzero and, under our assumption that the sets $\cS_j \times \cG_j$ have nonzero $\mu$-measure, none of the $\mu_j$ are the zero measure. For any $i = 1, \dots, N$, we therefore have
    \begin{equation}
        G_i(\nu) =  \sum_{j=1}^{N+1} a_j G_i(\mu_j) =  \sum_{j=1}^{N+1} a_j \mu_{j,i} = 0
    \end{equation}
    and thus
    \begin{equation}
        G_i(\mu + \epsilon \nu) =  G_i(\mu) 
    \end{equation}
    for any $\epsilon \in \R$. Additionally, for any $\epsilon \in \R$ we have
    \begin{align}
        \| \mu + \epsilon \nu \|_{\TV} = | \mu | \left( \cS^c, \cG^c \right) + \sum_{j=1}^{N+1}  | (1 + \epsilon a_j) \mu | (\cS_j,\cG_j) 
    \end{align}
    As long as the magnitude of $\epsilon$ is sufficiently small (in particular, if $\epsilon > -1/a_j$ for $a_j > 0$ and $\epsilon < -1/a_j$ for $a_j < 0$), the above is equal to 
    \begin{align}
        \| \mu + \epsilon \nu \|_{\TV} &= | \mu | \left( \cS^c, \cG^c \right) + \sum_{j=1}^{N+1} (1+\epsilon a_j)  |  \mu | (\cS_j,\cG_j) \\
        &= \| \mu \|_{\TV} + \epsilon \sum_{j=1}^{N+1} a_j |\mu|(\cS_j,\cG_j) \\
        &= \| \mu \|_{\TV} + \epsilon \sum_{j=1}^{N+1} a_j \| \mu_j \|_{\TV}.
    \end{align}
    If $\sum_{j=1}^{N+1} a_j \| \mu_j \|_{\TV} \neq 0$, then any choice of $\epsilon \neq 0$ whose sign is opposite $\sum_{j=1}^{N+1} a_j \| \mu_j \|_{\TV}$ would yield $\| \mu + \epsilon \nu \|_{\TV} < \| \mu \|_{\TV}$, contradicting optimality of $\mu$. Hence it must be that $\sum_{j=1}^{N+1} a_j \| \mu_j \|_{\TV} = 0$, and both $\mu + \epsilon \nu$ and $\mu - \epsilon \nu$ are in $S^*$ for any $\epsilon$ of sufficiently small magnitude. But $\mu$ is a nontrivial convex combination of $\mu + \epsilon \nu$ and $\mu - \epsilon \nu$, contradicting the fact that $\mu$ is an extreme point of $\cS^*$.

    \paragraph{Final form of the solution.} Let $f(\vx) = \sum_{i=1}^N c_i \sigma_{s_i}(g_i(\vx))$ be the solution whose existence we have shown above. For any fixed $i = 1, \dots, N$, consider the optimization
    \begin{equation} \label{opt:interp_h}
     \min_{\tilde{g}_i \in \cV_{L-1}} \| \tilde{g}_i  \|_{\cV_L} \ , \ \mbox{subject to } \tilde{g}_i (\vx_j)=g_i(\vx_j), \ j=1,\dots,N.
    \end{equation}
    Repeating the previous arguments shows that \eqref{opt:interp_h} admits a solution of the form $\tilde{g}_i (\vx) = \sum_{i=1}^N \tilde{c}_i \sigma_{\tilde{s}_i}(h_i(\vx))$ for some $\tilde{s}_1, \dots, \tilde{s}_N \in [0,\infty]$ and $h_1, \dots, h_N \in \cB_{L-2}$. Furthermore, because $g_i$ itself is feasible for problem \eqref{opt:interp_h}, the solution $\tilde{g}_i$ must have $\| \tilde{g}_i \|_{\cV_{L-1}} \leq \| g_i \|_{\cV_{L-1}}$, so that $\tilde{g}_i \in \cB_{L-1}$. This shows that $\tilde{f}(\vx) = \sum_{i=1}^N c_i \sigma_{s_i}(\tilde{g}_i (\vx))$ is also a solution to the original problem \eqref{opt:reg_loss} or \eqref{opt:interp}. Repeating this argument recursively in $L$, we see that each term in the $N$-term linear combination of a solution at layer $L$ is itself given by an $N$-term linear combination, which proves the result.
\end{proof}

\subsection{Proof of \cref{prop:rep_cost_comparison}} \label{appendix:proof_rep_cost_comparison}
\begin{proof}
We address the cases individually.
\paragraph{Bound in terms of $R_{\mathrm{SOSW}} (\vtheta)$.}
    Consider the $\ell^2$-$\ell^2$ operator norm $\| \cdot \|_{2 \to 2}$ (also called the \textit{spectral norm}), which is defined as
    \begin{align}
        \| \mA \|_{2 \to 2} := \sup_{\| \vx \|_2=1} \| \mA \vx \|_2.
    \end{align}
    Observe that the spectral norm is upper bounded by the Frobenius norm, since
    \begin{align}
        \| \mA \vx \|_2^2 = \sum_i \left( \sum_j A_{ij} x_j \right)^2 \leq \sum_i \left( \sum_j A_{ij}^2 \right) \left( \sum_j x_j^2 \right) = \| \mA \|_F^2
    \end{align}
    whenever $\| \vx \|_2 = 1$. Using this fact along with \eqref{eq:path_norm_prod} and submultiplicativity of the operator norm, we have:
    \begin{align}
        \| f \|_{\cV_L} \leq \Phi(\vtheta) \leq \| \vw^{(L)} \|_2 \left( \prod_{\ell=2}^L \| \mW^{(\ell)} \|_{2 \to 2} \right) \| \vs \|_2 \leq \| \vw^{(L)} \|_2 \left( \prod_{\ell=2}^L \| \mW^{(\ell)} \|_{F} \right) \| \vs \|_2. 
    \end{align}
    Therefore, by the arithmetic-geometric mean (AM-GM) inequality:
    \begin{align} \label{eq:phi_int_sosw_bound}
        \Phi(\vtheta)^{2/L} \leq \left( \| \vw^{(L)} \|_2^2 \left( \prod_{\ell=2}^L \| \mW^{(\ell)} \|_{F}^2 \right) \| \vs \|_2^2 \right)^{1/L} \leq \frac{\| \vw^{(L)} \|_2^2 + \sum_{\ell=2}^{L-1} \| \mW^{(\ell)} \|_F^2 + \| \vs \|_2^2 }{L}.
    \end{align}
    Furthermore, under the theorem assumptions, \eqref{eq:s_k_assumption} is  satisfied by taking
    \begin{align} \label{eq:s_k_choice_equiv_proof}
        s_k = \max \left\{ \| \vw_k^{(1)} \|_2, |b_k^{(1)}| \right\}.
    \end{align}
    With this choice, we have
    \begin{align}
        \| \vs \|_2^2 = \sum_{k=1}^{K_1} \max \left\{ \| \vw_k^{(1)} \|_2^2, |b_k^{(1)}|^2 \right\} \leq \| \mW^{(1)} \|_F^2 + \| \vb^{(1)} \|_2^2.
    \end{align}
    Combining this with \eqref{eq:phi_int_sosw_bound} yields the $R_{\mathrm{SOSW}} (\vtheta)$ bound.

    \paragraph{Bound in terms of $R_{\mathrm{Ba}} (\vtheta)$.} For the $R_{\mathrm{Ba}}$ bound, observe that
    \begin{align} \label{eq:ba_upper_bound_init}
        \| f \|_{\cV_L} \leq \Phi(\vtheta) \leq \Psi_1(\vtheta) := \| \vw^{(L)} \|_1 \left( \prod_{\ell=2}^{L-2} \| \mW^{(\ell)} \|_{1,\infty}  \right) \| \vs \|_\infty \leq \| \vw^{(L)} \|_1  \prod_{\ell=2}^{L-2} \| \mW^{(\ell)} \|_{1,\infty} 
    \end{align}
    where $\Psi_1$ is the upper bound from \eqref{eq:path_norm_upper_bound_psi}, and the final inequality follows by choosing each $s_k \leq 1$ (which is possible under the theorem assumptions). Furthermore, any matrix $\mA$ has
    \begin{align}
        \| \mA \|_{1,\infty} := \max_i \sum_j |a_{i,j}| \leq  \max_i \sum_j  \| \mA_{:,j} \|_2 = \| \mA \|_{2,1}.
    \end{align}
    Using this fact along with the AM-GM inequality, the rightmost expression in \eqref{eq:ba_upper_bound_init} is further upper bounded by
    \begin{align}
        \| \vw^{(L)} \|_1  \prod_{\ell=2}^{L-2} \| \mW^{(\ell)} \|_{2,1} \leq \left( \frac{\| \vw^{(L)} \|_1 +  \sum_{\ell=2}^{L-2} \| \mW^{(\ell)} \|_{2,1}}{L-1} \right)^{L-1} \leq \left( \frac{R_{\mathrm{Ba}} (\vtheta)}{L-1} \right)^{L-1}.
    \end{align}
    \paragraph{Bound in terms of $R_{\mathrm{Sh}}$ and $R_{\mathrm{Pa}}$.}
    Here we choose each $s_k$ to satisfy
    \begin{align} \label{eq:choice_sk_sh_pa}
        s_k := \| \mU_{k,:}^{(1)} \|_2 = \sqrt{\| \vw_k^{(1)} \|_2^2 + |b_k^{(1)}|^2}.
    \end{align}
    This choice satisfies
    \begin{align}
        \| \vw_k^{(1)} / s_k \|_2 \leq 1, \qquad |b_k^{(1)}/ s_k | \leq 1
    \end{align}
    and thus satisfies the requirement \eqref{eq:s_k_assumption}. 
    
    Next, using \eqref{eq:path_norm_prod}: 
    \begin{align} \label{eq:sh_pa_bound_int}
        \| f \|_{\cV_L} \leq \Phi(\vtheta) &=  \big| \vw^{(L)} \big|^\top \big| \mW^{(L-1)} \big| \dots \big| \mW^{(2)} \big| |\vs| \\
        &= \left| \mV^{(L-1)} \right| \left| \mU^{(L-1)} \mV^{(L-2)} \right| \dots \left| \mU^{(2)} \mV^{(1)} \right| |\vs| \\
        &\leq \left| \mV^{(L-1)} \right| \left| \mU^{(L-1)} \right| \left| \mV^{(L-2)} \right| \dots \left| \mU^{(2)} \right| \left| \mV^{(1)} \right| |\vs| \\
        &\leq \left( \prod_{\ell=2}^{(L-1)} \left\| \left| \mV^{(\ell)} \right| \left| \mU^{(\ell)} \right|  \right\|_{2 \to 2} \right) \left\| \left| \mV^{(1)} \right| |\vs| \right\|_{2 \to 2}.
    \end{align}
    First inequality above is the triangle inequality, and the second is submultiplicativity of the operator norm. Now observe that
    \begin{align}
        \left\| \vu \vv^\top \right\|_{2 \to 2} = \sup_{\| \vx \|_2 = 1} \| \vu \vv^\top \vx \|_2 = \sup_{\| \vx \|_2 = 1}  \| \vu  \|_2 \left( \vv^\top \vx \right) = \| \vu \|_2 \| \vv \|_2
    \end{align}
    for any $\vu, \vv$. As a result, any $\mA, \mB$ satisfy
    \begin{align}
        \left\| |\mA| |\mB| \right\|_{2 \to 2} = \left\| \sum_k |\mA_{:,k}| |\mB_{k,:}| \right\|_{2 \to 2} \leq  \sum_k \left\| |\mA_{:,k}| |\mB_{k,:}| \right\|_{2 \to 2} = \| \mA_{:,k} \|_2 \| \mB_{k,:} \|_2.
    \end{align}
    Applying this to \eqref{eq:sh_pa_bound_int} and recalling the choice \eqref{eq:choice_sk_sh_pa} of $s_k$, we see that
    \begin{align}
        \| f \|_{\cV_L} \leq \Phi(\vtheta) \leq \left( \frac{R_{\mathrm{Sh}}(\vtheta)}{L-1} \right)^{L-1} \leq \left( \frac{R_{\mathrm{Pa}}(\vtheta)}{L-1} \right)^{L-1}.
    \end{align}
    Here we have again used the AM-GM inequality, along with the fact that $R_{\mathrm{Sh}}(\vtheta) \leq R_{\mathrm{Pa}}(\vtheta)$, which follows from $\| \vv \|_2 \leq \| \vv \|_1$.
\end{proof}

\subsection{Proof of \cref{lemma:rad_bound}} \label{appendix:proof_rad_bound}
\begin{proof}
    Fix some $\vx_1, \dots, \vx_N \in \Omega$ and let
    \begin{align}
        \widehat{\cR}_N(\cB_L) := \bE_{r_1, \dots, r_N \, \overset{\mathrm{iid}}{\sim} \, \mathrm{Rad}}  \left[ \sup_{f \in \cB_L} \frac{1}{N}  \sum_{i=1}^N r_i f(\vx_i) \right]
    \end{align}
    denote the empirical Rademacher complexity of $\cB_L$ for this dataset. We will show that the stated bound applies to $\widehat{\cR}_N(\cB_L)$. Because the $\vx_i$ are arbitrary, this implies that the bound also holds for the worst-case Rademacher complexity $\cR_N(\cB_L)$.

    We will use the strategy of \cite{golowich2018size} to obtain a bound on $\widehat{\cR}_N(\cB_L)$ for which the explicit dependence on depth $L$ is linear, rather than exponential. This argument uses a variant of the Ledoux-Talagrand contraction inequality (\cite{ledoux1991probability}, Equation 4.20 in) which \cite{golowich2018size} apply to the 1-Lipschitz $\sigma = \ReLU$. In our setup, this argument works as-is to bound the Rademacher complexities of our classes whenever the function $\sigma$ is homogeneous and locally Lipschitz: however for non-homogeneous activations, there is no longer a single activation function $\sigma$ but rather an infinite family of normalized activations $\sigma_s, s > 0$. In order to adapt the argument to this case, we cover the family of normalized activations with radius $\delta$, approximate each $\sigma_s$ with its nearest covering element, and then apply the contraction inequality to these finitely many covering elements. This approximation step incurs some relatively modest error when repeated at each layer, which is the source of the $\sqrt{\log N}$ in the numerator along with the $\frac{L-1}{N}$ factor in the final bound (neither of which is present in the bound for homogeneous activations, where this covering/approximation step is not necessary). 
    \paragraph{Covering bound on the class of restricted functions $\sigma_s$.} As claimed in \eqref{eq:covering_bound_sigma_s}, we will first demonstrate that for any $R > 0$, the covering bound
    \begin{align} \label{eq:covering_bound_sigma_s}
        \cN \left( \left\{ \sigma_s \big\rvert_{[-R,R]} : s > 0 \right\}, \delta, \| \cdot \|_\infty \right) \leq \frac{A_R}{\delta} + 1
    \end{align}
    holds with some constant $A_R > 0$. We need only focus on the nonhomogeneous activations in \cref{tab:activation_summary} since, if the activation is homogeneous, the set of functions $\sigma_s$ is simply the singleton set $\sigma$. All non-homogeneous activations $\sigma$ in \cref{tab:activation_summary} are either globally $C^\infty$ smooth (infinitely differentiable) or piecewise $C^\infty$ smooth with finitely many pieces. Therefore, the same holds for
    \begin{align}
        \sigma(s,t) := \frac{\sigma(st)}{s}
    \end{align}
    when viewed solely as a function of the variable $s \in (0,\infty)$, with $t \in \R$ fixed. This implies that $\sigma(s,t)$, as a function of $s$, is locally absolutely continuous with a.e. partial derivative
    \begin{align}
        \frac{\partial}{\partial s} \sigma(s,t) = \frac{st \sigma'(st) - \sigma(st)}{s^2}.
    \end{align}
    Because this a.e. partial derivative obeys the fundamental theorem of calculus (\cite{leoni2017first}, Theorem 3.30), we have
    \begin{align}
        \sigma_{s_1} (t) - \sigma_{s_0}(t) = \int_{s_0}^{s_1} \frac{\partial}{\partial s} \sigma(s,t) \ ds
    \end{align}
    for any $t \in [-R,R]$ and any $0 < s_0 < s_1$. Therefore:
    \begin{align}
        \left\| \sigma_{s_1} \big\rvert_{[-R,R]} - \sigma_{s_0} \big\rvert_{[-R,R]} \right\|_\infty &= \sup_{t \in [-R,R]} |\sigma(s_1,t) - \sigma(s_0,t) | \leq \int_{s_0}^{s_1} \left\| \frac{\partial}{\partial s} \sigma(s,t) \right\|_\infty ds \\
        &\leq  \int_0^\infty \left\| \frac{\partial}{\partial s} \sigma(s,t) \right\|_\infty ds =   \int_0^\infty \sup_{|t| \leq R} \left| \frac{st \sigma'(st) - \sigma(st)}{s^2} \right| ds =: A_R \label{eq:AR_def}
    \end{align}
    We will now show that the value $A_R$ of the integral in \eqref{eq:AR_def} above is finite for all activations $\sigma$ in \cref{tab:activation_summary}. To do so, write
    \begin{align}
        A_R = \underbrace{\int_0^1 \sup_{|t| \leq R} \left| \frac{st \sigma'(st) - \sigma(st)}{s^2} \right| ds}_{A_1} +  \underbrace{\int_1^\infty \sup_{|t| \leq R} \left| \frac{st \sigma'(st) - \sigma(st)}{s^2} \right| ds}_{A_2}. \label{eq:AR_A1_A2}
    \end{align}
    For the first integral $A_1$: if $\sigma$ is globally $C^\infty$, then for any $u \in \R$, there are $\xi_u$ and $\zeta_u$ between 0 and $u$ such that
    \begin{align}
        \sigma(u) = u \sigma'(0) + \frac{1}{2} u^2 \sigma''(\xi_u), \  \textrm{and} \ \sigma'(u) = \sigma'(0) + u \sigma''(\zeta_u).
    \end{align}
    Here we have used Taylor's theorem as well as $\sigma(0) = 0$, which holds for all activations in \cref{tab:activation_summary}. Therefore, for any $0 < s \leq 1$, we have
    \begin{align}
        \sup_{|t| \leq R} |st \sigma'(st) - \sigma(st)| = \sup_{|t| \leq R} \left| s^2 t^2 \sigma''(\zeta_{st}) - \frac{1}{2} s^2 t^2 \sigma''(\xi_{st}) \right| \leq s^2 R^2 \sup_{|u| \leq R} \frac{3}{2} |\sigma''(u)|.
    \end{align}
    The $s^2$ in the numerator of this bound cancels with the $s^2$ in the denominator of the integrand in $A_1$, so the integral $A_1$ converges. If $\sigma$ is only piecewise $C^\infty$, the same argument applies by further breaking up the integral $A_1$ into individual integrals along each $C^\infty$ piece (of which there are finitely many). The second integral $A_2$ converges because each of the activations $\sigma$ in \cref{tab:activation_summary} satisfies
    \begin{align}
        \sup_{u \in \R} |u \sigma'(u) - \sigma(u)| < \infty.
    \end{align}
    This fact can be verified in each case individually. In general, this property holds because each $\sigma$ is either ``sigmoid-like'' (with $\sigma$ bounded and $\sigma'(u) \to 0$ as $u \to \pm \infty$, so that $u \sigma'(u)$ is also bounded) or ``ReLU-like'' (in which case $\sigma(u)$ is asymptotically affine as $u \to \pm \infty$, so that $\sigma(u) \approx u \sigma'(u) + c$).

    Having shown that the value of $A_R$ in \eqref{eq:AR_def} is finite: fix $\delta > 0$ and partition the interval $[0, A_R]$ as $0 = u_0 < \dots < u_m = A_R$, where each interval $[u_i, u_{i+1}]$ has length exactly $\delta$ (so that $m = A_R/\delta$). For each $u_i$, choose a corresponding $s_i$ such that
    \begin{align}
        u_i = \int_0^{s_i} \left\| \frac{\partial}{\partial s} \sigma(s,t) \right\|_\infty ds.
    \end{align}
    Here we can choose $s_0 = 0$ and $s_m = \infty$. Then, for any $s_i < s < s_{i+1}$, we have
    \begin{align}
        \left\| \sigma_{s_{i+1}} \big\rvert_{[-R,R]} - \sigma_{s_i}\big\rvert_{[-R,R]} \right\|_\infty \leq \int_{s_i}^{s_{i+1}} \left\| \frac{\partial}{\partial s} \sigma(s,t) \right\|_\infty = u_{i+1} - u_i = \delta.
    \end{align}
    Therefore, the interior values $s_1, \dots, s_{m-1}$ cover the set
    \begin{align}
        \left\{ \sigma_s \big \rvert_{[-R, R]} : s > 0 \right\}
    \end{align}
    which proves \eqref{eq:covering_bound_sigma_s}.
    
    \paragraph{Empirical Rademacher complexity is invariant to uniform closures.} Next, we establish the generic fact that $\widehat{\cR}_N(\overline{\cS}) = \widehat{\cR}_N(\cS)$ for any $\cS \subset C(\Omega)$. Taken together with \cref{prop:limit_final_layer}, this tells us that the Rademacher complexity of $\cB_L$ is equal to that of $\widetilde{\cB}_L$, so the uniform limits need not be considered when bounding the former. 
    
    Let $r_1, \dots, r_N \in \{ \pm 1 \}$ be any fixed signs. For the given signs and data points $\vx_1, \dots, \vx_N$, let $\phi(f) := \sum_{i=1}^N r_i f(\vx_i)$. It suffices to show that 
    \begin{align} \label{eq:A_S_equality}
        C_{\cS} := \sup_{f \in \cS} \phi(f) =  \sup_{f \in \overline{\cS}} \phi(f) =: C_{\overline{\cS}}
    \end{align}
    First note that if $\cS$ is bounded, i.e. $\sup_{f \in \cS} \| f \|_\infty =: C_{\cS} < \infty$, both $C_\cS$ and $_{\overline{\cS}}$ are finite. Finiteness of $C_{\cS}$ follows from $|C_{\cS}| \leq N C_{\cS}$. Finiteness of $C_{\overline{\cS}}$ follows from the same inequality, along with the fact that any $f \in \overline{\cS}$ has
    \begin{align}
        | \| f \|_\infty - \| f_n \|_\infty| \leq \| f - f_n \|_\infty \to 0
    \end{align}
    and thus $\| f \|_\infty = \lim_{n \to \infty} \| f_n \|_\infty \leq C_{\cS}$ for some $\{ f_n \}_{n=1}^\infty \subset \cS$.

    Next, let $\{ g_n \}_{n=1}^\infty \subset \overline{\cS}$ be a sequence with $\phi(g_n) \to C_{\overline{\cS}}$. Each $g_n$ is the uniform (hence pointwise) limit of a sequence $\{ h_{n,m} \}_{m=1}^\infty \subset \cS$, and therefore $\lim_{m \to \infty} \phi(h_{m,n}) = \phi(g_n)$ for each $n \in \bN$. For any $\epsilon > 0$, fix $n$ (depending on $\epsilon$) large enough that $|\phi(g_n) - C_{\overline{\cS}}| \leq \epsilon/2$, and fix $m$ large enough (depending on $n$ and $\epsilon$) that $|\phi(h_{m,n}) - \phi(g_n)| \leq \epsilon/2$. Choosing $m,n$ in this way for some sequence of $\epsilon \downarrow 0$, we form a subsequence $\{ h_{m,n,k} \}_{m,n,k \in \bN}$ satisfying $\lim_{k \to \infty} \phi(h_{m,n,k}) = C_{\overline{\cS}}$. This shows that $C_{\cS} \geq C_{\overline{\cS}}$. The reverse inequality  $C_{\cS} \leq C_{\overline{\cS}}$ follows from $\cS \subset \overline{\cS}$, so both quantities are equal.

    \paragraph{Reduction to a bound on the base class (homogeneous case).} Note that for any $\lambda > 0$ we have
    \begin{align} 
        N \widehat{\cR}_N(\cB_L) &= \bE_r \sup_{f \in cB_L} \sum_{i=1}^N r_i f(\vx_i)  \\
        &\leq \frac{1}{\lambda} \log \bE_r \sup_{K \in \bN, \sum_{k=1}^K |v_k|  \leq 1, \sigma_{s_k} \circ f_k \in \cB_{L-1}} \exp \lambda \left( \sum_{k=1}^K v_k \sum_{i=1}^N r_i  \sigma_{s_k} (f_k(\vx_i)) \right) \label{eq:rad_bound_init} \\
        &\leq \frac{1}{\lambda} \log \bE_r \sup_{K \in \bN, \sum_{k=1}^K |v_k| \leq 1, \sigma_{s_k} \circ f_k \in \cB_{L-1}} \exp \lambda \left(  \sum_{k=1}^K |v_k| \cdot \max_{k=1, \dots, K} \left| \sum_{i=1}^N r_i  \sigma_{s_k}(f_k(\vx_i)) \right| \right) \\
        &\leq \frac{1}{\lambda} \log \bE_r \sup_{K \in \bN, \sigma_{s_k} \circ f_k \in \cB_{L-1}} \exp \lambda \left( \max_{k=1, \dots, K} \left| \sum_{i=1}^N r_i  \sigma_{s_k} (f_k(\vx_i)) \right| \right) \\
        &= \frac{1}{\lambda} \log \bE_r \sup_{f \in \cB_{L-1}, s > 0} \exp \lambda \left( \left| \sum_{i=1}^N r_i \sigma_s(f(\vx_i)) \right| \right) \\
        &\leq \frac{1}{\lambda} \log \bE_r \sup_{f \in \cB_{L-1}, s > 0} \left( \exp \lambda \left( \sum_{i=1}^N r_i \sigma_s(f(\vx_i))  \right) + \exp \lambda \left( -\sum_{i=1}^N r_i \sigma_s(f(\vx_i))  \right)\right) \\
        &\leq \frac{1}{\lambda} \log 2 \bE_r \sup_{f \in \cB_{L-1}, s > 0} \exp \lambda \left( \sum_{i=1}^N r_i \sigma_s (f(\vx_i))  \right) \label{eq:rad_bound_pre_contraction}
    \end{align}
    The first inequality above is Jensen's; the fourth uses $\exp(\lambda |t|) \leq \exp(\lambda t) + \exp(-\lambda t)$; and the final inequality follows from symmetry in distribution of Rademacher random variables. 
    
    At this stage, if $\sigma$ is homogeneous, we have $\sigma_s = \sigma$ for all $s > 0$. In this case, we can apply equation 4.20 in \cite{ledoux1991probability} to the contraction $\sigma/\rho_{L-1}$ (or  more precisely to its globally Lipschitz extension; see e.g. \cite{mcshane1934extension}) to bound \eqref{eq:rad_bound_pre_contraction} above by
    \begin{align} \label{eq:rad_bound_post_contraction_homogeneous}
        \frac{1}{\lambda} \log 2 \bE_r \sup_{f \in \cB_{L-1}} \exp \rho_{L-1} \lambda \left( \sum_{i=1}^N r_i f(\vx_i)  \right).
    \end{align}
    Repeating the above steps yields the homogeneous ``base class'' bound
    \begin{align} \label{eq:base_rad_bound_homogeneous}
        N \widehat{\cR}_N(\cB_L) &\leq \frac{1}{\lambda} \log 2^{L-1} \bE_s \sup_{f \in \cB_1} \exp \lambda \Pi_L \left( \sum_{i=1}^N s_i f(\vx_i) \right)  \\
        &= \frac{1}{\lambda} \log 2^{L-1} \bE_s \sup_{\widetilde{\vw} \in \cW \times \cB} \exp \lambda \Pi_L \left( \sum_{i=1}^N s_i \widetilde{\vw}^\top \widetilde{\vx}_i \right) \label{eq:rad_base_class_bound_homogeneous}
    \end{align}
    where $\widetilde{\vx}_i := [\vx_i^\top, 1]^\top \in \R^{d+1}$ and $\widetilde{\vw} := [\vw^\top, b]^\top$ for $\vw \in \cW, b \in \cB$.
    
    \paragraph{Reduction to a bound on the base class (non-homogeneous case).} The bound in lines \eqref{eq:rad_bound_init}-\eqref{eq:rad_bound_pre_contraction} is also valid in the non-homogeneous case. However, the term in \eqref{eq:rad_bound_pre_contraction} can no longer be handled directly with an application of the contraction lemma, so here we require an additional approximation step using the covering bound \eqref{eq:covering_bound_sigma_s}. Fix $\delta_{L-1} > 0$ and form a finite cover of cardinality $M_{L-1} := \lceil A_{L-1}/\delta_{L-1}\rceil \leq 1 + A_{L-1}/\delta_{L-1}$ of the function family
    \begin{align}
        \left\{ \sigma_s \big \rvert_{[-C_{L-1}, C_{L-1}]} : s > 0 \right\}.
    \end{align}
    Regarding the constant $A_{L-1}$, note that the depth-dependence of this constant arises due to possible depth dependence of the constant $C_{L-1}$ to which the $\sigma_s$ are restricted. However, all activations in \cref{tab:activation_summary} \textit{except} $\ReLU^m$, SELU, and bent identity satisfy $|\sigma(t)| \leq |t|$. For such activations, we have $C_\ell \leq C_1$, and therefore $A_\ell \leq A_1$ and $\rho_\ell \leq \rho_1$, for all $\ell$.
    
    Let $\sigma_{j(s)}$ denote the closest covering element to any $\sigma_s$. Then
    \begin{align}
        \left| \sum_{i=1}^N r_i \left( \sigma_s(f(\vx_i)) - \sigma_{j(s)} (f(\vx_i)) \right) \right| \leq N \delta_{L-1} \implies \sum_{i=1}^N r_i \sigma_s (f(\vx_i)) \leq \sum_{i=1}^N r_i \sigma_{j(s)} (f(\vx_i)) + N \delta_{L-1}
    \end{align}
    so \eqref{eq:rad_bound_pre_contraction} is upper bounded by
    \begin{align}
        &\frac{1}{\lambda} \log 2 \bE_r \sup_{f \in \cB_{L-1},s} e^{\lambda N \delta_{L-1}} \exp  \lambda \left( \sum_{i=1}^N r_i \sigma_{j(s)} ( f(\vx_i) ) \right) \\ 
        &= N \delta_{L-1} + \frac{1}{\lambda} \log 2 \bE_r \max_{j=1, \dots, M_{L-1}} \sup_{f \in \cB_{L-1}} \exp  \lambda \left( \sum_{i=1}^N r_i \sigma_{j} ( f(\vx_i) ) \right) \\
        &\leq N \delta_{L-1} + \frac{1}{\lambda} \log 2 \sum_{j=1}^{M_{L-1}} \bE_r \sup_{f \in \cB_{L-1}} \exp  \lambda \left( \sum_{i=1}^N r_i \sigma_{j} ( f(\vx_i) ) \right) \\
        &\leq N \delta_{L-1}  + \frac{1}{\lambda} \log 2 M_{L-1} \bE_r \sup_{f \in \cB_{L-1}} \exp \rho_{L-1} \lambda \left( \sum_{i=1}^N r_i  f(\vx_i)  \right) \\
        &\leq N \delta_{L-1}  + \frac{1}{\lambda} \log M_{L-1} +  \frac{1}{\lambda} \log 2  \bE_r \sup_{f \in \cB_{L-1}} \exp \rho_{L-1} \lambda \left( \sum_{i=1}^N r_i  f(\vx_i)  \right). \label{eq:rad_bound_post_contraction_nonhomogeneous}
    \end{align}
    In this case, repeating the previous steps yields the non-homogeneous ``base class'' bound
    \begin{align} \label{eq:base_rad_bound_nonhomogeneous}
        N \widehat{\cR}_N(\cB_L) &\leq N \sum_{\ell=1}^{L-1} \delta_\ell \pi_\ell + \frac{1}{\lambda} \sum_{\ell=1}^{L-1} \log M_{\ell} +  \frac{1}{\lambda} \log 2^{L-1} \bE_s \sup_{f \in \cB_1} \exp \lambda \Pi_L \left( \sum_{i=1}^N s_i f(\vx_i) \right)  \\
        &= N \sum_{\ell=1}^{L-1} \delta_\ell \pi_\ell + \frac{1}{\lambda} \sum_{\ell=1}^{L-1} \log M_{\ell} +  \frac{1}{\lambda} \log 2^{L-1} \bE_s \sup_{\widetilde{\vw} \in \cW \times \cB} \exp \lambda \Pi_L \left( \sum_{i=1}^N s_i \widetilde{\vw}^\top \widetilde{\vx}_i \right) \\
        &\leq N \sum_{\ell=1}^{L-1} \delta_\ell \pi_\ell + \frac{1}{\lambda} \sum_{\ell=1}^{L-1} \log (1 + A_\ell/\delta_\ell) +  \frac{1}{\lambda} \log 2^{L-1} \bE_s \sup_{\widetilde{\vw} \in \cW \times \cB} \exp \lambda \Pi_L \left( \sum_{i=1}^N s_i \widetilde{\vw}^\top \widetilde{\vx}_i \right). \label{eq:rad_base_class_bound_nonhomogeneous}
    \end{align}

    \paragraph{Bound the base class entropy.} Recall the constant $C_{\cW, \cB} := \sup_{\widetilde{\vw} \in \cW \times \cB} \| \widetilde{\vw} \|_1$ from the lemma statement, and denote $C := C_{\cW,\cB}  \Pi_L$ for notational convenience. The expression in \eqref{eq:rad_base_class_bound_homogeneous} (which is also the final expression in \eqref{eq:rad_base_class_bound_nonhomogeneous}) is upper bounded by
    \begin{align}
        &\frac{1}{\lambda} \log 2^{L-1} \bE_s \exp \lambda C \left\| \sum_{i=1}^N s_i \widetilde{\vx}_i \right\|_\infty \leq \frac{1}{\lambda} \log 2^{L-1} \max_j \bE_s \exp \lambda C \left| \sum_{i=1}^N s_i \widetilde{x}_{i,j} \right| \\
        &\leq \frac{1}{\lambda} \log 2^{L-1} \sum_{j=1}^{d+1} \bE_s \exp \lambda C \left| \sum_{i=1}^N s_i \widetilde{x}_{i,j} \right| \\
        &\leq \frac{1}{\lambda} \log 2^{L-1} \sum_{j=1}^{d+1}  \bE_s \left[  \exp \left( \lambda C  \sum_{i=1}^N s_i \widetilde{x}_{i,j} \right) +   \exp \left( -\lambda C  \sum_{i=1}^N s_i \widetilde{x}_{i,j} \right) \right]   \\
        &\leq \frac{1}{\lambda} \log 2^L \sum_{j=1}^{d+1}  \bE_s  \exp \left( \lambda C  \sum_{i=1}^N s_i \widetilde{x}_{i,j} \right) = \frac{1}{\lambda} \log 2^L \sum_{j=1}^{d+1} \prod_{i=1}^N  \bE_s  \exp \left( \lambda C   s_i \widetilde{x}_{i,j} \right) \\
        &= \frac{1}{\lambda} \log 2^L \sum_{j=1}^{d+1} \prod_{i=1}^N  \frac{\exp(\lambda C \widetilde{x}_{i,j}) + \exp(-\lambda C \widetilde{x}_{i,j})}{2} \leq \frac{1}{\lambda} \log 2^L \sum_{j=1}^{d+1} \prod_{i=1}^N  \exp \left( \lambda^2 C^2 \widetilde{x}_{i,j}^2 \right) \\
        &= \frac{1}{\lambda} \log 2^L \sum_{j=1}^{d+1}  \exp \left( \lambda^2 C^2 \sum_{i=1}^N \widetilde{x}_{i,j}^2 \right) \leq \frac{1}{\lambda} \log 2^L (d+1) \max_j  \exp \left( \lambda^2 C^2 \sum_{i=1}^N \widetilde{x}_{i,j}^2 \right) \\
        &= \frac{L \log 2 + \log(d + 1)}{\lambda} + C^2 \lambda \max_j \sum_{i=1}^N \widetilde{x}_{i,j}^2 
    \end{align}
    where we have used $\frac{\exp(t) + \exp(-t)}{2} \leq \exp(z^2/2) \leq \exp(z^2)$. In the homogeneous case, choosing
    \begin{align}
        \lambda = \frac{1}{C} \sqrt{\frac{L \log 2 + \log(d+1)}{\max_j \sum_{i=1}^N \widetilde{x}_{i,j}^2}}
    \end{align}
    yields
    \begin{align}
        N \widehat{\cR}_N(\cB_L) &\leq 2 C  \sqrt{L \log 2 + \log(d+1)} \sqrt{\max_j \sum_{i=1}^N \widetilde{x}_{i,j}^2} \\
        &\leq 2 C \sqrt{L \log 2 + \log(d+1)} \sqrt{N C_\Omega} 
    \end{align}
    where $C_\Omega := \max \left\{ 1, \sup_{\vx \in \Omega} \| \vx \|_\infty^2 \right\}$. This demonstrates the homogeneous bound \eqref{eq:rad_bound_homogeneous}. In the non-homogeneous case, instead choose
    \begin{align}
        \lambda = \frac{1}{C} \sqrt{\frac{L \log 2 + \log(d+1) + \sum_{\ell=1}^{L-1} \log (1+A_\ell/\delta_\ell)}{\max_j \sum_{i=1}^N \widetilde{x}_{i,j}^2}}.
    \end{align}
    Then:
    \begin{align}
        N \widehat{\cR}_N(\cB_L) &\leq N \sum_{\ell=1}^{L-1} \delta_\ell \pi_\ell  +   \frac{ \sum_{\ell=1}^{L-1} \log (1+A_\ell/\delta_\ell) +  L \log 2 + \log(d + 1) }{\lambda} + C^2 \lambda \max_j \sum_{i=1}^N \widetilde{x}_{i,j}^2 \\ &= N \sum_{\ell=1}^{L-1} \delta_\ell \pi_\ell  +   2 C  \sqrt{N C_\Omega}  \sqrt{\sum_{\ell=1}^{L-1} \log(1+A_\ell/\delta_\ell) +  L \log 2 + \log(d+1)}
    \end{align}
    which proves \eqref{eq:rad_bound_nonhomogeneous}.
\end{proof}

\subsection{Proof of \cref{th:entropy_bound}} \label{appendix:proof_entropy_bound}
\begin{proof}
    The general proof strategy is as follows. We first convert the Rademacher complexity bounds from \cref{lemma:rad_bound} into \textit{empirical} metric entropy bounds: i.e., bounds on the $L^p(\mu_N)$ metric entropies for empirical measures $\mu_N := \frac{1}{N} \sum_{i=1}^N \delta_{\vx_i}$. This is accomplished using known inequalities relating Rademacher complexity, fat-shattering dimension, and empirical covering numbers (\cite{srebro2010smoothness,mendelson2001geometric,alon1997scale}). In the $1 \leq p \leq 2$ case, a variant of Sudakov's minoration inequality---which bounds empirical $L^2$ metric entropy directly in terms of Rademacher complexity---yields an improved bound which removes a polylog factor. Finally, we use uniform boundedness and Lipschitzness of the classes $\cB_L$ (or, more precisely, of the $C(\Omega)$-closed classes $\cB_L^\infty$---the $L^p(\mu)$-closed classes $\cB_L^{L^p(\mu)}$ may not be uniformly bounded or Lipschitz) to translate these empirical $L^p(\mu_N)$ entropy bounds into general $L^p(\mu)$ entropy bounds, for arbitrary finite measures $\mu$. By \cref{prop:Lp_C_representative}, any $L^p(\mu)$ cover of the $C(\Omega)$-closed class $\cB_L^\infty$ yields an $L^p(\mu)$ cover of the $L^p(\mu)$-closed class $\cB_L^{L^p(\mu)}$ and vice versa, so these $L^p(\mu)$ entropy bounds for the classes $\cB_L^\infty$ also apply to the classes $\cB_L^{L^p(\mu)}$.

    \paragraph{Rademacher complexity bounds the fat-shattering dimension.} Recall that a set of points $\vx_1, \dots, \vx_N$ is said to be \textit{fat-shattered} at scale $\epsilon > 0$ (or simply \textit{$\epsilon$-shattered}) by a function class $\cH$ if there exist real numbers $r_1, \dots, r_N$ which satisfy the following property: for any binary labeling $y_1, \dots, y_N \in \{ \pm 1 \}$ of the points $\vx_1, \dots, \vx_N$, there is some $h \in \cH$ such that
    \begin{align}
        \textrm{$h(\vx_i) \geq r_i + \epsilon$ if $y_i = 1$}, \qquad \textrm{$h(\vx_i) \leq r_i - \epsilon$ if $y_i = -1$}.
    \end{align}
    The \textit{fat-shattering dimension} of $\cH$, denoted $\fat_\epsilon(\cH)$, is the maximum number of points which can be $\epsilon$-shattered by $\cH$. By Lemma A.2 in \cite{srebro2010smoothness}, the fat-shattering dimension of a symmetric\footnote{\cite{srebro2010smoothness} use a slightly different definition of Rademacher complexity which takes the absolute value of the Rademacher average. For a symmetric function class (such as our classes $\cB_L$), their definition and our definition in \eqref{eq:worst_case_empirical_rad_def} are equivalent.} function class $\cH$ can be bounded as
    \begin{align} \label{eq:fat_rad_bound}
        \fat_\epsilon(\cH) \leq 4N \cR_N(\cH)^2 \epsilon^{-2}
    \end{align}
    for any $N \in \bN$ and any $\epsilon > \cR_N(\cH)$. 
    
    With this in mind, fix some such fat-shattering scale $\epsilon$. Using the Rademacher bounds \eqref{eq:rad_bound_homogeneous} and \eqref{eq:rad_bound_nonhomogeneous_delta}, we will choose $N(\epsilon)$ in a way that satisfies the assumption $\epsilon > \cR_{N(\epsilon)} (\cB_L)$ of \eqref{eq:fat_rad_bound}. If $\sigma$ is homogeneous, for any given $\epsilon > 0$, we can simply select
    \begin{align}
        N > 4 C_{\cW, \cB, \Omega}^2 \Pi_L^2 \left(L \log 2 + \log(d+1) \right) \epsilon^{-2}
    \end{align}.
    The homogeneous Rademacher bound \eqref{eq:rad_bound_homogeneous} then implies that
    \begin{align} \label{eq:fat_rad_bound_homogeneous}
        \fat_{\epsilon}(\cB_L) \leq 4N \left( \frac{4 C_{\cW, \cB, \Omega}^2 \Pi_L^2 \left(L \log 2 + \log(d+1) \right)}{N} \right) \epsilon^{-2} = 16 C_{\cW, \cB, \Omega}^2 \Pi_L^2 \left(L \log 2 + \log(d+1) \right) \epsilon^{-2}.
    \end{align}

    In the non-homogeneous case, first choose $\delta_\ell(\epsilon)$ so that the second term in the Rademacher bound \eqref{eq:rad_bound_nonhomogeneous_delta} satisfies
    \begin{align} \label{eq:delta_ell_choice}
        \sum_{\ell=1}^{L-1} \delta_\ell(\epsilon) \pi_\ell \leq \frac{\epsilon}{4} \implies \delta_\ell(\epsilon) := \frac{\epsilon}{4(L-1) \max\{ 1, \pi_\ell\}}.
    \end{align}
    For notational convenience, let
    \begin{align} \label{eq:tau_L_epsilon}
        \tau_L(\epsilon) &:= L \log 2 + \log(d+1) + \sum_{\ell=1}^{L-1} \log (1+ A_\ell/\delta_\ell(\epsilon)) \\
        &= L \log 2 + \log(d+1) + \sum_{\ell=1}^{L-1} \log \left(1+ \frac{4(L-1) A_\ell \max\{ 1, \pi_\ell \}}{\epsilon} \right).
    \end{align}
    We now choose some $N(\epsilon)$ large enough to satisfy $\cR_{N(\epsilon)} (\cB_L) \leq \epsilon/2 < \epsilon$:
    \begin{align}
        \cR_{N(\epsilon)}(\cB_L) \leq 2 C_{\cW, \cB,\Omega} \Pi_L \sqrt{\frac{\tau_L(\epsilon)}{N(\epsilon)}} + \frac{\epsilon}{4} \leq \frac{\epsilon}{2} \implies N(\epsilon) \geq 64 C_{\cW, \cB, \Omega}^2 \Pi_L^2 \tau_L(\epsilon) \epsilon^{-2}.
    \end{align}
    Choosing $N(\epsilon)$ to be equal to this threshold value, we apply \eqref{eq:fat_rad_bound} to get
    \begin{align} \label{eq:fat_rad_bound_nonhomogeneous}
        \fat_\epsilon(\cB_L) \leq 4 N(\epsilon) \cR_{N(\epsilon)}(\cB_L)^2 \epsilon^{-2}  \leq 256 C_{\cW, \cB, \Omega}^2 \Pi_L^2 \tau_L(\epsilon) \epsilon^{-2}. 
    \end{align}

    \paragraph{Fat-shattering dimension bounds empirical $L^\infty$ metric entropy.} We now invoke Lemma 3.5 in \cite{alon1997scale} (see also Theorem 1 in \cite{mendelson2001geometric}) to bound the empirical $L^\infty(\mu_N)$ metric entropy of the normalized class
    \begin{align} \label{eq:B_L_normalized}
        \cB_L' := \left\{ \frac{f + C_L}{2 C_L}: f \in \cB_L \right\}
    \end{align}
    in terms of its fat-shattering dimension, where $\mu_N := \frac{1}{N} \sum_{i=1}^N \delta_{\vx_i}$ is the empirical measure associated with some $\vx_1, \dots, \vx_N \in \Omega$. (Working with the normalized class is necessary here to employ the bound of \cite{alon1997scale}, which is only stated for $[0,1]$-valued functions.) The translated and rescaled class has
    \begin{align}
        \fat_\epsilon(\cB_L') = \fat_{2 C_L \epsilon}(\cB_L).
    \end{align}
    Using the homogeneous fat-shattering bound \eqref{eq:fat_rad_bound_homogeneous}, this gives us
    \begin{align} \label{eq:L_inf_empirical_bound_homogeneous}
        \log \cN(\cB_L', \epsilon, L^\infty(\mu_N)) &\leq \log 2 + \fat_{\epsilon/4} (\cB_L') \log \left( \frac{eN}{\epsilon \fat_{\epsilon/4} (\cB_L')} \right) \log \left( \frac{4N}{\epsilon^{2}} \right) \\
        &\leq \log 2 + \fat_{\epsilon/4} (\cB_L') \log \left( \frac{eN}{\epsilon} \right) \log \left( \frac{4N}{\epsilon^{2}} \right) \\
        &\leq \log 2 + \fat_{\epsilon/4} (\cB_L') \log^2 \left( \frac{4N}{\epsilon^2} \right) \\
        &= \log 2 + \fat_{C_L \epsilon/2} (\cB_L) \log^2 \left( \frac{4N}{\epsilon^2} \right) \\
        &\leq \log 2 +  256 C_{\cW, \cB, \Omega}^2 \Pi_L^2 \left(L \log 2 + \log(d+1) \right) (2 C_L \epsilon)^{-2} \log^2 \left( \frac{4N}{\epsilon^2} \right).
    \end{align}
    In the non-homogeneous case, the bound \eqref{eq:fat_rad_bound_nonhomogeneous} instead yields
    \begin{align} \label{eq:L_inf_empirical_bound_nonhomogeneous}
        \log \cN(\cB_L', \epsilon, L^\infty(\mu_N)) &\leq \log 2 + \fat_{\epsilon/4} (\cB_L') \log \left( \frac{eN}{\epsilon \fat_{\epsilon/4} (\cB_L')} \right) \log \left( \frac{4N}{\epsilon^{2}} \right) \\
        &\leq \log 2 + \fat_{C_L \epsilon/2} (\cB_L) \log^2 \left( \frac{4N}{\epsilon^2} \right) \\
        &\leq \log 2 +  4096 C_{\cW, \cB, \Omega}^2 \Pi_L^2 \tau_L(C_L \epsilon/2) (2 C_L \epsilon)^{-2} \log^2 \left( \frac{4N}{\epsilon^2} \right).
    \end{align}
    We will later transfer these empirical bounds into general $L^\infty(\mu)$ metric entropy bounds for $\cB_L'$, where $\mu$ is any finite measure on $\Omega$, and then translate the resulting bounds back to $\cB_L$. This will then imply that that the same bounds hold in $L^p(\mu)$ (up to explicit constant factors) for all $p < \infty$.

    \paragraph{Improved empirical $L^2$ entropy bounds via Sudakov's minoration (homogeneous case).} We now demonstrate improved bounds (with a polylog factor removed) on the empirical $L^2$ metric entropy of $\cB_L$. After subsequently converting these improved bounds into metric entropy bounds for arbitrary finite measures, they will also apply to $1 \leq p \leq 2$ up to explicit constant factors. We consider the homogeneous case first. 
    
    For an arbitrary empirical measure $\delta_N := \frac{1}{N} \sum_{i=1}^N \delta_{\vx_i}$ on $\Omega$ and arbitrary $\epsilon > 0$, let $f_1, \dots, f_m$ be a maximal $\epsilon$-packing of $\cB_L$ with respect to $L^2(\mu_N)$. Define the vectors $\vt_1, \dots, \vt_m \in \R^N$ by $t_{j,i} := f_j(\vx_i)/N$. Then for each $j,j'$ we have
\begin{align}
    \| \vt_j - \vt_{j'} \|_2 = \frac{1}{N} \sqrt{\sum_{i=1}^N \left( f_j(\vx_i) - f_{j'}(\vx_i) \right)^2} = \frac{1}{\sqrt{N}} \| f_j - f_{j'} \|_{L^2(\mu_N)} \geq \frac{\epsilon}{\sqrt{N}}
\end{align}
Furthermore, each $t_j$ satisfies $\| t_j \|_\infty \leq \max_{j,i} |f_j(\vx_i)|/N \leq C_L/N$. Applying Theorem 6.4.1 (Sudakov's minoration for Rademacher processes) in \cite{talagrand2021upper} with $a = \epsilon/\sqrt{N}$ and $b = C_L/N$, we have
\begin{align}
    \cR_N(\cB_L) &= \bE \sup_{f \in \widetilde{\cB}_L} \frac{1}{N} \sum_{i=1}^N s_i f(\vx_i) \geq \bE \max_{j=1, \dots, m} \frac{1}{N} \sum_{i=1}^N s_i f_j(\vx_i) \\
    &\geq c \min \left\{ \frac{\epsilon}{\sqrt{N}} \sqrt{\log m}, \frac{\epsilon^2}{C_L} \right\} = \frac{c}{\sqrt{N}} \min \left\{ \epsilon \sqrt{\log m}, \frac{\epsilon^2 \sqrt{N}}{C_L} \right\}
\end{align}
for some universal constant $c > 0$. The above display and \cref{lemma:rad_bound} imply that
\begin{align} \label{eq:min_bound}
     \min \left\{ \epsilon \sqrt{\log m}, \frac{\epsilon^2 \sqrt{N}}{C_L} \right\} \leq \frac{2 C_{\cW, \cB, \Omega} \Pi_L \sqrt{L \log 2 + \log(d+1)}}{c}.
\end{align}
Now choose $\epsilon$ so that
\begin{align}
    \frac{\epsilon^2 \sqrt{N}}{C_L} &\geq \frac{2 C_{\cW, \cB, \Omega} \Pi_L \sqrt{L \log 2 + \log(d+1)}}{c} \\
    \implies \epsilon &\geq \sqrt{\frac{2 C_L C_{\cW, \cB, \Omega} \Pi_L \sqrt{L \log 2 + \log(d+1)}}{c \sqrt{N}}}. \label{eq:eps_cond_sudakov_homogeneous}
\end{align}
Whenever $\epsilon$ satisfies this condition, the minimum in \eqref{eq:min_bound} is given by the $
\epsilon \sqrt{\log m}$ term (otherwise the bound \eqref{eq:min_bound} would not hold), and therefore
\begin{align}
    \min \left\{ \epsilon \sqrt{\log m}, \frac{\epsilon^2 \sqrt{N}}{C_L} \right\} = \epsilon \sqrt{\log m} &\leq \frac{2 C_{\cW, \cB, \Omega} \Pi_L \sqrt{L \log 2 + \log(d+1)}}{c} \\
    \implies \log m &\leq \frac{4 C_{\cW, \cB, \Omega}^2 \Pi_L^2 \left( L \log 2 + \log(d+1) \right)}{c^2 \epsilon^2}. 
\end{align}
Because the packing number $m$ is an upper bound on the covering number, we thus have
\begin{align} \label{eq:empirical_L2_bound_homogeneous}
    \log \cN(\cB_L, \epsilon, L^2(\mu_N)) \leq \frac{4 C_{\cW, \cB, \Omega}^2 \Pi_L^2 \left( L \log 2 + \log(d+1) \right)}{c^2 \epsilon^2}
\end{align}
for any $\epsilon$ which satisfies \eqref{eq:eps_cond_sudakov_homogeneous}.

\paragraph{Improved empirical $L^2$ entropy bounds via Sudakov's minoration (nonhomogeneous case).}
As in the homogeneous case, we use the Sudakov minoration inequality in \cite{talagrand2021upper} (Theorem 6.4.1). In this case, instead of applying Sudakov directly to the nonhomogeneous Rademacher bound \eqref{eq:rad_bound_nonhomogeneous} as we did in the homogeneous case, we will apply it to the discretized classes $\cB_L^\delta$, which directly replace the family $\{ \sigma_s: s > 0 \}$ at each hidden layer by a $\delta_\ell$ uniform norm-cover of size $A_\ell/\delta_\ell+1$ (see proof of \cref{lemma:rad_bound} in \cref{appendix:proof_rad_bound}). That proof directly bounds the Rademacher complexity of $\cB_L$ by approximating it with $\cB_L^\delta$ at each layer, resulting in the extra additive term $\sum_{\ell=1}^{L-1} \delta_\ell \pi_\ell$ in the final bound. The same argument applied to the classes $\cB_L^\delta$ themselves accordingly yields
\begin{align}
    \cR_N(\cB_L^\delta) \leq 2 C_{\cW, \cB, \Omega} \Pi_L \sqrt{\frac{L \log 2 + \log(d+1) + \sum_{\ell=1}^{L-1} \log(1+A_\ell/\delta_\ell)}{N}}
\end{align}
which is identical to \eqref{eq:rad_bound_nonhomogeneous}, but without the additive $\sum_{\ell=1}^{L-1} \delta_\ell \pi_\ell$ factor. Moreover, each $f \in \cB_L$ is uniformly within $\sum_{\ell=1}^{L-1} \delta_\ell \pi_\ell$ of some $\tilde{f} \in \cB_L^\delta$. Since the uniform norm upper bounds the $L^p(\mu_N)$ norm on $\Omega$ for any $p < \infty$, we thus have
\begin{align}
    \cN \left( \cB_L, \epsilon + \sum_{\ell=1}^{L-1} \delta_\ell \pi_\ell, L^2(\mu_N) \right) \leq \cN \left( \cB_L^\delta, \epsilon, L^2(\mu_N) \right).
\end{align}
Fixing $\epsilon > 0$ and choosing $\delta_\ell(\epsilon)$ as in \eqref{eq:delta_ell_choice}, we have
\begin{align}
    \cR_N(\cB_L^\delta) \leq 2 C_{\cW, \cB, \Omega} \Pi_L \sqrt{\frac{\tau_L(\epsilon)}{N}}
\end{align}
where $\tau_L(\epsilon)$ denotes the expression in \eqref{eq:tau_L_epsilon}. With any such choice of $\epsilon$, we also have $\sum_{\ell=1}^{L-1} \delta_\ell(\epsilon) \pi_\ell \leq \epsilon/4$, and therefore
\begin{align} \label{eq:B_L_B_L_delta_entropy_bound}
    \cN \left( \cB_L, 5 \epsilon/4, L^2(\mu_N) \right) \leq \cN \left( \cB_L^{\delta(\epsilon)}, \epsilon, L^2(\mu_N) \right).
\end{align}
Now let $m(\epsilon)$ denote the packing number of $\cB_L^{\delta(\epsilon)}$. Using the same Sudakov minoration argument as in the homogeneous case, we get
\begin{align} \label{eq:min_bound_nonhomogeneous}
     \min \left\{ \epsilon \sqrt{\log m}, \frac{\epsilon^2 \sqrt{N}}{C_L} \right\} \leq \frac{2 C_{\cW, \cB, \Omega} \Pi_L \sqrt{\tau_L(\epsilon)}}{c}
\end{align}
and therefore
\begin{align} 
    \log \cN(\cB_L^{\delta(\epsilon)}, \epsilon, L^2(\mu_N)) \leq \frac{4 C_{\cW, \cB, \Omega}^2 \Pi_L^2 \tau_L(\epsilon)}{c^2 \epsilon^2}.
\end{align}
whenever
\begin{align} \label{eq:eps_cond_sudakov_nonhomogeneous}
     \epsilon \geq \sqrt{ \frac{2 C_L C_{\cW, \cB, \Omega} \Pi_L \sqrt{\tau_L(\epsilon)}}{c \sqrt{N}}}.
\end{align}
By \eqref{eq:B_L_B_L_delta_entropy_bound}, this implies that
\begin{align} \label{eq:empirical_L2_bound_nonhomogeneous}
    \log \cN(\cB_L, \epsilon, L^2(\mu_N)) \leq \frac{25  C_{\cW, \cB, \Omega}^2 \Pi_L^2 \tau_L(4 \epsilon/5)}{4 c^2 \epsilon^2}.
\end{align}
for any $\epsilon$ satisfying \eqref{eq:eps_cond_sudakov_nonhomogeneous}.

    \paragraph{Convert empirical to general metric entropy bound ($p \leq \infty$).} We now show that the empirical $L^p$ metric entropy bounds derived above imply bounds on the $L^p$ entropies with respect to general finite measures. Let $\mu$ be any finite measure on $\Omega$ and let $\cC_{\Omega,\eta}$ be a minimal Euclidean $\eta$-cover of $\Omega$. For this cover $\cC_{\Omega,\eta}$, let
    \begin{align}
        \mu_{\Omega, \eta} := \frac{1}{|\cC_{\Omega,\eta}|} \sum_{\vx \in \cC_{\Omega,\eta}} \delta_{\vx}
    \end{align}
    be the empirical measure whose atoms are located at the cover elements. For any $\vx \in \Omega$, let $\vomega_{\vx}$ denote the closest element in $\cC_{\Omega,\eta}$ to $\vx$. Additionally, note that all functions in the class $\cB_L$ are Lipschitz with constant $P_L := C_{\cW} \Pi_L$, where $\cW := \sup_{\vx \in \cW} \| \vw \|_2$. This holds because
    \begin{align}
        |\sigma(f(\vx_1)) - \sigma(f(\vx_2))| \leq \rho_1 |\vw^\top \vx_1 + b - \vw^\top \vx_2 - b | \leq \rho_1 C_{\cW} \| \vx_1 - \vx_2 \|_2 
    \end{align}
    for any $f \in \cB_1$ and any $\vx_1, \vx_2 \in \Omega$. Because Lipschitz constants are preserved by absolutely convex combinations and uniform limits, the above demonstrates that the functions in $\cB_2$ are $P_2 := C_{\cW} \rho_1$-Lipschitz. Assuming inductively that this holds for some $L > 2$, we also have
    \begin{align}
        |\sigma(f(\vx_1)) - \sigma(f(\vx_2))| \leq \rho_L |f(\vx_1) - f(\vx_2)| \leq \rho_L  C_{\cW} \Pi_L \| \vx_1 - \vx_2 \|_2 = C_{\cW} \Pi_{L+1} \| \vx_1 - \vx_2 \|_2
    \end{align}
    so the functions in $\cB_{L+1}$ are $P_{L+1} := C_{\cW} \Pi_{L+1}$-Lipschitz. Therefore, the functions in $\cB_L'$ are $P_L' := P_L/(2 C_L)$-Lipschitz.
    
    Using the above facts, we have
    \begin{align}
        \| f - g \|_{L^\infty(\mu)} = \sup_{\vx \in \widetilde{\Omega}} |f(\vx) - g(\vx)| &\leq \sup_{\vx \in \widetilde{\Omega}} |f(\vx) - f(\vomega_{\vx})| + |f(\vomega_{\vx}) - g(\vomega_{\vx})| + |g(\vomega_{\vx}) - g(\vx)| \\
        &\leq \sup_{\vx \in \widetilde{\Omega}} |f(\vomega_{\vx}) - g(\vomega_{\vx}) | + 2 P_L' \| \vx - \vomega_{\vx} \|_2 \\
        &\leq 2 P_L' \eta + \max_{\vomega \in \cC_{\Omega,\eta}} |f(\vomega) - g(\vomega)| \\
        &= 2 P_L' \eta+ \| f - g \|_{L^\infty(\mu_{\Omega, \eta})}
    \end{align}
    for any $f, g \in \cB_L'$, where $\widetilde{\Omega} \subset \Omega$ is some full $\mu$-measure set. This holds for any $\eta > 0$, so choosing $\eta_0 := \epsilon/(4 P_L')$, we see that any $\epsilon/2$ cover of $\cB_L'$ with respect to $L^\infty(\mu_{\Omega,\eta_0})$ yields an $\epsilon$-cover of $\cB_L'$ with respect to $L^\infty(\mu)$. In other words,
    \begin{align} \label{eq:infty_general_empirical_bound}
        \log \cN (\cB_L', \epsilon, L^\infty(\mu)) \leq \log \cN \left( \cB_L', \epsilon/2, L^\infty(\mu_{\Omega,\eta_0}) \right).
    \end{align}

    Let $C_{\Omega} := \sup_{\vx \in \Omega} \| \vx \|_2$. By the standard finite-dimensional volumetric bound (\cite{vershynin2020high}, Corollary 4.2.11), the size of the cover $\cC_{\Omega,\eta_0}$---i.e., the number of atoms in the empirical measure $\mu_{\Omega,\eta_0}$---can be bounded as
    \begin{align} \label{eq:C_omega_covering_bound}
        |\cC_{\Omega,\eta_0}| = \cN(\Omega, \eta_0, \| \cdot \|_2)  \leq \left( \frac{8 P_L' C_{\Omega}}{\epsilon}  + 1\right)^d.
    \end{align}

    Combining \eqref{eq:infty_general_empirical_bound} and \eqref{eq:C_omega_covering_bound} with the homogeneous empirical $L^\infty$ bound \eqref{eq:L_inf_empirical_bound_homogeneous}, we obtain
    \begin{align}
        \log \cN(\cB_L', \epsilon, L^\infty(\mu)) &\leq \log \cN(\cB_L', \epsilon/2, L^\infty(\mu_{\Omega, \eta_0})) \\
        &\leq \log 2 +  256 C_{\cW, \cB, \Omega}^2 \Pi_L^2 \left(L \log 2 + \log(d+1) \right) (C_L \epsilon)^{-2} \log^2 \left( \frac{16}{\epsilon^2} \left( \frac{8 P_L' C_{\Omega}}{\epsilon}  + 1\right)^d \right) \\
        &\leq \log 2 +  256 C_{\cW, \cB, \Omega}^2 \Pi_L^2 \left(L \log 2 + \log(d+1) \right) (C_L \epsilon)^{-2} \log^2 \left( \frac{16}{\epsilon^2} \left( \frac{8 P_L' C_{\Omega} + 1}{\epsilon} \right)^d \right).
    \end{align}
     Applied to the nonhomogeneous bound \eqref{eq:L_inf_empirical_bound_nonhomogeneous}, we instead get
    \begin{align}
        \log \cN(\cB_L', \epsilon, L^\infty(\mu)) &\leq \log \cN(\cB_L', \epsilon/2, L^\infty(\mu_{\Omega, \eta_0})) \\
        &\leq \log 2 +  4096 C_{\cW, \cB, \Omega}^2 \Pi_L^2 \tau_L(C_L \epsilon/4) (C_L \epsilon)^{-2} \log^2 \left( \frac{16}{\epsilon^2} \left( \frac{8 P_L' C_{\Omega}}{\epsilon}  + 1\right)^d \right).
    \end{align}

    Finally, translating and rescaling the class gives
    \begin{align}
        \log \cN(\cB_L, \epsilon, L^\infty(\mu)) = \log \cN\left(\cB_L', \frac{\epsilon}{2 C_L}, L^\infty(\mu)\right).
    \end{align}
    If $\epsilon > 2 C_L$, then the covering number on the left is one, so it suffices to consider $0 < \epsilon \leq 2 C_L$. Substituting $\epsilon/(2 C_L)$ into the preceding homogeneous bound and using $P_L' = P_L/(2 C_L)$, we obtain
    \begin{align}
        \log \cN(\cB_L, \epsilon, L^\infty(\mu)) &\leq \log 2 +  1024 C_{\cW, \cB, \Omega}^2 \Pi_L^2 \left(L \log 2 + \log(d+1) \right) \epsilon^{-2} \log^2 \left( \frac{64 C_L^2}{\epsilon^2} \left( \frac{8 P_L C_{\Omega}}{\epsilon}  + 1\right)^d \right) \\
        &\leq \log 2 +  1024 C_{\cW, \cB, \Omega}^2 \Pi_L^2 \left(L \log 2 + \log(d+1) \right) \epsilon^{-2} \log^2 \left( \frac{64 C_L^2}{\epsilon^2} \left( \frac{8 P_L C_{\Omega} + 1}{\epsilon} \right)^d \right). \label{eq:L_inf_general_bound_homogeneous}
    \end{align}
     Applied to the preceding nonhomogeneous bound, we instead get
    \begin{align}
        \log \cN(\cB_L, \epsilon, L^\infty(\mu)) &\leq \log 2 +  16384 C_{\cW, \cB, \Omega}^2 \Pi_L^2 \tau_L(\epsilon/8) \epsilon^{-2} \log^2 \left( \frac{64 C_L^2}{\epsilon^2} \left( \frac{8 P_L C_{\Omega}}{\epsilon}  + 1\right)^d \right)  \label{eq:L_inf_general_bound_nonhomogeneous}
    \end{align}
    where $\tau_L(\epsilon)$ is as in \eqref{eq:tau_L_epsilon}. 
    
    \paragraph{Simplify $p \leq \infty$ upper bounds into the form \eqref{eq:entropy_bounds_th}.} To convert the bounds \eqref{eq:L_inf_general_bound_homogeneous} and \eqref{eq:L_inf_general_bound_nonhomogeneous} into the more interpretable form \eqref{eq:entropy_bounds_th} in the theorem statement (case $2 < p \leq \infty$), we first show that $\tau_L(\epsilon/8) \lesssim T_L(\epsilon)$ for all $0 < \epsilon \leq 1/2$, where $\tau_L$ is as in \eqref{eq:tau_L_epsilon} and $T_L$ is as in \eqref{eq:T_eps}. We bound the $L \log 2 + \log(d+1)$ term as
    \begin{align} \label{eq:L_log_d_homogeneous_lesssim_bound}
        L \log 2 + \log(d+1) \leq L \log 2 + \log(2d) = (L+1) \log 2 + \log d \leq 2L  + 2 \log d = 2(L + \log d).
    \end{align}
    In the nonhomogeneous case, the sum term in $\tau_L(\epsilon/8)$ is upper bounded by
    \begin{align}
        \sum_{\ell=1}^{L-1} \log \left( 1 + \frac{32 L A_\ell \max \{1, \pi_\ell \}}{\epsilon} \right) \leq \sum_{\ell=1}^{L-1} 32 \log \left( 1 + \frac{L A_\ell \max \{1, \pi_\ell \}}{\epsilon} \right)
    \end{align}
    using Bernoulli's inequality $1 + rx \leq (1+x)^r$, which holds for any real $r \geq 1$, $x \geq -1$. This shows that $\tau_L(\epsilon/8) \lesssim T_L(\epsilon)$ for all $0 < \epsilon \leq 1/2$, as desired.
    
    Next, to handle the $\log^2$ terms in \eqref{eq:L_inf_general_bound_homogeneous} and \eqref{eq:L_inf_general_bound_nonhomogeneous}, first note that
    \begin{align}
        C_L \leq C_1 \Pi_L,
    \end{align}
    which follows inductively from $\sigma_s(0)=0$ and the fact that every $\sigma_s$ is $\rho_\ell$-Lipschitz on $[-C_\ell,C_\ell]$. Consequently, writing $\overline{C}_1 := \max\{1,C_1\}$, we have
    \begin{align} 
        &\log \left( \frac{64 C_L^2}{\epsilon^2} \left( \frac{8 P_L C_{\Omega} + 1}{\epsilon} \right)^d \right) \leq \log \left( \frac{64 \overline{C}_1^2 (\Pi_L + 1)^2}{\epsilon^2} \left( \frac{8 P_L C_{\Omega} + 1}{\epsilon} \right)^d \right) \\
        &\qquad \qquad \qquad \qquad = \log(64 \overline{C}_1^2) + (d+2) \log(\epsilon^{-1}) + 2 \log(\Pi_L+1) + d \log( 8  C_{\cW} C_\Omega \Pi_L + 1).  \label{eq:log_homogeneous_L_inf_expanded}
    \end{align}
    If $8 C_{\cW} C_\Omega \leq 1$, then the final term in \eqref{eq:log_homogeneous_L_inf_expanded} is directly upper bounded by $d \log (\Pi_L + 1)$. On the other hand, if $8 C_{\cW} C_\Omega > 1$, then Bernoulli's inequality implies that the last term is upper bounded by $8 C_{\cW} C_\Omega d \log(\Pi_L + 1)$. For the second term in \eqref{eq:log_homogeneous_L_inf_expanded}, we have $(d+2) \log(\epsilon^{-1}) \leq 3d \log(\epsilon^{-1}) \leq 3d \log((\Pi_L + 1) \epsilon^{-1})$. The third term satisfies $2 \log(\Pi_L+1) \leq 2d \log((\Pi_L+1)\epsilon^{-1})$. For the first term in \eqref{eq:log_homogeneous_L_inf_expanded}: whenever $0 < \epsilon \leq 1/2$, we have $\log 2 \leq \log (\epsilon^{-1}) \leq d \log(\epsilon^{-1}) \leq d \log((\Pi_L+1) \epsilon^{-1})$, and therefore
    \begin{align}
        \log (64 \overline{C_1}^2) \leq \frac{\log (64 \overline{C_1}^2)}{\log 2} d \log((\Pi_L+1) \epsilon^{-1}).
    \end{align}
    
    As a result:
    \begin{align}  \label{eq:log_sq_homogeneous_lesssim_bound}
        \log^2 \left( \frac{64 C_L^2}{\epsilon^2} \left( \frac{8 P_L C_{\Omega} + 1}{\epsilon} \right)^d \right)  \lesssim d^2 \log^2 \left( (\Pi_L + 1) \epsilon^{-1} \right)
    \end{align}
    where $\lesssim$ hides only multiplicative constants which are independent of $\epsilon$, $d$, $L$, $C_L$, and $\Pi_L$. Everything above shows that the main terms in \eqref{eq:L_inf_general_bound_homogeneous} and \eqref{eq:L_inf_general_bound_nonhomogeneous} are bounded (up to universal multiplicative constants) by
    \begin{align}
        \Pi_L^2 T_L(\epsilon) \epsilon^{-2} d^2 \log^2 \left((\Pi_L+1) \epsilon^{-1} \right) \leq (\Pi_L+1)^2 T_L(\epsilon) \epsilon^{-2} d^2 \log^2 \left((\Pi_L+1) \epsilon^{-1} \right) 
    \end{align}
    for $0 < \epsilon \leq 1/2$. The additive $\log 2$ obeys the same $\lesssim$ bound, since
    \begin{align}
        \log^2 2 &\leq \log^2 (\epsilon^{-1}) \leq (\Pi_L+1)^2 T_L(\epsilon) \epsilon^{-2} d^2 \log^2 \left((\Pi_L+1) \epsilon^{-1} \right) \label{eq:Pi_Pi_plus_one_ineq} \\
        &\implies \log 2 \leq (\log 2)^{-1} (\Pi_L+1)^2 T_L(\epsilon) \epsilon^{-2} d^2 \log^2 \left((\Pi_L+1) \epsilon^{-1} \right)
    \end{align}
    whenever $0 < \epsilon \leq 1/2$. (Note that the second inequality in \eqref{eq:Pi_Pi_plus_one_ineq} would not necessarily hold without having replaced the outermost $\Pi_L^2$, which could be small, with $(\Pi_L+1)^2$, which is necessarily greater than one.) This shows that both \eqref{eq:L_inf_general_bound_homogeneous} and \eqref{eq:L_inf_general_bound_nonhomogeneous} are $\lesssim (\Pi_L+1)^2 T_L(\epsilon) \epsilon^{-2} d^2 \log^2 \left((\Pi_L + 1) \epsilon^{-1} \right)$ in the case $p = \infty$. Any $L^\infty(\mu)$ cover of radius $\epsilon$ yields an $L^p(\mu)$ cover of radius $\mu(\Omega)^{1/p} \epsilon$, so the same bound holds for $p < \infty$ by rescaling $\epsilon \mapsto \mu(\Omega)^{1/p} \epsilon$. (The extra constants that this rescaling creates inside the log terms can be handled as above using Bernoulli's inequality.)

    \paragraph{Convert empirical to general metric entropy bound ($p \leq 2$).} The final step is to convert the improved empirical $L^2$ metric entropy bounds \eqref{eq:empirical_L2_bound_homogeneous} and \eqref{eq:empirical_L2_bound_nonhomogeneous} into a general metric entropy bound for $1 \leq p \leq 2$. To do so, let $\mu$ be any probability measure on $\Omega$ and define
    \begin{align} \label{eq:empirical_measure_approx_2}
        \Phi_N(\cB_L) := \sup_{f,g \in \cB_L} \left| \| f - g \|_{L^2(\mu)}^2 - \| f - g \|_{L^2(\mu_N)}^2 \right|.
    \end{align}
    We claim that there is a sequence of points $\{ \vx_i \}_{i=1}^\infty \subset \Omega$ such that $\Phi_N(\cB_L) \to 0$ as $N \to \infty$. This demonstrates that $\mu$ can be approximated uniformly over $\cB_L$ by empirical measures, which is what will allow us to transfer the empirical $L^2$ entropy bound to a general one.
    
    To prove this claim, we first argue that the class $\{ (f-g)^2: f,g \in \cB_L \}$ is also uniformly bounded and Lipschitz. Uniform boundedness is easy to see, since
    \begin{align}
        \| (f-g)^2 \|_\infty = \sup_{\vx \in \Omega} (|f(\vx)| + |g(\vx)|)^2 \leq \sup_{\vx \in \Omega} 4 C_L^2 = 4 C_L^2
    \end{align}
    for each $f,g \in \cB_L$. For the Lipschitz claim, note that
    \begin{align}
        \Lip((f-g)^2) &\leq \Lip(f^2) + 2\Lip(fg) + \Lip(g^2).
    \end{align}
    To bound the $\Lip(fg)$ term on the right, write
    \begin{align}
        |f(\vx_1) g(\vx_1) - f(\vx_2) g(\vx_2)| &\leq |f(\vx_1) g(\vx_1) - f(\vx_1) g(\vx_2)| +| f(\vx_1) g(\vx_2) -f(\vx_2) g(\vx_2)| \\
        &\leq 2 C_L P_L \| \vx_1 - \vx_2 \|_2
    \end{align}
    which shows that $\Lip(fg) \leq 2 C_L P_L$. The same argument applies to the $\Lip(f^2)$ and $\Lip(g^2)$ terms, implying that
    \begin{align}
        \Lip((f-g)^2) \leq 8 C_L P_L.
    \end{align}
 
    Having shown that the class $\{ (f-g)^2: f,g \in \cB_L \}$ is uniformly bounded and Lipschitz, we see that it is uniformly equicontinuous. Furthermore, by Varadarajan's theorem (\cite{dudley2002real}, Theorem 11.4.1), there exists an i.i.d. sequence of $\Omega$-valued random variables $\{ X_i \}_{i=1}^\infty \sim \mu$ which almost surely satisfies
    \begin{align} \label{eq:Varadajaran}
        \frac{1}{N} \sum_{i=1}^N h(X_i) \to \int h \ d \mu, \quad \forall h \in C(\Omega).
    \end{align}
    Letting $\{ \vx_i \}_{i=1}^\infty$ be some realization of the $\{ X_i \}_{i=1}^\infty$ for which this convergence occurs, \eqref{eq:Varadajaran} says that the empirical measures $\mu_N = \frac{1}{N} \sum_{i=1}^N \delta_{\vx_i}$ converge weakly to $\mu$. This fact, along with uniform equicontinuity of $\cB_L$, implies the desired statement $\Phi_N(\cB_L) \to 0$ by Corollary 11.3.4 in \cite{dudley2002real}.

    Next, using the fact that $\Phi_N(\cB_L) \to 0$, we proceed to transfer the empirical $L^2$ metric entropy bound to a general one. We first consider the homogeneous case. Fix $\delta > 0$ and $\eta > 0$. By the convergence $\Phi_N(\cB_L) \to 0$, there is some $N_{\delta,\eta}$ large enough that both
    \begin{align} \label{eq:delta_cond_sudakov_homogeneous}
        \delta \geq \sqrt{\frac{2 C_L C_{\cW, \cB, \Omega} \Pi_L \sqrt{L \log 2 + \log(d+1)}}{c \sqrt{N}}} \qquad \textrm{(which satisfies the condition \eqref{eq:eps_cond_sudakov_homogeneous})}
    \end{align}
    and $\Phi_N(\cB_L) \leq \eta$ for any $N \geq N_{\delta, \eta}$. Fix some $N \geq N_{\delta,\eta}$ and let $\cC_{\delta,\eta,N}$ be a minimal $\delta$-cover of $\cB_L$ in $L^2(\mu_N)$. For any $f \in \cB_L$, there is some $g \in \cC_{\delta,\eta,N}$ such that
    \begin{align}
        \| f - g \|_{L^2(\mu)}^2 \leq \| f - g \|_{L^2(\mu_N)}^2 + \Phi_N(\cB_L) \leq \delta^2 + \eta.
    \end{align}
    This shows that $\cC_{\delta,\eta,N}$ is an $L^2(\mu)$ cover of $\cB_L$ of radius $\epsilon := \sqrt{\delta^2 + \eta}$. Therefore, by \eqref{eq:empirical_L2_bound_homogeneous}:
    \begin{align}
        \log \cN(\cB_L, \epsilon, L^2(\mu)) \leq \log \cN(\cB_L, \delta, L^2(\mu_N)) \leq \frac{4 C_{\cW, \cB, \Omega}^2 \Pi_L^2 \left( L \log 2 + \log(d+1) \right)}{c^2 \delta^2}.
    \end{align}
    Repeating this process for the same fixed $\delta$ and a sequence $\eta \downarrow 0$ yields
    \begin{align} \label{eq:L2_general_entropy_bound_homogeneous}
        \log \cN(\cB_L, \epsilon, L^2(\mu)) \leq \frac{4 C_{\cW, \cB, \Omega}^2 \Pi_L^2 \left( L \log 2 + \log(d+1) \right)}{c^2 \epsilon^2}
    \end{align}
    for any probability measure $\mu$ on $\Omega$.  To extend this bound to $1 \leq p \leq 2$ and any finite measure $\mu$ on $\Omega$, simply use the fact that
    \begin{align}
        \| f \|_{L^p(\mu)} \leq  \mu(\Omega)^{1/p-1/2} \| f \|_{L^2(\mu)} = \mu(\Omega)^{1/p}  \| f \|_{L^2(\tilde{\mu})}, \qquad 1 \leq p \leq 2
    \end{align}
    where $\tilde{\mu} := \mu/\mu(\Omega)$ is the normalization of $\mu$. This shows that every $L^2(\tilde{\mu})$ cover of radius $\mu(\Omega)^{-1/p} \epsilon$ yields an $L^p(\mu)$ cover of radius $\epsilon$, so \eqref{eq:L2_general_entropy_bound_homogeneous} also applies in these cases with $\epsilon$ replaced by $\mu(\Omega)^{-1/p} \epsilon$. This proves the $1 \leq p \leq 2$ case of the homogeneous bound in \eqref{eq:entropy_bounds_th}. The argument in the nonhomogeneous case is exactly the same, except we instead choose $N_{\delta,\eta}$ large enough that $\delta$ satisfies the lower bound \eqref{eq:eps_cond_sudakov_nonhomogeneous} rather than \eqref{eq:eps_cond_sudakov_homogeneous}. The resulting nonhomogeneous bound is
    \begin{align} \label{eq:L2_general_entropy_bound_nonhomogeneous}
        \log \cN(\cB_L, \epsilon, L^2(\mu)) \leq \frac{25  C_{\cW, \cB, \Omega}^2 \Pi_L^2 \tau_L(4 \epsilon/5)}{4 c^2 \epsilon^2}.
    \end{align}
    for any probability measure $\mu$ on $\Omega$. As shown for the $p = \infty$ bound above, we have $\tau_L(4 \epsilon/5) \lesssim T_L(\epsilon)$. Here, too, the same $\lesssim$ bound holds general finite measures $\mu$ and any $1 \leq p < 2$ by rescaling $\epsilon \mapsto \mu(\Omega)^{1/p} \epsilon$.
\end{proof}

\subsection{Example of a function class with small Rademacher complexity and large metric entropy} \label{appendix:rad_entropy_example}
\paragraph{Classes of steep disjoint bump functions have small Rademacher complexity.} Here we discuss an example of a function class with small Rademacher complexity, but large metric entropy as measured in the ambient function space norm. Let $\{ \sigma_j \}_{j=1}^\infty \subset C[0,1]$ be a sequence of disjoint bump functions, each of height $\| \sigma_j \|_\infty = 1$, such that $\sigma_j$ is supported on the interval
\begin{align} \label{eq:psi_j_dyadic_support}
    [1-2^{-j-1}, 1-2^{-j}].
\end{align}
Let $\{ a_j \}_{j=1}^\infty \subset \R$ be any sequence which decreases monotonically to 0, and consider the class $\cH := \{ a_j \sigma_j \}_{j=1}^\infty$. Visually, the class $\cH$ is a sequence of disjoint bump functions $\sigma_j$ on $[0,1]$, each of increasingly narrow width, with corresponding heights $a_j$ (see \cref{fig:class_H}).

\begin{figure}
    \centering

    \begin{subfigure}[t]{0.48\textwidth}
        \centering
        \includegraphics[width=\textwidth]{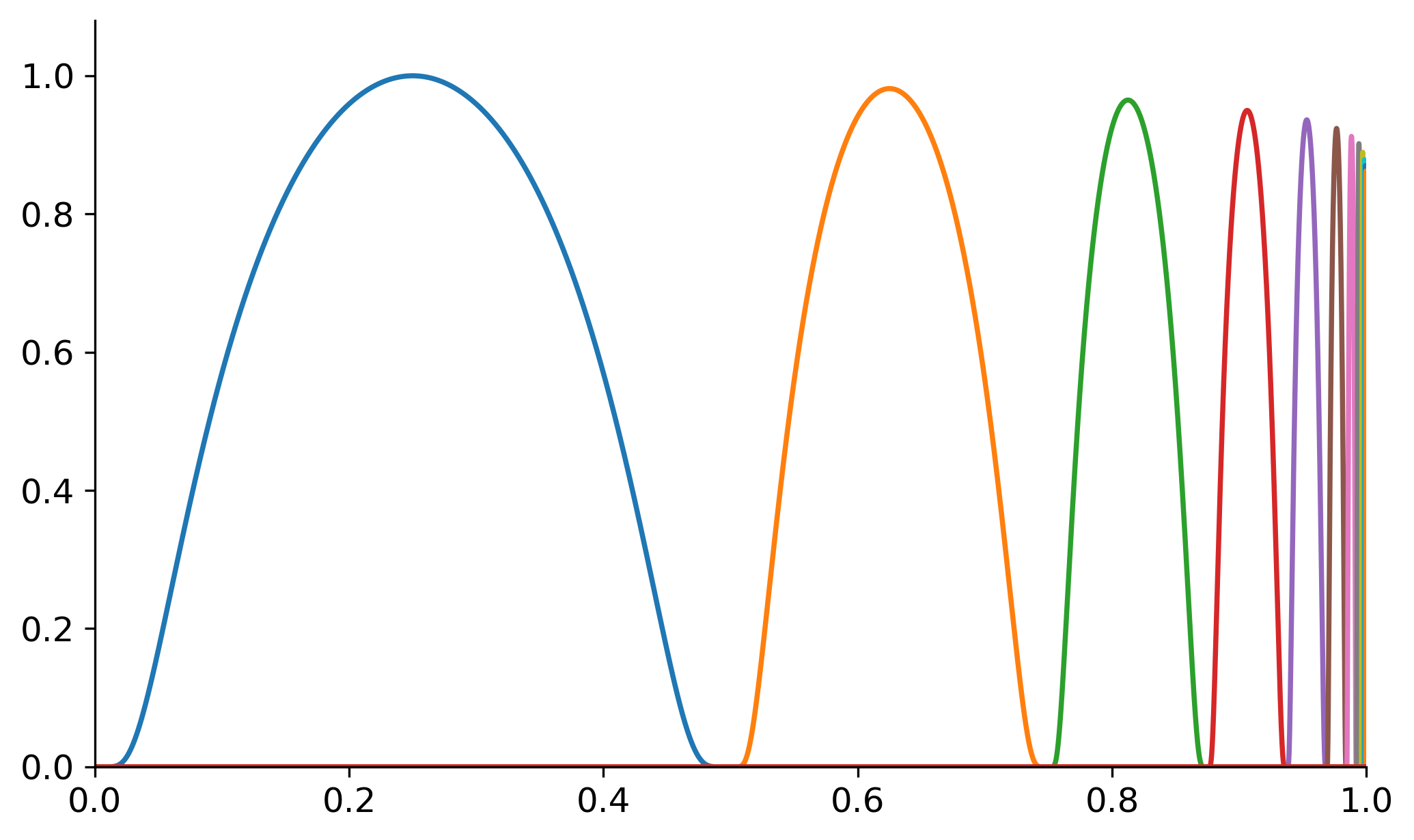}
        \caption{Disjoint bump function class $\cH$.}
        \label{fig:first-image}
    \end{subfigure}
    \hfill
    \begin{subfigure}[t]{0.48\textwidth}
        \centering
        \includegraphics[width=\textwidth]{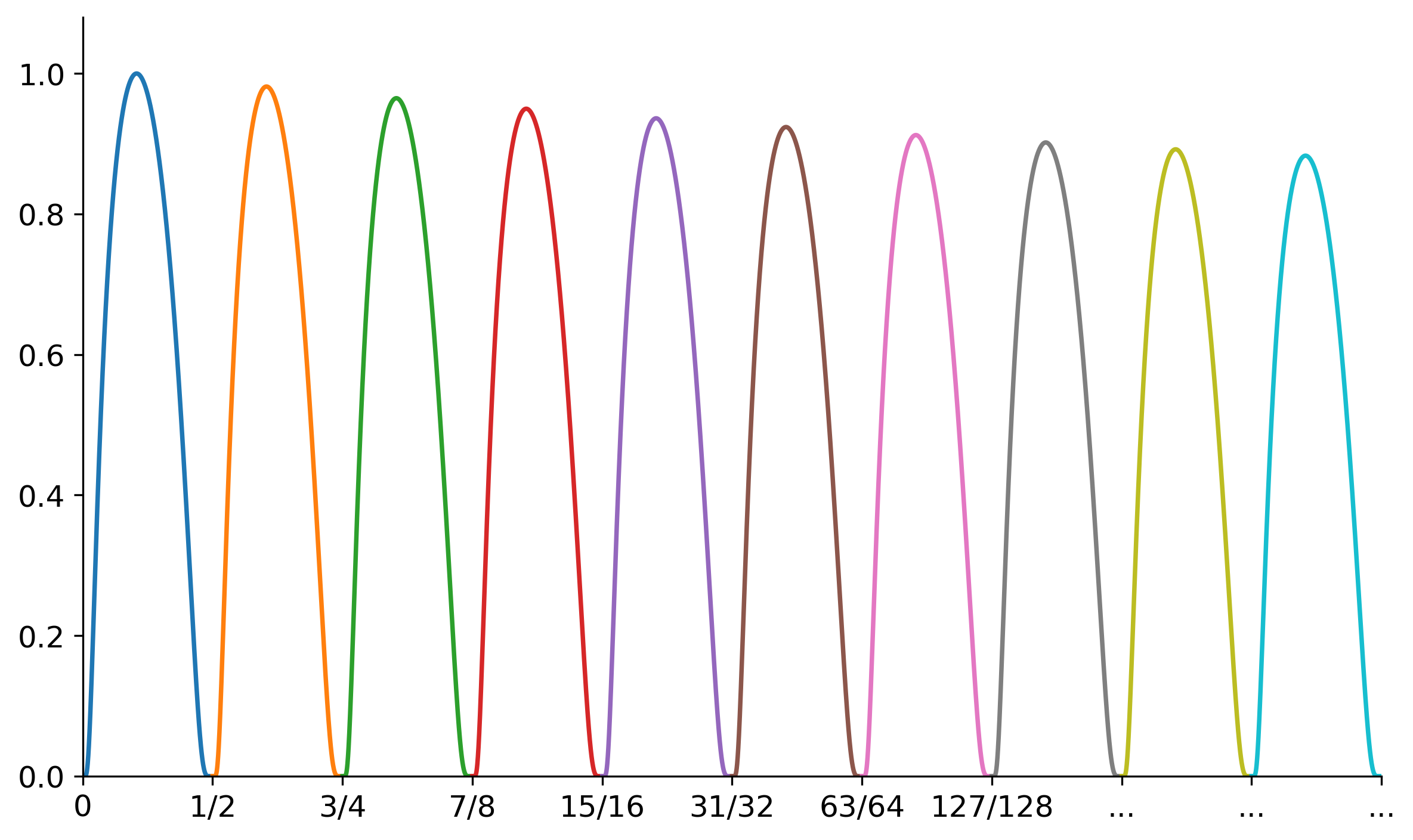}
        \caption{Disjoint bump function class $\cH$ (with rescaled $x$-axis).}
        \label{fig:second-image}
    \end{subfigure}

    \caption{A class $\cH$ of disjoint bump functions on $[0,1]$. The $j^\textrm{th}$ bump is supported on the dyadic interval \eqref{eq:psi_j_dyadic_support}, and has height $a_j \asymp 1/(\log \log j)$.}
    \label{fig:class_H}
\end{figure}

The worst-case empirical Rademacher complexity of $\cH$ can be bounded as
\begin{align}
    \cR_N(\cH) &= \sup_{x_1, \dots, x_N} \frac{1}{N} \bE_{s_1, \dots, s_N} \left[ \sup_j  \sum_{i=1}^N s_i a_j \sigma_j (x_i) \right] \leq \sup_{x_1, \dots, x_N} \frac{1}{N} \bE_{s_1, \dots, s_N} \left[ \sup_j \left|  \sum_{i=1}^N s_i a_j \sigma_j (x_i) \right| \right] \\
    &\leq \sup_{x_1, \dots, x_N} \frac{1}{N} \bE_{s_1, \dots, s_N} \left[ \left( \sum_{j=1}^\infty  \left|  \sum_{i=1}^N s_i a_j \sigma_j (x_i) \right|^2 \right)^{1/2} \right] \\
    &\leq \sup_{x_1, \dots, x_N} \frac{1}{N} \left(  \bE_{s_1, \dots, s_N} \left[ \sum_{j=1}^\infty  \left|  \sum_{i=1}^N s_i a_j \sigma_j (x_i) \right|^2 \right] \right)^{1/2} = \sup_{x_1, \dots, x_N} \frac{1}{N} \left(  \sum_{j=1}^\infty  \sum_{i=1}^N |a_j \sigma_j(x_i)|^2  \right)^{1/2} \\
    &= \sup_{x_1, \dots, x_N} \frac{1}{N} \left(       \sum_{i=1}^N  \left| a_{j(i)} \sigma_{j(i)} (x_i) \right|^2 \right)^{1/2} \leq \frac{a_1}{N} \left( \sum_{i=1}^N 1 \right)^{1/2} = a_1 N^{-1/2}.
\end{align}
The second inequality above bounds the $\ell^\infty$ norm by the $\ell^2$ norm; the third is Jensen's inequality applied to the concave function $t \mapsto t^{1/2}$; and the third equality uses the important fact that, because the bumps $\sigma_j$ are disjoint, each data point $x_i$ has at most one bump $\sigma_{j(i)}$ active on it. Intuitively, the fact that at most one $\sigma_j$ can be active on any data point $x_i$ limits the ability of $\cH$ to fit random signs, which makes its Rademacher complexity small. Also note that this bound does not depend at all on how fast the sequence $\{ a_j \}_{j=1}^\infty$ goes to 0, but only on the first and largest element $a_1$.

\paragraph{These bump classes can have arbitrarily large $L^\infty$ metric entropy.} Despite the fact that $\cH$ has small Rademacher complexity, its metric entropy with respect to the uniform norm $\| \cdot \|_\infty$\footnote{Because the functions in $\cH$ are continuous, the uniform norm $\| \cdot \|_\infty$ is equivalent in this case to the $L^\infty(dx)$ norm with respect to the Lebesgue measure $dx$ on $[0,1]$.} can be extremely large. To see this, note that the norm distance between any two elements $a_j \sigma_j$ and $a_k \sigma_k$ of $\cH$ is
\begin{align} \label{eq:a_psi_norm_max}
    \| a_j \sigma_j - a_k \sigma_k \|_{\infty} = \max \{ a_j, a_k \}.
\end{align}
Suppose that we choose $a_j \asymp 1/(\log \log j)$ and fix some $\epsilon > 0$. Solving $a_J \asymp 1/(\log \log J) = \epsilon$ for $J$  yields $J \asymp \exp \exp (1/\epsilon)$. This implies that the first $J \asymp \exp \exp (1/\epsilon)$ elements of the sequence $\{ a_j \}_{j=1}^\infty$ are strictly greater than $\epsilon$, which in turn implies by \eqref{eq:a_psi_norm_max} that the first $J \asymp \exp \exp (1/\epsilon)$ elements of $\cH$ are more than $\epsilon$ apart from each other in norm. Therefore, any $\epsilon$-cover of $\cH$ must have a separate element dedicated to covering each of the first $J \asymp \exp \exp (1/\epsilon)$ elements (no two can be $\epsilon$-covered by the same covering element). This shows that
\begin{align}
    \log \cN(\cH, \epsilon, \| \cdot \|_\infty) \gtrsim \exp (1/\epsilon).
\end{align}
Note that this final lower bound was specific to the choice of $a_j \asymp 1/(\log \log j)$. A different choice of $a_j$ which decays even more slowly than this would yield a lower bound that grows even faster as $\epsilon$ shrinks. Therefore, we see that the ambient Banach space entropy (measured in uniform norm) of this class $\cH$ may be extremely large, even though its Rademacher complexity is small.

Intuitively, this occurs because of the highly localized nature of the ``spiky'' bump functions. In the uniform norm topology, each bump function is a completely distinct object, and if the heights of the bumps (given by the sequence $a_j$) decay slowly, the norm-separation between any two such functions can be quite large. Therefore, a large amount of global geometric information may be required to describe the entire class $\cH$ with respect to this norm. On the other hand, the class $\cH$ is limited in its ability to correlate with random signs, because at most one bump function of controlled height can be active on any data point---as a result, the Rademacher complexity of the function class is small, despite its large geometric complexity in the uniform norm.

\subsection{Proofs of \cref{sec:tightness_upper_bounds}} \label{appendix:proofs_lower_bound_props}
\subsubsection{Proof of \cref{prop:B2_entropy_lower_bounds}} \label{appendix:B2_entropy_lower_bounds}
    \begin{proof}
        Under the theorem assumptions, we have $\| f \|_{L^2(\mu)} \gtrsim \| f \|_{L^2(d \vx)}$ for every $f$. This is easy to see for simple functions $f = \sum_j a_j \mathbbm{1}_{\cA_j}$, which satisfy
        \begin{align}
            \| f \|_{L^2(\mu)}^2 = \sum_j a_j^2 \mu(\cA_j) \gtrsim \sum_j a_j^2 \, d \vx(\cA_j) = \| f \|_{L^2(d \vx)}^2. 
        \end{align}
        The same inequality holds for general $f$ by taking a sequence of simple functions which increase monotonically to $|f|^2$ and applying the monotone convergence theorem. In the case $2 \leq p \leq \infty$, we thus have $\| f \|_{L^p(\mu)} \gtrsim \| f \|_{L^2(\mu)} \gtrsim \| f \|_{L^2(d \vx)}$, which proves the stated bound in this case. In the case $1 \leq p < 2$, we instead have
        \begin{align}
            \| f \|_{L^2  ( \mu)}^2 = \int_\Omega |f|^p |f|^{2-p} \ d \mu \leq C_2^{2p-p} \| f \|_{L^p( \mu)}^p 
        \end{align}
        where $C_2 := \sup_{f \in \cB_2} \| f \|_{\infty}$. Therefore, in this case we have $\| f \|_{L^p(\mu)} \gtrsim \| f \|_{L^2(\mu)}^{2/p} \gtrsim \| f \|_{L^2(d \vx)}^{2/p}$. This shows that that any $\epsilon$-cover in $L^p(\mu)$ yields a $C \epsilon^{p/2}$-cover in $L^2( d \vx)$, which proves the stated $1 \leq p < 2$ bound.
    \end{proof}
    \subsubsection{Proof of \cref{prop:relu_nested_containment_entropy_lb}} \label{appendix:proof_relu_nested_containment_entropy_lb}
    \begin{proof}
        Denote $\cB_L^{\ReLU}$ here as simply $\cB_L$. First observe that $\cB_1^+ := \{ (f)_+: f \in \cB_1 \} \subset \cB_2$.  \textit{Idempotence} of the ReLU---meaning that $((\cdot)_+)_+ = (\cdot)_+$---therefore implies that $\cB_1^+ \subset \cB_2^+ := \{ (f)_+: f \in \cB_2 \}$. Because set containment is preserved by absolutely convex hulls and set closures, this in turn implies that $\cB_2 \subset \cB_3$. Assuming inductively that $\cB_L \subset \cB_{L+1}$, we have $\cB_L^+ \subset \cB_{L+1}^+$ and therefore $\cB_{L+1} \subset \cB_{L+2}$, which proves the general result.
    \end{proof}
    \subsubsection{Proof of \cref{prop:relu_m_nested_containment_entropy_lb}}
\label{appendix:proof_relu_m_nested_containment_entropy_lb}
\begin{proof}
    We proceed in the following steps.
    \paragraph{Existence of a constant $C > 0$ such that $\cB_1 \subset C \cB_2^{\ReLU,m}$.} By the assumption \eqref{eq:bias_set_assumption} and the fact that $\cW := \bS^{d-1}$ is symmetric, we can choose $\beta_0, \beta_1, \dots, \beta_m$ such that 
    \begin{align}
        \sup_{\vw \in \cW, \vx \in \Omega} = - \inf_{\vw \in \cW, \vx \in \Omega} < \beta_0 < \beta_1 < \dots < \beta_m < b_2. 
    \end{align}
    As long as the $\beta_j$ are chosen in this way, we will have $(\vw^\top \vx + \beta_j)_+^m = (\vw^\top \vx + \beta_j)^m$ for all $\vw \in \cW$, $\vx \in \Omega$, and $j = 1, \dots, m$. Given any such choice of $\beta_j$, we will show that there is some $c > 0$ (dependent on $\cB$ and $\beta_0, \beta_1, \dots, \beta_m$) such that any $f(\vx) = \vw^\top \vx + b$ in $\cB_1$ admits a representation of the form
    \begin{align} \label{eq:f_relu_m_rep}
        f(\vx) = \sum_{j=0}^m c_j (\vw^\top \vx + \beta_j)_+^m = \sum_{j=0}^m c_j (\vw^\top \vx + \beta_j)^m
    \end{align}
    with coefficients $|c_1| + \dots + |c_m| \leq c$. The representation \eqref{eq:f_relu_m_rep} is equivalent to
    \begin{align} \label{eq:f_binom_rep}
        f(\vx) = \sum_{j=0}^m c_j \sum_{i=0}^m {m \choose i} (\vw^\top \vx)^i \beta_j^{m-i} = \sum_{i=0}^m \underbrace{{m \choose i} \left( \sum_{j=0}^m c_j \beta_j^{m-i} \right)}_{\alpha_i} (\vw^\top \vx)^i
    \end{align}
    by the binomial theorem. If we can find a choice of $c_1, \dots, c_m$ such that
    \begin{align}
        \alpha_0 = b, \qquad \alpha_1 = 1, \qquad \alpha_2 = \dots = \alpha_m = 0
    \end{align}
    then the representation \eqref{eq:f_binom_rep} will be equivalent to $f(\vx) = \vw^\top \vx + b$ as desired. Equivalently, we want to find the solutions $c_j$ linear system
    \begin{align}
        \sum_{j=0}^m c_j \beta_j^{m-i} = a_i, \qquad i = 1, \dots, m
    \end{align}
    where $a_0 = b/{m \choose 0} = b$, $a_1 = 1/{m \choose 1} = 1/m$, and $a_2 = \dots = a_m = 0$. This linear system can be expressed in terms of the Vandermonde matrix $\mB$ as
    \begin{align} \label{eq:vandermonde_matrix_equation}
        \underbrace{\begin{bmatrix}
            1 & 1 & \dots & 1 \\
            \beta_0 & \beta_1 & \dots & \beta_m \\
            \beta_0^2 & \beta_1^2 & \dots & \beta_m^2 \\
            \vdots & \vdots & & \vdots \\
            \beta_0^m & \beta_1^m & \dots & \beta_m^m
        \end{bmatrix}}_{\mB} \underbrace{\begin{bmatrix}
            c_0 \\ c_1 \\ c_2 \\ \vdots \\ c_m
        \end{bmatrix}}_{\vc} = \underbrace{\begin{bmatrix}
            a_m \\ a_{m-1} \\ a_{m-2} \\ \vdots \\ a_0
        \end{bmatrix}}_{\va}
    \end{align}
    Because the $\beta_j$ are distinct, the matrix $\mB$ is invertible. To see this: assuming by contradiction that the rows $\boldsymbol{\beta^{(0)}}, \dots, \boldsymbol{\beta^{(m)}}$ of $\mB$ are linearly dependent, there are coefficients $\gamma_0, \dots, \gamma_m$---not all zero---satisfying $\sum_{j=1}^m \gamma_j \boldsymbol{\beta^{(j)}} = \boldsymbol{0}$, or equivalently
    \begin{align}
        \sum_{j=0}^m \lambda_j \beta_i^j = 0, \qquad i = 0, \dots, m.
    \end{align}
    In other words, $\beta_0, \beta_1, \dots, \beta_m$ must all be roots of the nontrivial polynomial
    \begin{align}
        p(x) = \sum_{j=0}^m \lambda_j x^j.
    \end{align}
    But $p$ has degree at most $m$, so by the fundamental theorem of algebra, it has no more than $m$ distinct roots. This contradicts the fact that $\beta_0, \beta_1, \dots, \beta_m$ are distinct.

    Invertibility of $\mB$ implies that a solution $\vc$ to \eqref{eq:vandermonde_matrix_equation} exists and satisfies
    \begin{align}
        \| \vc \|_1 = \| \mB^{-1} \va \|_1 \leq \sup_{b \in [b_1, b_2]} \left\| \mB^{-1} \begin{bmatrix}
            0 \\ \vdots \\ 0 \\ 1/m \\ b
        \end{bmatrix} \right\|_1 = \max_{b \in \{ b_1, b_2 \}} \left\| \mB^{-1} \begin{bmatrix}
            0 \\ \vdots \\ 0 \\ 1/m \\ b
        \end{bmatrix} \right\|_1  =: C < \infty.
    \end{align}
    The $\sup = \max$ above holds because the $\ell^1$ norm term is continuous and convex as a function of $b$, so it attains a maximum over the compact interval $b \in [b_1, b_2]$ at one of the endpoints $b_1$ or $b_2$. Because this constant $C$ is independent of the original $\vw$ and $b$, this shows the desired result $\cB_1 \subset C \cB_2^{\ReLU,m}$. Also note that, depending on the values of $b_1$, $b_2$, and $\beta_0, \dots, \beta_m$, this $C$ may be either greater or less than one. For instance, if $b_1$ 
    and $\beta_1, \dots, \beta_m$ are rescaled as $sb_1$ and $s \beta_1, \dots, s \beta_m$ for some $s > 0$, the solutions $c_j^{(s)}$ to the new rescaled system for $b = b_1$ are given by
    \begin{align}
        \sum_{j=0}^m c_j^{(s)} (s \beta_j)^{m} = s b_1, \quad  \sum_{j=0}^m c_j^{(s)} (s \beta_j)^{m-1} = 1/m, \quad \sum_{j=0}^m c_j^{(s)} (s \beta_j)^{m-i} = 0, \  i=2, \dots, m.
    \end{align}
    All three conditions are satisfied by taking $c_j^{(s)} := c_j s^{1-m}$. If $s$ is large, the coefficients $c_j^{(s)}$ (and hence their $\ell^1$ norm) will be small, and vice versa. The same holds for $b = b_2$. Therefore, it is always possible to force $C > 1$ by making the bias set $\cB$ large enough. In particular,  given some value of $C$ corresponding to an initial choice of $\cB = [b_1, b_2]$ and $\beta_0, \dots, \beta_m$, there is always some $s > 0$ for which the rescaled constant $C^{(s)}$---corresponding to the from rescaling $\cB$ as $s \cB$ and $\beta_0, \dots, \beta_m$ as $s \beta_0, \dots, s \beta_m$---satisfies $C^{(s)} < 1$.
    
    \paragraph{Proof of \eqref{eq:relu_m_nested_containment}.} Denote $\cB_L^{\ReLU,m}$ here as simply $\cB_L$. Having shown above that $\cB_1 \subset C \cB_2$ for some $C > 0$, $m$-homogeneity of $\ReLU^m$ yields
    \begin{align}
        \{ (f)_+^m: f \in \cB_1 \} \subset \{ (Cf)_+^m: f \in \cB_2 \} =  C^m \{ (f)_+^m: f \in \cB_2 \}.
    \end{align}
    Taking absolutely convex hulls and closures, we find that
    \begin{align} \label{eq:B2_cont_B3_ReLU_m}
        \cB_2 \subset C^m \cB_3.
    \end{align}
    Again applying $m$-homogeneity of the $\ReLU^m$ to \eqref{eq:B2_cont_B3_ReLU_m}, we see that
    \begin{align}
        \{ (f)_+: f \in \cB_2 \} \subset \{ (C^m f)_+: f \in \cB_3 \} = C^{m^2} \{ (f)_+: f \in \cB_3 \}
    \end{align}
    and therefore
    \begin{align}
        \cB_3 \subset C^{m^2} \cB_4.
    \end{align}
    Repeating this argument shows that
    \begin{align}
        \cB_L \subset C^{m^{L-1}} \cB_{L+1}
    \end{align}
    and therefore
    \begin{align}
        \cB_2 \subset C^m \cB_3 \subset C^{m^2+m} \cB_4 \subset \dots \subset C^{\sum_{\ell=1}^{L-2} m^\ell} \cB_L
    \end{align}
    for each $L \geq 1$, as desired.
\end{proof}

\subsection{Alternative construction of ResNet classes with penalized residual/skip connections} \label{appendix:resnet_classes}
For convenience, denote
\begin{align}
\cB_L^{\sigma} := \{ \sigma_s \circ f: f \in \cB_L, s > 0 \}.
\end{align}
Suppose that instead of defining $\cB_L := \overline{\aconv}(\cB_{L-1}^\sigma)$ as in \eqref{eq:BL}, we instead define
    \begin{align} \label{eq:B_L_resnet_def}
        \cB_L := \overline{\aconv}( \cB_{L-1} \cup \cB_{L-1}^\sigma).
    \end{align}
    This $\cB_L$ describes the set of functions represented by depth-$L$ ResNet architectures, with norm-penalized residual/skip connections between every hidden layer, along with the functions represented by the infinite-width limits of such architectures. It is easy to see that, with this definition, $\cB_1 \subset \cB_2 \subset \dots$ for any choice $\sigma$, $\cW$, $\cB$, and $\Omega$. Here we will illustrate how the proof of \cref{lemma:rad_bound} (see \cref{appendix:proof_rad_bound}) can be modified for this alternative definition of $\cB_L$, yielding upper bounds that are nearly identical to those in \cref{lemma:rad_bound} and \cref{th:entropy_bound}. Applying the same steps as in the proof of \cref{lemma:rad_bound}, we bound the Rademacher complexities of these ResNet classes $\cB_L$ as
   \begin{align} 
        N \widehat{\cR}_N(\cB_L) &= \bE_r \sup_{f \in \cB_L} \sum_{i=1}^N r_i f(\vx_i)  \\
        &\leq \frac{1}{\lambda} \log \bE_r \sup_{K \in \bN, \sum_{k=1}^K |v_k|  \leq 1, f_k \in \cB_{L-1} \cup \cB_{L-1}^\sigma} \exp \lambda \left( \sum_{k=1}^K v_k \sum_{i=1}^N r_i  f_k(\vx_i) \right) \label{eq:rad_bound_init_resnet} \\
        &\leq \frac{1}{\lambda} \log \bE_r \sup_{K \in \bN, \sum_{k=1}^K |v_k| \leq 1, f_k \in \cB_{L-1} \cup \cB_{L-1}^\sigma} \exp \lambda \left(  \sum_{k=1}^K |v_k| \cdot \max_{k=1, \dots, K} \left| \sum_{i=1}^N r_i  f_k(\vx_i) \right| \right) \\
        &\leq \frac{1}{\lambda} \log \bE_r \sup_{K \in \bN, f_k \in \cB_{L-1} \cup \cB_{L-1}^\sigma} \exp \lambda \left( \max_{k=1, \dots, K} \left| \sum_{i=1}^N r_i  f_k(\vx_i) \right| \right) \\
        &= \frac{1}{\lambda} \log \bE_r \sup_{f \in \cB_{L-1} \cup \cB_{L-1}^\sigma} \exp \lambda \left( \left| \sum_{i=1}^N r_i f(\vx_i) \right| \right) \\
        &\leq \frac{1}{\lambda} \log \bE_r \sup_{f \in \cB_{L-1} \cup \cB_{L-1}^\sigma} \left( \exp \lambda \left( \sum_{i=1}^N r_i f(\vx_i)  \right) + \exp \lambda \left( -\sum_{i=1}^N r_i f(\vx_i) \right)\right) \\
        &\leq \frac{1}{\lambda} \log 2 \bE_r \sup_{f \in \cB_{L-1} \cup \cB_{L-1}^\sigma} \exp \lambda \left( \sum_{i=1}^N r_i f(\vx_i) \right). \label{eq:rad_bound_pre_contraction_resnet}
    \end{align}
    For each fixed realization of the Rademacher variables, the supremum in \eqref{eq:rad_bound_pre_contraction_resnet} is the maximum of the supremum over $\cB_{L-1}$ and the supremum over $\cB_{L-1}^\sigma$. However, after taking expectation, this maximum need not be realized by the same branch for all realizations of the Rademacher variables. Therefore, we instead use $\max\{a,b\} \leq a+b$.

    If $\sigma$ is homogeneous, then the contraction step used in the proof of \cref{lemma:rad_bound} gives
    \begin{align}
        &\frac{1}{\lambda} \log 2 \bE_r \sup_{f \in \cB_{L-1} \cup \cB_{L-1}^\sigma} \exp \lambda \left( \sum_{i=1}^N r_i f(\vx_i) \right) \\
        &\leq \frac{1}{\lambda} \log 2 \bE_r \left[ \sup_{f \in \cB_{L-1}} \exp \lambda \left( \sum_{i=1}^N r_i f(\vx_i) \right) + \sup_{f \in \cB_{L-1}^\sigma} \exp \lambda \left( \sum_{i=1}^N r_i f(\vx_i) \right) \right] \\
        &\leq \frac{1}{\lambda} \log 4 \bE_r \sup_{f \in \cB_{L-1}} \exp \max\{1,\rho_{L-1}\} \lambda \left( \sum_{i=1}^N r_i f(\vx_i) \right). \label{eq:resnet_rad_bound_post_contraction_homogeneous_final}
    \end{align}
    If $\sigma$ is non-homogeneous, the approximation and contraction steps used in the proof of \cref{lemma:rad_bound} similarly give
    \begin{align}
        &\frac{1}{\lambda} \log 2 \bE_r \sup_{f \in \cB_{L-1} \cup \cB_{L-1}^\sigma} \exp \lambda \left( \sum_{i=1}^N r_i f(\vx_i) \right) \\
        &\leq \frac{1}{\lambda} \log 2 \bE_r \left[ \sup_{f \in \cB_{L-1}} \exp \lambda \left( \sum_{i=1}^N r_i f(\vx_i) \right) + \sup_{f \in \cB_{L-1},s>0} \exp \lambda \left( \sum_{i=1}^N r_i \sigma_s(f(\vx_i)) \right) \right] \\
        &\leq N \delta_{L-1} + \frac{1}{\lambda} \log M_{L-1} + \frac{1}{\lambda} \log 4 \bE_r \sup_{f \in \cB_{L-1}} \exp \max\{1,\rho_{L-1}\} \lambda \left( \sum_{i=1}^N r_i f(\vx_i) \right). \label{eq:resnet_rad_bound_post_contraction_nonhomogeneous_final}
    \end{align}
    From this point, repeating the exact steps of the original proof of \cref{lemma:rad_bound} shows that the ResNet classes obey the same bounds of \cref{lemma:rad_bound}, except that $L \log 2$ is replaced by $(2L-1) \log 2$,
    \begin{align}
        \Pi_L^{\mathrm{res}} := \prod_{\ell=1}^{L-1} \max\{1, \rho_\ell \}, \qquad \pi_\ell^{\mathrm{res}} := \prod_{j=\ell+1}^{L-1} \max\{1, \rho_j \}
    \end{align}
    replace $\Pi_L$ and $\pi_\ell$, respectively. In particular, in the non-homogeneous case,
    \begin{align}
        \cR_N(\cB_L) \leq 2 C_{\cW, \cB,\Omega} \Pi_L^{\mathrm{res}} \sqrt{\frac{(2L-1) \log 2 + \log(d+1) + \sum_{\ell=1}^{L-1} \log (1+A_\ell/\delta_\ell)}{N}} + \sum_{\ell=1}^{L-1} \delta_{\ell} \pi_\ell^{\mathrm{res}},
    \end{align}
    and, if $\sigma$ is homogeneous of any degree,
    \begin{align}
        \cR_N(\cB_L) \leq 2 C_{\cW, \cB, \Omega} \Pi_L^{\mathrm{res}} \sqrt{\frac{(2L-1) \log 2 + \log(d+1)}{N}}.
    \end{align}
    The entropy bounds in \cref{th:entropy_bound} can now be easily adapted by observing that the functions in the ResNet class $\cB_L$ are $C_{\cW} \Pi_{L-1}^{\mathrm{res}}$-Lipschitz, rather than the original $C_{\cW} \Pi_{L-1}$-Lipschitz. With this modification, the proof of \cref{th:entropy_bound} goes through otherwise unchanged, and the resulting bounds in \cref{th:entropy_bound} also apply to the ResNet classes $\cB_L$, with $\Pi_L$ and $\pi_\ell$ replaced by $\Pi_L^{\mathrm{res}}$ and $\pi_\ell^{\mathrm{res}}$, respectively, and $T_L(\epsilon)$ modified by replacing $L$ with $2L-1$ in the leading depth term.

\subsection{Representation of pyramid functions in $\cB_3$} \label{appendix:pyramid_relu_rep}

Consider the particular case of the ReLU activation function where $\cW = \bS^{d-1}$ and $\cB = [b_1, b_2]$ is an interval satisfying assumption \eqref{eq:bias_set_assumption}. In this setup, the $\ell^1$ pyramid functions of the form
\begin{align}
    f_{\alpha, \beta, \gamma} (\vx) := (\alpha - \beta \| \vx - \gamma \|_1)_+
\end{align}
are contained in $\cB_3$ for adequate choices of $\alpha, \beta, \gamma$. To see this, choose some $\vu \in \bS^{d-1}$ such that $-b_2 \leq \vu^\top \vx \leq b_2$ (this is always possible due to assumption \eqref{eq:bias_set_assumption}). With any such choice of $\vu$, we have
\begin{align}
	\frac{\alpha}{2b_2} (\vu^\top \vx + b_2)_+ + \frac{\alpha}{2 b_2} ((-\vu)^\top \vx + b_2)_+ = \frac{\alpha}{2b_2} (\vu^\top \vx + b_2) + \frac{\alpha}{2 b_2} ((-\vu)^\top \vx + b_2) = \alpha.
\end{align}
Therefore, the inner function $g_{\alpha, \beta, \gamma} (\vx) := \alpha - \beta \| \vx - \gamma \|_1$ can be represented as
\begin{align} \label{eq:inner_pyramid_relu_rep}
    g_{\alpha, \beta, \gamma} (\vx) = \frac{\alpha}{2b_2} (\vu^\top \vx + b_2)_+ + \frac{\alpha}{2 b_2} ((-\vu)^\top \vx + b_2)_+ - \beta \left(\sum_{i=1}^d (\ve_i^\top \vx - c)_+ + \sum_{i=1}^d (-\ve_i^\top \vx + c)_+ \right)
\end{align}
using the identity $|x_i - c| = (x_i - c)_+ + (-x_i + c)_+$. As long as $c \in [b_1, b_2]$ and
\begin{align}
|\alpha/b_2| + 2 \beta d \leq 1,
\end{align}
the function $g_{\alpha, \beta, \gamma}$ is in $\cB_2$, and therefore $f_{\alpha, \beta, \gamma} = (g_{\alpha, \beta, \gamma})_+$ is in $\cB_3$. 

\subsection{Proof of \cref{lemma:uni_f_V2_bound}}
\label{appendix:proof_uni_f_V2_bound}
\paragraph{Distributions and distributional/weak derivatives.} Before proving \cref{lemma:uni_f_V2_bound}, let us briefly recap the relevant terminology. For an open set $U \subset \R^d$, $C_c^\infty(U)$ denotes the space of infinitely smooth, compactly supported \textit{test functions} $\phi$ on $U$. A \textit{distribution} (or \textit{generalized function}) is a continuous linear functional on $C_c^\infty(U)$. Evaluation of a distribution $F$ on a test function $\phi$ is canonically denoted as $\langle F, \phi \rangle$ rather than $F(\phi)$. Any locally integrable (i.e. integrable on compact subsets of $U$) function $f$ defines a distribution via the functional $\phi \mapsto \int_U f \phi$; similarly, any Radon measure $\mu$ on $U$ defines a distribution via the functional $\phi \mapsto \int_U \phi \ d \mu$. For distributions which are induced in this way by integration against either a function $f$ or a Radon measure $\mu$, we drop the conceptual distinction between the distribution itself and the function/measure which induces it; the meaning is clear from context.

The main utility of distribution theory is that it provides a rigorous way to ``differentiate'' objects which may not be differentiable as functions in the classical sense. This is accomplished using the notion of \textit{distributional derivatives}. The distributional derivative $Df$ of a distribution $f$ is defined by its action on smooth test functions $\phi \in C_c^\infty(U)$ as
\begin{align} \label{eq:distributional_derivative_def}
    \langle Df,\phi\rangle := - \langle f,\phi'\rangle .
\end{align}
To see where this definition comes from, consider the special case where both $f$ and its distributional derivative $Df$ (which we will denote $g$) are actual functions. In this case, \eqref{eq:distributional_derivative_def} is equivalent to
\begin{align}
    \int_U g \,\phi = -\int_U f\,\phi'
\end{align}
for all $\phi \in C_c^\infty(U)$. This is exactly the familiar integration by parts formula, where the boundary terms have vanished due to compact support of $\phi$. In this case, wherein both $f$ and its distributional derivative $Df = g$ are themselves functions, $Df = g$ is also called the \textit{weak derivative} of $f$.

\begin{proof}
    With the relevant terminology in place, we proceed to prove \cref{lemma:uni_f_V2_bound}.
    \paragraph{Existence of $\BV$ weak derivative with one-sided limits.} Define the function $g(x) := D^2 f((-1,x))= \int_{-1}^x  d(D^2 f)(t)$. Here $D^2 f$ is the second distributional derivative of $f$, viewed as a finite Radon measure, and $D^2 f((-1,x))$ is the measure of the open interval $(-1,x)$. We will first argue that there is some constant $c \in \R$ such that the function $f' := g+c$ is a weak derivative of $f$. To see this, first note that 
    \begin{align}
        \langle Dg, \phi \rangle &= -\int_{-1}^1 g(x) \phi'(x) \ dx = -\int_{-1}^1 \int_{-1}^x  d(D^2 f)(t) \  \phi'(x) \ dx = - \int_{-1}^1 \int_{t}^1  \phi'(x)  \ dx \ d(D^2 f)(t) \\
        &= - \int_{-1}^1 \big(\phi(1) - \phi(t)\big) \ d (D^2 f)(t) = \int_{-1}^1 \phi(t) \ d(D^2 f)(t) = \langle D^2 f, \phi \rangle
    \end{align}
    for any $\phi \in C_c^\infty(-1,1)$. Here we have used Fubini's theorem to change the order of integration and the fact that $\phi(1)=0$ because of its compact support. This shows that $D(Df - g) = 0$ in the sense of distributions, implying that $Df-g = c$ is a constant distribution and thus $Df = g+c =: f'$ is a weak derivative of $f$.

    Next, we observe that this $f' \in \BV(-1,1)$. This follows from $Df' = D^2 f \in \cM(-1,1)$ by assumption, along with the fact that $f'$ is bounded---since $|f'(x)| \leq |c| + |D^2 f|((-1,x)) \leq |c| + \| D^2 f \|_{\TV}$ for every $x \in (-1,1)$---and thus in $L^1(-1,1)$. Because $f' \in \BV(-1,1)$, it agrees a.e. with a function $h$ whose \textit{pointwise variation}
    \begin{align} \label{eq:pointwise_variation}
        V_{(-1,1)} (h) := \sup \left\{ \sum_{i=1}^{n-1} |h(x_{i+1}) - h(x_i)|: -1 < x_1 < \dots < x_n < 1 \right\}
    \end{align}
    satisfies $V_{(-1,1)} (h) = \| D^2 f \|_{\TV}$ (\cite{ambrosio2000functions}, Theorem 3.27). Any function satisfying this property is called a \textit{good representative} of $f'$. Indeed, by \cite{ambrosio2000functions} (Theorem 3.28 (a)) it is clear that $f'$ is its own left-continuous good representative, with $V_{(-1,1)} (f') = \| D^2 f \|_{\TV}$. Existence of the one-sided limits $f'(-1^{+})$ and $f'(1^{-})$, as well as one-sided limits $f'(x+)$ and $f'(x^{-})$ for every $x \in (-1,1)$, then follows from \cite{leoni2017first} (Corollary 2.23).
    
    \paragraph{Upper bound on $\| f \|_{\cV_2}$.} Let $f' = g+c \in \BV(-1,1)$ be the weak derivative of $f$ on $(-1,1)$ as described above. Before proceeding, we note that the constant $c = f'(-1^{+})$. This follows from \eqref{eq:limit_identities} (see proof of \cref{lemma:I_f_plus_bound} below), which says that $f'(x) = c + (D^2 f)((-1,x)) = c + f'(x^{-}) - f'(-1^{+})$ for any $x \in (-1,1)$; taking $x \downarrow -1$ yields $c = f'(-1^{+})$.

    We will now demonstrate a convenient representation of $f$ which allows us to prove the stated upper bound on $\| f \|_{\cV_2}$. To do so, note that, because $f$ has a weak derivative in $L^1(-1,1)$, $f$ agrees a.e. with an absolutely continuous function $\tilde{f}$ on $(-1,1)$ (\cite{leoni2017first}, Theorem 7.13). Continuity of $f$ implies that $f = \tilde{f}$ globally: if there were some $x \in (-1,1)$ with $f(x) \neq \tilde{f}(x)$, continuity would force $f - \tilde{f} \neq 0$ on some interval around $x$, contradicting $f - \tilde{f} = 0$ a.e.  Therefore $f$ itself is absolutely continuous, so it has an a.e. classical derivative which obeys the fundamental theorem of calculus (\cite{leoni2017first}, Theorem 3.30). Because this classical derivative must agree a.e. with the weak derivative, the integrals of both are identical, and therefore
    \begin{align} 
        f(x) &= f(-1) + \int_{-1}^x g(t) + f'(-1^{+}) \ dt = f(-1) + f'(-1^{+})(x+1) + \int_{-1}^x \int_{-1}^t  d(D^2 f)(u) \ dt \\
        &= f(-1) + f'(-1^{+})(x+1) + \int_{-1}^x \int_u^x \ dt \ d(D^2 f)(u)  \\
        &= f(-1) + f'(-1^{+})(x+1) + \int_{-1}^x x-u \ d(D^2 f)(u) \\
        &= \underbrace{f(-1) + f'(-1^{+})(x+1)}_{=: f_{\mathrm{aff}}(x)} + \underbrace{\int_{-1}^1 (x-u)_+ \ d(D^2 f)(u)}_{=: f_{\mathrm{int}}(x)} \label{eq:f_int_rep_uni}
    \end{align}
    for all $x \in (-1,1)$. Here we have used Fubini's theorem to change the order of integration.

    From the representation \eqref{eq:f_int_rep_uni}, we see that
    \begin{align}
        \| f \|_{\cV_2} \leq \| f_{\mathrm{aff}} \|_{\cV_2} + \| f_{\mathrm{int}} \|_{\cV_2}.
    \end{align}
    Notice that the term $f_{\mathrm{int}}$ is exactly an integral representation of the form \eqref{eq:pointwise_integral_rep_homogeneous_shallow} with respect to the finite Radon measure 
    \begin{align}
    \nu(\cA, \cB) := \delta_1(\cA) D^2 f(- \cS)
    \end{align}
    on $[-1,1]^2$. Therefore, by \cref{lemma:integral_rep}, we have
    \begin{align}
        \| f_{\mathrm{int}} \|_{\cV_2} \leq \| \mu \|_{\TV} \leq \| D^2 f \|_{\TV}.
    \end{align}
    To show the upper bound in the lemma statement, it remains to bound $\| f_\mathrm{aff} \|_{\cV_2}$.  We do so by showing that any affine function $\phi_{u,v}(x) = ux + v$ on $x \in [-1,1]$ has 
    \begin{align}
        \| \phi_{u,v} \|_{\cV_2} = \frac{|u+v| + |-u+v|}{2} = \max \{ |u|, |v| \}.
    \end{align}
    To see this, note that $\phi_{u,v}$ admits the equivalent representation
    \begin{align} \label{eq:affine_optimal_rep}
        \phi(x) &= \frac{1}{2}(u+v)(x+1)_+ + \frac{1}{2}(-u+v)(-x+1)_+ = \frac{1}{2}(u+v)(x+1) + \frac{1}{2}(-u+v)(-x+1) = ux + v
    \end{align}
    for $x \in [-1,1]$. Indeed, any $\cV_2$ integral representation of $\phi_{u,v}$ of the form
    \begin{align}
        \phi_{u,v}(x) = \int_{[-1,1]^2} (wx + b)_+ \ d \mu(w,b) 
    \end{align}
    must satisfy
    \begin{align}
        |u| = |\phi_{u,v}'(x)| &= \left| \int_{[-1,1]^2} w \mathbbm{1}_{wx+b \geq 0} \ d \mu(w,b) \right| \leq \int_{[-1,1]^2} d |\mu|(w,b) = \| \mu \|_{\TV} 
    \end{align}
    using the distributional derivative formula for ReLU atoms (which must coincide a.e. with the classical derivative $\phi_{u,v}'$). Additionally: 
    \begin{align}
        |v| = |\phi_{u,v}(0)| \leq \int_{[-1,1]^2} |b| \ d |\mu|(w,b) \leq \| \mu \|_{\TV}
    \end{align}
    which shows that the representation \eqref{eq:affine_optimal_rep} is optimal. Applying this with $u = f'(-1^+)$ and $v = f(-1)+f'(-1^+)$ yields the stated upper bound.
    \paragraph{Lower bound on $\| f \|_{\cV_2}$.}  By \cref{lemma:integral_rep}, any $f \in \cV_2$ admits a pointwise integral representation of the form
\begin{align} \label{eq:int_rep_lower_bound}
    f(x) = \int_{[-1,1]^2} (wx+b)_+ \ d \mu(w,b)
\end{align}
for some finite $\mu \in \cM([-1,1]^2)$. It is well-known (and can be shown using a standard approximation argument with test functions) that, for any fixed $w,b$, the first and second distributional derivatives of the integrand $g_{w,b}(x) = (wx+b)_+$ on $(-1,1)$ are
\begin{align}
    D g_{w,b} = w \mathbbm{1}_{wx+b \geq 0},\qquad D^2 g_{w,b} = \begin{cases}
        |w| \delta_{-b/w}, &\textrm{$w \neq 0$ and $-b/w \in (-1,1)$} \\
        0 &\textrm{otherwise}.
    \end{cases} 
\end{align}
Using this fact and the integral representation \eqref{eq:int_rep_lower_bound}, we can compute the first distributional derivative of $f$ as
\begin{align}
    \langle Df, \phi \rangle &= - \langle f, \phi' \rangle = - \int_{-1}^1 \int_{[-1,1]^2} (wx+b)_+ \phi'(x)  \ d \mu(w,b) \ dx \\
    &=  - \int_{[-1,1]^2} \int_{-1}^1 (wx+b)_+ \phi'(x) \ dx \ d \mu(w,b) = \int_{[-1,1]^2} \int_{-1}^1 w \mathbbm{1}_{wx+b \geq 0} \  \phi(x) \ dx \ d \mu(w,b) \\
    &=  \int_{-1}^1 \int_{[-1,1]^2} w \mathbbm{1}_{wx+b \geq 0} \ d \mu(w,b) \  \phi(x) \ dx 
\end{align}
where we have used the definition of the distributional derivative as well as Fubini to change the order of integration. This shows that the function
\begin{align} \label{eq:dist_deriv_f}
    f'(x) := \int_{[-1,1]^2} w \mathbbm{1}_{wx+b \geq 0} \ d \mu(w,b)
\end{align}
is a distributional derivative of $f$. Similarly, we can compute $D^2 f$ as
\begin{align}
    \langle D^2 f, \phi \rangle &= - \langle f' , \phi' \rangle = - \int_{-1}^1 \int_{[-1,1]^2} w \mathbbm{1}_{wx+b \geq 0} \ \phi'(x) \ d \mu(w,b) \ dx \\
    &= - \int_{[-1,1]^2} \int_{-1}^1  w \mathbbm{1}_{wx+b \geq 0} \ \phi'(x) \ dx  \ d \mu(w,b) = \int_{[-1,1]^2} \langle D^2 g_{w,b}, \phi \rangle \ d \mu(w,b) \\
    &= \int_{\cE} |w| \phi(-b/w) \ d \mu(w,b).
\end{align} 
where $\cE := \{ (w,b) \in [-1,1]^2: w \neq 0, -b/w \in (-1,1) \}$. 

Now define the map $T: \cE \to (-1,1)$ by $T(w,b) = -b/w$. Let $T_{\#} (|w| \mu \rvert_{\cE}) $ denote the pushforward of the measure $|w| \mu \rvert_{\cE}$ under $T$; that is, the measure on $(-1,1)$ defined by $T_{\#} (|w| \mu \rvert_{\cE}) (\cA) = |w| \mu(T^{-1}(\cA \cap \cE))$ for Borel $\cA \subset(-1,1)$. By the change-of-variable formula for pushforward measures (\cite{bogachev2007measure}, Theorem 3.6.1 and discussion on p. 191) we have
\begin{align}
     \int_{-1}^1 \phi(t) \ dT_{\#} (|w| \mu \rvert_{\cE})(t) = \int_{-1}^1 \phi(T(w,b)) \ d (|w| \mu \rvert_{\cE})(w,b) = \int_{\cE} |w| \phi(-b/w) \ d \mu(w,b) = \langle D^2 f, \phi \rangle
\end{align}
which shows that $D^2 f$ is exactly the measure $T_{\#} (|w| \mu \rvert_{\cE})$. In general, if $T$ is a continuous map between two topological spaces $\cU$ and $\cV$, the pushforward $T_{\#} \nu$ of any measure $\nu$ on $\cA$ has no greater total variation than $\nu$ itself. This is because the preimages of Borel sets under continuous maps are Borel, and therefore
\begin{align}
     \| T_{\#} \nu \|_{\TV} &:= \sup \left\{ \sum_{i=1}^N |\nu(T^{-1}(\cV_i))|: N \in \bN, \  \cV= \bigcup_{i=1}^N \cV_i, \ \textrm{$\cV_1, \dots, \cV_N$ are Borel} \right\} \\
     &\leq \sup \left\{ \sum_{i=1}^N |\nu(\cU_i)|: N \in \bN, \  \cB_{L-1} = \bigcup_{i=1}^N \cU_i, \ \textrm{$\cU_1, \dots, \cU_N$ are Borel} \right\} \\
     &= \| \nu \|_{\TV}.
 \end{align}
As a result of this general fact, we have $\| D^2 f \|_{\TV} = \| T_{\#} (|w| \mu \rvert_{\cE}) \|_{\TV} \leq \| |w| \mu \rvert_{\cE} \|_{\TV} \leq \| \mu \|_{\TV}$. This holds for any measure $\mu$ which represents $f$ as an integral of the form \eqref{eq:int_rep_lower_bound}, so taking the infimum across all such measures shows that $\| D^2 f \|_{\TV} \leq \| f \|_{\cV_2}$.
\end{proof}

\subsection{Proof of \cref{lemma:I_f_plus_bound}} \label{appendix:proof_I_f_plus_bound}
\begin{proof}
    We first give an overview of the general proof strategy. The idea is to carefully examine the local behavior of $f$ on regions where $f$ and $f_+$ differ. Roughly speaking, these are the intervals for which $f$ is negative on the interior of the interval, and transitions from positive to negative at the beginning of the interval and/or negative to positive at the end of the interval. We first argue that there are only countably many such intervals, which is important for our subsequent limiting arguments to hold. Then, by examining cases, we show that zeroing out $f$ on any individual ``negative interval'' does not increase the value of the functional $I$. We also argue that bounds on $I$ are preserved by uniform limits: if $f_n \to f$ uniformly, then $I(f) \leq \liminf_{n \to \infty} f_n$. Finally, we show that by flattening these negative intervals one-at-a-time, we generate a sequence which uniformly tends to $f_+$, yielding the desired bound on $I(f_+)$. 
    
    \paragraph{Negative set of $f$ has countably many connected components with nonempty interior.} Define $\cN := \{ x \in [-1,1]: f(x) \leq 0 \}$. Recall that a \textit{connected component} of a set $\cE \subset \R$ is a subset $\cC \subset \cE$ such that
    \begin{enumerate}
        \item $\cC$ is \textit{connected}. In $\bR$, this is equivalent to saying that $\cC$ is either a singleton or an interval with nonempty interior (closed, open, or half-open).
        \item $\cC$ is \textit{maximal} in the sense that, if $\cC \subset \cD \subset \cE$ and $\cD$ is connected, then $\cC = \cD$.
    \end{enumerate}
    $\cN$ is closed by continuity of $f$. If some connected component of $\cN$ were an open or half-open interval with nonempty interior, its closure would still be connected and in $\cN$; maximality therefore requires that all connected components of $\cN$ are either singletons or closed intervals with nonempty interior. Maximality also forces that all connected components are pairwise disjoint. Because each of these disjoint connected components with nonempty interior contains a distinct rational number, the set of all connected components of $\cN$ with nonempty interior---which we will denote $\{ [a_j, b_j ] \}_{j=1}^\infty$---is countable. Except in the trivial case where $f \leq 0$ everywhere, all intervals $[a,b]$ in this countable collection must satisfy either
    \begin{enumerate}
        \item $a > -1$, $b < 1$, $f(a) = f(b) = 0$, or
        \item $a = -1$, $b < 1$, $f(b) = 0$, or
        \item $a > -1$, $b = 1$, $f(a) = 0$.
    \end{enumerate}
    \paragraph{Compute derivatives of $f$ after ``flattening'' on a negative component.} Let $[a,b]$ be a connected component of $\cN$ with nonempty interior satisfying $a > -1$, $b < 1$, and $f(a) = f(b) = 0$. We can compute
    \begin{align} \label{eq:Df}
        \langle D(f \mathbbm{1}_{(-1,a) \cup (b,1)}), \phi \rangle = - \int_{-1}^a f \phi' - \int_b^1 f \phi' = \int_{-1}^a f' \phi + \int_b^1 f' \phi = \langle f' \mathbbm{1}_{(-1,a) \cup (b,1)}, \phi \rangle
    \end{align}
    for all $\phi \in C_c^\infty(-1,1)$. Here we have used absolute continuity of $f$ to justify integration by parts, as in the proof of \cref{lemma:uni_f_V2_bound}, as well as $f(a) = f(b) = 0$. 

    Next, we compute $D^2 f(\mathbbm{1}_{(-1,a) \cup (b,1)})$ as
    \begin{align} 
        \langle D^2(f \mathbbm{1}_{(-1,a) \cup (b,1)}), \phi \rangle  &= - \langle D(f \mathbbm{1}_{(-1,a) \cup (b,1)}), \phi' \rangle \\
        &= -\int_{-1}^a f' \phi' - \int_b^1 f' \phi' \\
        &= - f'(a^{-}) \phi(a) + \int_{-1}^a \phi \ d(Df') + f'(b+) \phi(b) + \int_b^1 \phi \ d (Df') \\
        &= \Big\langle D^2 f \rvert_{(-1,a) \cup (b,1)} + f'(b+) \delta_b - f'(a^{-}) \delta_a, \phi \Big\rangle
    \end{align}
    where $\mu \rvert_{\cA}(\cS) := \mu(\cA \cap \cS)$ denotes the restriction of a measure $\mu$ to a set $\cA$. Here we have used the $\BV$ integration by parts formula in \cite{rybka2017bv} (Equation 2.6 in the proof of Proposition 2.2) for convenience. This result can also be derived using a standard dominated convergence argument to approximate the indicator $\mathbbm{1}_{(-1,a) \cup (b,1)}$ with a sequence of $C_c^\infty$ test functions. The relevant derivatives in the cases $a = -1, b < 1, f(b) = 0$ and $a > -1, b = 1, f(a) = 0$ can be computed similarly:
    \begin{align} \label{eq:D2_f_a0_b1}
        D(f \mathbbm{1}_{(b,1)}) &= f' \mathbbm{1}_{(b,1)}, \quad D^2(f \mathbbm{1}_{(b,1)}) = D^2 f \rvert_{(b,1)} + f'(b+) \delta_b \\
        D(f \mathbbm{1}_{(-1,a)}) &= f' \mathbbm{1}_{(-1,a)}, \quad D^2(f \mathbbm{1}_{(-1,a)}) = D^2 f \rvert_{(-1,a)} - f'(a^{-}) \delta_a.
    \end{align}
    In all cases, the second derivatives are finite Radon measures due to everywhere existence of one-sided limits of $f'$. 
    
    Using the same $\BV$ integration by parts rule, we also have
    \begin{align} \label{eq:limit_identities}
        D^2 f((x,y)) &= \int_x^y d(D^2 f) = f'(y^{-}) - f'(x+), \quad -1 < x < y < 1 \\
        D^2 f((-1,x)) &= \int_{-1}^x d(D^2 f) = f'(x^{-}) - f'(-1^{+}), \quad -1 < x < 1 \\
        D^2 f((x,1)) &= \int_x^1 d(D^2 f) = f'(1^{-}) - f'(x+), \quad -1 < x < 1.
    \end{align}
    We will use these identities in the next step of the proof.

    \paragraph{Flattening on a single negative component does not increase $I$.} Let $[a,b]$ be a connected negative component with nonempty interior. First consider the case of an \textit{interior} connected component, for which $a > -1, b < 1, f(a) = f(b) = 0$, which is illustrated in \cref{fig:f_zeroed_internal}. In this case, the values of $I(f)$ and $I(f \mathbbm{1}_{(-1,a) \cup (b,1)})$ differ only by the total variations of the respective second derivatives. We can decompose $D^2 f$ as
    \begin{align}
        D^2 f = D^2 f \rvert_{(-1,1) \setminus \{a,b\}} + D^2 f \rvert_{\{a\}} + D^2 f \rvert_{\{b\}}.
    \end{align}
    Fix $\epsilon > 0$ and denote $a_1 = a- \epsilon, a_2 = a + \epsilon$. \eqref{eq:limit_identities} says that $D^2 f((a_1, a_2)) = f'(a_2^{-}) - f'(a_1+)$. Taking $\epsilon \downarrow 0$, continuity of measures from above implies that $D^2 f (\{ a \}) = f'(a+) - f'(a^{-})$, and the same holds for $b$. Therefore:
    \begin{align}
        D^2 f = D^2 f \rvert_{(-1,1) \setminus \{a,b \}} + \big(f'(a+) - f'(a^{-}) \big) \delta_a +  \big(f'(b+) - f'(b^{-}) \big) \delta_b. 
    \end{align}
    If $f'(a^{-}) > 0$, there would be some $\epsilon > 0$ such that $f'(x) \geq 0$ for all $x \in (a-\epsilon, a)$. By absolute continuity of $f$ and $f(a) = 0$, this would imply that $f(a) - f(x) = - f(x) = \int_x^a f'(t) \dt \geq 0$ for any $x \in (a-\epsilon, a)$, contradicting maximality of the connected component $[a,b] \subset \cN$. So it must be that $f'(a^{-}) \leq 0$, and similarly that $f'(b+) \geq 0$. We thus have
    \begin{align}
        \| D^2 f \|_{\TV} &= |D^2 f|((-1,a) \cup (b,1)) + |f'(a+) - f'(a^{-})| + |D^2 f|((a,b)) + |f'(b+) - f'(b^{-})| \\
        &\geq |D^2 f|((-1,a) \cup (b,1)) + |f'(a+) - f'(a^{-})| + |f'(b^{-}) - f'(a+) | + |f'(b+) - f'(b^{-})| \\
        &\geq |D^2 f|((-1,a) \cup (b,1)) + |f'(b+) - f'(a^{-})| = |D^2 f|((-1,a) \cup (b,1)) + |f'(b+)| + |f'(a^{-})| \\
        &= \| D^2 f \mathbbm{1}_{(-1,a) \cup (b,1)} \|_{\TV}.
    \end{align}
    The first inequality above uses $|f'(b^{-}) - f'(a+)| = |D^2 f((a,b))| \leq |D^2 f|((a,b))$ from \eqref{eq:limit_identities} and the penultimate equality uses $f'(a^{-}) \leq 0 \leq f'(b+)$. We conclude that $I(f \mathbbm{1}_{(-1,a) \cup (b,1)}) \leq I(f)$ in this case.

   Next, consider the case of a left-endpoint connected component with $a = -1, b < 1, f(b) = 0$ (see \cref{fig:f_zeroed_left}). In this case, we have
    \begin{align}
        I(f) &= \max\{ |f(-1)+f'(-1^{+})|, |f'(-1^{+})| \} + |D^2 f|((-1,1)) + I_R(f) \\
        &\geq |f'(-1^{+})| + |D^2 f|((-1,b)) + |f'(b+) - f'(b^{-})| + |D^2 f|((b,1)) + I_R(f) \\
        &\geq |f'(-1^{+})| + |f'(b^{-}) - f'(-1^{+})| + |f'(b+) - f'(b^{-})| + |D^2 f|((b,1)) + I_R(f) \\
        &\geq |f'(b+)| + |D^2 f|((b,1)) + I_R(f) \\
        &= I(f \mathbbm{1}_{(b,1)})
    \end{align}
    where $I_R(f) := \max\{ |f(1)-f'(1^{-})|, |f'(1^{-})| \} $. Here we have again used the identities in \eqref{eq:limit_identities}. Similarly, in the case of a right-endpoint connected component $a > -1, b = 1, f(a) = 0$ (see \cref{fig:f_zeroed_right}), we have
    \begin{align}
        I(f) &=   I_L(f) + |D^2f|((-1,1)) + \max\{ |f(1)-f'(1^{-})|, |f'(1^{-})| \} \\
        &\geq I_L(f) + |D^2 f|((-1,a)) + |f'(a+) - f'(a^{-})| + |D^2 f|((a,1)) + |f'(1^{-})| \\
        &\geq I_L(f) + |D^2 f|((-1,a)) + |f'(a+) - f'(a^{-})| + |f'(1^{-}) - f'(a+)| + |f'(1^{-})| \\
        &\geq I_L(f) + |D^2 f|((-1,a)) + |f'(a^{-})| \\
        &= I(f \mathbbm{1}_{(-1,a)})
    \end{align}
    where $I_L(f) := \max\{ |f(-1)+f'(-1^{+})|, |f'(-1^{+})| \}$. Therefore, in all possible cases, zeroing out $f$ on an individual connected component $[a,b]$ of $\cN := \{ x \in [-1,1]: f(x) \leq 0 \}$ does not increase the value of $I$.

    \begin{figure}
        \centering
    
        \begin{subfigure}[t]{0.24\textwidth}
            \centering
            \includegraphics[width=\linewidth]{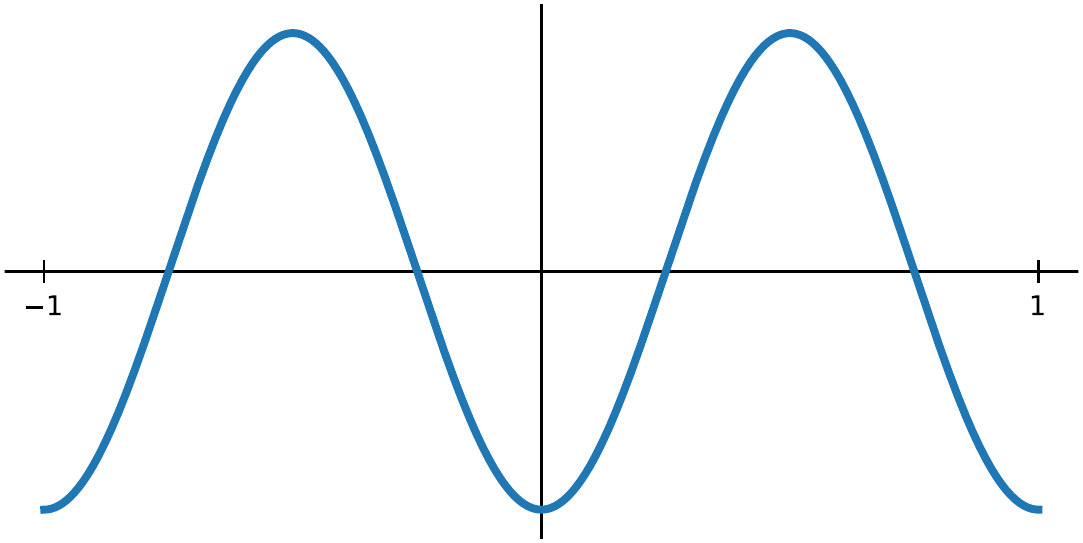}
            \caption{Original function.}
            \label{fig:f_orig}
        \end{subfigure}
        \hfill
        \begin{subfigure}[t]{0.24\textwidth}
            \centering
            \includegraphics[width=\linewidth]{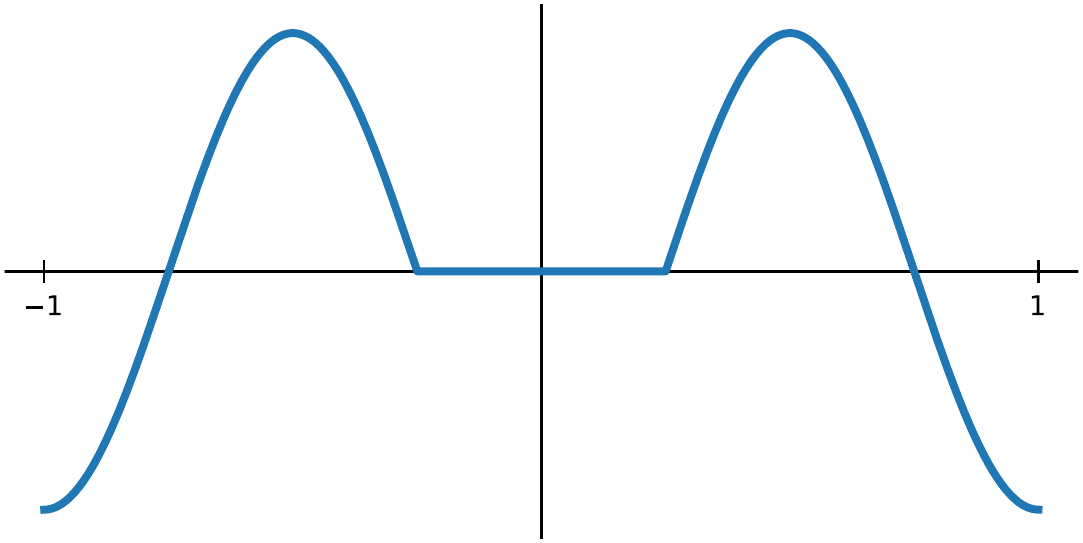}
            \caption{Central negative interval flattened.}
            \label{fig:f_zeroed_internal}
        \end{subfigure}
        \hfill
        \begin{subfigure}[t]{0.24\textwidth}
            \centering
            \includegraphics[width=\linewidth]{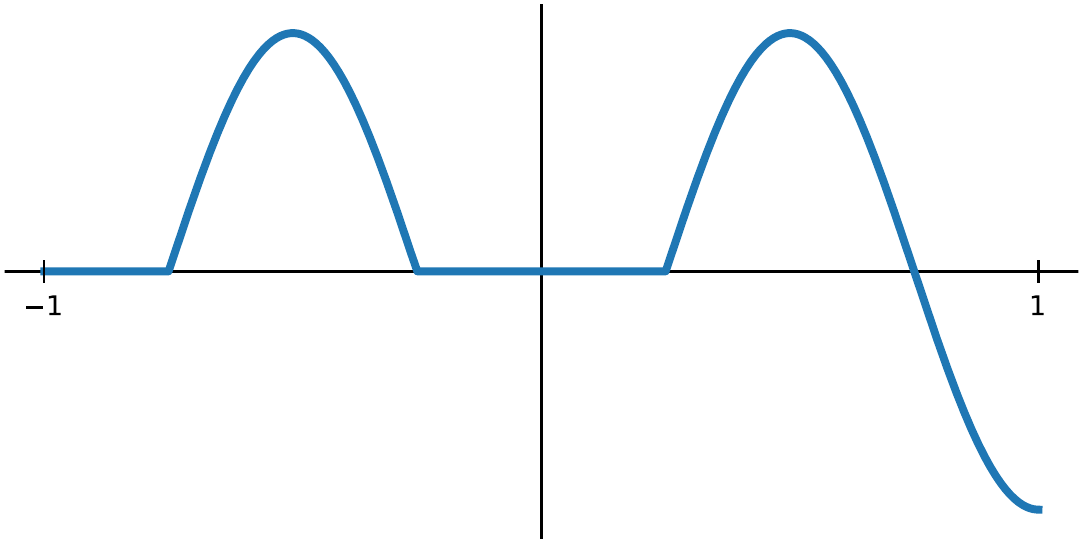}
            \caption{Central and left negative intervals flattened.}
            \label{fig:f_zeroed_left}
        \end{subfigure}
        \hfill
        \begin{subfigure}[t]{0.24\textwidth}
            \centering
            \includegraphics[width=\linewidth]{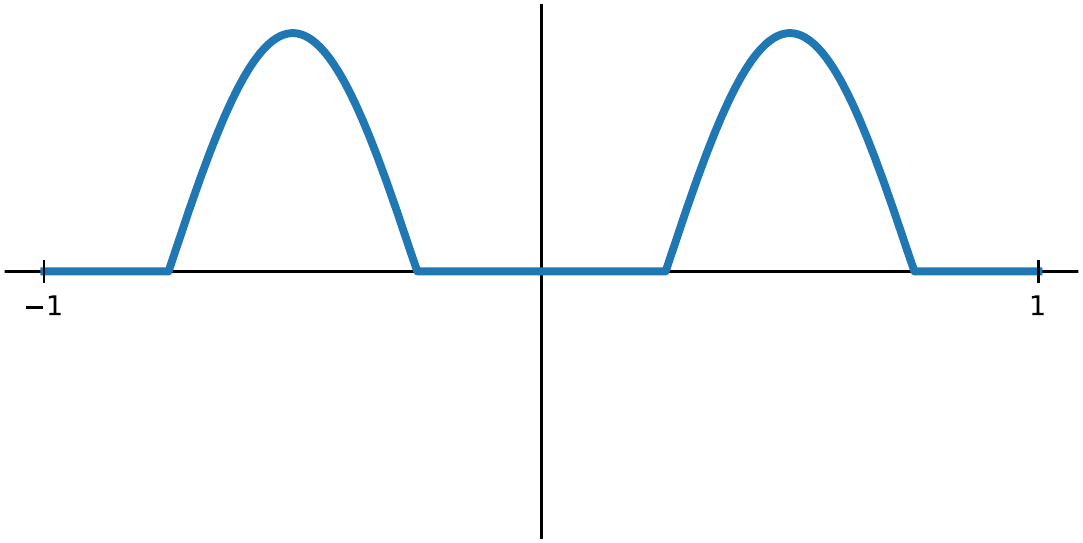}
            \caption{All negative intervals flattened.}
            \label{fig:f_zeroed_right}
        \end{subfigure}
    
        \caption{Modifications of an example function obtained by successively flattening its negative intervals. Flattening the interior negative interval (\cref{fig:f_zeroed_internal}) never increases the total variation of the second derivative. Flattening the leftmost (\cref{fig:f_zeroed_left}) and rightmost (\cref{fig:f_zeroed_right}) increases the variation of the second derivative by no more than the endpoint derivative values $f'(-1)$ and $f'(1)$, respectively.}
        \label{fig:sinusoid-zeroed-comparison}
    \end{figure}
    
    \paragraph{Lower semicontinuity of $I$ under uniform limits.} Suppose $f_n \in C[-1,1]$ satisfies $D^2 f_n \in \cM(-1,1)$ for every $n$, that $f_n \to f$ uniformly on $[-1,1]$, and that $\sup_n I(f_n) < \infty$. Let $f_n'$ be a sequence of right-continuous $\BV(-1,1)$ representatives of the weak derivatives of the functions $f_n$, and extend each $f_n'$ to a function $\tilde{f}_n'$ on $[-1,1]$ by defining $\tilde{f}_n'(-1) = f_n'(-1^{+})$ and $\tilde{f}_n'(1) = f_n'(1^{-})$. We claim that the pointwise variations (see \eqref{eq:pointwise_variation}) satisfy $V_{(-1,1)}(f_n') = V_{[-1,1]}(\tilde{f}_n')$ for each $n$. Clearly $V_{(-1,1)}(f_n') \leq V_{[-1,1]}(\tilde{f}_n')$ because any partition $-1 < x_1 < \dots < x_n < 1$ of $(-1,1)$ yields a partition $-1 = x_0 < x_1 < \dots < x_n < x_{n+1} = 1$ of $[-1,1]$. For the reverse inequality, let $-1 = x_0 < x_1 < \dots < x_n < x_{n+1}=1$ be any partition of $[-1,1]$, and fix some $\epsilon > 0$ and some $a \in (x_0, x_1)$ and some $b \in (x_n, x_{n+1})$ such that $|f_n'(a) - f_n'(-1^{+})| \leq \epsilon$ and $|f_n'(1^{-}) - f_n'(b)| \leq \epsilon$. Then
    \begin{align}
        \sum_{i=0}^n |\tilde{f}_n'(x_{i+1}) - \tilde{f}_n'(x_i)| &\leq 2 \epsilon + |f_n'(x_1) - f_n'(a)| + \sum_{i=1}^{n-1} |f_n'(x_{i+1}) - f_n'(x_i)| + |f_n'(b) - f_n'(x_n)|\\
        &\leq 2 \epsilon + V_{(-1,1)} (f_n').
    \end{align}
    Taking $\epsilon \downarrow 0$ and then taking the sup over all such partitions of $[-1,1]$ yields $V_{[-1,1]} (\tilde{f}_n') \leq V_{(-1,1)} (f_n')$. 
    
    Choose a subsequence $\{ f_{n_k} \}_{k=1}^\infty$ such that $I(f_{n_k}) \to \liminf_{n \to \infty} I(f_n)$. With this fact in hand, the sequence $\tilde{f}_{n_k}' \in \BV[-1,1]$ satisfies
    \begin{align}
        |\tilde{f}_{n_k}'(-1)| + V_{[-1,1]}(\tilde{f}_{n_k}') \leq I(f_{n_k}) \leq \sup_n I(f_n)
    \end{align} 
    for all $k$. By Helly's selection theorem (\cite{leoni2017first}, Theorem 2.35), a further subsequence of $\tilde{f}_{n_k}'$ converges pointwise to some $\tilde{f}' \in \BV[-1,1]$. Not relabeling, we thus have $\tilde{f}_{n_k}'(x) \to \tilde{f}'(x)$ for every $x \in [-1,1]$. Because each $\tilde{f}_{n_k}'$ is a weak derivative of $f_{n_k}$ on $(-1,1)$, we have
    \begin{align}
        \int \tilde{f}_{n_k}' \phi = -\int f_{n_k} \phi'
    \end{align}
    for all $\phi \in C_c^\infty(-1,1)$. Uniform convergence $f_{n_k} \to f$ yields
    \begin{align}
        - \int f_{n_k} \phi' \to - \int f \phi',
    \end{align}
    and $\sup_{x \in [-1,1]} |\tilde{f}_{n_k}'(x)| \leq |\tilde{f}_{n_k}'(-1)| + V_{[-1,1]} (\tilde{f}_{n_k}') \leq \sup_n I(f_n)$, along with dominated convergence, yields
    \begin{align}
        \int \tilde{f}_{n_k}' \phi \to \int \tilde{f}' \phi.
    \end{align}
    This shows that the restriction of $\tilde{f}'$ to $(-1,1)$, which we will denote $f'$, is a weak derivative of $f$ on $(-1,1)$. Since $\tilde{f}' \in \BV[-1,1]$, it follows that $f' \in \BV(-1,1)$ and thus $D^2 f = Df' \in \cM(-1,1)$. 

    At this stage, it is still possible that the endpoint limits $f'(-1^+)$ and $f'(1^-)$ may differ from the endpoint values $\tilde{f}'(-1)$ and $\tilde{f}'(1)$. However, any such difference is necessarily accounted for by the variation $V_{[1,1]} (\tilde{f})$. In particular:
    \begin{align} \label{eq:V_endpoint_internal_var_bound}
        V_{[-1,1]} (\tilde{f}') \geq V_{(-1,1)} (f') + |f'(-1^+) - \tilde{f}'(-1)| + |f'(1^-) - \tilde{f}'(1)|.
    \end{align}
    Moreover, observe that
    \begin{align} \label{eq:left_jump_diff_bound}
        \big| I_L(f) - \max\{ | f(-1) - \tilde{f}'(-1) |, |\tilde{f}'(-1)| \} \big| \leq |f'(-1^+) - \tilde{f}'(-1)|,
    \end{align}
    where $I_L(f) := \max\{|f(-1)-f'(-1^{+})|, |f'(-1^{+})| \}$. This can be seen to hold by the reverse triangle inequality if both maxima are realized by the first or second items. If one maximum is realized by the first term and the other by the second term, it must be the case that $f'(-1^+)$ and $\tilde{f}'(-1)$ have opposite signs, in which case both $|\tilde{f}'(-1)$ and $|f'(-1^+)|$ are no greater than $|\tilde{f}'(-1) - f'(-1^+)|$. By the same token:
    \begin{align} \label{eq:right_jump_diff_bound}
        \big| I_R(f) - \max\{ | f(1) - \tilde{f}'(1) |, |\tilde{f}'(1)| \} \big| \leq |f'(1^-) - \tilde{f}'(1)|
    \end{align}
    where $I_R(f) := \max\{|f(1)-f'(1^{-})|, |f'(1^{-})| \}$. Therefore, we have
    \begin{align}
        I(f) &= I_L(f) + I_R(f) + V_{(-1,1)} (f') \\
        &\leq \max\{ |f(1) - \tilde{f}'(1)|, |\tilde{f}'(1)| \} + \max\{ |f(-1) - \tilde{f}'(-1)|, |\tilde{f}'(-1)| \} + V_{[-1,1]} (\tilde{f}') \label{eq:endpoint_diffs} \\
        &\leq \liminf_{k \to \infty} \left( \max\{ |f_{n_k}(1) - \tilde{f}_{n_k}'(1)|, |\tilde{f}_{n_k}'(1)| \} + \max\{ |f_{n_k}(-1) - \tilde{f}_{n_k}'(-1)|, |\tilde{f}_{n_k}'(-1)| \} + V_{[-1,1]} (\tilde{f}_{n_k}') \right) \label{eq:var_lsc} \\
        &= \liminf_{k \to \infty} I(f_{n_k}) \label{eq:f_n_k_tilde_endpoint}.
    \end{align}
    Here, \eqref{eq:endpoint_diffs} uses \eqref{eq:V_endpoint_internal_var_bound}, \eqref{eq:left_jump_diff_bound}, and \eqref{eq:right_jump_diff_bound}; \eqref{eq:var_lsc} uses lower semicontinuity of pointwise variation (\cite{leoni2017first}, Proposition 2.38) and pointwise convergence $\tilde{f}_{n_k}' \to \tilde{f}'$ and $f_{n_k} \to f$; and \eqref{eq:f_n_k_tilde_endpoint} uses the definition of $\tilde{f}_{n_k}$.
    \paragraph{Uniformly approximate $f_+$ by flattening on finitely many connected components.} Recall that the collection $\{ [a_j, b_j] \}_{j=1}^\infty$ of connected components of $\cN := \{x \in [-1,1]: f(x) \leq 0 \}$ with nonempty interior is countable. Define the functions $f_J: [-1,1] \to \R$ by
    \begin{align}
        f_J(x) = \begin{cases}
            0, &x \in \bigcup_{j=1}^J [a_j, b_j] \\
            f(x), &\textrm{otherwise}
        \end{cases}
    \end{align}
    We have already shown that $I(f_{J+1}) \leq I(f_J)$ for all $J \in \bN$. Next we argue that $f_J \to f_+$ uniformly. If this were not the case, there would be some $\epsilon > 0$ such that
    \begin{align}
        \| f_J - f_+ \|_\infty = \sup_{x \in [-1,1]} |f_J(x) - f_+(x)| = \sup_{x \in \cN \setminus \cup_{j=1}^J [a_j, b_j]} - f(x) > \epsilon
    \end{align}
    for infinitely many $J \in \bN$. For each of these infinitely many $J$, select some $x_J \in \cN \setminus \bigcup_{j=1}^J [a_j, b_j]$ such that $f(x_J) < - \epsilon < 0$. Because the $x_J$ are contained in the compact set $[-1,1]$, they have a subsequence $x_{J_n}$ which converges to some $x^* \in [-1,1]$. By continuity of $f$, the $f(x_{J_n})$ converge to $f(x^*)$, so $f(x^*) \leq -\epsilon < 0$, and thus there is some interval $I^* \ni x^*$ such that $f \leq 0$ on $I^*$. Therefore $I^* \subset [a_j, b_j]$ for some $j$. This $I^*$ must also contain infinitely many of the $x_{J_n}$ since they converge to $x^*$. But this means that $x_J \notin \cN \setminus \bigcup_{j=1}^J [a_j, b_j]$ for all sufficiently large $J$, which is a contradiction.
    
    Applying the lower semicontinuity statement from the previous paragraph to the sequence $f_J \to f_+$, we conclude that $D^2 f_+ \in \cM(-1,1)$ and
    \begin{align}
        I(f_+) \leq \liminf_{J \to \infty} I(f_J) \leq I(f),
    \end{align}
    as desired.
\end{proof}

\subsection{Proof of \cref{th:univariate_containment}} \label{appendix:proof_univariate_containment}
\begin{proof}
    Recall that
    \begin{align}
        \cB_1 := \{ x \mapsto wx + b: x \in [-1,1], w \in [-1,1], b \in [-1,1] \}
    \end{align}
    is our base class of affine functions with slope magnitude at most one on $[-1,1]$. Because any function $f(x) = wx + b$ in $\cB_1$ is affine, it has $D^2 f = 0$, so the only nonzero terms in $I(f)$ are the endpoint function values $|f(-1)+f'(-1^{+})| = |-w+b+w|=|b|$ and $|f(1)-f'(1^{-})| = |w+b-w| = |b|$ and the endpoint slopes $|f'(-1^{+})| = |f'(1^{-})| = |w| \leq 1$. Therefore:
    \begin{align}
        \sup_{f \in \cB_1}  I(f) = \sup_{w \in [-1,1], b \in [-1,1]} \max\{|w|,|b| \} + \max\{|w|, |b|\} = 1+1=2.
    \end{align}
    We now argue that this same bound is inherited by all the classes $\cB_L, L \geq 1$. \cref{lemma:I_f_plus_bound} shows that $I$ is nonincreasing under the positive part operation. It is also easy to see that this bound on $I$ is preserved under finite absolutely convex combinations: if $g = \sum_{k=1}^K v_k f_k$ with $\sum_{k=1}^K |v_k| \leq 1$ and $I(f_k) \leq 2$, we have $g' = \sum_{k=1}^K v_k f_k'$ and $D^2 g = \sum_{k=1}^K v_k D^2 f_k$, so $I(g) \leq \sum_{k=1}^K |v_k| I(f_k) \leq 2$. Preservation of this bound under the uniform closure of finite absolutely convex hulls then follows from the ``Lower semicontinuity of $I$ under uniform limits" step in the proof of \cref{lemma:I_f_plus_bound}. The general conclusion then follows from \cref{lemma:uni_f_V2_bound}, since $\| f \|_{\cV_2} \leq I(f) \leq 2$ for all $f \in \cB_L$, $L \geq 1$.
\end{proof}
\end{appendices}

\end{document}